\DeclareMathOperator{\Var}{\operatorname{Var}}
\DeclareMathOperator{\Cov}{\operatorname{Cov}}
\newcommand{\norm}[1]{\left\lVert#1\right\rVert}
\newcommand{\stkout}[1]{\ifmmode\text{\sout{\ensuremath{#1}}}\else\sout{#1}\fi}
\newtheorem{thm}{Theorem}
\newtheorem{lem}{Lemma}
\newtheorem{assum}{Assumption}
\newtheorem{prop}{Proposition}
\newtheorem{corr}{Corollary}
\begin{document}
	
	\title{Analysis of KNN Information Estimators for Smooth Distributions}
	
	\author{\IEEEauthorblockN{Puning Zhao,~\IEEEmembership{Student Member,~IEEE,} and Lifeng Lai,~\IEEEmembership{Senior Member,~IEEE} \thanks{Puning Zhao and Lifeng Lai are with Department of Electrical and Computer Engineering, University of California, Davis, CA, 95616. Email: \{pnzhao,lflai\}@ucdavis.edu. This work was supported by the National Science Foundation under grants CCF-17-17943, ECCS-17-11468 and CNS-18-24553. This paper was presented in part at Annual Allerton Conference on Communication, Control, and Computing, Montecello, IL, 2018~\cite{Zhao:ALL:18}. Copyright (c) 2017 IEEE. Personal use of this material is permitted.  However, permission to use this material for any other purposes must be obtained from the IEEE by sending a request to \textcolor{blue}{pubs-permissions@ieee.org}.}
	}}
	\maketitle
	
	% As a general rule, do not put math, special symbols or citations
	% in the abstract or keywords.
	\begin{abstract}
		KSG mutual information estimator, which is based on the distances of each sample to its $k$-th nearest neighbor, is widely used to estimate mutual information between two continuous random variables. Existing work has analyzed the convergence rate of this estimator for random variables whose densities are bounded away from zero in its support. In practice, however, KSG estimator also performs well for a much broader class of distributions, including not only those with bounded support and densities bounded away from zero, but also those with bounded support but densities approaching zero, and those with unbounded support. In this paper, we analyze the convergence rate of the error of KSG estimator for smooth distributions, whose support of density can be both bounded and unbounded. As KSG mutual information estimator can be viewed as an adaptive recombination of KL entropy estimators, in our analysis, we also provide convergence analysis of KL entropy estimator for a broad class of distributions.
	\end{abstract}
\begin{IEEEkeywords}
	KSG mutual information estimator, KL entropy estimator, KNN
\end{IEEEkeywords}

	\IEEEpeerreviewmaketitle
	
	\section{Introduction}
	Information theoretic quantities, such as Shannon entropy and mutual information, have a broad range of applications in statistics and machine learning, such as clustering~\cite{muller2012information,chan2016info}, feature selection~\cite{Brown:JMLR:12,peng2005feature}, anomaly detection~\cite{lee2001information}, test of normality~\cite{vasicek1976test}, etc. These quantities are determined by the distributions of random variables, which are usually unknown in real applications. Hence, the problem of nonparametric estimation of entropy and mutual information using samples drawn from an unknown distribution has attracted significant research interests \cite{kozachenko1987sample,paninski2003estimation,wu2016minimax,ksg,darbellay1999estimation,gao2015efficient,gao2015estimating,gao2016breaking}.
	
	Depending on whether the underlying distribution is discrete or continuous, the estimation methods are different. In the discrete setting, there exist efficient methods that attain rate optimal estimation of functionals including entropy and mutual information in the minimax sense~\cite{valiant2011estimating,jiao2015minimax,wu2016minimax}. For continuous distributions, many interesting methods have been proposed. Roughly speaking, these methods can be categorized into three different types. 
	
	The first type of methods seek to convert the continuous distribution to a discrete one by assigning data points into bins, \textcolor{black}{and then estimate entropy or mutual information based on the histograms \cite{hall1993estimation}. The accuracy of a naive implementation of this method is in general not competitive \cite{khan2007relative,doquire2012comparison}. An improvement of this method was proposed in \cite{darbellay1999estimation}, which uses adaptive bin sizes at different locations. Moreover, the performance can be greatly improved using an ensemble method \cite{noshad2018scalable}.}%and then estimate the entropy or mutual information of the discrete distribution. An improvement of this type of methods was proposed in~\cite{darbellay1999estimation}, which uses adaptive bin sizes at different locations. This type of methods usually require extensive parameter tuning, especially when the dimensions of the random variables are high. Moreover, numerical experiments show that the accuracy is not as good as other methods \cite{khan2007relative,doquire2012comparison}.
	
	The second type of methods try to learn the underlying distribution first, and then calculate the entropy or mutual information functionals \cite{moon1995estimation,gao2015estimating,gao2016breaking,krishnamurthy2014nonparametric}. The probability density function (pdf) can be estimated using Kernel or $k$ nearest neighbor method. It has been shown that local linear or local Gaussian approximation can improve the accuracy~\cite{gao2015estimating,gao2016breaking}. Moreover, using von Mises expansion, a correction term can be developed to improve the performance \cite{krishnamurthy2014nonparametric,kandasamy2015nonparametric}. \textcolor{black}{These methods also involve non-trivial parameter tuning when the dimensions of the random variables are high, as the kernel may be anisotropic and thus we may need to tune the bandwidth for every dimensions of the kernel.}
	
	The third type, which is the focus of this paper, estimates entropy and mutual information directly based on the $k$-th nearest neighbor (kNN) distances of each sample. A typical example is Kozachenko-Leonenko (KL) differential entropy estimator~\cite{kozachenko1987sample}. Since the mutual information between two random variables is the sum of the entropy of two marginal distributions minus the joint entropy, KL estimator can also be used to estimate mutual information. However, the KL estimator is used three times, and the error may not cancel out. Based on KL estimator, Kraskov, Alexander and St{\"o}gbauer \cite{ksg} proposed a new mutual information estimator, called KSG estimator, which can be viewed as an adaptive recombination of three KL estimators. \cite{ksg} shows that the empirical performance of KSG estimator is better than estimating marginal and joint entropy separately. Compared with other types of methods, KL entropy estimator and KSG mutual information estimator are computationally fast and do not require too much parameter tuning. In addition, numerical experiments show that these $k$-NN methods can achieve the best empirical performance for a large variety of distributions~\cite{doquire2012comparison,gao2018demystifying,khan2007relative}. As the result, KL and KSG estimators are commonly used to estimate entropy and mutual information.
	
	Despite their widespread use, the theoretical properties of KL and KSG estimators, especially the latter, still need further exploration. \textcolor{black}{Some previous works~\cite{gao2018demystifying,singh2016finite,biau2015lectures,jiao2018nearest} derived a bound of the convergence rate of the bias and variance of KL estimator for distributions with bounded support. If the assumption about the boundedness of support is removed, then the analysis becomes harder since the tail of distribution can cause significant estimation error. Other works, including \cite{tsybakov1996root,delattre2017kozachenko,berrett2019efficient,singh2016analysis}, analyzed the KL estimators without requiring that the support is bounded, under some tail assumptions. In particular, \cite{tsybakov1996root} analyzed the convergence of a truncated KL estimator with $k=1$, for one dimensional random variables with unbounded support, under a tail assumption that is roughly equivalent to requiring that the distribution has exponentially decreasing tails, and \cite{berrett2019efficient} designed an ensemble estimator and proves it to be efficient. }%In practice, KL and KSG estimators perform well for a broad class of distributions, but the theoretical convergence rates of these estimators were only established under some restrictive assumptions. For KL entropy estimator, one of the major difficulties is that the tail effect is hard to control if the pdf of random variables approaches zero. As a result, most of the previous literatures focus only on distributions that either have bounded support, or satisfy some strict tail assumptions \cite{gao2018demystifying,singh2016finite,biau2015lectures,singh2016analysis}. An exception was \cite{tsybakov1996root}, which analyzed the convergence rate of KL estimator for one dimensional random variables with unbounded support. Their analysis was restricted to a truncated 1 nearest neighbor estimator.
	
	For KSG mutual information estimator, the analysis is even more challenging, as KSG is actually an adaptive recombination of KL estimators. This adaptivity makes the problem much more difficult. \cite{gao2018demystifying} made a significant progress in understanding the properties of KSG estimator. In particular, \cite{gao2018demystifying} showed that the estimator is consistent under some mild assumptions (In particular, Assumption 2 of~\cite{gao2018demystifying}). Furthermore, \cite{gao2018demystifying} provided the convergence rate of an upper bound of bias and variance under some more restrictive assumptions (Assumption 3 of~\cite{gao2018demystifying}). However, although not stated explicitly in \cite{gao2018demystifying}, one can show that, for a pdf that satisfies Assumption 3 of \cite{gao2018demystifying}, its support set must be bounded. Moreover, its joint, marginal and conditional pdfs are all bounded both from above and away from zero in their supports. As a result, the analysis of \cite{gao2018demystifying} does not hold for some commonly seen pdfs, e.g. ones with unbounded support such as Gaussian. Therefore, it is important to extend the analysis of the properties of kNN information estimators to other types of distributions. %This understanding can perhaps help us to design a better estimator. For example, we may come up with a modified adaptive kNN estimator, which uses smaller $k$ in the tail region, in order to avoid large bias.  %To the best of our knowledge, the convergence rate of KSG estimators for random variables which have unbounded support was not analyzed before. 

	In this paper, we analyze kNN information estimators that holds for variables with both bounded and unbounded support. In particular, we make the following contributions:
	
	Firstly, we analyze the convergence rate of KL entropy estimator. \textcolor{black}{Our assumptions allow the distribution to have unbounded support, for which the original KL estimator is not always accurate. In particular, we show that the original KL estimator is not necessarily consistent under our assumptions. Therefore we use a truncated KL estimator. We derive a bound of the convergence rate of bias and variance, and provide a rule to select the truncation parameter so that the convergence rate is optimized. Our assumptions follow \cite{tsybakov1996root}, which requires that the pdf is second-order smooth and has a exponentially decreasing tail.} Our result improves \cite{tsybakov1996root} in the following aspects: 1) Using a different truncation threshold, we achieve a better convergence rate of bias; 2) \textcolor{black}{We generalize the result to arbitrary but fixed $k$ and dimensionality. Moreover, we extend the analysis to distributions with heavier tails, such as Cauchy distribution.} Some techniques in \cite{tsybakov1996root} can not be directly used to analyze the scenario addressed in this paper. Hence, we use a new approach for the derivation of bias and variance of KL estimator. \textcolor{black}{Furthermore, we show a minimax lower bound of the mean square error of entropy estimator among all possible estimators. The result shows that the truncated KL estimator is nearly minimax optimal, up to a log polynomial factor.} 
	
	Secondly, building on the analysis of KL estimator, we derive the convergence rate of an upper bound on the bias and variance of KSG mutual information estimator for smooth distributions that satisfy a weak tail assumption. Our results hold mainly for two types of distributions. The first type includes distributions that have unbounded support, such as Gaussian distributions. The second type includes distributions that have bounded support but the density functions approach zero. This type is different from the case analyzed in \cite{gao2018demystifying}, which focus on distributions with bounded support but the density is bounded away from zero. To the best of our knowledge, this is the first attempt to analyze the convergence rate of KSG estimator for these two types of distributions. Our technique for bounding the bias is significantly different from \cite{gao2018demystifying}. In \cite{gao2018demystifying}, the distribution is assumed to be smooth almost everywhere, but has a non-smooth boundary, which is the main cause of the bias. To deal with the boundary effect, the support of density was divided into an interior region and a boundary region, and then the bias in these two regions were bounded separately. It turns out that the boundary bias is dominant. On the contrary, in our analysis, by requiring that the density is smooth, we can avoid the boundary effect. However, we allow the density to be arbitrarily close to zero in its support. In the region on which the density is low, the kNN distances are large. As a result, larger local bias occurs in these regions. To deal with this situation, we divide the whole support of the density into a central region, on which the density is relatively high, and a tail region, on which the density is lower. We then bound the bias in these two regions separately, and let the threshold dividing the central region and the tail region decay with respect to the sample size with a proper speed, so that the bias in these two regions decay with approximately the same rates. Then the overall convergence rate can be determined. \textcolor{black}{In our analysis, we let $k$ be an arbitrarily fixed integer.}%in our analysis, the bias comes mainly from the tail effect instead of the boundary effect. The boundary effect disappears, due to the assumption that the density is smooth everywhere. However, tail effect occurs, because the pdf in the tail region is low, and thus the kNN distance can be large. To deal with this problem, we carefully control the bias caused by the large kNN distances in the tail region. The support of the density is divided into a central region, on which the density is relatively high, and a tail region, on which the density is lower. We then bound the bias in these two regions separately, and let the threshold between the central region and the tail region to decay with respect to sample size with a proper speed. The overall convergence rate is determined by the combination of the bias in these two regions.
	
	The remainder of the paper is organized as follows. In Section \ref{sec:kl}, we provide our main result of the analysis of KL entropy estimator, and then compare with \cite{tsybakov1996root}. In Section \ref{sec:ksg}, we analyze KSG mutual information estimator, and then compare with \cite{gao2018demystifying}. In these two sections, we show the basic ideas of the proofs of our main results and relegate the detailed proofs to Appendices. \textcolor{black}{In Section~\ref{sec:heavy}, we extend our analysis to heavy tailed distributions.} In Section~\ref{sec:numeric}, we provide numerical examples to illustrate the analytical results. Finally, in Section~\ref{sec:conclusion}, we offer concluding remarks.
	
	\section{KL Entropy Estimator}\label{sec:kl}
	As KSG mutual information estimator depends on KL entropy estimator, in this section, we first derive convergence results for KL estimator.% and then present our results for the KSG estimator.

	Consider a continuous random variable $\mathbf{X} \in \mathbb{R}^{d_x}$ with unknown pdf $f(\mathbf{x})$. The differential entropy of $\mathbf{X}$ is
	\begin{align}
	h(\mathbf{X})=-\int f(\mathbf{x})\ln f(\mathbf{x})d\mathbf{x}.\nonumber
	\end{align}
	
	Given $N$ i.i.d samples $\{\mathbf{x}(i), i=1,\ldots,N\}$ drawn from this pdf, the goal of KL estimator is to give a nonparametric estimation of $h(\mathbf{X})$. The expression of KL estimator is given by~\cite{kozachenko1987sample}:
	\begin{eqnarray}
	\hat{h}(\mathbf{X})=-\psi(k)+\psi(N)+\ln c_{d_x}+\frac{d_x}{N}\sum\limits_{i=1}^N\ln \epsilon(i),
	\label{eq:KLoriginal}
	\end{eqnarray}
	in which $\psi$ is the digamma function defined as $\psi(t)=\frac{\Gamma'(t)}{\Gamma(t)}$ with $$\Gamma(t)=\int_0^\infty u^{t-1} e^{-u} du,$$ and $\epsilon(i)$ is the distance from $\mathbf{x}(i)$ to its $k$-th nearest neighbor. \textcolor{black}{The distance is defined as $d(\mathbf{x},\mathbf{x}')=\norm{\mathbf{x}-\mathbf{x}'}$, in which $\norm{\cdot}$ can be any norm. $\ell_2$ and $\ell_\infty$ are commonly used.} $c_{d_x}$ is the volume of corresponding unit norm ball. 
	
	\textcolor{black}{If some samples are very far away from the most of the other samples, then the kNN distances of these samples can be very large, which may significantly deteriorate the performance of the original KL estimator.} To address this problem, we use a truncated estimator. Similar approach was proposed in \cite{gao2018demystifying, tsybakov1996root}:

	\begin{eqnarray}
	\hat{h}(\mathbf{X})=-\psi(k)+\psi(N)+\ln c_{d_x}+\frac{d_x}{N}\sum_{i=1}^N \ln \rho(i),
	\label{eq:KL}
	\end{eqnarray}
	in which $$\rho(i)=\min\{\epsilon(i),a_N\}$$ with $a_N$ being a truncation radius that depends on the sample size $N$. \textcolor{black}{A smaller $a_N$ can make the estimator more stable. However, if $a_N$ is too small, then additional bias will occur. Therefore, to obtain a desirable tradeoff, a proper selection of $a_N$ is important.} In \cite{tsybakov1996root}, $a_N$ is chosen to be $1/\sqrt{N}$. In this paper, in order to achieve a better convergence rate, we propose to use a different truncation threshold:
	\begin{eqnarray}
	a_N=AN^{-\beta},
	\label{eq:an}
	\end{eqnarray}
	\textcolor{black}{in which $A, \beta$ are two constants. The choice of $\beta$ can affect the convergence rate of KL estimator. In the following theorem, we optimize $\beta$, to make convergence rate of the truncated KL estimator as fast as possible. We will show that, with the optimal choice of $\beta$, the proposed truncated KL estimator is minimax optimal.}

	\begin{thm}\label{thm:KLbias}
		Suppose that the pdf $f(\mathbf{x})$ satisfies the following assumptions:\\
		(a) \textcolor{black}{$f\in W^{2,\infty}$, and the second order weak derivative of $f$ is bounded by $M$;}\\%The Hessian of $f(\mathbf{x})$ is bounded everywhere, i.e. there exists a constant $M$ such that $$\norm{\nabla^2 f(\mathbf{x})}_{op}\leq M,$$ in which $\norm{\cdot}_{op}$ denotes the operator norm;\\
		(b) There exists a constant $C$ such that
		\begin{eqnarray}
		\int f(\mathbf{x})\exp (-bf(\mathbf{x})) d\mathbf{x}\leq Cb^{-1}
		\label{eq:tail}
		\end{eqnarray} 
		 for any $b>0$.
		
		\textcolor{black}{For sufficiently large $N$, if we let
		$
		\beta=1/(d_x+2),
		$
		then the bias of truncated KL estimator is bounded by:
		\begin{eqnarray}
		 \left|\mathbb{E}\left[\hat{h}(\mathbf{X})\right]-h(\mathbf{X})\right| = \mathcal{O}\left(N^{-\frac{2}{d_x+2}}\ln N\right).
		\label{eq:KLbias}
		\end{eqnarray}
		The above bound holds for arbitrary but fixed $k$.}
	\end{thm}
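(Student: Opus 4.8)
The plan is to begin from the exact reduction of the bias: by the i.i.d.\ assumption and $h(\mathbf{X})=-\mathbb{E}[\ln f(\mathbf{X}(1))]$, one has $\mathbb{E}[\hat h(\mathbf{X})]-h(\mathbf{X})=\psi(N)-\psi(k)+\mathbb{E}[\ln(c_{d_x}\rho(1)^{d_x}f(\mathbf{X}(1)))]$. Fix $\mathbf{x}$ and set $p_{\mathbf{x}}(t)=\mathbb{P}(\norm{\mathbf{X}-\mathbf{x}}\le t)$ and $q(\mathbf{x})=p_{\mathbf{x}}(a_N)$. Assumption (a) makes $f\in C^{1,1}$, so $p_{\mathbf{x}}$ is continuous; by the probability integral transform, conditioned on $\mathbf{X}(1)=\mathbf{x}$ the quantity $V:=p_{\mathbf{x}}(\epsilon(1))$ is the $k$-th smallest of $N-1$ i.i.d.\ uniforms, hence $V\sim\mathrm{Beta}(k,N-k)$, and monotonicity of $p_{\mathbf{x}}$ gives $p_{\mathbf{x}}(\rho(1))=\min\{V,q(\mathbf{x})\}$. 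I would then write $c_{d_x}\rho(1)^{d_x}f(\mathbf{x})=p_{\mathbf{x}}(\rho(1))\cdot\xi(\mathbf{x})$ and split the bias into a \emph{statistical part} $S_N:=\psi(N)-\psi(k)+\mathbb{E}[\ln p_{\mathbf{X}(1)}(\rho(1))]$ and a \emph{local-bias part} $L_N:=\mathbb{E}[\ln\xi(\mathbf{X}(1))]$, estimating each separately.

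For $S_N$, the identity $\mathbb{E}[\ln V]=\psi(k)-\psi(N)$ for $V\sim\mathrm{Beta}(k,N-k)$ together with the elementary $\ln\min\{V,q\}=\ln V-(\ln(V/q))^+$ collapses the digamma terms, leaving $S_N=-\mathbb{E}[g(q(\mathbf{X}(1)))]\le 0$ with $g(q):=\mathbb{E}[(\ln(V/q))^+]$. I would bound $g(q)\le\ln(1/q)\,\mathbb{P}(V>q)$, note (by a Chernoff bound, $V$ being essentially $\mathrm{Gamma}(k,1)/N$) that $\mathbb{P}(V>q)\lesssim(1+(Nq)^{k-1})e^{-Nq}$, so $g(q)$ is smaller than any power of $1/N$ once $Nq\gtrsim\ln N$, and control $q(\mathbf{x})$ by a second-order Taylor expansion of $f$: $|p_{\mathbf{x}}(a_N)-c_{d_x}a_N^{d_x}f(\mathbf{x})|\le M_1 a_N^{d_x+2}$, so $q(\mathbf{x})\ge\tfrac12 c_{d_x}a_N^{d_x}f(\mathbf{x})$ when $f(\mathbf{x})\gtrsim a_N^2$, and $q(\mathbf{x})\gtrsim f(\mathbf{x})^{d_x+1}$ for smaller $f(\mathbf{x})>0$ (using $\norm{\nabla f}_\infty<\infty$). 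With $\beta=1/(d_x+2)$, so that $Na_N^{d_x}\asymp N^{2/(d_x+2)}$, I would integrate $g(q(\mathbf{x}))$ against $f$ using the tail bound $\mathbb{P}(f(\mathbf{X})\le s)\le eCs$ (immediate from (b) with $b=1/s$) together with an integration by parts, obtaining $|S_N|=\mathcal{O}(N^{-2/(d_x+2)}\ln N)$.

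For $L_N$, I would fix a threshold $s_N$ of the same order as $a_N^2$ (a constant multiple large enough that $|\xi(\mathbf{X}(1))-1|\le\tfrac12$ on the central region) and split according to whether $f(\mathbf{X}(1))>s_N$ (central region) or not (tail region). On the central region, $\rho(1)\le a_N$ and the Taylor bound give $|\ln\xi(\mathbf{X}(1))|\lesssim\rho(1)^2/f(\mathbf{X}(1))$; combining $\rho(1)^{d_x}\le 2V/(c_{d_x}f(\mathbf{x}))$ with $\mathbb{E}[V^{2/d_x}]\lesssim N^{-2/d_x}$ gives $\mathbb{E}[\rho(1)^2\mid\mathbf{X}(1)=\mathbf{x}]\lesssim(Nf(\mathbf{x}))^{-2/d_x}$, and together with $\int_{\{f>s_N\}}f^{-2/d_x}\,d\mathbf{x}\lesssim s_N^{-2/d_x}$ (a layer-cake consequence of (b)) the central contribution is $\mathcal{O}(N^{-2/d_x}s_N^{-2/d_x})=\mathcal{O}(N^{-2/(d_x+2)})$. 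On the tail region I would discard the sign and only use $f(\mathbf{X}(1))/\norm{f}_\infty\le\xi(\mathbf{X}(1))\lesssim 1+(a_N/f(\mathbf{X}(1)))^{d_x}$, so $|\ln\xi(\mathbf{X}(1))|\lesssim 1+\ln(1/f(\mathbf{X}(1)))$ and $\int_{\{f\le s_N\}}f(1+\ln(1/f))\,d\mathbf{x}=\mathcal{O}(s_N\ln(1/s_N))=\mathcal{O}(N^{-2/(d_x+2)}\ln N)$, again by $\mathbb{P}(f(\mathbf{X})\le s)\le eCs$. Hence $L_N=\mathcal{O}(N^{-2/(d_x+2)}\ln N)$, and adding $S_N$ and $L_N$ and using $\psi(N)=\ln N+\mathcal{O}(1/N)$ yields \eqref{eq:KLbias}.

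The exponent $\beta=1/(d_x+2)$ is pinned down by balancing three scales: the central/tail trade-off $N^{-2/d_x}s_N^{-2/d_x}\asymp s_N\ln(1/s_N)$ forces $s_N\asymp N^{-2/(d_x+2)}$; the requirement $a_N^2\lesssim s_N$ then forces $\beta\ge 1/(d_x+2)$; and forcing the truncation scale $1/(Na_N^{d_x})$ to be no larger than $s_N$ forces $\beta\le 1/(d_x+2)$. I expect the main obstacle to be the estimate of $S_N$: one must control $q(\mathbf{x})=p_{\mathbf{x}}(a_N)$ uniformly over the set where $f$ is tiny throughout a whole ball of radius $a_N$ around $\mathbf{x}$, and the truncation loss must be integrated against the tail hypothesis (b) carefully enough that it costs only a single factor $\ln N$ --- the crude bound $g(q)\le\ln(1/q)$ alone would leave an extra power $(\ln N)^k$, which is eliminated only by exploiting the concentration $\mathbb{P}(V>q)\lesssim(1+(Nq)^{k-1})e^{-Nq}$.
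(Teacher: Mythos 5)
Your proposal is correct and follows essentially the same route as the paper's proof: the same second-order Taylor control of $P(B(\mathbf{x},r))$, the same use of $\mathbb{E}[\ln V]=\psi(k)-\psi(N)$ for $V\sim\mathrm{Beta}(k,N-k)$, the same Chernoff control of the truncation loss, and the same central/tail split at threshold $\asymp N^{-2/(d_x+2)}$ integrated against the tail hypothesis (b) via $P(f(\mathbf{X})\le s)\lesssim s$. The only differences are cosmetic repackaging — you split the bias into a ``statistical'' and a ``local-bias'' part and then localize each, whereas the paper splits into three terms $I_1,I_2,I_3$ by region and source, and you bound the central non-uniformity term via $\mathbb{E}[V^{2/d_x}]\lesssim N^{-2/d_x}$ rather than via $\rho\le a_N$ together with the volume bound $m(\{f>t\})=\mathcal{O}(\ln(1/t))$ — but both yield the same $\mathcal{O}(N^{-2/(d_x+2)}\ln N)$ rate.
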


\begin{proof} (Outline)
As discussed in \cite{ksg}, the correction term $-\psi(k)$ in \eqref{eq:KL} is designed for correcting the bias caused by the assumption that the average pdf in the ball $B(\mathbf{x},\epsilon)$ is equal to the pdf at its center, i.e. $f(\mathbf{x})$, which does not hold in general. Hence, the bias of original KL estimator \eqref{eq:KLoriginal} is caused by the local non-uniformity of the density. If $\epsilon$ is large, the average pdf in $B(\mathbf{x},\epsilon)$ can significantly deviate from $f(\mathbf{x})$. By substituting $\epsilon$ with $\rho$, which is upper bounded by $a_N$, we can control the bias caused by large kNN distances. This type of bias is lower if we use a small $a_N$. However, the truncation also induces additional bias, which can be serious if $a_N$ is too small. Therefore we need to select $a_N$ carefully to obtain a tradeoff between these two bias terms.

\color{black}
First, using results from order statistics \cite{david1970order,biau2015lectures}, we know $\mathbb{E}[\ln P(B(\mathbf{X},\epsilon))]=\psi(k)-\psi(N)$. Hence
\begin{eqnarray}
\mathbb{E}[\hat{h}(\mathbf{X})]&=&-\psi(k)+\psi(N)+\ln c_{d_x} +\frac{d_x}{N}\sum_{i=1}^N \mathbb{E}[\ln \rho(i)]\nonumber\\
&=&-\mathbb{E}[\ln P(B(\mathbf{X},\epsilon))]+\ln c_{d_x}+d_x \mathbb{E}[\ln \rho].
\label{eq:EhX}
\end{eqnarray}
We then divide the support of $f(\mathbf{x})$ into a central region (called $S_1$, which have a relatively high density) and a tail region (called $S_2$, which have a relatively low density). \textcolor{black}{The exact definitions of $S_1$ and $S_2$ are shown in \eqref{eq:s1} and \eqref{eq:s2} in Appendix \ref{sec:klbias}.} and decompose the bias of the truncated KL estimator \eqref{eq:KL} into three parts:
\begin{eqnarray}
\mathbb{E}[\hat{h}(\mathbf{X})]-h(\mathbf{X})
=&&\hspace{-6mm}-\mathbb{E}\left[\ln \frac{P(B(\mathbf{X},\epsilon))}{P(B(\mathbf{X},\rho))} \mathbf{1}(\mathbf{X}\in S_1)\right]\nonumber\\
&&\hspace{-6mm}-\mathbb{E}\left[\ln \frac{P(B(\mathbf{X},\rho))}{f(\mathbf{X})c_{d_x}\rho^{d_x}} \mathbf{1}(\mathbf{X}\in S_1)\right]\nonumber\\
&&\hspace{-6mm}-\mathbb{E}\left[\ln \frac{P(B(\mathbf{X},\epsilon))}{f(\mathbf{X})c_{d_x}\rho^{d_x}} \mathbf{1}(\mathbf{X}\in S_2)\right].
\label{eq:biadecomp}
\end{eqnarray}
\color{black}
 All of these three terms converge to zero. The first term in \eqref{eq:biadecomp} is the additional bias caused by truncation in the central region. Note that $\epsilon$ and $\rho$ are different only when $\rho>a_N$, thus if $a_N$ does not decay to zero too fast, then $P(\epsilon\leq a_N)$ happens with a high probability. Hence the first term converges to zero.  The second term is the bias caused by local non-uniformity of the pdf in the central region. Recall that $\rho=\min\{\epsilon,a_N\}\leq a_N=AN^{-\beta}$, $\rho$ will converge to zero, hence the local non-uniformity will gradually disappear with the increase of $N$. The last term is the bias in the tail region. We let the tail region to shrink with the increase of $N$, and let the central region to expand, then the third term can also converge to zero. These three terms are bounded separately, and the results depend on the selection of truncation parameter $\beta$. The overall convergence rate is determined by the slowest one among these three terms. In our proof, we carefully select $\beta$ to optimize the overall rate.

For detailed proof, please refer to Appendix~\ref{sec:klbias}.
\end{proof}
\color{black}
Our assumptions (a), (b) in Theorem \ref{thm:KLbias} are almost the same as assumptions (A0)-(A2) in \cite{tsybakov1996root}, except that now we no longer require $f(\mathbf{X})$ to be positive everywhere, as was required in \cite{tsybakov1996root}. As a result, our analysis holds for distributions with both bounded and unbounded support.

 Assumption (a) is the smoothness assumption. As a pdf, $\int f(\mathbf{x})d\mathbf{x}=1$, under which we can show that the boundedness of Hessian \textcolor{black}{or the second order weak derivative} implies the boundedness of $f(\mathbf{x})$ and $\nabla f(\mathbf{x})$. %This assumption can be slightly weakened to requiring that $f\in W^{2,\infty}$, i.e., our analysis still holds if at some points the second order derivative does not exist but the second order weak derivative exists and is bounded. 
 
 Assumption (b) is the tail assumption, which is roughly equivalent to requiring that the density has exponentially decreasing tails \cite{tsybakov1996root}. To be more precise, we now show some examples that satisfy Assumption (b):
 \begin{itemize}

	\item (b) holds if the pdf has a bounded support. Note that $f(\mathbf{x})\exp(-bf(\mathbf{x}))$ is maximized when $f(\mathbf{x})=1/b$, therefore $f(\mathbf{x})\exp(-bf(\mathbf{x}))\leq 1/(eb)$ always holds. Denote $S$ as the support set of $f$, and $m(S)=\int_S d\mathbf{x}$ as the support size, then
	\begin{eqnarray}
	\int f(\mathbf{x})\exp(-bf(\mathbf{x}))d\mathbf{x}\leq \int_S \frac{1}{eb}d\mathbf{x}=\frac{m(S)}{eb},
	\label{eq:bounded}
	\end{eqnarray}
	hence for any distributions with bounded support, assumption (b) holds with $C=m(S)/e$.
	
	\item (b) holds if $d_x=1$ and $f(\mathbf{x})\sim \exp(-\alpha |x|^\theta)$ for some constant $\alpha>0$, and $\theta>1$, and sufficiently large $x$. This was mentioned in \cite{tsybakov1996root}.
	
	\item Moreover, as discussed in \cite{tsybakov1996root}, many distributions with exponentially decreasing tails also satisfy our assumption (b). For example, this assumption holds for Gaussian distribution with $d_x\leq 2$ and exponential distribution with $d_x=1$.
\end{itemize}

We remark that the above conditions are only sufficient but not necessary conditions for assumption (b) to hold. In fact, assumption (b) also holds for other distributions, even if $\mathbf{X}$ does not have any finite moments. \textcolor{black}{In this case, the original KL estimator without truncation may not be consistent, but the truncated one is still consistent, and the convergence rate can be bounded using Theorem \ref{thm:KLbias}. One such example is constructed in Appendix \ref{sec:truncation}, see random variable $X_2$ there.}

Furthermore, we extend our results to distributions with heavy tails in Section~\ref{sec:heavy}. As a byproduct of such extension, we also show that for all sub-Gaussian or sub-exponential distribution, such as Gamma distribution, even if (b) is not satisfied, the convergence bound in Theorem \ref{thm:KLbias} still approximately holds.

The result in Theorem~\ref{thm:KLbias} holds for truncated KL estimator. In the following, we illustrate that the truncation is necessary by showing that the original KL estimator is not necessarily consistent for pdfs satisfying our assumptions. In particular, we have the following proposition.
\color{black}
\begin{prop}\label{prop:truncation}
	\textcolor{black}{Under Assumption (a), (b) in Theorem \ref{thm:KLbias}, \textcolor{black}{with sufficiently large $M$ and $C$,} there exists a pdf $f(\mathbf{x})$, such that}
	\begin{eqnarray}
	\textcolor{black}{\underset{N\rightarrow\infty}{\lim} \mathbb{E}[\hat{h}_0(\mathbf{X})]-h(\mathbf{X})\neq 0,}
	\end{eqnarray}
	\textcolor{black}{in which $\hat{h}_0$ is the original KL estimator without truncation.}
\end{prop}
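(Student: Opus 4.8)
My plan is to prove Proposition~\ref{prop:truncation} by exhibiting an explicit pdf $f$ that satisfies Assumptions~(a) and~(b) (for suitably large $M$ and $C$) but for which $\mathbb{E}[\hat h_0(\mathbf{X})]=+\infty$ for \emph{every} sample size $N$, while the true entropy $h(\mathbf{X})$ is finite; then $\mathbb{E}[\hat h_0(\mathbf{X})]-h(\mathbf{X})=+\infty$ for all $N$, so its limit is certainly nonzero. Directly from the definition of $\hat h_0$ in~\eqref{eq:KLoriginal} together with linearity and symmetry, $\mathbb{E}[\hat h_0(\mathbf{X})]=-\psi(k)+\psi(N)+\ln c_{d_x}+d_x\,\mathbb{E}[\ln\epsilon]$; since the first three terms are finite for each $N$, the whole task reduces to building an $f$ with $h(\mathbf{X})<\infty$ and $\mathbb{E}[\ln\epsilon]=+\infty$ for each fixed $N$. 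The mechanism is that a sample isolated far out in a low--density region has a huge kNN distance $\epsilon$, and if such regions are placed far enough away, the term $d_x\,\mathbb{E}[\ln\epsilon]$ blows up while the entropy does not.

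For the construction I would take a benign compactly supported ``bulk'' together with an extremely sparse sequence of narrow, far--away ``bumps''. Fix a smooth compactly supported density $f_0$ on $[-1,1]^{d_x}$, and for $j\ge1$ let $q_j=2^{-(d_x+2)j}$, $w_j=(q_j/M)^{1/(d_x+2)}$ (up to a fixed profile constant), and $x_j=\exp\!\big(2^{(d_x+2)j}\big)$; let $g_j$ be a smooth bump of total mass $q_j$ supported on the ball $B(x_j\mathbf{e}_1,w_j)$, and with $q:=\sum_{j\ge1}q_j<1$ set $f=(1-q)f_0+\sum_{j\ge1}g_j$. The verification splits into three essentially routine pieces. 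First, the supports are pairwise disjoint because $w_j\to0$ while $x_j$ grows doubly exponentially, and each bump has curvature $\|g_j''\|_\infty\asymp q_j w_j^{-(d_x+2)}\asymp M$, so $f\in W^{2,\infty}$ for a suitable $M$; this gives~(a). Second, on each bump $\int g_j e^{-bg_j}\le\min\{q_j,\,c_{d_x}w_j^{d_x}/(eb)\}$, so the ``tall'' bumps (height $\gtrsim1/b$) contribute $O(1/b)$ because $\sum_j w_j^{d_x}<\infty$ (here $\sum_j 2^{-d_xj}<\infty$ since $d_x\ge1$), the ``short'' bumps contribute $\sum_{q_j\lesssim b^{-(d_x+2)/2}}q_j=O\!\big(b^{-(d_x+2)/2}\big)=O(1/b)$ by geometric decay, and the bulk contributes $O(1/b)$ exactly as in~\eqref{eq:bounded}; this gives~(b). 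Third, the height of $g_j$ is $\Theta\!\big(q_j^{2/(d_x+2)}\big)$, hence $-\int g_j\ln g_j=\Theta(j)\,q_j+O(q_j)$ and $\sum_j j\,q_j<\infty$, so together with the finite bulk term $h(\mathbf{X})$ is finite.

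The crux is then the divergence $\mathbb{E}[\ln\epsilon]=+\infty$ for each fixed $N$. First, $\mathbb{E}\big[(\ln\epsilon)^-\big]<\infty$ by the standard kNN bound $P(\epsilon<r)\le\binom{N-1}{k}(\|f\|_\infty c_{d_x}r^{d_x})^k$, so it suffices to show $\mathbb{E}\big[(\ln\epsilon)^+\big]=+\infty$. Let $E_j$ be the event that $\mathbf{x}(1)$ lies in bump $j$ and at most $k-1$ of the other $N-1$ samples lie in bump $j$. On $E_j$ the $k$-th nearest neighbour of $\mathbf{x}(1)$ must lie outside bump $j$, and every point outside bump $j$ is at distance at least $x_j/2$ from $\mathbf{x}(1)$ (the nearest competing bump/bulk is effectively at distance $x_j$, because $x_{j-1}=x_j^{2^{-(d_x+2)}}\ll x_j$ and $w_j,1\ll x_j$), so $\epsilon\ge x_j/2$ on $E_j$. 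The events $E_j$ are disjoint, and $P(E_j)=q_j\,P\big(\mathrm{Bin}(N-1,q_j)\le k-1\big)\ge q_j/2$ once $j$ is large enough that $(N-1)q_j\le1$; therefore
\begin{align*}
\mathbb{E}\big[(\ln\epsilon)^+\big]
&\;\ge\;\sum_{j\text{ large}}\ln\!\Big(\tfrac{x_j}{2}\Big)\,P(E_j)\;\ge\;\sum_{j\text{ large}}\tfrac12 q_j\big(2^{(d_x+2)j}-\ln2\big)\\
&\;=\;\sum_{j\text{ large}}\Big(\tfrac12-o(1)\Big)\;=\;+\infty,
\end{align*}
since $q_j\,2^{(d_x+2)j}=1$. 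This completes the plan. The step I expect to be the main obstacle is the verification of~(b): one must make sure that concentrating all of the estimator--breaking behaviour at doubly--exponentially large radii does not destroy the exponential--type tail condition, which works precisely because the bumps are chosen narrow enough that $\sum_j w_j^{d_x}<\infty$ while their masses stay geometrically small. As a side remark, $q_j x_j^p=2^{-(d_x+2)j}\exp\!\big(p\,2^{(d_x+2)j}\big)\to\infty$ for every $p>0$, so this $f$ in fact has no finite moments, matching the discussion preceding Proposition~\ref{prop:truncation}.
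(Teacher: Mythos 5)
Your proposal is correct, but it proves the proposition by a genuinely different route than the paper. The paper's proof is a two-distribution comparison argument: it builds $f_1$ as a sum of shrinking bumps packed into a bounded interval and $f_2$ by translating the $j$-th bump by $\delta_j=2^{j^4}$, notes $h(X_1)=h(X_2)$, and shows $\mathbb{E}[\hat h_0(X_2)]-\mathbb{E}[\hat h_0(X_1)]\not\to 0$ by tracking the bump of mass $\approx 1/N$ that typically contains exactly one sample; the conclusion is non-constructive (at least one of the two densities defeats the estimator, without saying which). You instead exhibit a single explicit density with doubly-exponentially spaced bumps for which $\mathbb{E}[(\ln\epsilon)^+]=+\infty$ at every fixed $N$ while $\mathbb{E}[(\ln\epsilon)^-]<\infty$ and $h(\mathbf{X})<\infty$, so the bias is $+\infty$ for all $N$. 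Your verifications of (a), (b), finiteness of the entropy, and the divergence $\sum_j q_j\ln(x_j/2)\gtrsim\sum_j q_j\,2^{(d_x+2)j}=\sum_j 1=\infty$ all check out (for (b) you could even skip the tall/short case split, since $\sum_j c_{d_x}w_j^{d_x}/(eb)=O(1/b)$ already covers every bump). What your approach buys is a constructive witness, a strictly stronger failure mode (infinite rather than merely non-vanishing bias), and it works verbatim in any dimension and for any fixed $k$; what the paper's approach buys is that its offending density $f_1$ can be taken with bounded support and all moments finite if one settles for the non-constructive "one of the two" conclusion, and it exhibits the failure as a finite, order-one discrepancy rather than an infinite expectation, which is closer in spirit to how the estimator misbehaves in practice.
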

\begin{proof}
	\textcolor{black}{ (Outline) The basic idea of the proof is to construct two distributions whose entropy are the same, but the difference of the expectation of the estimated result using the original KL estimator does not converge to zero. As a result, for at least one of these two distributions, the original KL estimator is not consistent. Please refer to Appendix \ref{sec:truncation} for details.}
\end{proof}

	The next theorem gives an upper bound of variance of $\hat{h}(\mathbf{X})$. %The assumptions for the analysis of variance are much weaker than the assumption for bias.
	\begin{thm} \label{thm:KLvar}
		Assume the following conditions:\\
		(c) The pdf is continuous almost everywhere;\\
		(d) 
		\color{black}
		$\exists r_0>0$,
		\begin{eqnarray}
		\int f(\mathbf{x})\left(\ln \inf\{\tilde{f}(\mathbf{x},r)|r<r_0 \} \right)^2 d\mathbf{x}<\infty,
		\end{eqnarray}
		and
		\begin{eqnarray}
		\int f(\mathbf{x})\left(\ln \sup\{\tilde{f}(\mathbf{x},r)|r<r_0 \} \right)^2 d\mathbf{x}<\infty,
		\end{eqnarray}
		in which $\tilde{f}(\mathbf{x},r)=P(B(\mathbf{x},r))/V(B(\mathbf{x},r))$ is the average pdf over $B(\mathbf{x},r)$.
		\color{black}
		
		Under assumptions (c) and (d), if $0<\beta<1/d_x$, then the variance of truncated KL estimator is bounded by:
		\begin{eqnarray}
		\Var[\hat{h}(\mathbf{X})]= \mathcal{O}\left(\frac{1}{N}\right).
		\label{eq:KLvariance}
		\end{eqnarray}
	\end{thm}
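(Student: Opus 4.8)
The plan is to bound $\Var[\hat h(\mathbf X)]$ with the Efron--Stein inequality. Write $\hat h(\mathbf X)=\frac1N\sum_{i=1}^N Y_i$ with $Y_i=-\psi(k)+\psi(N)+\ln c_{d_x}+d_x\ln\rho(i)$, and let $\hat h^{(j)}$ be the estimator obtained after replacing sample $\mathbf x(j)$ by an independent copy $\mathbf x'(j)$; then $\Var[\hat h(\mathbf X)]\le\frac12\sum_{j=1}^N\mathbb E\big[(\hat h(\mathbf X)-\hat h^{(j)})^2\big]$. The structural fact that makes this tractable is geometric: in $\mathbb R^{d_x}$, with any fixed norm, a single point can be among the $k$ nearest neighbours of at most $\gamma_{d_x}k$ other points (cover $\mathbb R^{d_x}$ by $\gamma_{d_x}$ narrow cones emanating from the point; within each cone at most $k$ points can have it as a $k$-NN). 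Hence resampling $\mathbf x(j)$ changes only the terms $\rho(i)$ with $i=j$ or $i\in\mathcal N_j$, where $\mathcal N_j$ is the (random) set of $i\neq j$ having $\mathbf x(j)$ or $\mathbf x'(j)$ among their $k$ nearest neighbours, and $|\mathcal N_j|\le 2\gamma_{d_x}k$. Applying Cauchy--Schwarz to this bounded-length difference, taking expectations and summing over $j$, the problem reduces to establishing the two per-sample estimates
\begin{equation}
\Var[\ln\rho(\mathbf X)]=\mathcal O(1),\qquad
\mathbb E\!\left[\left(\ln\frac{\epsilon^{(k)}(\mathbf X)}{\epsilon^{(k-1)}(\mathbf X)}\right)^{2}\right]=\mathcal O(1),
\label{eq:var_twoest}
\end{equation}
uniformly in $N$, where $\epsilon^{(m)}(\mathbf x)$ is the distance from $\mathbf x$ to its $m$-th nearest neighbour. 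Granting \eqref{eq:var_twoest}, each of the two sums in the Efron--Stein bound has $\mathcal O(N)$ terms of size $\mathcal O(1)$ (for fixed $k$), which yields $\Var[\hat h(\mathbf X)]=\mathcal O(1/N)$.

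The first estimate in \eqref{eq:var_twoest} handles the directly resampled index $i=j$: conditioning on the $N-1$ unchanged samples makes $\ln\rho(j)$ and $\ln\rho'(j)$ i.i.d., so $\mathbb E[(\ln\rho(j)-\ln\rho'(j))^2]\le 2\Var[\ln\rho(j)]$ by the law of total variance. The second estimate handles the neighbour indices $i\in\mathcal N_j$: deleting one sample leaves the $k$-NN distance of $\mathbf x(i)$ between its $(k-1)$-st and $k$-th NN distances computed without that sample, so $\epsilon^{(k-1)}_{-j}(i)\le\epsilon(i),\epsilon'(i)\le\epsilon^{(k)}_{-j}(i)$, and the $1$-Lipschitzness of $t\mapsto\min\{t,\ln a_N\}$ then gives $|\ln\rho(i)-\ln\rho'(i)|\le\ln(\epsilon^{(k)}_{-j}(i)/\epsilon^{(k-1)}_{-j}(i))$; combined with the fact that a given resampled point falls inside the $k$-NN ball of $\mathbf x(i)$ with conditional probability $\mathcal O(k/N)$ (so that $\sum_j\mathbb E[\sum_{i\in\mathcal N_j}(\cdot)]=\mathcal O(kN)$ once the second bound in \eqref{eq:var_twoest} is substituted), this closes the reduction. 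The refined neighbour bound is essential here: the crude inequality $(\ln\rho(i)-\ln\rho'(i))^2\le 2(\ln\rho(i))^2+2(\ln\rho'(i))^2$ loses a factor $(\ln N)^2$ because $\mathbb E[(\ln\rho(\mathbf X))^2]=\Theta((\ln N)^2)$.

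I expect the main obstacle to be \eqref{eq:var_twoest} itself, the difficulty being that the truncation radius $a_N=AN^{-\beta}$ shrinks polynomially, so $\ln\rho$ and $\ln a_N$ are of size $\Theta(\ln N)$ and a naive second-moment bound only gives $\mathcal O((\ln N)^2)$. This is where assumptions (c), (d) and the constraint $\beta<1/d_x$ enter. Using the identity $d_x\ln\rho=\ln P(B(\mathbf X,\rho))-\ln c_{d_x}-\ln\tilde f(\mathbf X,\rho)$ and centering by the constant $\psi(k)-\psi(N)$: assumption (c) lets the probability-integral transform identify $P(B(\mathbf X,\epsilon(\mathbf X)))$ in law with the $k$-th order statistic of $N-1$ i.i.d.\ uniforms --- a variable \emph{independent of} $\mathbf X$, with mean $\psi(k)-\psi(N)$ and variance $\psi'(k)-\psi'(N)=\mathcal O(1)$ --- while assumption (d), together with $\rho\le a_N<r_0$ for large $N$, bounds $\ln\tilde f(\mathbf X,\rho)$ in $L^2$ by the envelope $(\ln\inf_{r<r_0}\tilde f(\mathbf X,r))^2+(\ln\sup_{r<r_0}\tilde f(\mathbf X,r))^2$. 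The only contribution that resists this bookkeeping is the one supported on the truncation event $\{\epsilon(\mathbf X)>a_N\}$, where $\ln\rho=\ln a_N$ injects a deterministic term of magnitude $\Theta(\ln N)$; the crux is that this event has probability only $\mathcal O(1/(\ln N)^2)$, because $P(B(\mathbf x,a_N))=\tilde f(\mathbf x,a_N)\,c_{d_x}A^{d_x}N^{-\beta d_x}$ and $\beta<1/d_x$ force $N\,P(B(\mathbf x,a_N))\to\infty$ except on a set of $f$-mass $\mathcal O(1/(\ln N)^2)$, the latter by Markov's inequality applied to the $L^2(f)$ envelope of assumption (d). Hence its contribution to $\Var[\ln\rho(\mathbf X)]$ is $\mathcal O((\ln N)^2)\cdot\mathcal O(1/(\ln N)^2)=\mathcal O(1)$. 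An entirely parallel argument, with the fixed radius $r_0$ in place of $a_N$, gives the second estimate in \eqref{eq:var_twoest}: on $\{\epsilon^{(k)}(\mathbf X)<r_0\}$ one writes $d_x\ln(\epsilon^{(k)}/\epsilon^{(k-1)})$ as the logarithm of a ratio of consecutive uniform order statistics (with $\mathcal O(1)$ second moment for fixed $k$) plus a difference of $\ln\tilde f$ terms dominated by the envelope of (d), and the complementary event is exponentially unlikely. Substituting the resulting $\mathcal O(1)$ bounds into the Efron--Stein sum yields $\Var[\hat h(\mathbf X)]=\mathcal O(1/N)$, completing the proof.
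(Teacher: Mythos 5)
Your overall architecture --- Efron--Stein combined with the cone-covering bound on how many $k$-NN distances a single resampled point can affect, reduced to uniform-in-$N$ second-moment estimates for the per-sample log-distance --- is exactly the paper's (Appendix~\ref{app:KLvar}), and your treatment of the genuinely hard analytic part coincides with the paper's Lemmas \ref{lem:var1}--\ref{lem:dom}: identify $P(B(\mathbf{X},\epsilon))$ in law with a uniform order statistic, control $\ln\tilde f(\mathbf{X},\rho)$ by the $L^2(f)$ envelopes of assumption (d) since $\rho\le a_N<r_0$, and neutralize the truncation event $\{\epsilon>a_N\}$ by Markov's inequality on the envelope together with a Chernoff bound, which is where $\beta<1/d_x$ enters. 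That part of your plan is sound.

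The gap is in your handling of the neighbour indices $i\in\mathcal N_j$. First, your premise that the crude bound $(\ln\rho(i)-\ln\rho'(i))^2\le 2(\ln\rho(i))^2+2(\ln\rho'(i))^2$ unavoidably loses a $(\ln N)^2$ factor is mistaken: the difference $\ln\rho(i)-\ln\rho'(i)$ is invariant under centering, so one may replace $\ln\rho$ by $U=\ln\left(Nc_{d_x}\rho^{d_x}\right)$, and the paper shows $\mathbb{E}[U^2]=\mathcal O(1)$ (this is precisely your first estimate in \eqref{eq:var_twoest} augmented by control of the mean); the crude bound applied to $U$ then already yields $\mathcal O(1)$ per neighbour term, so your ``refined neighbour bound'' is not needed. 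Second, the refinement you propose instead, $|\ln\rho(i)-\ln\rho'(i)|\le\ln\bigl(\epsilon^{(k)}_{-j}(i)/\epsilon^{(k-1)}_{-j}(i)\bigr)$, is vacuous for $k=1$ (there is no $0$-th neighbour and the ratio is infinite), whereas the theorem covers every fixed $k\ge 1$. Third, the step ``$i\in\mathcal N_j$ with conditional probability $\mathcal O(k/N)$, hence the double sum is $\mathcal O(kN)$'' multiplies a probability by an expectation without justification: conditionally on the samples other than $j$, the probability of $\{i\in\mathcal N_j\}$ is (up to a factor $2$) $P\bigl(B(\mathbf x(i),\epsilon^{(k)}_{-j}(i))\bigr)$, a Beta-distributed random variable that is correlated with your log-ratio through the point configuration, so you need either a decoupling argument --- the paper establishes the independence of $\mathbf 1(i\in S)$ and $U(i)$ via an explicit exchangeability computation, see \eqref{eq:probpermute} --- or a Cauchy--Schwarz step requiring fourth moments that assumption (d) does not supply. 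All three problems disappear if you work with the centered variable $U$ and carry out the paper's permutation argument.
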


\begin{proof} (Outline)
Our proof uses some techniques in \cite{biau2015lectures}, which proved $\mathcal{O}(1/N)$ convergence of variance of KL estimator with $k=1$ for one dimensional distribution with bounded support. We generalize the result to arbitrary fixed $d_x$ and $k$, and the support set can be both bounded and unbounded, as long as the distribution satisfies assumption (c) and (d) in Theorem \ref{thm:KLvar}. However, since our assumptions are weaker, we need some additional techniques to ensure that the derivation is valid. For detailed proof, please see Appendix~\ref{app:KLvar}. 
\end{proof}

\textcolor{black}{ Our assumptions (c) and (d) are weaker than the corresponding assumptions (B1) and (B2) in \cite{tsybakov1996root}. To show this, we provide a sufficient condition of (c) and (d). In particular, conditions (c) and (d) are both satisfied, if S1): the pdf is Lipschitz or $\alpha$-H{\"o}lder continuous with $0<\alpha<1$; and S2): $\int f(\mathbf{x})(\ln f(\mathbf{x}))^2 d\mathbf{x}<\infty$. We now compare S1) and S2) with conditions in \cite{tsybakov1996root}. (B1) in \cite{tsybakov1996root} requires that the pdf is Lipschitz, and (B2) requires that
	\begin{eqnarray}
	\int f(x)\left(\frac{\underset{\norm{x-x'}\leq a}{\sup} f(x')}{f(x)}\right)^j (\ln f(x))^2 dx<\infty \nonumber
	\end{eqnarray}
	for $j=0,1,2,3$. We observe that sufficient condition S2) mentioned above only requires it to hold for $j=0$. Note that our assumptions (c), (d) are very weak and hold for almost all common distributions. \textcolor{black}{If assumptions (a) and (b) are satisfied, then assumptions (c) and (d) must hold, since (c) is implied by (a), and from (b), it is straightforward to prove that $\int f(\mathbf{x})(\ln f(\mathbf{x}))^2 dx<\infty$. This property combining with (a) imply that (d) holds for sufficiently small $r$. We provide detailed proof of this argument in Appendix \ref{sec:statement-1}.} Under these assumptions, our bound of variance is exactly the same as the result in \cite{tsybakov1996root}. }

 \textcolor{black}{From Theorem \ref{thm:KLbias} and Theorem \ref{thm:KLvar}, under assumptions (a) and (b), the convergence rate of the mean square error of KL estimator is bounded by:
 	\begin{eqnarray}
 	\mathbb{E}[(\hat{h}(\mathbf{X})-h(\mathbf{X}))^2]=\mathcal{O}\left(N^{-\frac{4}{d_x+2}}\ln N+\frac{1}{N}\right).
 	\label{eq:msebound}
 	\end{eqnarray}  	
 	 In the following theorem, we provide a minimax lower bound on the convergence of mean square error, under assumptions (a) and (b) in Theorem \ref{thm:KLbias}.} 
 \begin{thm}\label{thm:KLmselb}
 	Define
 	\begin{eqnarray}
 	\mathcal{F}_{M,C}&=&\{f|\text{Assumptions (a),(b) in Theorem \ref{thm:KLbias} are }\nonumber\\
 		&&\text{ satisfied with constant $M$ and $C$}  \},
 	\end{eqnarray}
 	\textcolor{black}{then under assumptions (a), (b) in Theorem \ref{thm:KLbias}, for sufficiently large $M$ and $C$,}
 	\begin{eqnarray}
 	&&\hspace{-7mm}\underset{\hat{h}}{\inf} \underset{f\in \mathcal{F_{M,C}}}{\sup} \mathbb{E}[(\hat{h}(\mathbf{X})-h(\mathbf{X}))^2]\nonumber\\
 		& =& \Omega \left( N^{-\frac{4}{d_x+2}}(\ln N)^{-\frac{4d_x+4}{d_x+2}}+\frac{1}{N} \right).
 	\label{eq:KLmselb}
 	\end{eqnarray}
 \end{thm}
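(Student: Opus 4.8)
(Proof proposal)
The plan is to establish the two terms of \eqref{eq:KLmselb} by two separate reductions and combine them: since $\inf_{\hat h}\sup_f\mathbb{E}[(\hat h-h)^2]$ dominates each individual lower bound and $\max(A,B)\geq\tfrac12(A+B)$, it suffices to prove separately that $\inf_{\hat h}\sup_{f\in\mathcal{F}_{M,C}}\mathbb{E}[(\hat h(\mathbf X)-h(\mathbf X))^2]=\Omega(1/N)$ and $=\Omega\!\big(N^{-4/(d_x+2)}(\ln N)^{-(4d_x+4)/(d_x+2)}\big)$. Both use the standard device: if $\{f_\omega\}$ is a finite family contained in $\mathcal{F}_{M,C}$ whose entropies $h(f_\omega)$ are spread out by $\Delta$ while the $N$-fold product laws $f_\omega^{\otimes N}$ are statistically close, then every estimator incurs worst-case mean square error $\gtrsim\Delta^2$.

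For the $\Omega(1/N)$ term I would use Le Cam's two-point method with a fixed pair $f_0,f_1\in\mathcal{F}_{M,C}$: take $f_0$ a fixed smooth density (say with bounded support, bounded away from zero on its interior, and with slack in $\|\nabla^2 f_0\|_\infty$ and in \eqref{eq:tail} — which is exactly why $M$ and $C$ must be sufficiently large), and set $f_1=f_0+N^{-1/2}\phi$ for a fixed smooth, mean-zero, compactly supported $\phi$ located where $f_0$ is bounded below. Then $|h(f_1)-h(f_0)|=\Theta(N^{-1/2})$, while $\mathrm{KL}(f_1^{\otimes N}\|f_0^{\otimes N})=N\,\mathrm{KL}(f_1\|f_0)=\mathcal{O}(1)$, so $f_0^{\otimes N}$ and $f_1^{\otimes N}$ have total variation bounded away from $1$, and Le Cam's lemma yields $\Omega(1/N)$.

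For the nonparametric term I would use the method of multiple hypotheses (Fano/Assouad, or equivalently a mixture over a sign-pattern prior), with a hard-instance family that exploits the slack Assumption (b) leaves deep in the tail. The key structural fact is that \eqref{eq:tail} forces the super-level sets $\{f>t\}$ to have volume only $\mathcal{O}(C\ln(1/t))$, and any single octave $\{t<f\leq 2t\}$ to have volume $\mathcal{O}(C)$; hence $f_0$ may be given a low-density ``plateau'' at a level $\tau_N$ that decays polynomially in $N$ (up to logarithmic factors) without violating (b), and where $f_0$ is that small $\|\nabla^2 f_0\|_\infty$ is essentially unused, so perturbing $f_0$ by a constant relative amount on cells of side $w_N$ remains admissible under Assumption (a). Concretely, glue such a plateau to a fixed bulk density and let $\{f_\omega\}$, indexed by a sign/subset pattern $\omega$, modify only the plateau at scale $w_N$, restoring $\int f_\omega\,d\mathbf x=1$ by a smooth low-amplitude correction in the bulk. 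One must check: (i) every $f_\omega\in\mathcal{F}_{M,C}$ for the prescribed constants (again forcing $M,C$ large, and one must verify the bulk correction preserves both the smoothness and \eqref{eq:tail}); (ii) the laws $f_\omega^{\otimes N}$ are indistinguishable — enforced by the scaling $N\tau_N w_N^{d_x}\asymp 1$ (at most $\mathcal{O}(1)$ expected samples per plateau cell), via a $\chi^2$ bound between the mixture and $f_0^{\otimes N}$; (iii) the entropies $h(f_\omega)$ span a range of order $\tau_N\ln(1/\tau_N)$, the factor $\ln(1/\tau_N)\asymp\ln N$ being the amplification coming from the fact that a density-$\tau_N$ region contributes to $-\int f\ln f$ with weight $\ln(1/\tau_N)$. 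Balancing the smoothing scale against the unresolved-tail scale, i.e. $w_N^2\asymp\tau_N\ln(1/\tau_N)$ together with $N\tau_N w_N^{d_x}\asymp 1$, gives $w_N^{d_x+2}\asymp N^{-1}\ln N$, hence $\tau_N$ of order $N^{-2/(d_x+2)}$ and an entropy gap $\Delta$ of order $N^{-2/(d_x+2)}$, up to logarithmic factors; then $\Delta^2$ has the order claimed in \eqref{eq:KLmselb}.

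The main obstacle is the nonparametric construction. Meeting (i)--(iii) with a single pair $(M,C)$ is delicate, but the real difficulty lies in (ii)+(iii) jointly: the entropy gap must be driven by the \emph{full sum} of the per-cell entropy changes — which is what produces the $\tau_N\ln(1/\tau_N)$ scale — and not merely by their $\ell_2$ aggregate, which would cost a polynomial factor in $N$; arranging a prior on $\omega$ under which $h(f_\omega)$ is essentially constant while the mixture $\mathbb{E}_\omega[f_\omega^{\otimes N}]$ stays close to $f_0^{\otimes N}$ is precisely the two-fuzzy-hypotheses balancing act. This balancing act is also what pins down the exact — and, notably, negative — power of $\ln N$ in \eqref{eq:KLmselb}: the hard family can only be made indistinguishable at the price of extra logarithmic factors beyond those in the matching upper bound \eqref{eq:msebound}. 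Tracking these logarithms, through both the $\ln(1/\tau_N)$ in the entropy and the constant in the one-sample-per-cell indistinguishability threshold, is the most technical part of the argument.
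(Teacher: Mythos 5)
Your two-term decomposition and your $\Omega(1/N)$ argument are essentially the paper's: the paper also uses Le Cam's two-point method with a perturbation of size $N^{-1/2}$ (realized there as a $\delta=1/\sqrt N$ shift of the mixing weight in a two-bump density $\tfrac23 g+\tfrac13 g(\cdot-\mathbf a)$), checks membership in $\mathcal F_{M,C}$ for large $M,C$, and bounds $\mathrm{KL}(f_1^{\otimes N}\Vert f_2^{\otimes N})=\mathcal O(1)$. That half of your proposal is fine; the only point to verify is that the first-order term $\int\phi\ln f_0\,d\mathbf x$ is nonzero so that the entropy gap is genuinely $\Theta(N^{-1/2})$.

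For the nonparametric term there is a genuine gap. You have the right architecture (a low-density region exploiting the slack in Assumption (b), cells at scale $w_N$ with $Nw_N^{d_x+2}\asymp 1$, entropy amplification by $\ln(1/\tau_N)$, and a two-fuzzy-hypotheses reduction), and you correctly name the obstacle — that the entropy gap must come from the full sum of per-cell contributions while the mixtures stay indistinguishable — but you do not supply the ingredient that resolves it. A sign/subset pattern $\omega$ with independent symmetric perturbations cannot work: the entropy functional is linear in the perturbation to first order, so either the two classes of $h(f_\omega)$ coincide to first order (no gap of order $\tau_N\ln(1/\tau_N)$) or the mean perturbations differ and the product laws separate. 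The paper's resolution, imported from the Wu--Yang analysis of discrete entropy, is to put i.i.d.\ random weights $U_i$ (resp.\ $U_i'$) on the $m$ cells, where $U,U'$ are supported on $[0,\lambda]$, share moments up to order $L\asymp\ln m$, and differ in $\mathbb E[U\ln(1/U)]$ by an amount equal (by duality) to twice the best degree-$L$ polynomial approximation error of $\ln x$ on $[\eta,1]$ with $\eta\asymp L^{-2}$, which is bounded below by a constant. Indistinguishability then follows not from a $\chi^2$ bound on the multinomial mixture (which does not tensorize) but from Poissonization — making the cell counts independent $\mathrm{Poi}(NU_i/m)$ — combined with the moment-matching total-variation bound $\mathbb{TV}(\mathbb E[\mathrm{Poi}(V)],\mathbb E[\mathrm{Poi}(V')])\leq(2eA/L)^L$, plus de-Poissonization and concentration lemmas to return to fixed $N$ and to exact densities. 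It is exactly this $1/L\asymp 1/\ln N$ loss in the admissible total mass $\alpha$ on the perturbed region that produces the negative powers of $\ln N$ in \eqref{eq:KLmselb}; your balancing heuristic, which gives $\Delta\asymp\tau_N\ln(1/\tau_N)$ and hence a positive log power, cannot be completed as stated.
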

 \begin{proof}
 	\textcolor{black}{Please refer to Appendix \ref{sec:KLbiaslb} for the proof.}
 \end{proof}
 \textcolor{black}{Theorem \ref{thm:KLmselb} shows that the gap between the convergence rate of the derived upper bound of the mean square error of KL estimator and the minimax lower bound is a log-polynomial factor, which implies that the truncated KL estimator is nearly minimax rate optimal.}
 
 \textcolor{black}{We now compare our results with related work \cite{tsybakov1996root,jiao2018nearest,berrett2019efficient,han2017optimal}. We generalize the result in \cite{tsybakov1996root} to arbitrary fixed $k$ and dimensionality, and obtain a tighter bound of the bias by selecting a different truncation parameter. Moreover, our upper bound of the mean square error \eqref{eq:msebound} is the same as the result of \cite{jiao2018nearest}, if the H{\"o}lder parameter $s$ in \cite{jiao2018nearest} is 2. Actually, if $s=2$, then the assumptions in \cite{jiao2018nearest} can be viewed as a special case of our analysis, since according to \eqref{eq:bounded}, assumption (b) in Theorem \ref{thm:KLbias} is satisfied for all distributions with bounded support. We note that the convergence rate derived is slower than the result in \cite{berrett2019efficient}. However, in \cite{berrett2019efficient}, the partial derivatives of the pdf are required to decay almost as fast as the pdf itself in the tails of the distribution, while we only have a overall bound on the Hessian of the pdf. Moreover, we do not assume a bound on the moment of the distribution. Consider that the gap between upper bound \eqref{eq:msebound} and minimax lower bound \eqref{eq:KLmselb} is only a log polynomial factor, we believe that our bound can not be significantly improved further in general, although it is possible that for some specific distributions, the actual convergence rate of KL estimator is faster than the bound we derived. Moreover, we note that \cite{han2017optimal} also provides a minimax analysis of entropy estimation. The bounds in \eqref{eq:msebound} and \eqref{eq:KLmselb} are consistent with the minimax bound in Theorem 6 in \cite{han2017optimal}, for the special case when the smoothness index $s=2$. The main difference between our work and \cite{han2017optimal} lies on the assumptions: Theorem 6 in \cite{han2017optimal} focuses on the case in which $f$ is compactly supported within $[0,1]^d$, while our upper and lower bound do not require the support set to be bounded.}   %and the lower bound \eqref{eq:KLmselb}, we obtained the same bound as \cite{jiao2018nearest} if the H{\"o}lder parameter is $2$. Note that \cite{jiao2018nearest} discussed the case in which $\mathbf{X}\in [0,1]^d$, while we do not have restrictions on the boundedness of the support.}
	\section{KSG Mutual Information Estimator}\label{sec:ksg}
	In this section, we focus on KSG mutual information estimator. Consider two continuous random variables $\mathbf{X} \in \mathbb{R}^{d_x}$ and $\mathbf{Y} \in \mathbb{R}^{d_y}$ with unknown pdf $f(\mathbf{x},\mathbf{y})$. The mutual information between $\mathbf{X}$ and $\mathbf{Y}$ is %defined as
	\begin{eqnarray}
	I(\mathbf{X};\mathbf{Y})=h(\mathbf{X})+h(\mathbf{Y})-h(\mathbf{X},\mathbf{Y}).
	\end{eqnarray}
	Define the joint variable $\mathbf{Z}=(\mathbf{X},\mathbf{Y})\in \mathbb{R}^{d_z}$ with $d_z=d_x+d_y$, and define the metric in the $\mathbb{R}^{d_z}$ space as
	\begin{align}
	d(\mathbf{z},\mathbf{z'})=\max\{\norm{\mathbf{x}-\mathbf{x}'},\norm{\mathbf{y}-\mathbf{y}'}\}.
	\label{eq:metric}
	\end{align}
	\textcolor{black}{\cite{ksg} proposed two KSG mutual information estimators. In this paper, we analyze the first one, which can be expressed as}
	\begin{eqnarray}
	\hat{I}(\mathbf{X};\mathbf{Y}) &=& \psi(N)+\psi(k)  -\frac{1}{N}\sum_{i=1}^N \psi(n_x(i)+1)\nonumber\\
	&&-\frac{1}{N}\sum_{i=1}^N \psi(n_y(i)+1),
	\label{eq:KSG}
	\end{eqnarray}
	with
	\begin{align}
	n_x(i)=\sum_{j=1}^N \mathbf{1}(\norm{\mathbf{x}(j)-\mathbf{x}(i)}<\epsilon(i)),\nonumber\\
	n_y(i)=\sum_{j=1}^N \mathbf{1}(\norm{\mathbf{y}(j)-\mathbf{y}(i)}<\epsilon(i)),\nonumber
	\end{align}
	in which $\epsilon(i)$ is the distance from $\mathbf{z}(i)=(\mathbf{x}(i),\mathbf{y}(i))$ to its $k$-th nearest neighbor using the distance metric defined in \eqref{eq:metric}.
	
	\textcolor{black}{Recall that the original KL estimator is not consistent for some distributions satisfying our assumptions, and thus we use a truncated one instead. However, the situation for KSG estimator is different. From \eqref{eq:KSG}, we observe that unlike the original KL estimator, KSG estimator avoids the $\ln \epsilon(i)$ term, therefore the effect caused by large kNN distances is limited. Note that $n_x(i)$ and $n_y(i)$ can not be less than $k$ or more than $N$, therefore $ \psi(n_x(i)+1)$ and $\psi(n_y(i)+1)$ are both always in $[\ln(k+1),\ln(N+1)]$. Hence, if $n_x(i)$ and $n_y(i)$ for a sample $i$ differ significantly from others, the influence on the accuracy is at most $(\ln (N+1))/N$. This ensures the robustness of KSG estimator. Therefore, in the following analysis, we use the original KSG estimator without truncation. }
	%We give the convergence rate of bias and variance of KSG estimator. 
	
	\textcolor{black}{Our analysis of the bias of KSG estimator is based on the following assumptions:}
	\begin{assum}\label{ass:KSG}
		There exist finite constants $C_a$, $C_b$, $C_c$, $C_c'$, $C_d$, $C_d'$ and $C_e$, such that\\
		(a) $f(\mathbf{x},\mathbf{y})\leq C_a$ almost everywhere;\\
		(b) The two marginal pdfs are both bounded, i.e. $f(\mathbf{x})\leq C_b$, and $f(\mathbf{y})\leq C_b$;\\
		(c) The joint and marginal densities satisfy 
		\begin{eqnarray}
			\int f(\mathbf{x},\mathbf{y}) \exp(-bf(\mathbf{x},\mathbf{y}))d\mathbf{x}d\mathbf{y}&\leq& C_c/b,\label{eq:decay}\\
			\int f(\mathbf{x})\exp(-bf(\mathbf{x}))d\mathbf{x}&\leq& C_c'/b,\nonumber\\
			\int f(\mathbf{y})\exp(-bf(\mathbf{y}))d\mathbf{y}&\leq& C_c'/b\nonumber
		\end{eqnarray}
		for all $b>0$;\\
		(d) The Hessian of joint distribution and marginal distribution are bounded everywhere, i.e. $\norm{\nabla^2 f(\mathbf{z})}_{op}\leq C_d, \hspace{2mm}\norm{\nabla^2 f(\mathbf{x})}_{op}\leq C_d', \text{ and } \norm{\nabla^2 f(\mathbf{y})}_{op}\leq C_d';$\\
		(e) The two conditional pdfs are both bounded, i.e. $f(\mathbf{x}|\mathbf{y})\leq C_e$ and $f(\mathbf{y}|\mathbf{x})\leq C_e$.
	\end{assum}

	It was proved in \cite{gao2018demystifying} that under its Assumption 2, KSG estimator is consistent, but the convergence rate was unknown. Note that the distributions that satisfy the Assumption 2 of \cite{gao2018demystifying} may have arbitrarily slow convergence rate, especially for heavy tail distributions. Our assumptions are stronger than Assumption 2 of \cite{gao2018demystifying}, in which (a)-(c) were not required. In \cite{gao2018demystifying}, the convergence rate was derived under its Assumption 3, which also strengthens its Assumption 2. The main difference between Assumption 3 of \cite{gao2018demystifying} and our assumptions is that \cite{gao2018demystifying} requires 
	\begin{eqnarray}
	\int f(\mathbf{x},\mathbf{y}) \exp(-bf(\mathbf{x},\mathbf{y}))d\mathbf{x}d\mathbf{y}\leq C_c e^{-C_0 b}.\label{eq:expassu}
	\end{eqnarray}
	One can show that a joint pdf satisfying assumption~\eqref{eq:expassu} is bounded away from $0$ and the distribution must have bounded support (For completeness, we provide a proof of this statement in Appendix \ref{sec:statement-2}). On the contrary, we only require this integration to decay inversely with $b$, see~\eqref{eq:decay}. This new assumption is valid for distributions whose joint pdf can approach zero as close as possible, thus our analysis holds for distributions with both bounded and unbounded support. \textcolor{black}{This assumption roughly requires that both the marginal density and the joint density have exponentially decreasing tails. For example, joint Gaussian distribution satisfies this assumption.} Another difference is that we strengthen the Hessian from bounded almost everywhere to everywhere, to ensure the smoothness of density, and thus avoid the boundary effect. Figure \ref{fig:compare} illustrates the difference between \cite{gao2018demystifying} and our analysis. \cite{gao2018demystifying} holds for type (a), such as uniform distribution, while our analysis holds for type (b) and (c), such as Gaussian distribution. In addition, we do not truncate the kNN distances as in \cite{gao2018demystifying}.
	\begin{figure}[h!]
		\begin{subfigure}{0.3\linewidth}
			\includegraphics[height=1.5cm]{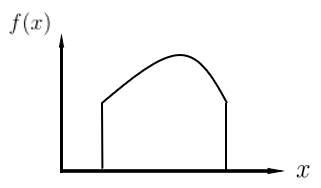}
			\caption{Bounded support, pdf is bounded away from zero. }
		\end{subfigure}
		\begin{subfigure}{0.3\linewidth}
		\includegraphics[height=1.5cm]{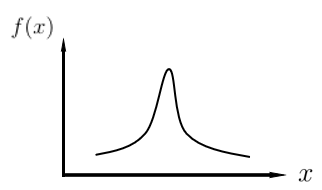}
		\caption{Unbounded support, pdf has a long tail.}
		\end{subfigure}	
	\begin{subfigure}{0.3\linewidth}
		\includegraphics[height=1.5cm]{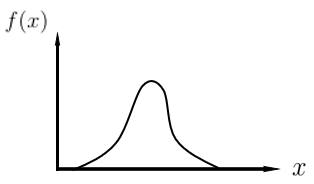}
		\caption{Bounded support, pdf can approach zero.}
	\end{subfigure}
	\caption{Comparison of three types of distributions. The convergence rate of KSG estimator for type (a) was derived in \cite{gao2018demystifying}, while we analyze type (b) and (c).}\label{fig:compare}
	\end{figure}

	To deal with these assumption differences, our derivation is significantly different from those of \cite{gao2018demystifying}. Theorem \ref{thm:KSG} gives an upper bound of bias under these assumptions.
	\begin{thm}\label{thm:KSG}
	\textcolor{black}{Under the Assumption \ref{ass:KSG}, for fixed $k>1$ and sufficiently large $N$, the bias of KSG estimator is bounded by
		\begin{eqnarray}
		&&\hspace{-7mm}|\mathbb{E}[\hat{I}(\mathbf{X};\mathbf{Y})]-I(\mathbf{X};\mathbf{Y})|\nonumber\\
		&=&\mathcal{O}\left(N^{-\frac{2}{d_z+2}}\ln N \right)+\mathcal{O}\left(N^{-\frac{\min\{d_x,d_y\}}{d_z}}\right).
		\end{eqnarray}
%	in which $d_z=d_x+d_y$.
}
	\end{thm}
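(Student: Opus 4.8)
(Sketch of the proposed approach.)
The plan is to turn the bias into the sum of three ``entropy--type'' biases — one for the joint variable and one for each marginal, all driven by the \emph{joint} $k$NN radius $\epsilon$ — plus a digamma correction, and then to bound each piece. Starting from $\mathbb{E}[\hat I(\mathbf{X};\mathbf{Y})]=\psi(N)+\psi(k)-\mathbb{E}[\psi(n_x+1)]-\mathbb{E}[\psi(n_y+1)]$, I would write, for $\mathbf{W}\in\{\mathbf{X},\mathbf{Y},\mathbf{Z}\}$ (with $B(\mathbf{W},\epsilon)$ the ball of radius $\epsilon$ in the $\mathbf{W}$-space, of volume $c_{d_W}\epsilon^{d_W}$),
$$\mathbb{E}[\ln P_W(B(\mathbf{W},\epsilon))]=-h(\mathbf{W})+\ln c_{d_W}+d_W\,\mathbb{E}[\ln\epsilon]+\delta_W,\qquad \delta_W:=\mathbb{E}\Big[\ln\tfrac{P_W(B(\mathbf{W},\epsilon))}{f(\mathbf{W})c_{d_W}\epsilon^{d_W}}\Big],$$
use the order-statistics identity $\mathbb{E}[\ln P_Z(B(\mathbf{Z},\epsilon))]=\psi(k)-\psi(N)$ to eliminate $\psi(k)$, and, for the marginals, insert $\mathbb{E}[\psi(n_x+1)]=\Delta_x+\ln N+\mathbb{E}[\ln P_X(B(\mathbf{X},\epsilon))]$ with $\Delta_x:=\mathbb{E}[\psi(n_x+1)-\ln(N\,P_X(B(\mathbf{X},\epsilon)))]$, and similarly for $\mathbf{Y}$. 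Because $d_z=d_x+d_y$ the three copies of $\mathbb{E}[\ln\epsilon]$ cancel, and because $c_{d_z}=c_{d_x}c_{d_y}$ under the metric \eqref{eq:metric} the $\ln c$ terms cancel, leaving the exact identity
$$\mathbb{E}[\hat I]-I(\mathbf{X};\mathbf{Y})=2\bigl(\psi(N)-\ln N\bigr)+\bigl(\delta_z-\delta_x-\delta_y\bigr)-\bigl(\Delta_x+\Delta_y\bigr),$$
so that, the first term being $\mathcal{O}(1/N)$, everything reduces to bounding the $\delta$'s and the $\Delta$'s.

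For $\delta_z,\delta_x,\delta_y$ I would adapt the central/tail machinery of the proof of Theorem~\ref{thm:KLbias}. Split the joint support into $S_1=\{f(\mathbf{z})\geq\mu_N\}$ and its complement $S_2$, and bound each $\delta_W$ on each part. On $S_1$, Assumption~\ref{ass:KSG}(e) forces $f(\mathbf{x}),f(\mathbf{y})\gtrsim\mu_N$, a large-deviation bound on the $k$NN radius gives $\epsilon^{d_z}\lesssim 1/(N\mu_N)$ with overwhelming probability, and a second-order Taylor expansion using the Hessian bounds in Assumption~\ref{ass:KSG}(d) controls $\ln\frac{P_W(B(\mathbf{W},\epsilon))}{f(\mathbf{W})c_{d_W}\epsilon^{d_W}}$; here one must exploit the cancellation $\int\nabla^2 f=0$ (equivalently, integrate the Laplacian term by parts into a $\int|\nabla f|^2 f^{-2/d_W}$-type quantity), exactly as in Theorem~\ref{thm:KLbias}, since the crude bound by the absolute Hessian term is only $\mathcal{O}(1)$. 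On $S_2$, Assumption~\ref{ass:KSG}(c) gives $P(S_2)\lesssim\mu_N$ and also forces $\int f(\ln f)^2<\infty$ for the joint and both marginals, so Cauchy--Schwarz bounds the $S_2$ contribution by $\mathcal{O}(\mu_N\ln N+\sqrt{\mu_N})$. Balancing the two regions with $\mu_N$ of order $N^{-2/(d_z+2)}$ yields $|\delta_z|,|\delta_x|,|\delta_y|=\mathcal{O}(N^{-2/(d_z+2)}\ln N)$, the first term of the claim. Since KSG performs no truncation, the $S_2$ estimates must control the contributions of large $k$NN radii purely from the tail assumption rather than from a truncation radius.

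To bound $\Delta_x$ I would condition on $\mathbf{z}(i)$ and on the joint radius $\epsilon(i)=\epsilon$: then $n_x(i)$ is a deterministic part of size $k+\mathcal{O}(1)$ (point $i$, the $k-1$ points strictly inside the joint ball, and the $k$NN, which lies in the $\mathbf{x}$-slab with some probability) plus a $\mathrm{Binomial}\bigl(N-1-k,\,\tfrac{P_X(B(\mathbf{x},\epsilon))-P_Z(B(\mathbf{z},\epsilon))}{1-P_Z(B(\mathbf{z},\epsilon))}\bigr)$ term, so that $\mathbb{E}[n_x(i)\mid\mathbf{z},\epsilon]=N\,P_X(B(\mathbf{x},\epsilon))-\bigl(N\,P_Z(B(\mathbf{z},\epsilon))-k\bigr)+\mathcal{O}(1)$ and its variance is of the same order. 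Using $\psi(n+1)=\ln n+\tfrac{1}{2n}+\mathcal{O}(n^{-2})$ together with $\mathbb{E}[\ln n_x]\approx\ln\mathbb{E}[n_x]-\tfrac{\mathrm{Var}(n_x)}{2(\mathbb{E} n_x)^2}$, the $\mathcal{O}(1/\mathbb{E} n_x)$ corrections largely cancel, leaving a residual of order $1/(N\,P_X(B(\mathbf{x},\epsilon)))$ coming from the $\mathcal{O}(1)$ deviation of $\mathbb{E}[n_x\mid\mathbf{z},\epsilon]$ from $N\,P_X(B(\mathbf{x},\epsilon))$ (the $(N P_Z(B)-k)$ part being mean zero over $\epsilon$ but, being correlated with $P_X(B)$, not averaging away). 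Since $\epsilon^{d_z}\approx k/(N f(\mathbf{z})c_{d_z})$ forces $N\,P_X(B(\mathbf{x},\epsilon))$ to be of order $N^{1-d_x/d_z}=N^{d_y/d_z}$, integrating over $\epsilon$ and then over $(\mathbf{x},\mathbf{y})$ — the resulting integral $\mathbb{E}[f(\mathbf{Z})^{d_x/d_z}/f(\mathbf{X})]$ being finite under Assumptions~\ref{ass:KSG}(a),(c),(e) — gives $|\Delta_x|=\mathcal{O}(N^{-d_y/d_z})$. By symmetry $|\Delta_y|=\mathcal{O}(N^{-d_x/d_z})$, hence $|\Delta_x|+|\Delta_y|=\mathcal{O}(N^{-\min\{d_x,d_y\}/d_z})$, the second term of the claim; the hypothesis $k>1$ is used here so that enough points lie strictly inside the joint ball for the moment estimates on $n_x(i),n_y(i)$ to hold uniformly.

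I expect the main obstacle to be the central-region estimate for $\delta_z,\delta_x,\delta_y$: the naive bound there is only $\mathcal{O}(1)$, so one genuinely needs the cancellation $\int\nabla^2 f=0$ together with a careful analysis of how the joint $k$NN radius correlates with the density in order to push the central bias down to $\mathcal{O}(N^{-2/(d_z+2)}\ln N)$, and the absence of truncation makes the $S_2$ estimates more delicate than in Theorem~\ref{thm:KLbias}. A secondary difficulty is the bookkeeping in $\Delta_x,\Delta_y$ of exactly which $\mathcal{O}(1/n)$-order corrections survive — it is precisely the surviving residual, of order $1/(N P_X(B))\asymp N^{-d_y/d_z}$, that produces the exponent $\min\{d_x,d_y\}/d_z$.
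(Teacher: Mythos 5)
Your overall architecture matches the paper's: decompose the KSG estimator into a joint and two marginal KL-type entropy estimators driven by the joint $k$NN radius (using $c_{d_z}=c_{d_x}c_{d_y}$ and $d_z=d_x+d_y$ under the max metric so the $\ln\epsilon$ and $\ln c$ terms cancel), bound the joint term by Theorem~\ref{thm:KLbias}, and handle the marginal terms via the conditional binomial law of $n_x$ given $(\mathbf{z},\epsilon)$ together with a digamma expansion whose $1/(Np)$ residual integrates to $\mathcal{O}(N^{-\min\{d_x,d_y\}/d_z})$ (this is exactly the role of Lemmas~\ref{lem:binomial}--\ref{lem:rho} in the paper, and the place where $k>1$ is genuinely needed is $\mathbb{E}[1/P(B(\mathbf{Z},\epsilon))\,|\,\mathbf{Z}]=(N-1)/(k-1)$). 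Two points deserve correction.

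First, your claimed ``main obstacle'' — that the crude absolute-Hessian bound on the central region is only $\mathcal{O}(1)$, so one must exploit the cancellation $\int\nabla^2 f=0$ ``exactly as in Theorem~\ref{thm:KLbias}'' — is not right, and Theorem~\ref{thm:KLbias} does no such thing. The paper's bound of $I_2$ uses only $|P(B(\mathbf{x},r))-f(\mathbf{x})c_{d}r^{d}|\leq C_1 r^{d+2}$ (the gradient term vanishes by symmetry of the ball; the Hessian term is bounded in absolute value), together with $\rho\leq a_N=AN^{-\beta}$ and the fact (Lemma~\ref{lem:V}) that the central region has volume $\mathcal{O}(\ln N)$; this already yields $\mathcal{O}(N^{-2\beta}\ln N)=\mathcal{O}(N^{-2/(d_z+2)}\ln N)$. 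Pursuing a $\int|\nabla f|^2 f^{-2/d_W}$-type cancellation would require integrability assumptions not contained in Assumption~\ref{ass:KSG}, and is unnecessary. Second, your exact identity is written with the raw radius $\epsilon$, which makes the $\delta_W$ terms (and even the finiteness of $\mathbb{E}[\ln\epsilon]$ needed to split the expectation) delicate in the tail, as you note. The paper's cleaner device is to substitute $\rho=\min\{\epsilon,a_N\}$ for $\epsilon$ throughout the decomposition — the substitution is free because the $d_W\ln\rho$ terms cancel identically just as the $d_W\ln\epsilon$ terms do — and to peel off the event $\{\epsilon>a_N\}$ first using $|J|\leq\ln N$ and $P(\epsilon>a_N)=\mathcal{O}(N^{-2/(d_z+2)})$; you would be well advised to adopt this rather than fighting the untruncated tail directly. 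With these repairs your sketch is the paper's proof.
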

\begin{proof} (Outline)	Recall that KSG estimator is an adaptive combination of two adaptive KL estimators that estimate the marginal entropy, and one original KL estimator that estimates the joint entropy. We express KSG estimator in the following way:
	\begin{eqnarray}
	\hat{I}(\mathbf{X};\mathbf{Y})=\frac{1}{N}\sum_{i=1}^N T(i)=\frac{1}{N}\sum_{i=1}^N [T_x(i)+T_y(i)-T_z(i)],\nonumber
	\end{eqnarray}
	in which
	\begin{eqnarray}
	T(i):=\psi(N)+\psi(k)-\psi(n_x(i)+1)-\psi(n_y(i)+1),\nonumber
	\end{eqnarray}
	and
	\begin{eqnarray}
	T_z(i)&:=&-\psi(k)+\psi(N)+\ln c_{d_z}+d_z\ln \rho(i),\nonumber\\
	T_x(i)&:=&-\psi(n_x(i)+1)+\psi(N)+\ln c_{d_x}+d_x\ln \rho(i),\nonumber\\
	T_y(i)&:=&-\psi(n_y(i)+1)+\psi(N)+\ln c_{d_y}+d_y\ln \rho(i),\nonumber
	\end{eqnarray}
	\textcolor{black}{in which we $\rho(i)=\min\{\epsilon,a_N \}$. Note that although we analyze the original KSG estimator without truncation, we can decompose it to truncated KL estimators for the convenience of analysis.}
	 We bound the bias of these three KL estimators separately. Note that $\frac{1}{N}\sum_{i=1}^N T_z(i)$ is actually the KL estimator for the joint entropy. Therefore the bias of joint entropy estimator $\mathbb{E}[T_z]-h(\mathbf{Z})$ can be bounded using Theorem \ref{thm:KLbias}.  For the marginal entropy estimators $\frac{1}{N}\sum_{i=1}^N T_x(i)$ and $\frac{1}{N}\sum_{i=1}^N T_y(i)$, we only need to analyze $T_x$, and then the bound of $T_y$ can be obtained in the same manner. Note that
	 \begin{eqnarray}
	 \mathbb{E}[T_x]-h(\mathbf{X})=\mathbb{E}[\mathbb{E}[T_x|\mathbf{X}]+\ln f(\mathbf{X})],\nonumber
	 \end{eqnarray} 
	 and we call $\mathbb{E}[T_x|\mathbf{X}]+\ln f(\mathbf{X})$ the \textit{local bias}. The pointwise convergence rate of the local bias is $\mathcal{O}(N^{-\frac{2}{d_x}})$. However, the overall convergence rate is slower than the pointwise convergence rate. In the setting discussed in \cite{gao2018demystifying}, the boundary bias is dominant. In our case, by dividing the whole support into a central region and a tail region, with the threshold selected carefully, we let the convergence rate of bias at these two regions decay with approximately the same rate.	For detailed proof, please see Appendix \ref{sec:KSGbias}.
\end{proof}

\textcolor{black}{The following theorem gives a bound on the variance of KSG estimator, which holds for all continuous distributions, even if Assumption \ref{ass:KSG} is not satisfied.
\begin{thm}\label{thm:KSGvar}
	 If $(\mathbf{X},\mathbf{Y})$ has pdf $f(\mathbf{x},\mathbf{y})$, then the variance of KSG estimator is bounded by
	\begin{align}
	\Var\left[\hat{I}(\mathbf{X};\mathbf{Y})\right]= \mathcal{O} \left(\frac{(\ln N)^2}{N}\right).
	\end{align}
\end{thm}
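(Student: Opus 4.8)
The plan is to control the variance via the Efron--Stein inequality. Write $\hat{I}(\mathbf{X};\mathbf{Y}) = g(\mathbf{z}(1),\dots,\mathbf{z}(N))$, let $\mathbf{z}'(i)$ be an independent copy of $\mathbf{z}(i)$, and let $\hat{I}^{(i)}$ denote the estimator recomputed after replacing $\mathbf{z}(i)$ by $\mathbf{z}'(i)$. Efron--Stein gives $\Var[\hat{I}]\le\frac{1}{2}\sum_{i=1}^N\mathbb{E}\big[(\hat{I}-\hat{I}^{(i)})^2\big]$, so it suffices to prove the deterministic stability bound $|\hat{I}-\hat{I}^{(i)}| = \mathcal{O}(\ln N/N)$, uniformly over all $N$-point configurations in general position (which occur almost surely when $(\mathbf{X},\mathbf{Y})$ has a density; note that no assumption on $f$ is needed). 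Substituting then gives $\Var[\hat{I}]\le\frac{1}{2}N\cdot\mathcal{O}\big((\ln N)^2/N^2\big)=\mathcal{O}\big((\ln N)^2/N\big)$.

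For the stability bound, let $A$ be the configuration containing $\mathbf{z}(i)$ and $B$ the one containing $\mathbf{z}'(i)$; they share the same $N-1$ other points. Since $\hat{I}=\psi(N)+\psi(k)-\frac{1}{N}\sum_j[\psi(n_x(j)+1)+\psi(n_y(j)+1)]$, the quantity $\hat{I}(A)-\hat{I}(B)$ equals $1/N$ times the change in $\sum_j[\psi(n_x(j)+1)+\psi(n_y(j)+1)]$, and I would split the summands into: (i) $j=i$; (ii) $j\ne i$ whose $k$-NN radius $\epsilon(j)$ changes from $A$ to $B$; (iii) $j\ne i$ whose $\epsilon(j)$ is unchanged. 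For (i) and (ii) use the crude bound $\psi(m+1)\in[\psi(1),\psi(N)]$, so each such $j$ contributes $\mathcal{O}(\ln N)$; and invoke the classical packing fact that any fixed point is among the $k$ nearest neighbours of at most $k\gamma$ other points (with $\gamma$ depending only on the dimension and the norm) to bound the cardinality of (ii) by $2k\gamma$. Hence (i)--(ii) contribute $\mathcal{O}(\ln N)$ in total.

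The substantive step is (iii). When $\epsilon(j)$ is unchanged, $n_x(j)$ can change only when exactly one of $\mathbf{x}(i),\mathbf{x}'(i)$ lies in the $\mathbf{x}$-space ball $B(\mathbf{x}(j),\epsilon(j))$, and then by at most one; since $\psi(m+2)-\psi(m+1)=1/(m+1)$, the change in $\psi(n_x(j)+1)$ for such a $j$ is at most $1/n_x(j)$, and likewise for the $\mathbf{y}$-coordinate. Therefore the contribution of (iii) is bounded by $\sum_{j:\mathbf{x}(i)\in B(\mathbf{x}(j),\epsilon(j))}1/n_x(j)$ plus three analogous sums (for $\mathbf{x}'(i)$ and for the $\mathbf{y}$-coordinate). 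The key observation is that if a point $\mathbf{x}_0$ lies in $B(\mathbf{x}(j),\epsilon(j))$ and $n_x(j)\le c$, then fewer than $c$ of the $\mathbf{x}$-samples are strictly closer to $\mathbf{x}(j)$ than $\mathbf{x}_0$ is, that is, $\mathbf{x}_0$ is among the $c$ nearest neighbours of $\mathbf{x}(j)$; by the same packing fact, at most $c\gamma$ indices $j$ satisfy this. A short Abel summation over the level sets $\{j:n_x(j)=c\}$ then yields $\sum_{j:\mathbf{x}_0\in B(\mathbf{x}(j),\epsilon(j))}1/n_x(j)=\mathcal{O}(\ln N)$, so (iii) also contributes $\mathcal{O}(\ln N)$.

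Combining the three groups gives $|\hat{I}(A)-\hat{I}(B)|=\mathcal{O}(\ln N/N)$ with a constant depending only on $k$, $d_x$, $d_y$ and the norm, and Efron--Stein finishes the argument. I expect the main obstacle to be group (iii): a naive count shows $n_x(j)$ can change for as many as $\Omega(N)$ indices $j$, so bounding (iii) by $\mathcal{O}(\ln N)$ is hopeless without exploiting that the weights $1/n_x(j)$ are tiny for precisely those many $j$ whose $\mathbf{x}$-balls are dense --- which is what the packing-lemma reduction above accomplishes.
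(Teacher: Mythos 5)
Your proposal is correct and follows essentially the same route as the proof the paper relies on: the paper simply defers to Theorem 6 of \cite{gao2018demystifying}, whose argument is exactly this Efron--Stein bound combined with Stone's packing lemma (the same Lemma \ref{lem:neighbor} the paper uses for Theorem \ref{thm:KLvar}), including the weighted count over $\{j: n_x(j)=c\}$ that turns the $\Omega(N)$ potentially affected indices into an $\mathcal{O}(\ln N)$ contribution. You have in effect supplied the details the paper omits, correctly adapted to the untruncated estimator.
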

\begin{proof}
	We refer to Theorem 6 in \cite{gao2018demystifying} for the proof. Although the bound in \cite{gao2018demystifying} is derived for truncated KSG estimator, it can be shown that the steps in \cite{gao2018demystifying} actually also hold for the original KSG estimator. Details are omitted for brevity. 
\end{proof}
}
%	The rate of convergence of mean square error of KSG estimator approximately attains parametric rate for $d_x=d_y=1$. Although our result and \cite{gao2018demystifying} hold for different types of distributions, the results of convergence rates are similar.% We also observe that in terms of convergence rate, the KSG mutual information estimator shows similar asymptotic property with 3KL estimator, which estimates two marginal entropy and the joint entropy separately.}
\section{Extension to Heavy Tailed Distributions}\label{sec:heavy}
\textcolor{black}{In previous sections, we have derived bounds of the convergence rates of bias and variance of KL and KSG estimators. We do not have any tail assumptions for bounding the variance (Theorem \ref{thm:KLvar} and \ref{thm:KSGvar}). However, the convergence rate of bias is related to the strength of tails, thus it is necessary to add some tail assumptions. The assumption (b) in Theorem \ref{thm:KLbias} and the assumption (c) in Assumption \ref{ass:KSG} follow assumption (A2) in \cite{tsybakov1996root}. It was discussed in \cite{tsybakov1996root} that these assumptions are roughly equivalent to requiring that $f(\mathbf{x})$ or $f(\mathbf{x},\mathbf{y})$ has exponentially decreasing tails. In this section, we extend the results in Theorem \ref{thm:KLbias} and Theorem \ref{thm:KSG} to distributions with polynomially decreasing tails. 
\begin{thm}\label{thm:klheavy}
	Suppose the pdf $f(\mathbf{x})$ satisfies assumption (a) in Theorem \ref{thm:KLbias}, and
	\begin{eqnarray}
	P\left(f(\mathbf{X})\leq t\right)\leq \mu t^\tau
	\label{eq:newtail}
	\end{eqnarray}
	for some constant $\mu>0$, $\tau\in (0,1]$, and arbitrary $t>0$. Let $\beta=1/(d_x+2)$, then the bias of truncated KL estimator is bounded by:
	\begin{eqnarray}
	|\mathbb{E}[\hat{h}(\mathbf{X})]-h(\mathbf{X})|=\mathcal{O}\left(N^{-\frac{2\tau}{d_x+2}}\ln N\right).
	\end{eqnarray}
\end{thm}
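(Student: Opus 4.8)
The plan is to re-run the proof of Theorem~\ref{thm:KLbias} essentially verbatim, keeping the truncation radius $a_N=AN^{-1/(d_x+2)}$ (i.e.\ $\beta=1/(d_x+2)$, as in that theorem) and the same decomposition \eqref{eq:biadecomp} of the bias into the truncation bias over a central region $S_1$, the local non-uniformity bias over $S_1$, and the bias over a tail region $S_2$, where $S_1$ and $S_2$ are obtained by thresholding the density at a level $t_N$ that is re-optimized at the very end. Assumption~(a) is unchanged, so every estimate in that proof that uses only the bound on the Hessian --- together with the conditional moment bounds $\mathbb{E}[\rho^{q}\mid\mathbf{X}=\mathbf{x}]\lesssim\min\{a_N,(Nf(\mathbf{x}))^{-1/d_x}\}^{q}$ and the order-statistics identity $\mathbb{E}[\ln P(B(\mathbf{X},\epsilon))]=\psi(k)-\psi(N)$ --- carries over. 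The only place the \emph{exponential}-tail assumption~(b) entered is in controlling a few integrals supported on the low-density set, namely $P(\mathbf{X}\in S_2)$, the quantities $\int_{\{f<t\}}f(\mathbf{x})(-\ln f(\mathbf{x}))^{j}\,d\mathbf{x}$ for $j=0,1,2$, and $\int_{S_1}\mathbb{E}[\rho^2\mid\mathbf{X}=\mathbf{x}]\,d\mathbf{x}$. Hence the task reduces to replacing these bounds by ones following from the weaker hypothesis \eqref{eq:newtail}.

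The key technical step is therefore to convert \eqref{eq:newtail} into those estimates. Writing $G(s):=P(f(\mathbf{X})\le s)\le\mu s^{\tau}$ and decomposing the sub- and super-level sets of $f$ dyadically (equivalently, integrating by parts against $G$), one obtains, for every $t>0$, $\gamma\ge 0$ and $j\in\{0,1,2\}$,
\begin{eqnarray}
P(f(\mathbf{X})\le t)\le\mu t^{\tau},&\quad& \int_{\{f<t\}}f(\mathbf{x})\bigl(-\ln f(\mathbf{x})\bigr)^{j}d\mathbf{x}=\mathcal{O}\!\left(t^{\tau}\bigl(\ln\tfrac{1}{t}\bigr)^{j}\right),\nonumber\\
m\bigl(\{f\ge t\}\bigr)=\mathcal{O}\!\left(t^{\tau-1}\right),&\quad& \int_{\{f\ge t\}}f(\mathbf{x})^{-\gamma}d\mathbf{x}=\mathcal{O}\!\left(t^{\tau-1-\gamma}\right),\nonumber
\end{eqnarray}
and, uniformly in $b>0$, $\int f(\mathbf{x})e^{-bf(\mathbf{x})}d\mathbf{x}=b\int_0^{\infty}G(s)e^{-bs}ds\le\mu\,\Gamma(\tau+1)\,b^{-\tau}$, which is precisely assumption~(b) of Theorem~\ref{thm:KLbias} with the right-hand side $Cb^{-1}$ weakened to $C'b^{-\tau}$ (for $\tau=1$ all of these reduce to the estimates used in Theorem~\ref{thm:KLbias}, consistent with \eqref{eq:newtail} being equivalent to assumption~(b) when $\tau=1$). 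Feeding these into the three parts of \eqref{eq:biadecomp}: the tail-region bias picks up a factor $t_N^{\tau}$ instead of $t_N$ and becomes $\mathcal{O}(t_N^{\tau}\ln N)$ (the $\ln N$ from $\psi(N)-\psi(k)=\ln N+\mathcal{O}(1)$ and from $|\ln a_N|=\mathcal{O}(\ln N)$), while the central-region non-uniformity bias, bounded through $\int_{S_1}\mathbb{E}[\rho^2\mid\mathbf{x}]d\mathbf{x}\lesssim N^{-2/d_x}\int_{\{f\ge t_N\}}f^{-2/d_x}d\mathbf{x}+a_N^2\,m(\{f\ge t_N\})$ and the estimates above, becomes $\mathcal{O}(N^{-2/d_x}t_N^{\tau-1-2/d_x}+N^{-2/(d_x+2)}t_N^{\tau-1})$. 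Choosing $t_N$ of order $N^{-2/(d_x+2)}$ (up to a logarithmic factor) to balance the tail term against the central term --- using that the non-$\tau$ exponents in the central term cancel against $N^{-2/d_x}$ --- makes every piece $\mathcal{O}(N^{-2\tau/(d_x+2)}\ln N)$, which is the asserted bound, and $\tau=1$ recovers Theorem~\ref{thm:KLbias}.

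The step I expect to be the main obstacle is verifying that the \emph{truncation-bias} term of \eqref{eq:biadecomp} --- the first term, $-\mathbb{E}[\ln(P(B(\mathbf{X},\epsilon))/P(B(\mathbf{X},a_N)))_{+}\mathbf{1}(\mathbf{X}\in S_1)]$ --- remains negligible with the re-optimized $t_N$, which is smaller than the threshold used in the light-tailed proof. One must check that, uniformly over $S_1$, the mass $P(B(\mathbf{x},a_N))$ is large enough (using $f(\mathbf{x})\ge t_N$, the Hessian bound from~(a), and $a_N=AN^{-1/(d_x+2)}$) that the event $\{\epsilon>a_N\}$ has sufficiently small conditional probability; this may force a logarithmic inflation of $t_N$ and a correspondingly careful bookkeeping of the $\ln N$ factors so that the final bound is still $\mathcal{O}(N^{-2\tau/(d_x+2)}\ln N)$, and it is the refined moment bound for $\rho$, rather than the crude $\rho\le a_N$, that makes the central-region estimate survive when $t_N$ is this small. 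Once the truncation term is shown to be of the same or smaller order, collecting the three pieces of \eqref{eq:biadecomp} completes the proof; everything else is a mechanical rerun of Theorem~\ref{thm:KLbias}'s computation with $Cb^{-1}$ replaced by $C'b^{-\tau}$.
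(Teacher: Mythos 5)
Your proposal follows essentially the same route as the paper: keep the three-term decomposition of Theorem~\ref{thm:KLbias} with $\beta=1/(d_x+2)$ and the threshold $t_N\sim N^{-2/(d_x+2)}$, and replace the tail lemmas (the $V(t)=\mathcal{O}(t^{\tau-1})$ volume bound, the $\int f^m e^{-bf}\,d\mathbf{x}=\mathcal{O}(b^{-(m+\tau-1)})$ bound, and the resulting $P(\epsilon>a_N,\mathbf{X}\in S_1)=\mathcal{O}(N^{-\tau(1-\beta d_x)})$ estimate) by their $\tau$-weakened versions, which is exactly what the paper does; your exponent bookkeeping for each of the three terms checks out. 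The only cosmetic difference is that the crude bound $\rho\le a_N$ together with $m(S_1)=\mathcal{O}(t_N^{\tau-1})$ already yields the central-region rate $\mathcal{O}(N^{-2\tau/(d_x+2)})$, so the refined conditional moment bound for $\rho$ you invoke there is not actually needed.
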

\begin{thm}\label{thm:ksgheavy}
	Assume that the joint distribution of $\mathbf{X}$ and $\mathbf{Y}$ satisfies Assumption \ref{ass:KSG} (a)-(e), except that the assumption (c) is changed to the following one:\\
	(c') The joint and marginal densities satisfy
	\begin{eqnarray}
	P\left(f(\mathbf{X},\mathbf{Y})\leq t\right)&\leq& \mu t^\tau,\label{eq:newtail-ksg}\\
	P(f(\mathbf{X})\leq t)&\leq & \mu' t^\tau,\nonumber\\
	P(f(\mathbf{Y})\leq t)&\leq & \mu' t^\tau\nonumber
	\end{eqnarray}
	for some constant $\mu,\mu'>0$, $\tau \in (0,1]$, and arbitrary $t>0$. Then the bias of KSG estimator is bounded by
	\begin{eqnarray}
	&&\hspace{-7mm}|\mathbb{E}[\hat{I}(\mathbf{X};\mathbf{Y})-I(\mathbf{X};\mathbf{Y})]\nonumber\\
	&=&\mathcal{O}\left(N^{-\frac{2\tau}{d_z+2}}\ln N\right)+\mathcal{O}\left(N^{-\frac{\min\{d_x,d_y\}}{d_z}}\right).
	\end{eqnarray}
\end{thm}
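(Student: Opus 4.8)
The plan is to mimic the proof structure of Theorem \ref{thm:KSG}, but track how the polynomial tail condition (c') propagates through the estimates that previously relied on the exponential-type decay in Assumption \ref{ass:KSG}(c). As in the proof of Theorem \ref{thm:KSG}, I would write the KSG estimator as $\hat I(\mathbf X;\mathbf Y)=\frac1N\sum_i[T_x(i)+T_y(i)-T_z(i)]$ and bound the three pieces separately. The term $\frac1N\sum_i T_z(i)$ is exactly the truncated KL estimator for the joint entropy $h(\mathbf Z)$, so I would invoke Theorem \ref{thm:klheavy} (with dimension $d_z$ and the first inequality of (c')) to get a bias of $\mathcal O(N^{-2\tau/(d_z+2)}\ln N)$. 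This already accounts for the first term in the claimed bound; the marginal terms $T_x,T_y$ and the ``adaptivity'' coupling will have to produce the rest.

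Next I would revisit the decomposition of the marginal bias $\mathbb E[T_x]-h(\mathbf X)=\mathbb E[\mathbb E[T_x\mid\mathbf X]+\ln f(\mathbf X)]$ used in the proof of Theorem \ref{thm:KSG}, and split the support into a central region $S_1$ (where $f(\mathbf x)$ is not too small) and a tail region $S_2$ (where $f(\mathbf x)$ is small), with a threshold depending on $N$. In the original proof the probability mass of the tail region was controlled using the exponential tail bound; here I would instead use $P(f(\mathbf X)\le t)\le\mu' t^\tau$ to bound $P(\mathbf X\in S_2)$. Concretely, wherever the proof of Theorem \ref{thm:KSG} estimated an integral of the form $\int_{S_2} f(\mathbf x)(\text{something logarithmic})\,d\mathbf x$ by exploiting $\int f e^{-bf}\le C_c'/b$, I would replace that step by a layer-cake / dyadic decomposition of $S_2$ over the level sets $\{2^{-j-1}<f\le 2^{-j}\}$, whose masses are controlled by (c'), picking up the extra factor $\tau$ in the exponent exactly as in the proof of Theorem \ref{thm:klheavy}. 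The $\mathcal O(N^{-\min\{d_x,d_y\}/d_z})$ term, which in Theorem \ref{thm:KSG} came from the mismatch between the kNN radius chosen in the joint space and the marginal spaces (a purely geometric/combinatorial effect on $n_x(i),n_y(i)$ that does not involve the tail), should be unchanged, since the argument there used only Assumption \ref{ass:KSG}(a),(b),(d),(e), all of which are retained.

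I would then assemble the three bounds: the joint-entropy piece contributes $\mathcal O(N^{-2\tau/(d_z+2)}\ln N)$ via Theorem \ref{thm:klheavy}; the two marginal-entropy pieces, after replacing the exponential-tail estimates by the polynomial-tail estimates, contribute $\mathcal O(N^{-2\tau/(d_x+2)}\ln N)$ and $\mathcal O(N^{-2\tau/(d_y+2)}\ln N)$ respectively, which are dominated by $\mathcal O(N^{-2\tau/(d_z+2)}\ln N)$ since $d_x,d_y\le d_z$; and the adaptivity term contributes $\mathcal O(N^{-\min\{d_x,d_y\}/d_z})$. Summing gives the stated bound.

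The main obstacle I anticipate is the bias analysis of the marginal estimators $T_x,T_y$ in the tail region. Unlike the clean KL case in Theorem \ref{thm:klheavy}, here the quantity $\psi(n_x(i)+1)$ is driven by the number of points falling in a ball whose radius $\epsilon(i)$ is the kNN distance in the \emph{joint} space; controlling its expectation when $\mathbf X$ lies in a low-density region of the \emph{marginal} requires relating the marginal density $f(\mathbf x)$ to the joint density $f(\mathbf x,\mathbf y)$ and to the conditional density $f(\mathbf y\mid\mathbf x)$. I expect to need the boundedness assumptions \ref{ass:KSG}(a),(b),(e) together with a careful conditioning argument to show that when $f(\mathbf x)$ is small the relevant expected log-count is still controlled by a polynomial-in-$t$ mass estimate, so that the dyadic summation over marginal level sets closes with the factor $\tau$ and only an extra $\ln N$. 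Verifying that the threshold separating the central and tail regions can be chosen to balance these contributions simultaneously for the joint and both marginals — and that the resulting rate is genuinely $\mathcal O(N^{-2\tau/(d_z+2)}\ln N)$ rather than something worse — is the delicate bookkeeping step.
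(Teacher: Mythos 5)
Your overall strategy is the same as the paper's: reuse the decomposition from Theorem \ref{thm:KSG}, invoke Theorem \ref{thm:klheavy} (in dimension $d_z$) for the joint-entropy piece, and re-derive the tail-dependent lemmas under the polynomial condition (c'); your layer-cake replacement of the $\int f e^{-bf}$ estimates is essentially what the paper does when it re-proves Lemma \ref{lem:V} and \eqref{eq:mbound} with the extra factor $t^{\tau-1}$ and $b^{-(m+\tau-1)}$.

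There is, however, one concrete gap. You assert that the $\mathcal{O}(N^{-\min\{d_x,d_y\}/d_z})$ term is ``a purely geometric/combinatorial effect ... that does not involve the tail'' and uses only Assumptions \ref{ass:KSG}(a),(b),(d),(e), so it is unchanged. That is false. In the proof of Theorem \ref{thm:KSG} this term comes from bounding $\mathbb{E}[\rho^{d_y}]$ (and symmetrically $\mathbb{E}[\rho^{d_x}]$) via Lemma \ref{lem:rho}, whose proof rests on Lemma \ref{lem:integration}, i.e.\ on $\int f^{1-d'/d_z}(\mathbf{z})\,d\mathbf{z}<\infty$ --- a consequence of the tail Assumption \ref{ass:KSG}(c). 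Under (c') with $\tau<1$ this integral can diverge (exactly when $\tau d_z< d'$), so Lemma \ref{lem:rho} must be re-proved; the paper obtains
\begin{eqnarray}
\mathbb{E}[\rho^{d'}]=\mathcal{O}\left(N^{-\frac{d'}{d_z}}\right)+\mathcal{O}\left(N^{-\frac{d'+2\tau}{d_z+2}}\ln N\right),\nonumber
\end{eqnarray}
and one must then verify that the new term is dominated by $N^{-2\tau/(d_z+2)}\ln N$ (it is, since $d'\geq 1$). Your plan as written skips this necessary modification and its verification, so the assembly of the final bound is incomplete. A smaller imprecision: the marginal pieces do not contribute $\mathcal{O}(N^{-2\tau/(d_x+2)}\ln N)$; the truncation radius $a_N=AN^{-2/(d_z+2)}$ and the central/tail threshold are tied to $d_z$, so Lemma \ref{lem:marginal} yields $\mathcal{O}(N^{-2\tau/(d_z+2)}\ln N)+\mathcal{O}(N^{-d_y/d_z})$ for the $\mathbf{X}$-marginal; this happens not to change the conclusion, but it reflects that the marginal estimators cannot be treated as stand-alone KL estimators in dimension $d_x$.
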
}

\textcolor{black}{
\begin{proof} (Outline)
	For the proof of Theorem \ref{thm:klheavy} and Theorem \ref{thm:ksgheavy}, recall that $\tau \in (0,1]$. The case with $\tau =1$ is already proved in Theorem \ref{thm:KLbias} and \ref{thm:KSG}. \textcolor{black}{Note that \eqref{eq:newtail} with $\tau=1$ is equivalent to \eqref{eq:tail}. In particular, \eqref{eq:tailbound} shows that \eqref{eq:tail} implies \eqref{eq:newtail} with $\tau=1$, while \eqref{eq:mbound} with $m=1$ shows such equivalence at the reverse direction.} As a result, the bounds in Theorem \ref{thm:KLbias} and \ref{thm:KSG} still hold for $\tau=1$. If $0<\tau<1$, there are several details in the proof that are different from the case of $\tau=1$. Nevertheless, the basic ideas are still the same. In Appendix~\ref{sec:heavypf}, we provide a brief proof of Theorem \ref{thm:klheavy} and \ref{thm:ksgheavy}. We only show some important steps, in which the proof with $0<\tau<1$ and that with $\tau=1$ are different. We omit other steps that are very similar to the proof of Theorem \ref{thm:KLbias} and Theorem \ref{thm:KSG}.
\end{proof}
}
\color{black}
Now we discuss the new assumptions \eqref{eq:newtail} and \eqref{eq:newtail-ksg}. These two assumptions are generalizations of \eqref{eq:tail} and \eqref{eq:decay}. If $\tau<1$, then \eqref{eq:newtail} holds for many common distributions with polynomially decreasing tails. We have the following proposition to determine $\tau$.  %For example, it is 
\begin{prop}\label{prop:highd}
	\color{black}
 For one dimensional random variable $\mathbf{X}$ with dimension $d_x$, if $\mathbb{E}[|\mathbf{X}|^\alpha]<\infty$, then for any $\tau<\alpha/(\alpha+d_x)$, there exists a constant $\mu_1$ such that $P(f(\mathbf{X})\leq t)\leq \mu_1 t^\tau$.

%(2) For $d_x$ dimensional random variable $\mathbf{X}=(X_1,\ldots,X_{d_x})\in \mathbb{R}^{d_x}$, if for any $t>0$, $j\in {1,\ldots,d_x}$ and $x_1,\ldots,x_{j-1}$, 
%\begin{eqnarray}
%P(f(X_j|X_1=x_1,\ldots,X_{j-1}=x_{j-1})\leq t)\leq \mu_1 t^\tau,
%\label{eq:condition}
%\end{eqnarray}	
%for some constant $\mu_1$, then for any $\tau'<\tau$,
%\begin{eqnarray}
%P(f(\mathbf{X})\leq t)\leq \mu_{d_x}(\tau')t^{\tau'}
%\label{eq:tau'}
%\end{eqnarray}
%for some function $\mu_{d_x}$ that is finite for any $\tau'<\tau$.
\end{prop}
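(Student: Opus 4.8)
The plan is to relate the tail bound on $f(\mathbf{X})$ to a moment condition by splitting the event $\{f(\mathbf{X}) \le t\}$ according to whether $\|\mathbf{X}\|$ is large or small. The intuition is that the set $\{\mathbf{x} : f(\mathbf{x}) \le t\}$ cannot have too much probability mass: any mass it carries must come either from a bounded region (where the set has small Lebesgue measure because $f \le t$ there) or from the far tail (which is controlled by the moment assumption). Concretely, for a radius $R > 0$ to be chosen, I would write
\begin{eqnarray}
P(f(\mathbf{X}) \le t) &\le& P\big(f(\mathbf{X}) \le t,\ \|\mathbf{X}\| \le R\big) + P(\|\mathbf{X}\| > R)\nonumber\\
&\le& t \cdot V\big(B(\mathbf{0}, R)\big) + \frac{\mathbb{E}[\|\mathbf{X}\|^\alpha]}{R^\alpha},\nonumber
\end{eqnarray}
where the first term uses $\int_{\{f \le t\} \cap B(\mathbf{0},R)} f(\mathbf{x})\, d\mathbf{x} \le t \cdot V(B(\mathbf{0},R))$ and $V(B(\mathbf{0},R)) = c_{d_x} R^{d_x}$, and the second term is Markov's inequality.

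The second step is to optimize over $R$. The bound is $c_{d_x} t R^{d_x} + \mathbb{E}[\|\mathbf{X}\|^\alpha] R^{-\alpha}$; balancing the two terms gives $R \asymp t^{-1/(\alpha + d_x)}$, and substituting back yields $P(f(\mathbf{X}) \le t) = \mathcal{O}\big(t^{\alpha/(\alpha+d_x)}\big)$. Since we only need $\tau$ strictly less than $\alpha/(\alpha+d_x)$, a cruder choice of $R$ suffices and no delicate constant-tracking is required: for any such $\tau$ one can absorb the resulting constant into $\mu_1$, and for small $t$ the exponent $t^{\alpha/(\alpha+d_x)}$ dominates $t^\tau$ while for $t$ bounded away from zero the probability is trivially at most $1 \le \mu_1 t^\tau$ once $\mu_1$ is large enough. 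This gives a constant $\mu_1$ valid for all $t > 0$, matching the statement.

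I do not expect a serious obstacle here; the argument is a textbook moment-versus-tail tradeoff. The one point requiring a little care is making the single constant $\mu_1$ work uniformly over \emph{all} $t > 0$ rather than just small $t$ — handled by the case split on whether $t$ is below or above a fixed threshold as above. A secondary subtlety, if one wants the cleanest statement, is that the first term really only needs $f \le t$ on the truncation ball (not everywhere), so no boundedness assumption on $f$ beyond being a density is used; the smoothness assumption (a) plays no role in this particular proposition and need not be invoked. If a version with vector-valued $\mathbf{X}$ and a general norm is intended (the phrase ``one dimensional random variable $\mathbf{X}$ with dimension $d_x$'' suggests $\mathbf{X} \in \mathbb{R}^{d_x}$), the identical computation goes through verbatim with $c_{d_x}$ the volume of the unit ball in the chosen norm.
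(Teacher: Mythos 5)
Your proof is correct, but it takes a genuinely different route from the paper's. The paper proves the proposition by first showing $\int f^{1-\tau}(\mathbf{x})\,d\mathbf{x}<\infty$: it writes $f^{1-\tau}=\left[(1+|\mathbf{x}|^\alpha)f(\mathbf{x})\right]^{1-\tau}\cdot(1+|\mathbf{x}|^\alpha)^{-(1-\tau)}$ and applies H\"older's inequality, the first factor being finite by the moment assumption and the second being integrable precisely when $\alpha(1-\tau)/\tau>d_x$, i.e.\ when $\tau<\alpha/(\alpha+d_x)$; it then concludes via Markov's inequality applied to $f^{-\tau}(\mathbf{X})$, namely $P(f(\mathbf{X})<t)=P\left(f^{-\tau}(\mathbf{X})>t^{-\tau}\right)\leq t^\tau\,\mathbb{E}[f^{-\tau}(\mathbf{X})]$. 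Your truncation-at-radius-$R$ argument with the balance $R\asymp t^{-1/(\alpha+d_x)}$ is more elementary (no H\"older) and in fact slightly stronger: it establishes the bound at the endpoint exponent $\alpha/(\alpha+d_x)$ itself, with $\mu_1=\max\{1,\,c_{d_x}+\mathbb{E}[|\mathbf{X}|^\alpha]\}$ say, whereas the paper's route genuinely needs the strict inequality $\tau<\alpha/(\alpha+d_x)$ to make the second H\"older factor finite. What the paper's approach buys in return is the explicit finiteness of $\int f^{1-\tau}$, a quantity of exactly the type that recurs in the heavy-tail analysis elsewhere in the appendix, so it slots into the surrounding machinery more directly. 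Your remarks that a single $\mu_1$ can be made to work for all $t>0$ via a case split at a fixed threshold, and that the smoothness assumption plays no role in this proposition, are both accurate.
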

The proof of Proposition \ref{prop:highd} is shown in Appendix~\ref{sec:heavypf}. The boundedness of moment, i.e. $\mathbb{E}[|\mathbf{X}|^\alpha]<\infty$, is a sufficient but not necessary condition of \eqref{eq:newtail}. \eqref{eq:newtail} can still hold for some distributions that do not have any finite moments. However, for most of common distributions, there exists some $\alpha$ such that $\mathbb{E}[|\mathbf{X}|^\alpha]$ is finite. Proposition \ref{prop:highd} shows how our assumption \eqref{eq:newtail} is related to the boundedness of moments. Note that $\tau'$ can be arbitrarily close to $\tau$. Combining Proposition \ref{prop:highd} with Theorem \ref{thm:klheavy} and Theorem \ref{thm:ksgheavy}, we have the following corollary.
\color{black}
	\begin{corr}
	(1) Bias bounds for KL estimator: If $\mathbb{E}[\norm{\mathbf{X}}^\alpha]<\infty$, and the Hessian of $f$ satisfies $\norm{\nabla^2 f}\leq M$ for some constant $M$, then
	\begin{eqnarray}
	|\mathbb{E}[\hat{h}(\mathbf{X})]-h(\mathbf{X})|=\mathcal{O}\left(N^{-\frac{2}{d_x+2}\frac{\alpha}{\alpha+d_x}+\delta}\right),
	\end{eqnarray}
	for arbitrarily small $\delta>0$. \\
	(2) Bias bounds for KSG estimator: If Assumption \ref{ass:KSG} (a),(b),(d) and (e) holds, $\mathbb{E}[\norm{\mathbf{X}}^\alpha]<\infty$, $\mathbb{E}[\norm{\mathbf{Y}}^\alpha]<\infty$, and $\sup_\mathbf{x} \mathbb{E}[\norm{Y}^\alpha|\mathbf{X}=\mathbf{x}]<\infty$, then the bias of KSG estimator is bounded by
		\begin{eqnarray}
	&&\hspace{-7mm}|\mathbb{E}[\hat{I}(\mathbf{X};\mathbf{Y})-I(\mathbf{X};\mathbf{Y})]\nonumber\\
	&=&\mathcal{O}\left(N^{-\frac{2}{d_z+2}\frac{\alpha}{\alpha+d_z}+\delta}\right)+\mathcal{O}\left(N^{-\frac{\min\{d_x,d_y\}}{d_z}}\right),
	\label{eq:ksgheavy}
	\end{eqnarray}
	for arbitrarily small $\delta>0$. In \eqref{eq:ksgheavy}, $d_z=d_x+d_y$.
\end{corr}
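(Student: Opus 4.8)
The plan is to obtain both parts as immediate specializations of Theorem~\ref{thm:klheavy} and Theorem~\ref{thm:ksgheavy}: Proposition~\ref{prop:highd} converts the moment hypotheses into polynomial tail conditions of the form \eqref{eq:newtail} and \eqref{eq:newtail-ksg}, after which the logarithmic factor and the arbitrarily small slack in Proposition~\ref{prop:highd} are absorbed into the exponent $\delta$. Concretely, for a fixed $\delta>0$ I would (i) choose a tail exponent $\tau$ slightly below the limit permitted by Proposition~\ref{prop:highd}, (ii) check the remaining hypotheses of the relevant bias theorem, (iii) invoke that theorem, and (iv) use $N^{-\delta/2}\ln N\to 0$ to conclude that the $\ln N$ factor and the $\tau$-slack together cost less than a factor $N^{\delta}$.

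For Part~(1), the hypothesis $\norm{\nabla^2 f}\le M$ is exactly assumption~(a) of Theorem~\ref{thm:KLbias}: a bounded Hessian gives a bounded second-order weak derivative, so $f\in W^{2,\infty}$, and, as remarked after Theorem~\ref{thm:KLbias}, this also forces $f$ and $\nabla f$ to be bounded. By Proposition~\ref{prop:highd}, $\mathbb{E}[\norm{\mathbf{X}}^\alpha]<\infty$ supplies, for every $\tau<\alpha/(\alpha+d_x)$, a constant $\mu_1$ with $P(f(\mathbf{X})\le t)\le\mu_1 t^\tau$ for all $t>0$, i.e.\ \eqref{eq:newtail}. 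Picking $\tau$ so close to $\alpha/(\alpha+d_x)$ that $\tfrac{2}{d_x+2}\bigl(\tfrac{\alpha}{\alpha+d_x}-\tau\bigr)<\tfrac{\delta}{2}$ and then applying Theorem~\ref{thm:klheavy} yields $|\mathbb{E}[\hat h(\mathbf{X})]-h(\mathbf{X})|=\mathcal{O}(N^{-2\tau/(d_x+2)}\ln N)=\mathcal{O}\bigl(N^{-\frac{2}{d_x+2}\frac{\alpha}{\alpha+d_x}+\delta}\bigr)$, as claimed.

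For Part~(2), I would first verify that the three moment conditions imply condition~(c') of Theorem~\ref{thm:ksgheavy}. Proposition~\ref{prop:highd} applied to $\mathbf{X}$ and to $\mathbf{Y}$ directly yields the two marginal tail bounds in \eqref{eq:newtail-ksg}, valid for exponents up to $\alpha/(\alpha+d_x)$ and $\alpha/(\alpha+d_y)$ respectively. For the joint density, note that whichever norm is used on $\mathbb{R}^{d_z}$, it is equivalent to the $\max$-metric, so $\mathbb{E}[\norm{\mathbf{Z}}^\alpha]\le c\bigl(\mathbb{E}[\norm{\mathbf{X}}^\alpha]+\mathbb{E}[\norm{\mathbf{Y}}^\alpha]\bigr)<\infty$ for some constant $c$, and Proposition~\ref{prop:highd} applied to $\mathbf{Z}$ gives $P(f(\mathbf{Z})\le t)\le\mu t^\tau$ for every $\tau<\alpha/(\alpha+d_z)$. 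Since $d_z=d_x+d_y\ge\max\{d_x,d_y\}$, the limit $\alpha/(\alpha+d_z)$ is the smallest of the three, so a single $\tau$ just below it makes all three tail bounds of~(c') hold simultaneously (taking the common constants to be the largest of the three). Theorem~\ref{thm:ksgheavy} then gives $|\mathbb{E}[\hat I(\mathbf{X};\mathbf{Y})]-I(\mathbf{X};\mathbf{Y})|=\mathcal{O}(N^{-2\tau/(d_z+2)}\ln N)+\mathcal{O}(N^{-\min\{d_x,d_y\}/d_z})$, and the clean-up from Part~(1) turns the first term into $\mathcal{O}\bigl(N^{-\frac{2}{d_z+2}\frac{\alpha}{\alpha+d_z}+\delta}\bigr)$, which is \eqref{eq:ksgheavy}. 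The extra hypothesis $\sup_{\mathbf{x}}\mathbb{E}[\norm{\mathbf{Y}}^\alpha\,|\,\mathbf{X}=\mathbf{x}]<\infty$ (which in particular implies $\mathbb{E}[\norm{\mathbf{Y}}^\alpha]<\infty$) is what makes the internal argument of Theorem~\ref{thm:ksgheavy} go through: in the decomposition $\hat I=\tfrac1N\sum_i\bigl(T_x(i)+T_y(i)-T_z(i)\bigr)$ used there, the counts $n_x,n_y$ lie inside a ball of the \emph{joint} $k$-NN radius $\epsilon(i)$, whose size at a point $\mathbf{x}$ in the tail of $f(\mathbf{x})$ is governed by the conditional spread of $\mathbf{Y}$ given $\mathbf{X}=\mathbf{x}$, and the uniform conditional-moment bound keeps the tail-region contribution of $T_x$ and $T_y$ decaying at the stated exponent.

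I expect the only genuinely non-mechanical step to be this last point: re-checking the tail-region estimate in the proof of Theorem~\ref{thm:ksgheavy} for the marginal sub-estimators, i.e.\ for $\mathbb{E}[T_x\,|\,\mathbf{X}=\mathbf{x}]+\ln f(\mathbf{x})$ and its $T_y$ counterpart, and confirming that it carries through using only $\sup_{\mathbf{x}}\mathbb{E}[\norm{\mathbf{Y}}^\alpha\,|\,\mathbf{X}=\mathbf{x}]<\infty$ in place of pointwise-in-$\mathbf{x}$ tail control on the joint density, without eroding the exponent beyond the $N^\delta$ slack already allowed. Everything else — the tail-exponent translation via Proposition~\ref{prop:highd}, the choice of a common $\tau$ for the three densities in Part~(2), and the absorption of the $\ln N$ factor — is routine bookkeeping.
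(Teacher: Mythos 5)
Your proposal is correct and matches the paper exactly: the corollary is stated there as an immediate consequence of combining Proposition~\ref{prop:highd} (moments $\Rightarrow$ polynomial tail bound for every $\tau$ below $\alpha/(\alpha+d)$) with Theorems~\ref{thm:klheavy} and~\ref{thm:ksgheavy}, absorbing the $\ln N$ factor and the $\tau$-slack into the exponent $\delta$, which is precisely your steps (i)--(iv). The only extra content in your write-up is the speculation about the role of $\sup_{\mathbf{x}}\mathbb{E}[\norm{\mathbf{Y}}^\alpha\,|\,\mathbf{X}=\mathbf{x}]<\infty$; the paper does not elaborate on this either, and your own derivation of the joint tail bound already goes through using only the marginal moments.
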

\color{black}
Now we show some examples. For Cauchy distribution, $\mathbb{E}[|X|^\alpha]<\infty$ for any $\alpha<1$, hence the convergence rate of bias of KL estimator is $\mathcal{O}\left(N^{-1/(d_x+2)+\delta}\right)$ for arbitrarily small $\delta>0$. For all sub-Gaussian or sub-exponential distributions that are second order smooth, $\mathbb{E}[|X|^\alpha]<\infty$ for all $\alpha>0$, hence the convergence rate becomes $\mathcal{O}(N^{-2/(d_x+2)+\delta})$ for arbitrarily small $\delta>0$. For KSG estimator, the convergence rate can also be derived similarly from \eqref{eq:ksgheavy}.

\color{black}

	\section{Numerical Examples}\label{sec:numeric}
	In this section we provide numerical experiments to illustrate the analytical results obtained in this paper.
	\subsection{KL estimator}
	We conduct the following numerical experiments. Firstly, we calculate the convergence rates of bias and variance of KL entropy estimator for distributions with different dimensions. Secondly, we compare the performance of KL estimator for different $k$.%, and then discuss how the optimal $k$ changes with sample size $N$. 
	
	In the simulation, the bias and variance is estimated by repeating the simulation many times and then calculate the sample mean and sample variance of all the estimated values. We do not need to run too many trials to obtain an accurate estimation of variance. But the estimation of bias is much harder, if the dimension of $\mathbf{X}$ is low. In this case, the bias can be much lower than the square root of variance, as a result, the sample mean may deviate seriously from the expectation of estimated value $\mathbb{E}[\hat{h}(\mathbf{X})]$. Hence a large number of trials is needed. \textcolor{black}{If the dimensionality is higher than $2$}, then the bias converges slowly comparing with the variance, and thus we do not need to run too many trials. We select the number of trials in the following way: run simulations until relative uncertainty of bias falls below $0.05$, in which the relative uncertainty is defined as the ratio between the length of the 99\% confidence interval of bias and the estimated value of bias.
	
	Fig. \ref{fig:kl1} (a), (b) show the convergence of bias and variance of KL estimator under Gaussian distribution with dimensions from 1 to 6. In Fig. \ref{fig:kl1}, we fix $k=3$. These figures are log-log plots with base 10. We observe that for $d_x\leq 3$, with $\log_{10}N\geq 2$, i.e. $N\geq 100$, the bias of KL estimator decays monotonically with sample size $N$. However, for distribution with higher dimensions, the bias increases with $N$ before the subsequent decay. We explain this phenomenon as follows. According to \eqref{eq:EhX}, the bias of KL estimator can be expressed as $\mathbb{E}[\hat{h}(\mathbf{X})]-h(\mathbf{X})=-\mathbb{E}[\ln P(B(\mathbf{X},\epsilon))]+\mathbb{E}[\ln (f(\mathbf{X})c_{d_x}\rho^{d_x})]$. In the regions where Hessian is positive, $P(B(\mathbf{x},\epsilon))> f(\mathbf{x})c_{d_x}\rho^{d_x}$, which causes negative bias. If Hessian is negative in $B(\mathbf{x},\epsilon)$, then if $\rho\leq a_N$, which happens with high probability, then $\rho=\epsilon$ and thus $P(B(\mathbf{x},\epsilon))< f(\mathbf{x})c_{d_x}\rho^{d_x}$. This causes positive bias. When sample sizes is not large, the positive and negative bias terms can cancel out. However, the positive bias occurs where the Hessian is negative, which occurs around $\mathbf{x}=0$ for standard Gaussian distributions, and thus converges faster to zero than the negative bias, which occurs at the tail of distribution. Therefore, with a larger sample size, the negative bias is dominant over the positive bias, and thus the total bias becomes more serious. If we continue to increase the sample size, then the negative bias term also converges to zero.
	 
	We then calculate the empirical convergence rates by finding the negative slope of the curves in Fig. \ref{fig:kl1} (a), (b) by linear regression. Considering that in Fig. \ref{fig:kl1} (a), (b), the bias of KL estimator decays with stable speed only when the sample size is large, we perform linear regression using the segment of curves where the sample size is larger than a certain threshold. For the convergence rate of variance, the linear regression is conducted over the whole curve since the variance always decay smoothly. These results are then compared with the theoretical convergence rates, which are obtained from Theorem \ref{thm:KLbias} and \ref{thm:KLvar}. The results are shown in Table \ref{tab:KL}, in which we say that the theoretical convergence rate of bias or variance is $\gamma$ if it decays with either $\mathcal{O}(N^{-\gamma})$, or $\mathcal{O}(N^{-\gamma+\delta})$ for arbitrarily small $\delta>0$, \textcolor{black}{and two `Sample Size' columns refer to the interval of sample size we use for the computation of the convergence rate of bias and variance, respectively.}
	\begin{figure}[h!]
		\begin{center}
			\begin{subfigure}{0.32\linewidth}
			\includegraphics[width=\linewidth]{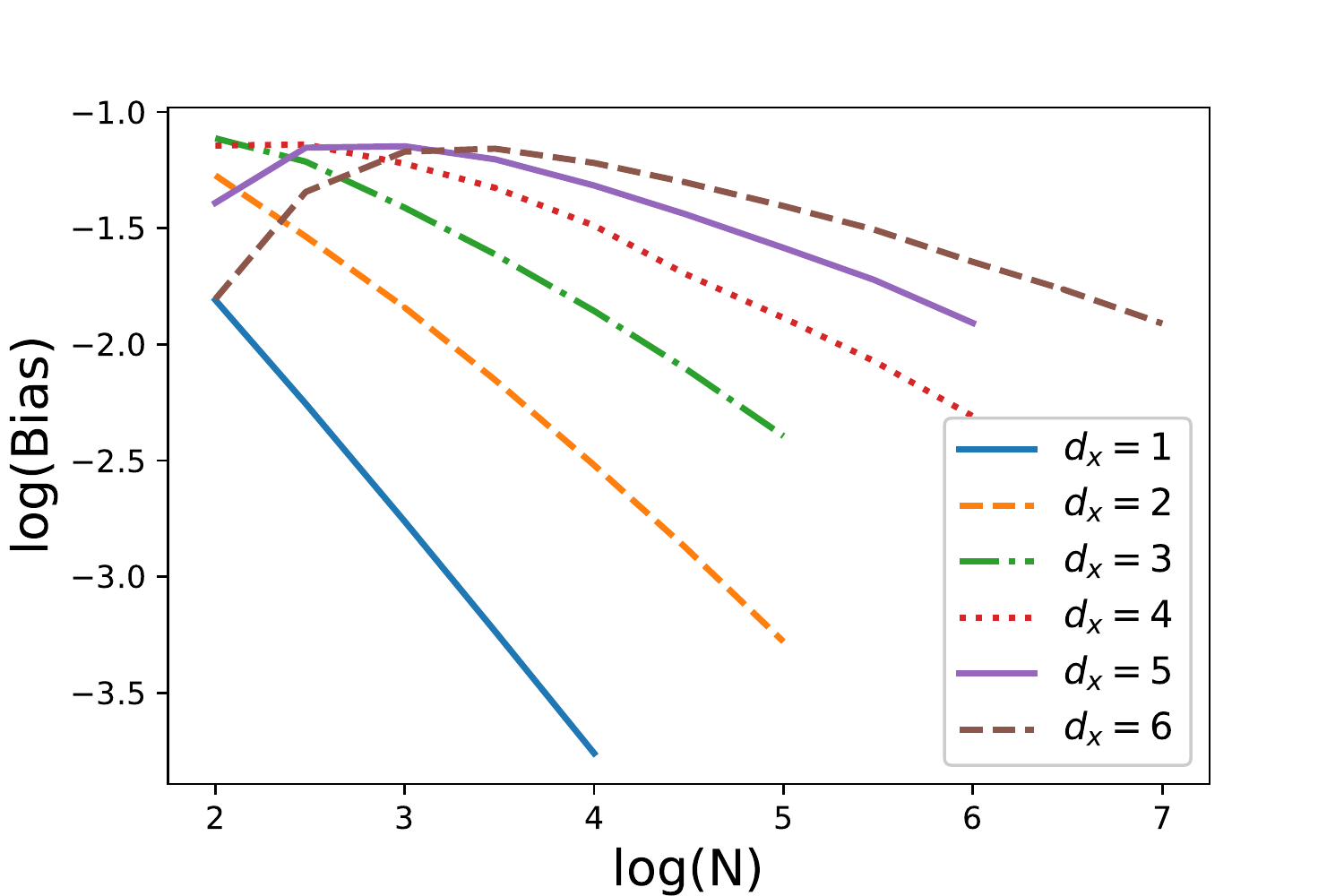}
			\caption{Convergence of bias for different dimensions, with $k=3$}
			\end{subfigure}
			\begin{subfigure}{0.32\linewidth}
			\includegraphics[width=\linewidth]{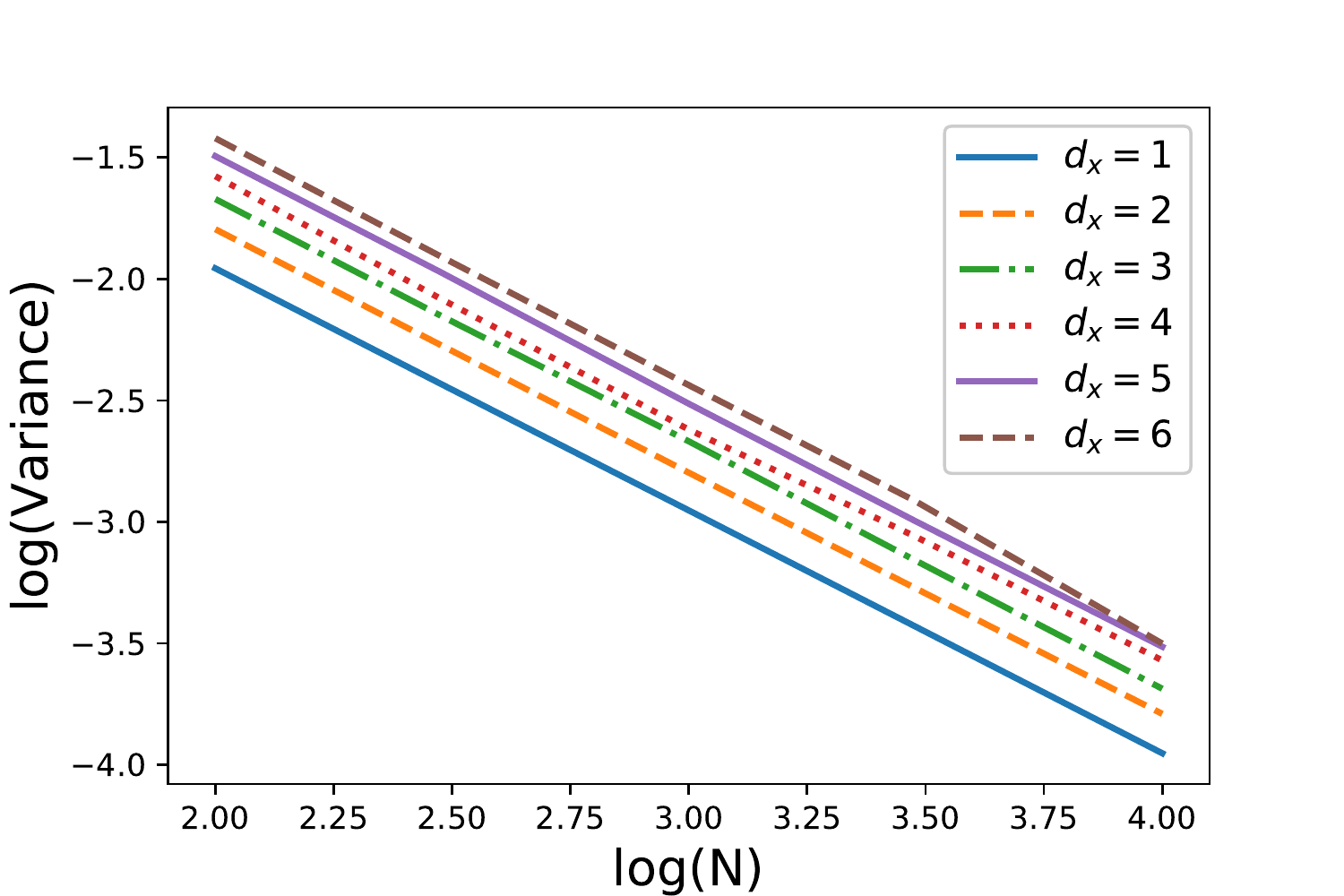}
			\caption{Convergence of variance for different dimensions, with $k=3$}
			\end{subfigure}
		\begin{subfigure}{0.32\linewidth}
			\includegraphics[width=\linewidth]{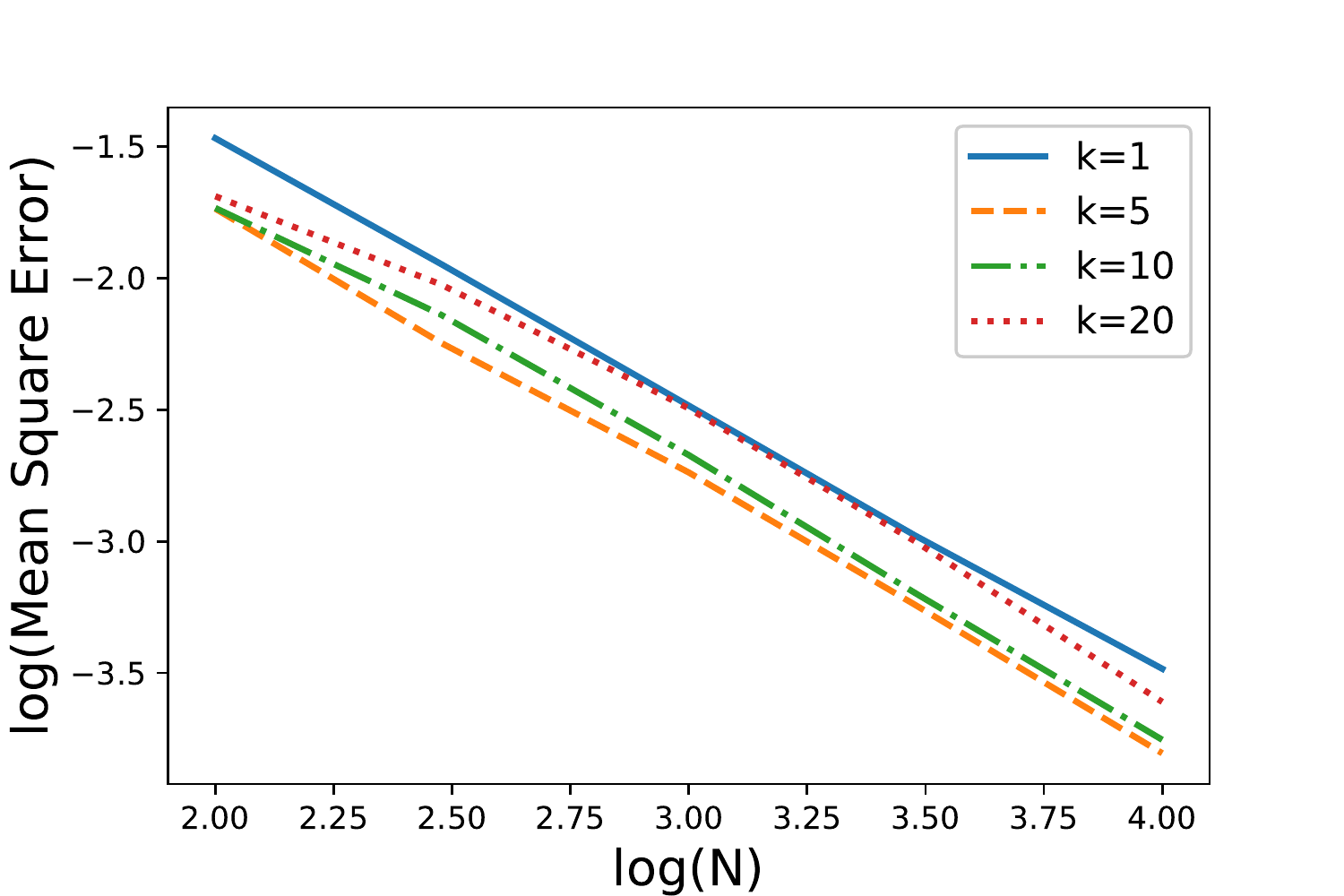}
			\caption{Convergence of mean square error for different $k$, with $d_x=2$}
		\end{subfigure}
		\end{center}
		\caption{Empirical convergence of KL entropy estimator for Gaussian distribution. \label{fig:kl1}}
	\end{figure}	

	\begin{table*}[h!]
		\begin{center}
			\caption{Convergence rate of KL estimator for standard Gaussian distributions}
			\label{tab:KL}
			\begin{tabular}{|c|c|c|c|c|c|c|} % <-- Alignments: 1st column left, 2nd middle and 3rd right, with vertical lines in between
				\hline
				 $d_x$ & \textbf{Bias}(Empirical) &\textbf{Bias}(Theoretical) &\textbf{Sample Size} & \textbf{Variance}(Empirical) &\textbf{Variance }(Theoretical) &\textbf{Sample Size} \\
				\hline
				1 & 0.97 & 0.67 & $10^2\sim 10^4$& 1.00&1.00& $10^2\sim 10^4$\\
				2 & 0.66 & 0.50 & $10^2 \sim 10^5$& 1.00& 1.00&$10^2 \sim 10^5$\\
				3 & 0.43 &0.40 &$10^2 \sim 10^5$&1.01&1.00&$10^2 \sim 10^5$\\
				4 &0.33 &0.33 &$10^3\sim 10^5$&0.99&1.00&$10^2\sim 10^5$\\
				5 &0.29 &0.28 &$10^4 \sim 10^6$ &1.01&1.00&$10^2 \sim 10^6$\\
				6 &0.25 &0.25 &$10^5\sim 10^7$ &1.03 &1.00&$10^2\sim 10^7$\\
				\hline
			\end{tabular}\label{Table:KL}
		\end{center}
	\end{table*}
Fig. \ref{fig:kl1} (a), (b) and Table \ref{tab:KL} show that for $d_x>2$, the above empirical convergence rates basically agree with the theoretical prediction. We find that for $d_x=1$ and $d_x=2$, the empirical rate is faster than the theoretical convergence rate. As discussed in previous sections, our bound holds for all distributions that satisfy our assumptions, and the actual convergence rate can be faster for some specific distributions. For Gaussian distributions, the Hessian of the pdf decays almost as fast as the pdf itself, while our assumptions only have a bound of Hessian over $\mathbb{R}^d$. 

%We observe that although the bias of KL estimator finally converges to zero, for high dimensional standard Gaussian distribution, the bias increases with sample size $N$ first. This phenomenon can be explained in the following way. For simplicity, we consider the original KL estimator without truncation. According to \eqref{eq:biadecomp}, $\mathbb{E}[\hat{h}(\mathbf{X})]-h(\mathbf{X})=-\mathbb{E}[\ln P(B(\mathbf{X},\epsilon))]+ \mathbb{E}[\ln f(\mathbf{X}) c_{d_x} \epsilon^{d_x}]$. 

Moreover, we compare the performance of KL estimator for different $k$. The result is shown in Fig. \ref{fig:kl1} (c) for fixed $d_x=2$, which shows that for different $k$, the convergence rate of KL estimator is approximately the same, but the constant factor can be different. For standard Gaussian distribution with $d_x=2$, the performance of KL estimator with $k=5$ is better than that with $k=1,10,20$. \textcolor{black}{If the dimension of random variable is low, then the squared bias usually converges faster than the variance, thus we can use large $k$. On the contrary, with higher dimension, it may be better to use small $k$.}

\subsection{KSG estimator}
Now we evaluate the performance of KSG estimator using joint Gaussian distribution.
In this numerical experiment, we let $(\mathbf{X},\mathbf{Y})\sim \mathcal{N}(\mathbf{0},\mathbf{K})$, in which $\mathbf{K}$ is a $d_z$ dimensional square matrix, $\mathbf{K}_{i,j}=\rho+(1-\rho)\delta_{ij}$, and $\delta_{ij}=1$ if $i=j$, otherwise $0$. In this numerical simulation, we use $\rho=0.6$. 

 Similar to the experiments on KL entropy estimator, to ensure the accuracy of estimation of the bias of KSG mutual information estimator, we still use adaptive number of trials. We continue to run simulations until the relative uncertainty is lower than $0.05$. For both experiments, we use fixed $k=3$ and then plot $\log_{10}(\text{Bias})$ and $\log_{10}(\text{Variance})$ against $\log_{10}(N)$ separately. The result is shown in Figure \ref{fig:ksg1}. The empirical convergence rates are compared with the theoretical convergence rates from Theorem \ref{thm:KSG} and \ref{thm:KSGvar}, and the results are shown in Table \ref{tab:KSG1}. For simplicity, we still use the same notation as those used for KL estimator. The value of theoretical convergence rate of bias and variance in Table \ref{tab:KSG1} is $\gamma$ if the bound in Theorem \ref{thm:KSG} or \ref{thm:KLvar} is either $\mathcal{O}(N^{-\gamma})$ or $\mathcal{O}(N^{-\gamma+\delta})$ for arbitrarily small $\delta>0$. Unlike the curve for KL estimator, for KSG estimator, with this example, the curve of both bias and variance appear to be close to a straight line. Therefore, the empirical convergence rates of bias and variance are calculated by linear regression over the whole curve. \textcolor{black}{The `Sample Size' column in table \ref{tab:KSG1} is used for the calculation of both bias and variance.}

% \begin{eqnarray}
 %\hat{h}(X)&=&-\frac{1}{N}\sum_{i=1}^N \psi(n_x(i)+1)+\psi(N)+\ln c_{d_x}+\frac{d_x}{N}\sum_{i=1}^N \ln \epsilon(i),\nonumber\\
 %\hat{h}(Y)&=&-\frac{1}{N}\sum_{i=1}^N \psi(n_y(i)+1)+\psi(N)+\ln c_{d_y}+\frac{d_y}{N}\sum_{i=1}^N \ln \epsilon(i),\nonumber\\
 %\hat{h}(X,Y)&=&-\psi(k)+\psi(N)+\ln c_{d_z}+\frac{d_z}{N}\sum_{i=1}^N\ln \epsilon(i),\nonumber
 %\end{eqnarray}

	\begin{figure}[h!]
	\begin{center}
		\begin{subfigure}{0.45\linewidth}
			\includegraphics[width=\linewidth]{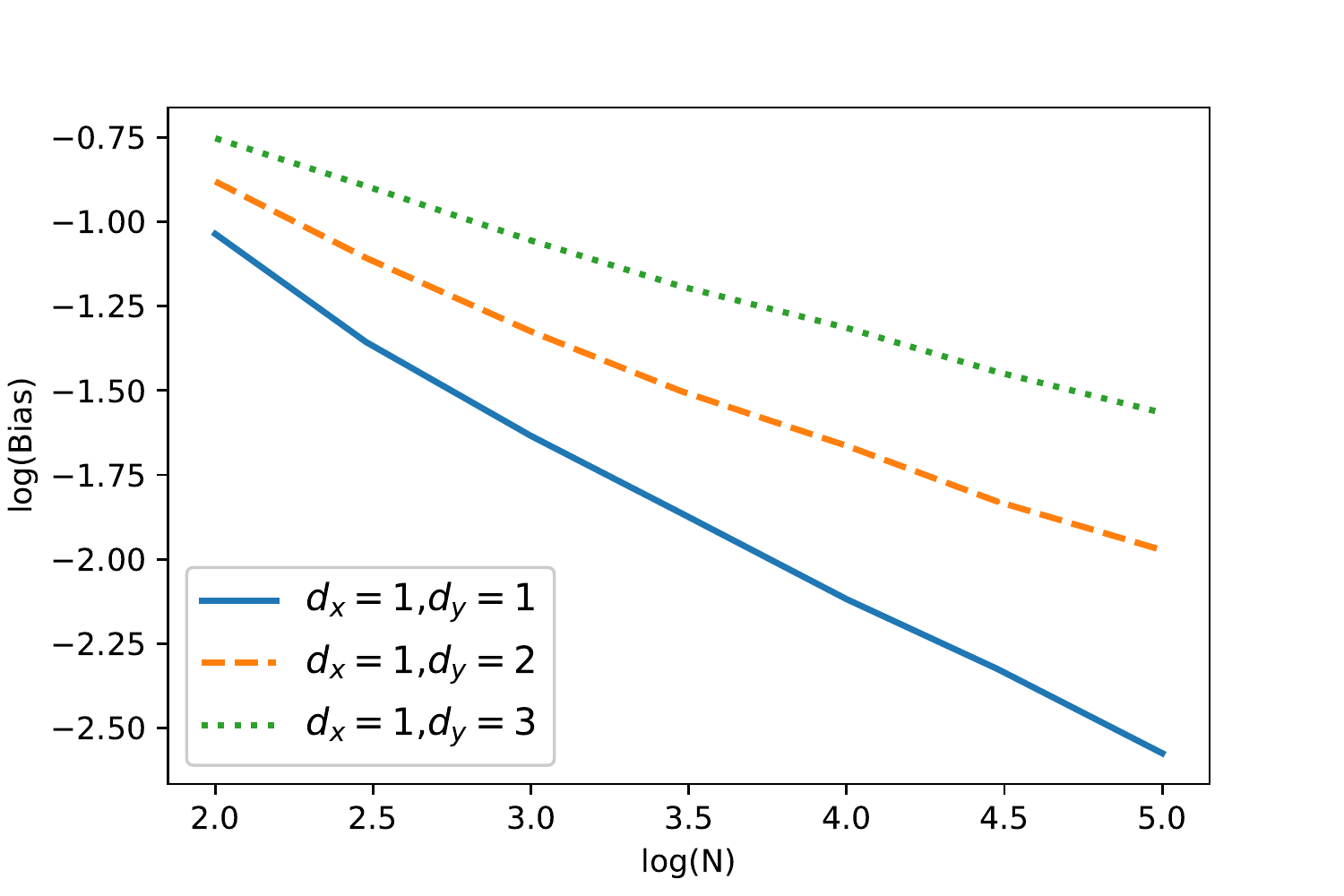}
			\caption{Convergence of the bias of KSG estimator.}
		\end{subfigure}
		\begin{subfigure}{0.45\linewidth}
			\includegraphics[width=\linewidth]{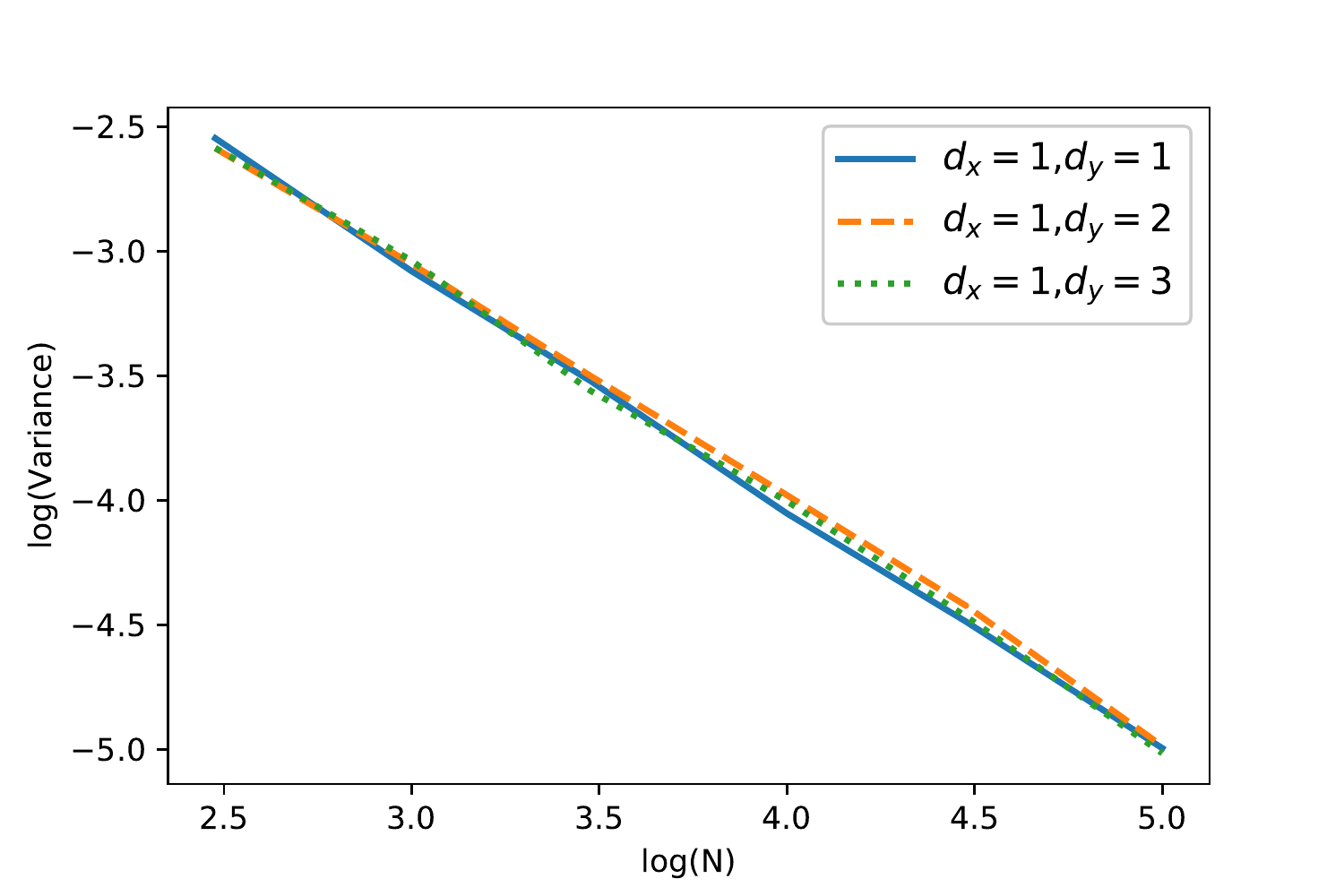}
			\caption{Convergence of the variance of KSG estimator.}
		\end{subfigure}
	\end{center}
	\caption{Empirical convergence of KSG mutual information estimator for Gaussian distribution.}
	\label{fig:ksg1}
\end{figure}

	\begin{table*}[h!]
		\begin{center}
			\caption{Comparison of convergence rate of KSG estimator}
			\label{tab:KSG1}
			\begin{tabular}{|c|c|c|c|c|c|c|}
				\hline
			 % <-- Alignments: 1st column left, 2nd middle and 3rd right, with vertical lines in between
				$d_x$ &$d_y$& \textbf{Bias}(Empirical) & \textbf{Bias}(Theoretical) & \textbf{Variance}(Empirical) &\textbf{Variance}(Theoretical)&\textbf{Sample Size} \\
				\hline
				$1$& $1$ & 0.50 & 0.50 & 0.99 & 1.00 &$10^2\sim 10^5$\\
				$1$& $2$ & 0.35 & 0.33 & 0.96 & 1.00 &$10^2\sim 10^5$\\
				$1$& $3$ & 0.27 & 0.25 &0.98 &1.00 &$10^2\sim 10^5$\\
				\hline
			\end{tabular}
		\end{center}
	\end{table*}
From Fig. \ref{fig:ksg1}, we observe that the bias and variance of KSG mutual information estimator for $d_x=1$, and $d_y=1,2,3$ basically agree with the theoretical prediction. The bounds in Theorem \ref{thm:KSG} and \ref{thm:KSGvar} are general bounds that consider the worst cases satisfying our assumptions. For some specific distributions, the empirical convergence rates can be faster than our theoretical prediction. In addition, in our derivation, we bound the total bias of KSG estimator by bounding the bias of its three components separately, and then use the sum of these three bounds as the bound of total bias. However, as was discussed in \cite{gao2018demystifying}, the bias of the decomposed marginal entropy estimator and the joint entropy estimator may cancel out. As a result, the practical performance of KSG estimator can be better than the theoretical prediction.

	\section{Conclusion}\label{sec:conclusion}
	In this paper, we have analyzed the convergence rates of bias and variance of truncated KL entropy estimator and KSG mutual information estimator for smooth distributions, under a tail assumption that is roughly equivalent to requiring the distribution to have an exponentially decreasing tail. Our assumptions allow distributions with heavy tails, for which the original KL estimator without truncation may not be accurate. In particular, we have shown that there exists a distribution under which the KL estimator without truncation is not consistent. To solve this problem,we have analyzed a truncated KL estimator. By optimally choosing the truncation threshold, we have improved the convergence rate of bias in \cite{tsybakov1996root}, and have extended the analysis to any fixed $k$ and arbitrary dimensions. Moreover, we have derived a minimax lower bound of the convergence rate of all entropy estimators, which shows that truncated KL estimator is nearly minimax optimal. %This result shows that the convergence rate of KL estimator can not be further improved in general, although for some specific distributions, the convergence rate can be faster. 
		Building on the analysis of KL estimator, we have then provided a bound for KSG estimator. Our analysis has no restrictions on the boundedness of the support set. Finally, we have extended the analysis of KL and KSG estimator to distributions with polynomially decreasing tails. We have also used numerical examples to show that the practical performances of KL and KSG estimators are consistent with our analysis in general.
	
	\textcolor{black}{In terms of future work, it is of interest to analyze the convergence rate of KSG estimator in Sobolev and Orlicz type spaces. In this regard, \cite{martins1977embeddings} will be useful. As the tail assumption given by the norm in Sobolev space (i.e. (1) in \cite{martins1977embeddings}) has different form comparing with our tail assumption (Assumption 1), new proof techniques will need to be developed.}

	\appendices
\color{black}
\section{Proof of Theorem 1: the bias of KL entropy estimator}\label{sec:klbias}

In this section, we analyze the bias of truncated KL estimator
\begin{eqnarray}
\hat{h}(\mathbf{X})=-\psi(k)+\psi(N)+\ln c_{d_x} +\frac{d_x}{N}\sum_{i=1}^N \ln \rho(i),\nonumber
\end{eqnarray}
under Assumptions (a), (b) in Theorem \ref{thm:KLbias}, in which 
\begin{eqnarray}
\rho(i)=\min\{\epsilon(i),a_N\},
\label{eq:rho}
\end{eqnarray}
and the truncation threshold is set to be $a_N=AN^{-\beta}$, in which $\beta<1/d_x$. We hope to select a $\beta$ to optimize the convergence rate of bias.

%In the following proof, $P(\cdot)$ denotes the probability mass of a subset of $\mathbb{R}^d$, $P(\cdot)$ denotes the probability of an event.

We begin with deriving three lemmas based on Assumptions (a) and (b) in the theorem statement. %Assumption (a) says that the Hessian is bounded, therefore the following result holds:

\begin{lem}\label{lem:pdf}
	Under Assumption (a) in Theorem \ref{thm:KLbias}, there exists constant $C_1$, such that
	\begin{eqnarray}
	|P(B(\mathbf{x},r))-f(\mathbf{x})c_{d_x} r^{d_x}|\leq C_1 r^{d_x+2},\label{eq:xpdf}		
	\end{eqnarray}
	in which
%	\begin{eqnarray}
$	B(\mathbf{x}, r):=\{ \mathbf{u}| \norm{\mathbf{u}-\mathbf{x}}<r \}$.
%	\end{eqnarray}
\end{lem}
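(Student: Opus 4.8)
The plan is to estimate $P(B(\mathbf{x},r))=\int_{B(\mathbf{x},r)} f(\mathbf{u})\,d\mathbf{u}$ by a second-order Taylor expansion of $f$ about the center $\mathbf{x}$. First I would pass from the $W^{2,\infty}$ hypothesis to a smooth enough representative of $f$: by the Sobolev embedding $W^{2,\infty}_{\mathrm{loc}}(\mathbb{R}^{d_x})\hookrightarrow C^{1,1}_{\mathrm{loc}}(\mathbb{R}^{d_x})$, $f$ has a representative (still denoted $f$) that is differentiable everywhere with locally Lipschitz gradient, and whose almost-everywhere-defined Hessian satisfies $\norm{\nabla^2 f}\le M$. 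Equivalently, one mollifies $f$, proves \eqref{eq:xpdf} for the mollification with a constant independent of the mollification parameter, and lets the parameter tend to zero. In either case, for every $\mathbf{x}$ and every $\mathbf{u}\in B(\mathbf{x},r)$, Taylor's theorem with integral remainder gives
\begin{equation*}
f(\mathbf{u})=f(\mathbf{x})+\nabla f(\mathbf{x})^\top(\mathbf{u}-\mathbf{x})+\int_0^1(1-t)\,(\mathbf{u}-\mathbf{x})^\top\nabla^2 f\big(\mathbf{x}+t(\mathbf{u}-\mathbf{x})\big)(\mathbf{u}-\mathbf{x})\,dt.
\end{equation*}

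Next I would integrate this identity over $B(\mathbf{x},r)$. The constant term contributes $f(\mathbf{x})\,\mathrm{Vol}(B(\mathbf{x},r))=f(\mathbf{x})c_{d_x}r^{d_x}$, since $c_{d_x}$ is by definition the volume of the unit ball of the norm $\norm{\cdot}$. The linear term vanishes because every norm ball $B(\mathbf{x},r)$ is symmetric under the reflection $\mathbf{u}\mapsto 2\mathbf{x}-\mathbf{u}$, so the odd integrand $\nabla f(\mathbf{x})^\top(\mathbf{u}-\mathbf{x})$ integrates to zero. Hence the left side of \eqref{eq:xpdf} equals the modulus of the integrated remainder, which I bound by
\begin{equation*}
\left|\int_{B(\mathbf{x},r)}\!\int_0^1(1-t)\,(\mathbf{u}-\mathbf{x})^\top\nabla^2 f(\cdot)(\mathbf{u}-\mathbf{x})\,dt\,d\mathbf{u}\right|\le \tfrac{1}{2}M'\int_{B(\mathbf{x},r)}\norm{\mathbf{u}-\mathbf{x}}^2\,d\mathbf{u}\le \tfrac{1}{2}M'c_{d_x}r^{d_x+2},
\end{equation*}
where $M'=\kappa M$ absorbs the norm-equivalence constant $\kappa$ that converts the $\ell_2$ quadratic-form bound $|\mathbf{v}^\top\nabla^2 f\,\mathbf{v}|\le M\|\mathbf{v}\|_{\ell_2}^2$ into a bound against $\norm{\mathbf{u}-\mathbf{x}}^2$ for the norm used to define the balls, and the last inequality uses $\norm{\mathbf{u}-\mathbf{x}}<r$ on $B(\mathbf{x},r)$ together with $\mathrm{Vol}(B(\mathbf{x},r))=c_{d_x}r^{d_x}$. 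This gives \eqref{eq:xpdf} with $C_1=\tfrac12 M' c_{d_x}$.

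The only step needing genuine care is the first one — making sense of a pointwise second-order Taylor expansion when $f$ is only assumed in $W^{2,\infty}$ rather than $C^2$. I expect to handle this cleanly through the $C^{1,1}$ representative, for which $\nabla f$ exists and is Lipschitz everywhere so that the integral-remainder form of Taylor's theorem applies verbatim and the a.e.\ Hessian bound $\norm{\nabla^2 f}\le M$ is exactly what the remainder estimate consumes; alternatively, the mollification route sketched above avoids quoting Sobolev embeddings. Everything else is the routine symmetrization-and-volume computation, the one mild bookkeeping point being the single norm-equivalence constant $\kappa$, which appears because the metric on $\mathbb{R}^{d_x}$ is an arbitrary norm whereas the Hessian bound is naturally an $\ell_2$ operator-norm bound.
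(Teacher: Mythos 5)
Your proposal is correct and follows essentially the same route as the paper: Taylor-expand $f$ about $\mathbf{x}$, kill the linear term by the reflection symmetry of the norm ball, and bound the integrated second-order remainder by $M\int_{B(\mathbf{x},r)}\norm{\mathbf{u}-\mathbf{x}}^2 d\mathbf{u}=\mathcal{O}(r^{d_x+2})$ (the paper handles the norm mismatch by enlarging to the enclosing $\ell_\infty$ cube rather than by a norm-equivalence constant, which is the same bookkeeping). Your extra care in justifying the pointwise Taylor expansion for a $W^{2,\infty}$ function via the $C^{1,1}$ representative or mollification is a genuine refinement of a point the paper passes over silently, but it does not change the argument.
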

\begin{proof} 
	\begin{eqnarray}
\left|P(B(\mathbf{x},r))-f(\mathbf{x})c_{d_x} r^{d_x}\right|=\nonumber\\ \left| \int_{\mathbf{u} \in B(\mathbf{x}, r)} (f(\mathbf{u})-f(\mathbf{x}))d\mathbf{u} \right|.\nonumber
\end{eqnarray}
Using Taylor expansion, we have
\color{black}
\begin{eqnarray}
&&\hspace{-7mm}\left| \int_{\mathbf{u} \in B(\mathbf{x}, r)} (f(\mathbf{u})-f(\mathbf{x}))d\mathbf{u} \right|\nonumber\\
	&=& \left| \int_{\mathbf{u} \in B(\mathbf{x}, r)} (\nabla f(\mathbf{x}))^T (\mathbf{u}-\mathbf{x})\right.\nonumber\\ &&\left.+(\mathbf{u}-\mathbf{x})^T \nabla^2 f(\mathbf{\xi(\mathbf{u})}) (\mathbf{u}-\mathbf{x})) d\mathbf{u} \right| \nonumber \\
	&=& \left| \int_{\mathbf{u} \in B(\mathbf{x}, r)} (\mathbf{u}-\mathbf{x})^T \nabla^2 f(\mathbf{\xi(\mathbf{u})}) (\mathbf{u}-\mathbf{x}) d\mathbf{u} \right|\nonumber \\
	&\leq & M\left| \int_{\mathbf{u} \in B^\infty (\mathbf{x}, r)}  \norm{\mathbf{u}-\mathbf{x}}_2^2 d\mathbf{u} \right|\nonumber\\	&\leq&  C_1 r^{d_x+2},\nonumber
	\end{eqnarray}
for some constant $C_1$, in which $B^\infty(\mathbf{x},r)$ denotes the smallest $L_\infty$ ball (i.e. a cube) that contains $B(\mathbf{x},r)$. In the steps above, we enlarge the domain of integration from $B(\mathbf{x},r)$ to $B^\infty(\mathbf{x},r)$ for the convenience of calculation.
\end{proof}
\color{black}
Assumption (b) controls the tail of distribution. We can show that the following lemma holds:
\begin{lem}\label{lem:tail}
	
	(1) Under Assumption (b) in Theorem \ref{thm:KLbias}, There exists $ \mu>0$ such that
	\begin{eqnarray}
	P(f(\mathbf{X})\leq t)\leq \mu t, \forall t>0;
	\label{eq:tailbound}
	\end{eqnarray} 
	(2) Under \eqref{eq:tailbound}, for any integer $m\geq 1$, there exists a constant $K_m$, such that
	\begin{eqnarray}
	\int f^m (\mathbf{x}) \exp(-bf(\mathbf{x})) d\mathbf{x} \leq \frac{K_m}{b^m}.
	\label{eq:mbound}
	\end{eqnarray}
\end{lem}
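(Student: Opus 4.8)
The plan is to prove the two parts in sequence, using in part~(1) only the tail hypothesis \eqref{eq:tail} and in part~(2) only the linear sublevel-set bound \eqref{eq:tailbound}.

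\emph{Part (1).} I would argue by a truncate-and-optimize estimate. The key pointwise observation is that on the sublevel set $\{\mathbf{x}:f(\mathbf{x})\le t\}$ one has $e^{-bf(\mathbf{x})}\ge e^{-bt}$, hence $f(\mathbf{x})\le e^{bt}\,f(\mathbf{x})\exp(-bf(\mathbf{x}))$ there. Integrating this inequality over the sublevel set, enlarging the domain of integration to all of $\mathbb{R}^{d_x}$, and applying \eqref{eq:tail} yields $P(f(\mathbf{X})\le t)\le e^{bt}\int f(\mathbf{x})\exp(-bf(\mathbf{x}))\,d\mathbf{x}\le Ce^{bt}/b$ for every $b>0$. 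Choosing the free parameter $b=1/t$ gives $P(f(\mathbf{X})\le t)\le eCt$, which is \eqref{eq:tailbound} with $\mu=eC$.

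\emph{Part (2).} Here the idea is to turn the integral into an expectation of a function of $f(\mathbf{X})$ and then integrate against the cumulative distribution function of $f(\mathbf{X})$, which \eqref{eq:tailbound} controls linearly. Write $\int f^m(\mathbf{x})\exp(-bf(\mathbf{x}))\,d\mathbf{x}=\mathbb{E}[\psi(f(\mathbf{X}))]$ with $\psi(s):=s^{m-1}e^{-bs}$. Since $\psi$ is $C^1$ on $(0,\infty)$, $\psi(\infty)=0$, and $\psi'$ is absolutely integrable on $(0,\infty)$ (the only delicate point is near $0$, where $s^{m-2}e^{-bs}$ is integrable for every $m\ge1$), one has $\psi(t)=-\int_t^\infty\psi'(s)\,ds$, and Fubini gives $\mathbb{E}[\psi(f(\mathbf{X}))]=-\int_0^\infty\psi'(s)\,P(f(\mathbf{X})\le s)\,ds\le\mu\int_0^\infty s\,|\psi'(s)|\,ds$, the last inequality using \eqref{eq:tailbound}. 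Finally $\psi'(s)=\big((m-1)s^{m-2}-bs^{m-1}\big)e^{-bs}$, so $s\,|\psi'(s)|\le\big((m-1)s^{m-1}+bs^m\big)e^{-bs}$, and the elementary Gamma integrals $\int_0^\infty s^{m-1}e^{-bs}\,ds=(m-1)!/b^m$ and $\int_0^\infty s^m e^{-bs}\,ds=m!/b^{m+1}$ collapse the bound to $K_m/b^m$ with $K_m=\mu\big((m-1)(m-1)!+m!\big)$; the $m=1$ case is covered verbatim, reading $(m-1)s^{m-2}$ as $0$.

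I do not expect a serious obstacle, as both parts are short estimates; the step most deserving of care is the interchange in part~(2), where one must check absolute integrability of $\psi'$ (both near the origin and at infinity) to legitimize the representation $\psi(t)=-\int_t^\infty\psi'(s)\,ds$ and the Fubini swap, and where it matters that $\psi$ is \emph{not} monotone — this is exactly why one integrates against the distribution function rather than invoking a bare layer-cake identity. A minor point in part~(1) is that $b=1/t$ is merely a convenient optimizer: any choice $b=c/t$ works and only rescales the constant $\mu$.
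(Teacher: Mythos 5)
Your proposal is correct. Part (1) is essentially the paper's argument: the paper writes $P(f(\mathbf{X})\le t)=P(e^{-f(\mathbf{X})/t}\ge e^{-1})\le e\,\mathbb{E}[e^{-f(\mathbf{X})/t}]\le eCt$, which is exactly your Markov-type bound with the optimizer $b=1/t$ already substituted; both yield $\mu=eC$. Part (2) is where you genuinely diverge. The paper writes $\int f^m e^{-bf}\,d\mathbf{x}=b^{-(m-1)}\mathbb{E}[(bf(\mathbf{X}))^{m-1}e^{-bf(\mathbf{X})}]$, applies the pointwise inequality $u^{m-1}\le(2(m-1)/e)^{m-1}e^{u/2}$ to absorb the polynomial factor, and then bounds $\mathbb{E}[e^{-(b/2)f(\mathbf{X})}]\le 2C/b$ directly from Assumption (b) --- i.e.\ its proof of \eqref{eq:mbound} actually runs off \eqref{eq:tail} rather than off \eqref{eq:tailbound} as the lemma is phrased. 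Your route instead integrates $\psi(s)=s^{m-1}e^{-bs}$ against the distribution function of $f(\mathbf{X})$ and uses only the sublevel-set bound \eqref{eq:tailbound}; the Fubini/absolute-integrability checks you flag are the right ones and go through. This buys two things: it proves part (2) under precisely the stated hypothesis, which matters because the paper later cites ``\eqref{eq:mbound} with $m=1$'' as the implication \eqref{eq:tailbound} $\Rightarrow$ \eqref{eq:tail}, a direction the paper's own proof does not literally deliver; and it generalizes verbatim to the heavy-tail hypothesis $P(f(\mathbf{X})\le t)\le\mu t^\tau$ (yielding order $b^{-(m-1+\tau)}$), which the paper handles by a separate splitting argument in Appendix~\ref{sec:heavypf}. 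The paper's proof, in exchange, is shorter and avoids any interchange of integrals. Both constants are explicit and the approaches are interchangeable for the purposes of Theorem~\ref{thm:KLbias}.
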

\begin{proof}
\textbf{Proof of \eqref{eq:tailbound}}:
\begin{eqnarray}
P(f(\mathbf{X})\leq t)&=&P\left(e^{-\frac{f(\mathbf{X})}{t}}\geq e^{-1}\right)\nonumber\\
&\leq& e\mathbb{E}\left[e^{-\frac{f(\mathbf{X})}{t}}\right]\nonumber\\
&\leq& eCt,
\end{eqnarray}
in which the last inequality comes from Assumption (b) in Theorem \ref{thm:KLbias}. Hence \eqref{eq:tailbound} holds with $\mu=eC$.

\textbf{Proof of \eqref{eq:mbound}}:
Note that for all $u>0$, $u^{m-1}\leq (2(m-1)/e)^{m-1} e^{u/2}$, hence
\begin{eqnarray}
&&\hspace{-5mm}\int f^m(\mathbf{x}) \exp(-bf(\mathbf{x}))d\mathbf{x}\nonumber\\
&=&\mathbb{E}[f^{m-1}(\mathbf{X})\exp(-bf(\mathbf{X}))]\nonumber\\
&=&\frac{1}{b^{m-1}}\mathbb{E}[(bf(\mathbf{X}))^{m-1} \exp(-bf(\mathbf{X}))]\nonumber\\
&\leq &\left(\frac{2(m-1)}{e}\right)^{m-1}\frac{1}{b^{m-1}}\nonumber\\&&\hspace{6mm}\mathbb{E}\left[\exp\left(\frac{b}{2} f(\mathbf{X})\right) \exp(-bf(\mathbf{X}))\right]\nonumber\\
&\leq & 2\left(\frac{2(m-1)}{e}\right)^{m-1} \frac{C}{b^m}.\nonumber
\end{eqnarray}
\end{proof} 
Based on Lemma \ref{lem:tail}, we can show another lemma. Define
\begin{eqnarray}
V(t)=m\left(\left\{ \mathbf{x}|f(\mathbf{x})>t \right\}\right),
\label{eq:Vdef}
\end{eqnarray}
in which $m$ denotes Lebesgue measure. From \eqref{eq:Vdef}, $V(t)$ is the volume of the region in which the pdf is higher than $t$. Under Assumption (b) in Theorem \ref{thm:KLbias}, we have the following bound.
\begin{lem}\label{lem:V}
	Under Assumption (b) in Theorem \ref{thm:KLbias}, for sufficiently small $t$,
	\begin{eqnarray}
	V(t)\leq \mu\left(1+\ln \frac{1}{\mu t}\right),\nonumber
	\end{eqnarray}
	in which $\mu$ is the constant in~\eqref{eq:tailbound}.
\end{lem}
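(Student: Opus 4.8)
The plan is to express $V(t)$ as an integral of the density over its level set and then exploit the tail bound \eqref{eq:tailbound} from Lemma \ref{lem:tail}. First I would write, using the layer-cake / Fubini identity,
\begin{eqnarray}
V(t) = m\left(\{\mathbf{x}: f(\mathbf{x})>t\}\right) = \int \mathbf{1}(f(\mathbf{x})>t)\,d\mathbf{x},\nonumber
\end{eqnarray}
and then split the region $\{f>t\}$ according to the size of $f$. The key observation is that on the part where $f(\mathbf{x})$ is large, the region has small volume because $\int f = 1$; on the part where $f(\mathbf{x})$ is of intermediate size (between $t$ and some cutoff), we control the volume using the distributional bound $P(f(\mathbf{X})\le s)\le \mu s$, which after dividing by the density value translates into a bound on Lebesgue measure.

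Concretely, I would choose a dyadic decomposition of the range $(t,\infty)$: write $\{f>t\} = \bigcup_{j\ge 0} A_j$ where $A_j = \{\mathbf{x}: 2^j t < f(\mathbf{x}) \le 2^{j+1}t\}$. On $A_j$ we have $f(\mathbf{x}) > 2^j t$, so
\begin{eqnarray}
m(A_j) \le \frac{1}{2^j t}\int_{A_j} f(\mathbf{x})\,d\mathbf{x} \le \frac{1}{2^j t}\,P\!\left(2^j t < f(\mathbf{X}) \le 2^{j+1} t\right) \le \frac{1}{2^j t}\cdot \mu\, 2^{j+1} t = 2\mu,\nonumber
\end{eqnarray}
where the last step uses \eqref{eq:tailbound} in the form $P(f(\mathbf{X})\le 2^{j+1}t)\le \mu 2^{j+1}t$. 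This uniform bound $m(A_j)\le 2\mu$ is too crude to sum over all $j$, so for large $j$ I switch to the competing bound: since $\int f = 1$, $m(A_j)\le (2^j t)^{-1}\int_{A_j} f \le (2^j t)^{-1}$. Summing the first bound for $j$ up to some threshold $j_0$ and the second bound for $j>j_0$, and optimizing $j_0$ so that $2^{j_0} t \approx \mu$ (i.e. $j_0 \approx \log_2(\mu/(\mu t)) $ — roughly $j_0 \approx \log(1/t)/\log 2$ up to constants), the first sum contributes $O(\mu j_0) = O(\mu \ln(1/(\mu t)))$ and the geometric tail contributes $O(1/(2^{j_0}t)) = O(1/\mu)\cdot\mu = O(1)$ order terms, giving the stated form $V(t)\le \mu(1+\ln(1/(\mu t)))$ after tracking constants carefully (the "sufficiently small $t$" hypothesis ensures $\ln(1/(\mu t))>0$ and that the two-regime split is nonvacuous).

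The main obstacle is getting the constants to line up exactly as $\mu(1+\ln\frac{1}{\mu t})$ rather than a looser $C_1 \mu \ln(1/t) + C_2$; the dyadic argument as sketched naturally produces extra multiplicative constants from the geometric sums and the choice of $j_0$. A cleaner route that avoids this is to bound $V(t)$ directly by an integral: $V(t) = \int_t^\infty (-dm(\{f>s\}))$, and integrate by parts, or more simply note $V(t) = \int \mathbf{1}(f>t) d\mathbf{x} \le \int \frac{f(\mathbf{x})}{\max(f(\mathbf{x}),\,\text{something})}\cdots$; but the most direct is to use $P(f(\mathbf{X})\le s)\le \mu s$ together with $\int_t^{1/\mu}\frac{1}{s}\,dP$-type estimates. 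I would try the integral form first: write $V(t) = \int \mathbf{1}(f(\mathbf{x})>t)\,d\mathbf{x}$ and bound $\mathbf{1}(f(\mathbf{x})>t) \le \mathbf{1}(f(\mathbf{x})>1/\mu) + \mathbf{1}(t<f(\mathbf{x})\le 1/\mu)$; the first indicator integrates to at most $\mu$ (since $m(\{f>1/\mu\})\le \mu\int f = \mu$), and for the second, $\mathbf{1}(t<f(\mathbf{x})\le 1/\mu) \le f(\mathbf{x})\ln\frac{1}{\mu f(\mathbf{x})}\big/\big(f(\mathbf{x})\ln\frac{1}{\mu t}\big)$ is awkward, so instead use $m(\{t<f\le 1/\mu\}) = \int_t^{1/\mu} (-dV) $ and a change of variables to reduce to $\int_t^{1/\mu} \frac{\mu}{s} ds = \mu\ln\frac{1}{\mu t}$ via \eqref{eq:tailbound}. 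Summing the two pieces gives exactly $V(t)\le \mu + \mu\ln\frac{1}{\mu t} = \mu(1+\ln\frac{1}{\mu t})$, which is the claim.
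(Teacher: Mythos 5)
Your overall route is the same as the paper's (Appendix \ref{sec:V}): reduce everything to the distribution of $T=f(\mathbf{X})$, invoke $F_T(s)=P(f(\mathbf{X})\le s)\le \mu s$ from Lemma \ref{lem:tail}, and split $V(t)$ into a bounded piece plus a logarithmic integral. The first piece of your final argument, $m(\{f>1/\mu\})\le\mu\int f=\mu$, is fine. The gap is in the second piece. Writing $m(\{t<f\le 1/\mu\})=\int_t^{1/\mu}\frac{1}{s}\,dF_T(s)$ is legitimate (it is the identity $-t\,dV(t)=dF_T(t)$ the paper also uses), but the ``change of variables'' reducing this to $\int_t^{1/\mu}\frac{\mu}{s}\,ds$ silently assumes $dF_T(s)\le \mu\,ds$, i.e.\ that $T$ has density at most $\mu$. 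Assumption (b) only gives the distributional bound $F_T(s)\le\mu s$, which allows $F_T$ to rise steeply over a short range: taking $F_T=0$ on $[0,t]$, $F_T(s)=\mu s$ on $[t(1+\eta),1/\mu]$, and a steep interpolation in between, one gets $m(\{t<f\le 1/\mu\})\to \mu\bigl(1+\ln\frac{1}{\mu t}\bigr)$ as $\eta\to 0$ --- the entire budget of the lemma --- so the claimed bound $\mu\ln\frac{1}{\mu t}$ for that piece is false, not merely unjustified.

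The repair is exactly where the paper's proof differs from yours: substitute $u=F_T(\xi)$ so that $V(t)=\int_{F_T(t)}^{1}du/q_T(u)$ with $q_T$ the quantile function, and split at $u=\mu t$ in \emph{probability} space rather than at $f=1/\mu$ in density space. On $[F_T(t),\mu t]$ monotonicity gives $q_T(u)\ge q_T(F_T(t))=t$, contributing at most $(\mu t-F_T(t))/t\le\mu$; on $[\mu t,1]$ the bound $q_T(u)\ge u/\mu$ contributes $\mu\ln\frac{1}{\mu t}$. Alternatively, your split can be salvaged by integrating $\int_t^{1/\mu}\frac{1}{s}\,dF_T(s)$ by parts, which yields $V(t)\le\mu\bigl(2+\ln\frac{1}{\mu t}\bigr)$; that weaker constant is harmless everywhere the lemma is used, but it is not the stated inequality. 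Your dyadic sketch has the same character: correct up to multiplicative and additive constants, as you yourself acknowledge.
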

\begin{proof}
	(Outline) Here we provide an intuitive explanation. As discussed in \cite{tsybakov1996root}, roughly speaking, assumption (b) requires the distribution to have an exponential tail. For exponential or Laplace distribution, it is obvious that $V(t)=\mathcal{O}(\ln(1/t))$. Therefore it is reasonable to assume that this bound holds generally for any distributions that satisfy assumption (b). The detailed proof is shown in Appendix \ref{sec:V}.
\end{proof}
Now we analyze the convergence rate of KL estimator in \eqref{eq:KL}.
\begin{eqnarray}
&&\hspace{-7mm}\mathbb{E}[\hat{h}(\mathbf{X})]-h(\mathbf{X})\nonumber\\
&\overset{(a)}{=}&-\psi(k)+\psi(N)+\mathbb{E}\left[\ln \left(c_{d_x} \rho^{d_x}\right)\right] -h(\mathbf{X})\nonumber\\
&\overset{(b)}{=}&-\mathbb{E}\left[\ln P(B(\mathbf{X},\epsilon))\right]+\mathbb{E}\left[\ln \left(c_{d_x} \rho^{d_x}\right)\right]-h(\mathbf{X})\nonumber\\
&\overset{(c)}{=}& -\mathbb{E}\left[\ln P(B(\mathbf{X},\epsilon))\right]+\mathbb{E}\left[\ln\left(f(\mathbf{X}) c_{d_x} \rho^{d_x}\right)\right]\nonumber\\
&\overset{(d)}{=}&-\mathbb{E}\left[\ln\left( \frac{P(B(\mathbf{X},\epsilon))}{P(B(\mathbf{X},\rho))} \right)\mathbf{1}(\mathbf{X}\in S_1)\right] \nonumber \\
& & -\mathbb{E}\left[\ln \left(\frac{P(B(\mathbf{X},\rho))}{f(\mathbf{X})c_{d_x}\rho^{d_x}} \right)\mathbf{1}(\mathbf{X} \in S_1)\right]   \nonumber\\
&&-\mathbb{E}\left[\ln\left(\frac{P(B(\mathbf{X},\epsilon))}{f(\mathbf{X})c_{d_x}\rho^{d_x}} \right)\mathbf{1}(\mathbf{X}\in S_2)\right]\nonumber\\
&:=& -I_1-I_2-I_3.\label{eq:I123}
\end{eqnarray}
Here, (a) uses the fact that $\rho(i)$'s are identically distributed for all $i$, thus $$\mathbb{E}\left[\frac{d_x}{N}\sum_{i=1}^N \ln \rho(i)\right]=\mathbb{E}[d_x\ln \rho(i)], \forall i.$$ From now on, we omit $i$ for convenience. In (b), we use the fact from order statistics~\cite{david1970order} that $P(B(\mathbf{x},\epsilon))\sim \mathbb{B}(k,N-k)$, in which $\mathbb{B}$ denotes Beta distribution. Therefore
\begin{eqnarray}
\mathbb{E}[\ln P(B(\mathbf{x},\epsilon))|\mathbf{x}]=\psi(k)-\psi(N).
\label{eq:elogp}
\end{eqnarray}  
(c) holds because $h(\mathbf{X})=-\mathbb{E}[\ln f(\mathbf{X})]$. In (d), $S_1$ and $S_2$ are defined as:
\begin{eqnarray}
S_1=\left\{\mathbf{x}|f(\mathbf{x})\geq \frac{\lambda C_1}{c_{d_x}} A^2 N^{-\gamma} \right\},\label{eq:s1}\\
S_2=\left\{\mathbf{x}|f(\mathbf{x})<\frac{\lambda C_1}{c_{d_x}} A^2 N^{-\gamma} \right\},
\label{eq:s2}
\end{eqnarray}
in which $\gamma$ is defined by
\begin{eqnarray}
\gamma= \min\{2\beta, 1-\beta d_x\},\label{eq:gammare}
\end{eqnarray} 
and
\begin{eqnarray}
\lambda=2\max\left\{1,\frac{k+1}{C_1A^{d_x+2}}\right\}.
\label{eq:lambda}
\end{eqnarray}

Roughly speaking, $S_1$ is the region where the $f(\mathbf{x})$ is relatively large, while $S_2$ corresponds to the tail region. Regarding the two regions $S_1$ and $S_2$, we have the following lemma.
\begin{lem}\label{lem:largeeps}
	Under Assumptions (a) and (b) in Theorem \ref{thm:KLbias}, there exist constants $C_2$ and $C_3$, such that for $N>k$,
	\begin{eqnarray}
	P(\epsilon>a_N,\mathbf{X}\in S_1)&\leq& C_2 N^{-(1-\beta d_x)}, \label{eq:S1eps}\\
	%	P(\epsilon>a_N,\mathbf{X}\in S_2)&\leq& C_3 N^{-\gamma}. \label{eq:S2eps}\\
	P(\epsilon>a_N)&\leq & C_3 N^{-\min\left\{1-\beta d_x,\frac{2}{d_x+2}\right\}}. 
	\label{eq:largeeps}
	\end{eqnarray}
\end{lem}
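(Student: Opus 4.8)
The plan is to recast $\{\epsilon>a_N\}$ as a binomial lower‑tail event and then control it through pointwise lower bounds on the mass of a small ball. Condition on $\mathbf{X}=\mathbf{x}$, where $\mathbf{X}$ is the sample whose $k$‑NN distance is $\epsilon$. The event $\{\epsilon>a_N\}$ occurs exactly when fewer than $k$ of the other $N-1$ i.i.d. samples land in $B(\mathbf{x},a_N)$, so $P(\epsilon>a_N\mid\mathbf{X}=\mathbf{x})=P(\mathrm{Bin}(N-1,p_N(\mathbf{x}))\le k-1)$ with $p_N(\mathbf{x}):=P(B(\mathbf{x},a_N))$. Using $\binom{N-1}{j}\le N^{j}/j!$, $(1-p)^{N-1-j}\le e^{-(N-1-j)p}$, $p\le 1$, and $\sup_{m\ge 0}m^{k-1}e^{-m/2}<\infty$, one obtains $P(\mathrm{Bin}(N-1,p)\le k-1)\le C_k e^{-(N-1)p/2}$ for a constant $C_k$. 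Since $P(\epsilon>a_N,\mathbf{X}\in S)=\int_S f(\mathbf{x})\,P(\epsilon>a_N\mid\mathbf{X}=\mathbf{x})\,d\mathbf{x}$, everything reduces to lower bounding $p_N(\mathbf{x})$ and integrating against $f$.

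For \eqref{eq:S1eps}: on $S_1$, Lemma \ref{lem:pdf} gives $|p_N(\mathbf{x})-f(\mathbf{x})c_{d_x}a_N^{d_x}|\le C_1 a_N^{d_x+2}$, and the definitions \eqref{eq:s1}, \eqref{eq:gammare}, \eqref{eq:an} together with $\lambda\ge 2$ from \eqref{eq:lambda} make the remainder at most half the leading term there (the $2\beta$ branch of \eqref{eq:gammare} is precisely what ensures this). Hence $p_N(\mathbf{x})\ge\tfrac12 f(\mathbf{x})c_{d_x}A^{d_x}N^{-\beta d_x}$ on $S_1$, so $P(\epsilon>a_N\mid\mathbf{x})\le C_k e^{-b f(\mathbf{x})}$ with $b=\tfrac{N-1}{4}c_{d_x}A^{d_x}N^{-\beta d_x}=\Theta(N^{1-\beta d_x})$, and $P(\epsilon>a_N,\mathbf{X}\in S_1)\le C_k\int f(\mathbf{x})e^{-b f(\mathbf{x})}d\mathbf{x}\le C_k K_1/b$ by \eqref{eq:mbound} with $m=1$, which is $\mathcal{O}(N^{-(1-\beta d_x)})$.

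For \eqref{eq:largeeps}, split $P(\epsilon>a_N)=P(\epsilon>a_N,\mathbf{X}\in S_1)+P(\epsilon>a_N,\mathbf{X}\in S_2)$; the first term is $\mathcal{O}(N^{-(1-\beta d_x)})$. On the tail region $S_2$ the expansion of Lemma \ref{lem:pdf} no longer controls $p_N(\mathbf{x})$ from below. Instead I would invoke the pointwise inequality $\norm{\nabla f(\mathbf{x})}^2\le 2M f(\mathbf{x})$, which holds for every nonnegative function with $M$‑bounded Hessian and hence follows from Assumption (a); it forces $f(\mathbf{u})\ge\tfrac12 f(\mathbf{x})$ whenever $\norm{\mathbf{u}-\mathbf{x}}\le c'\sqrt{f(\mathbf{x})/M}$, so integrating $f$ over that ball (which lies inside $B(\mathbf{x},a_N)$ in the relevant regime $f(\mathbf{x})\lesssim a_N^2$) gives $p_N(\mathbf{x})\ge c\,M^{-d_x/2}f(\mathbf{x})^{1+d_x/2}$. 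Fix $t_0:=c_0 N^{-2/(d_x+2)}$. On $\{f<t_0\}$ bound crudely by $P(f(\mathbf{X})<t_0)\le\mu t_0=\mathcal{O}(N^{-2/(d_x+2)})$ via \eqref{eq:tailbound}; on $\{f\ge t_0\}\cap S_2$ the binomial bound gives $P(\epsilon>a_N\mid\mathbf{x})\le C_k\exp(-c_1 N f(\mathbf{x})^{1+d_x/2})$, and an integration by parts against the distribution function of $f(\mathbf{X})$ (using $P(f(\mathbf{X})\le t)\le\mu t$ and the substitution $s=c_1 N t^{1+d_x/2}$) shows $\int_{f\ge t_0}f(\mathbf{x})e^{-c_1 N f(\mathbf{x})^{1+d_x/2}}d\mathbf{x}=\mathcal{O}(N^{-2/(d_x+2)})$. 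Adding the three pieces yields $P(\epsilon>a_N)=\mathcal{O}(N^{-(1-\beta d_x)}+N^{-2/(d_x+2)})=\mathcal{O}(N^{-\min\{1-\beta d_x,\,2/(d_x+2)\}})$. (When $\beta=1/(d_x+2)$, the simpler bound $P(\epsilon>a_N,\mathbf{X}\in S_2)\le P(\mathbf{X}\in S_2)\le\mu\tfrac{\lambda C_1 A^2}{c_{d_x}}N^{-\gamma}$ from \eqref{eq:tailbound} already suffices, since then $\gamma=1-\beta d_x=2/(d_x+2)$.)

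The integral estimates are routine; the one genuine obstacle is the tail region $S_2$ in \eqref{eq:largeeps}. There Lemma \ref{lem:pdf} is vacuous, and bounding $P(\epsilon>a_N,\mathbf{X}\in S_2)$ by the mass $P(\mathbf{X}\in S_2)=\mathcal{O}(N^{-\gamma})$ is too lossy for $\beta<1/(d_x+2)$. The fix — and the origin of the $N^{-2/(d_x+2)}$ term — is that nonnegativity plus a bounded Hessian prevents $f$ from decaying too steeply near $\mathbf{x}$, so $p_N(\mathbf{x})$ stays of order at least $f(\mathbf{x})^{1+d_x/2}$; this makes the binomial tail exponentially small except on the tiny set $\{f(\mathbf{x})\lesssim N^{-2/(d_x+2)}\}$, whose mass is handled by \eqref{eq:tailbound}. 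The remaining care is bookkeeping: tracking $C_1$ and $\lambda$ in \eqref{eq:lambda} so that the remainder in \eqref{eq:xpdf} is genuinely dominated on $S_1$.
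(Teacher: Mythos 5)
Your argument is correct. For \eqref{eq:S1eps} it is essentially the paper's proof: both condition on $\mathbf{X}=\mathbf{x}$, turn $\{\epsilon>a_N\}$ into a binomial lower-tail event, use Lemma \ref{lem:pdf} together with the choice of $\lambda$ in \eqref{eq:lambda} to get $P(B(\mathbf{x},a_N))\ge\tfrac{1}{2}f(\mathbf{x})c_{d_x}a_N^{d_x}$ on $S_1$, and integrate the resulting exponential against $f$ via Lemma \ref{lem:tail}; the only difference is cosmetic — the paper keeps the polynomial prefactor $(eN'p/k)^k$ from the Chernoff bound and so invokes \eqref{eq:mbound} with $m=k+1$, whereas your purely exponential binomial bound needs only $m=1$. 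For \eqref{eq:largeeps} you genuinely diverge. The paper handles the tail by the crude bound $P(\epsilon>a_N,\mathbf{X}\in S_2)\le P(\mathbf{X}\in S_2)\le\mu\frac{\lambda C_1A^2}{c_{d_x}}N^{-\gamma}$ from \eqref{eq:tailbound}, and its proof therefore delivers the exponent $\min\{1-\beta d_x,2\beta\}$, which matches the stated $\min\{1-\beta d_x,2/(d_x+2)\}$ only when $\beta\ge 1/(d_x+2)$ (in particular for the value $\beta=1/(d_x+2)$ actually used in Theorem \ref{thm:KLbias}, and this weaker form is all that is used downstream). Your tail argument — the pointwise inequality $\norm{\nabla f}^2\le 2Mf$ forced by nonnegativity plus a bounded Hessian, hence $P(B(\mathbf{x},a_N))\gtrsim f(\mathbf{x})^{1+d_x/2}$ on $S_2$, followed by the split at $t_0\sim N^{-2/(d_x+2)}$ and the integration by parts against $F_T$ — establishes the exponent $2/(d_x+2)$ uniformly over all admissible $\beta$, i.e., it proves the lemma exactly as stated and explains where that exponent comes from. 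The price is the extra bookkeeping you flag, but your parenthetical is right where it matters: in the regime $\beta<1/(d_x+2)$, where the refinement is needed, \eqref{eq:gammare} gives $\gamma=2\beta$, so $S_2\subseteq\{f\lesssim a_N^2\}$ and the small ball $B(\mathbf{x},c'\sqrt{f(\mathbf{x})/M})$ does sit inside $B(\mathbf{x},a_N)$ for $c'$ small. In short: the paper's route is shorter; yours is tighter and closes the small gap between the paper's proof and the lemma's literal statement.
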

\begin{proof}
	Please see Appendix \ref{sec:largeeps}. 
\end{proof}

From~\eqref{eq:I123}, we know that the bias of KL estimator can be bounded by giving an upper bound to $I_1$, $I_2$ and $I_3$ separately. Recall that $\rho=\min\{\epsilon,a_N\}$.

\subsubsection{Bound of $I_1$}
\begin{eqnarray}
|I_1|&=&\mathbb{E}[(\ln P(B(\mathbf{X},\epsilon))-\ln P(B(\mathbf{X},\rho)))\mathbf{1}(\mathbf{X}\in S_1)]\nonumber \\
&\overset{(a)}{=}&\mathbb{E}\left[\ln \frac{P(B(\mathbf{X},\epsilon))}{P(B(\mathbf{X},\rho))} \mathbf{1}(\mathbf{X}\in S_1,\epsilon>a_N)\right]\nonumber\\
&\overset{(b)}{\leq}& \mathbb{E}[-\ln P(\mathbf{X},\rho)\mathbf{1}(\mathbf{X}\in S_1,\epsilon>a_N)]\nonumber\\
&\overset{(c)}{=}& \mathbb{E}[-\ln P(\mathbf{X},a_N)\mathbf{1}(\mathbf{X}\in S_1,\epsilon>a_N)]\nonumber\\
&\overset{(d)}{\leq}& -\ln[(k+1)N^{-(\gamma+\beta d_x)}]P(\mathbf{X}\in S_1,\epsilon>a_N)\nonumber\\
&\overset{(e)}{=}& \mathcal{O}(N^{-(1-\beta d_x)}\ln N).\nonumber
\end{eqnarray}
Here (a) uses the definition of $\rho$ in \eqref{eq:rho}, which implies that $\rho$, $\epsilon$ are different only when $\epsilon>a_N$. (b) uses $P(B(\mathbf{X},\epsilon))\leq 1$. (c) uses the definition of $\rho$ again, which says that $\rho=a_N$ if $\epsilon>a_N$. (d) uses the lower bound of $P(B(\mathbf{x},a_N))$ derived in \eqref{eq:lowbound1}. (e) uses \eqref{eq:S1eps} in Lemma \ref{lem:largeeps}. 

\subsubsection{Bound of $I_2$}
\begin{eqnarray}
|I_2|&=&\left|\mathbb{E}\left[\ln\left( \frac{P(B(\mathbf{X},\rho))}{f(\mathbf{X})c_{d_x} \rho^{d_x}}\right)\mathbf{1}(\mathbf{X}\in S_1)\right]\right|\nonumber\\
&\overset{(a)}{\leq} &  \mathbb{E}\left[\max\left\{\left|\ln\left(\frac{f(\mathbf{X})c_{d_x}\rho^{d_x}+C_1 \rho^{d_x+2}}{f(\mathbf{X})c_{d_x} \rho^{d_x}}\right)\right|,\right.\right.\nonumber\\
&&\left.\left.\left|\ln\left(\frac{f(\mathbf{X})c_{d_x}\rho^{d_x}-C_1 \rho^{d_x+2}}{f(\mathbf{X})c_{d_x} \rho^{d_x}}\right) \right|\right\} \mathbf{1}(\mathbf{X}\in S_1)\right] \nonumber\\
&=&\mathbb{E}\left[\left|\ln\left(\frac{f(\mathbf{X})c_{d_x}\rho^{d_x}-C_1 \rho^{d_x+2}}{f(\mathbf{X})c_{d_x} \rho^{d_x}}\right)\right| \mathbf{1}(\mathbf{X}\in S_1)\right]\nonumber \\
&\overset{(b)}{=} &\mathbb{E}\left[\frac{1}{\xi(\mathbf{X})} \frac{C_1\rho^2}{f(\mathbf{X})c_{d_x}}\mathbf{1}(\mathbf{X}\in S_1)\right]\nonumber\\
&\overset{(c)}{\leq} & 2\mathbb{E}\left[\frac{C_1\rho^2}{f(\mathbf{X})c_{d_x}} \mathbf{1}(\mathbf{X}\in S_1)\right]\nonumber\\&=& \mathcal{O}\left(N^{-2\beta}\ln N\right).
\label{eq:I21}
\end{eqnarray}
Here, (a) uses Lemma \ref{lem:pdf}. (b) uses Lagrange mean value theorem, and   $1-\frac{C_1\rho^2}{f(\mathbf{X})c_{d_x}}\leq \xi(\mathbf{X})\leq 1$. (c) holds because from the definition of $S_1$ in \eqref{eq:s1} and the choice of $\gamma$ in \eqref{eq:gammare}, we have
\begin{eqnarray}
\frac{C_1\rho^2}{f(\mathbf{x})c_{d_x}}\leq\frac{C_1 a_N^2}{f(\mathbf{x})c_{d_x}}=\frac{C_1A^2 N^{-2\beta}}{f(\mathbf{x})c_{d_x}}\leq \frac{1}{2},
\label{eq:half}
\end{eqnarray}
for $\mathbf{x}\in S_1$. Hence, we have $\xi(\mathbf{X})\geq 1/2$. %Hence (c) holds.

\subsubsection{Bound of $I_3$}
\begin{eqnarray}
&&\hspace{-8mm}I_3=\mathbb{E}\left[\ln\left(\frac{P(B(\mathbf{X},\epsilon))}{f(\mathbf{X})c_{d_x}\rho^{d_x}}\right)\mathbf{1}(\mathbf{X}\in S_2)\right]\nonumber\\
&\hspace{-4mm}=&\hspace{-3mm}\mathbb{E}[\ln (P(B(\mathbf{X},\epsilon))) \mathbf{1}(\mathbf{X}\in S_2)]-\mathbb{E}[\ln (f(\mathbf{X})) \mathbf{1}(\mathbf{X}\in S_2)]\nonumber\\
&&-\mathbb{E}[\ln (c_{d_x}\rho^{d_x}) \mathbf{1}(\mathbf{X}\in S_2)].
\label{eq:I30}
\end{eqnarray}
The first term of \eqref{eq:I30} can be bounded using \eqref{eq:elogp}. 
\begin{eqnarray}
&&\hspace{-7mm}\mathbb{E}[\ln (P(B(\mathbf{X},\epsilon)))\mathbf{1}(\mathbf{X}\in S_2)]\nonumber\\
&=& \mathbb{E}[\ln (P(B(\mathbf{X},\epsilon)))|\mathbf{X}\in S_2]P(\mathbf{X}\in S_2)\nonumber\\
&=& (\psi(k)-\psi(N))P(\mathbf{X}\in S_2)\nonumber \\
&=& -\mathcal{O}(N^{-\gamma}\ln N),
\label{eq:I31}
\end{eqnarray}
in which the second step holds because according to \eqref{eq:elogp}, $\mathbb{E}[\ln P(B(\mathbf{x},\epsilon))|\mathbf{x}]=\psi(k)-\psi(N)$ for any $\mathbf{x}$.

\color{black}
For the second term of \eqref{eq:I30}, we define a random variable $T=f(\mathbf{X})$, with cdf $F_T$, and a constant $T_0=\frac{\lambda C_1}{c_{d_x}} A^2 N^{-\gamma}$. According to \eqref{eq:tailbound}, $F_T(t)=P\left(f(\mathbf{X})\leq t\right)\leq \mu t$, therefore
\begin{eqnarray}
&&\hspace{-7mm}|\mathbb{E}[\ln f(\mathbf{X})\mathbf{1}(\mathbf{X} \in S_2)]|\nonumber\\
&=& |\mathbb{E}[\ln T \mathbf{1}(T<T_0)]|=\left|\int_0^{T_0} f_T(t) \ln t dr\right|\nonumber\\
&=&\left|\ln r F_T(t)\lvert_0^{T_0}-\int_0^{T_0} F_T(t)\frac{1}{t}dt \right|\nonumber\\
&\leq & \mu T_0(|\ln T_0|+1)=\mathcal{O}(N^{-\gamma}\ln N).
\label{eq:I32}
\end{eqnarray}

For the third term of \eqref{eq:I30}, recall that $\rho=a_N$ if $\epsilon>a_N$, then
\begin{eqnarray}
&&\hspace{-7mm} \mathbb{E}[\ln (c_{d_x}\rho^{d_x})\mathbf{1}(\mathbf{X}\in S_2,\epsilon> a_N)]\nonumber\\
&=&\ln(c_{d_x} a_N^{d_x}) P(\mathbf{X}\in S_2,\epsilon>a_N)\nonumber\\&=&-\mathcal{O}(N^{-\min\left\{1-\beta d_x,\frac{2}{d_x+2}\right\}}\ln N).
\label{eq:I33}
\end{eqnarray}
On the other hand, if $\epsilon\leq a_N$, then for $\mathbf{x}\in S_2$,
\begin{eqnarray}
P(B(\mathbf{x},\rho))&\leq& f(\mathbf{x})c_{d_x}\rho^{d_x}+C_1\rho^{d_x+2}\nonumber\\
&\leq & \lambda C_1 A^2 N^{-\gamma} \rho^{d_x} +C_1 \rho^{d_x+2}\nonumber\\
&\leq& (\lambda C_1 A^2 N^{-\gamma}+C_1a_N^2)\rho^{d_x}\nonumber\\
&\leq&  (\lambda+1)C_1A^2N^{-\gamma} \rho^{d_x}.\nonumber
\end{eqnarray}
Therefore
\begin{eqnarray}
&&\hspace{-7mm}\mathbb{E}[\ln (\rho^{d_x})\mathbf{1}(\mathbf{X}\in S_2,\epsilon\leq a_N)]\nonumber\\
&\geq&\mathbb{E}[\ln P(B(\mathbf{X},\rho))\mathbf{1}(\mathbf{X}\in S_2,\epsilon\leq a_N)] \nonumber\\
&&-\mathbb{E}[\ln((\lambda+1)C_1A^2N^{-\gamma})\mathbf{1}(\mathbf{X}\in S_2)]\nonumber\\
&= &\mathbb{E}[\ln P(B(\mathbf{X},\epsilon))\mathbf{1}(\mathbf{X}\in S_2,\epsilon\leq a_N)]\nonumber\\
&&-\ln((\lambda+1)C_1A^2N^{-\gamma})P(\mathbf{X}\in S_2)\nonumber\\
&\geq & \mathbb{E}[\ln P(B(\mathbf{X},\epsilon))\mathbf{1}(\mathbf{X}\in S_2)] \nonumber\\
&&-\ln((\lambda+1)C_1A^2N^{-\gamma})P(\mathbf{X}\in S_2)\nonumber\\
&=&-\mathcal{O}(N^{-\gamma}\ln N)-\mathcal{O}(N^{-\gamma}\ln N).
\label{eq:I34}
\end{eqnarray}
Combine \eqref{eq:I33} and \eqref{eq:I34}, and note that for sufficiently large $N$, $\ln(c_{d_x}\rho^{d_x})\mathbf{1}(\mathbf{x}\in S_2)\leq \ln (c_{d_x} a_N^d)\leq 0$ because $a_N=AN^{-\beta}\leq 1$, we have
\begin{eqnarray}
0\leq -\mathbb{E}[\ln(c_{d_x} \rho^{d_x})\mathbf{1}(\mathbf{X}\in S_2)]= \mathcal{O}(N^{-\gamma}\ln N).
\label{eq:I35}
\end{eqnarray}
Plug \eqref{eq:I35}, \eqref{eq:I31} and \eqref{eq:I32} into \eqref{eq:I30}, we have
\begin{eqnarray}
|I_3|=\mathcal{O}(N^{-\gamma}\ln N).
\end{eqnarray}
The bound of bias of KL entropy estimator can be obtained by combining $I_1$, $I_2$, and $I_3$. Recall that $\gamma$ is defined as $\gamma= \min\{2\beta,1-\beta d_x\}$. We can then adjust $\beta$ to optimize the convergence rate:
\begin{eqnarray}
&&\hspace{-7mm}|\mathbb{E}[\hat{h}(\mathbf{X})-h(\mathbf{X})]|\nonumber\\
&\leq& |I_1|+|I_2|+|I_3|\\
&=&\mathcal{O}\left(N^{-(1-\beta d_x)}\ln N\right)+\mathcal{O}(N^{-2\beta}\ln N)\nonumber\\
&&+\mathcal{O}\left(N^{-\min\{2\beta,1-\beta d_x \} }\ln N\right).
\end{eqnarray}
Select $\beta=1/(d_x+2)$, then the overall convergence rate of KL estimator is:
\begin{eqnarray}
|\mathbb{E}[\hat{h}(\mathbf{X})-h(\mathbf{X})]|\leq \mathcal{O}\left(N^{-\frac{2}{d_x+2}}\ln N\right).
\end{eqnarray}
\subsection{Proof of Lemma \ref{lem:V}}\label{sec:V}
In this section, we prove Lemma \ref{lem:V} under tail assumption (a) in Theorem \ref{thm:KLbias}. Define a random variable $T=f(\mathbf{X})$, with cdf $F_T$. From Lemma \ref{lem:tail}, $F_T(t)\leq \mu t$ for all $t>0$. Define another random variable $U=F_T(T)$. Recall the definition of function $V$. For any $\delta>0$,
\begin{eqnarray}
&&\hspace{-7mm}F_T(t+\delta)-F_T(t)\nonumber\\
&=&P\left(t<f(\mathbf{X})\leq t+\delta\right)\nonumber\\
&=&\int_{t<f(\mathbf{X})\leq t+\delta} f(\mathbf{x})d\mathbf{x}
\in [t(V(t)-V(t+\delta)),\nonumber\\
&&(t+\delta)(V(t)-V(t+\delta))].
\end{eqnarray} 
The above equation can be converted to differential form by letting $\delta\rightarrow 0$:
\begin{eqnarray}
-tdV(t)=dF_T(t).
\end{eqnarray}
Moreover, $V(\infty)=0$. Therefore
\begin{eqnarray}
V(t)=\int_t^\infty \frac{1}{\xi} dF_T(\xi)=\int_{F_T(t)}^1 \frac{1}{q_T(u)} du,
\label{eq:Vt}
\end{eqnarray}
in which $q_T$ is the quantile function of $T$, so that $q_T(F_t(t))=t$. $F_T(t)\leq \mu t$ implies $q_T(u)\geq u/\mu $. Therefore
\begin{eqnarray}
\int_{F_T(t)}^{\mu t}\frac{1}{q_T(u)}du&\leq& \int_{F_T(t)}^{\mu t}\frac{1}{q_T(F_T(t))}du\nonumber\\
&=&\frac{1}{t}(\mu t-F_T(t))\nonumber\\
&\leq& \mu,
\label{eq:vt1}
\end{eqnarray} 
and
\begin{eqnarray}
\int_{\mu t}^1 \frac{1}{q_T(u)}du\leq \int_{\mu t}^1 \frac{\mu}{u}du=\mu \ln \frac{1}{\mu t}.
\label{eq:vt2}
\end{eqnarray}
Combine \eqref{eq:vt1} and \eqref{eq:vt2}, the proof is complete.
\subsection{Proof of Lemma \ref{lem:largeeps}}\label{sec:largeeps}
The proof is based on Lemma \ref{lem:tail}, as well as Assumption (a) in Theorem \ref{thm:KLbias}.

\noindent\textbf{Proof of~\eqref{eq:S1eps}}. Recall that $\gamma=\min\{2\beta,1-\beta d_x \}$. For $\mathbf{x}\in S_1$,
\begin{eqnarray}
P(B(\mathbf{x},a_N))&\geq& f(\mathbf{x})c_{d_x}a_N^{d_x}-C_1 a_N^{d_x+2}\nonumber\\&\overset{(a)}{\geq}&  \frac{1}{2}f(\mathbf{x})c_{d_x} a_N^{d_x}.\label{eq:lowbound1}
\end{eqnarray}
Moreover,
\begin{eqnarray}
\frac{1}{2}f(\mathbf{x})c_{d_x}a_N^{d_x}&\overset{(b)}{\geq} & \frac{\lambda C_1}{2c_{d_x}}A^2 N^{-\gamma}c_{d_x}a_N^{d_x}\nonumber\\
&\overset{(c)}{\geq}&  (k+1)N^{-(\gamma+\beta d_x)} \geq \frac{k+1}{N}.
\label{eq:lowbound}
\end{eqnarray}
In equations above, (a) comes from \eqref{eq:half}, (b) comes from the definition of $S_1$ in \eqref{eq:s1}, (c) comes from \eqref{eq:lambda}.

Given the condition that one of $N$ samples (sample $i$) falls at $\mathbf{x}$, the number of points that falls in the ball $B(\mathbf{x},a_N)$ from the other $(N-1)$ sample points follows binomial distribution $Binomial(N-1,P(B(\mathbf{x},a_N)))$. Denote
\begin{eqnarray}
n(\mathbf{x},a_N)=\sum_{j\neq i} \mathbf{1}(\mathbf{x}(j)\in B(\mathbf{x},a_N))
\end{eqnarray}  
as the number of points that fall in the ball $B(\mathbf{x},a_N)$ except point $\mathbf{x}$ itself. Based on Chernoff inequality, for all $\mathbf{x}\in S_1$, denote $N'=N-1$, then according to \eqref{eq:lowbound}, if $N>k$, then $N'P(B(\mathbf{x},a_N))> k$. Hence
\begin{eqnarray}
&&\hspace{-8mm} P(\epsilon>a_N |\mathbf{x})\nonumber\\
 &\hspace{-4mm}\leq & \hspace{-3mm}P(n(\mathbf{x},a_N)<k))\nonumber \\
&\hspace{-4mm}\leq & \hspace{-3mm}e^{-N'P(B(\mathbf{x},a_N))} \left(\frac{eN'P(B(\mathbf{x},a_N))}{k}\right)^k\nonumber\\
 &\hspace{-4mm}=&\hspace{-3mm} \exp\left[-\frac{1}{2}N'f(\mathbf{x})c_{d_x}a_N^{d_x}\right] \left(\frac{eN'}{2k}f(\mathbf{x})c_{d_x}a_N^{d_x}\right)^k,\nonumber
\end{eqnarray}
in which the last step comes from \eqref{eq:lowbound1}, and the fact that $e^{-t} (et/k)^k$ is a decreasing function over $t$ if $t>k$. Therefore
\begin{eqnarray}
&& \hspace{-1cm} P(\epsilon>a_N,\mathbf{X}\in S_1)\nonumber\\
 &\leq & \int_{S_1} \exp\left[-\frac{1}{2}N'f(\mathbf{x})c_{d_x}a_N^{d_x}\right]\nonumber\\
 &&\left(\frac{eN'}{2k}f(\mathbf{x})c_{d_x}a_N^{d_x}\right)^k f(\mathbf{x})d\mathbf{x}\nonumber \\
&= & \int_{S_1} \exp\left[-\frac{1}{2}f(\mathbf{x})c_{d_x}A^{d_x}N'N^{-\beta d_x}\right]\nonumber\\
&&\left[\frac{eN'}{k} \frac{1}{2} f(\mathbf{x})c_{d_x}A^d N^{-\beta d_x}\right]^k f(\mathbf{x})d\mathbf{x}\nonumber\\
&\overset{(a)}{\leq} & \left(\frac{e}{k}\right)^k \frac{2K_{k+1}}{c_{d_x}A^{d_x} N'N^{-\beta d_x}}\leq C_2N^{-(1-\beta d_x)},
\label{eq:eps1}
\end{eqnarray}
in which (a) uses \eqref{eq:mbound} in Lemma \ref{lem:tail}, with $m=k+1$ and $b=\frac{1}{2} c_{d_x} A^d N' N^{-\beta d_x}$.

\noindent\textbf{Proof of~\eqref{eq:largeeps}:} %Next we prove \eqref{eq:largeeps} based on Lemma \ref{lem:tail}.
\begin{eqnarray}
P(\epsilon>a_N,\mathbf{X}\in S_2)&\leq& P(\mathbf{X}\in S_2)\nonumber\\
&=&P\left(f(\mathbf{X})<\frac{\lambda C_1}{c_{d_x}} A^2 N^{-\gamma}\right)\nonumber\\
&\leq& \frac{\lambda \mu C_1}{c_{d_x}} A^2 N^{-\gamma},
\label{eq:eps2}
\end{eqnarray}
in which we use~\eqref{eq:tailbound} in Lemma \ref{lem:tail} for the last step.

Based on \eqref{eq:eps1} and \eqref{eq:eps2}, as well as the definition of $\gamma$ in \eqref{eq:gammare}, we have
\begin{eqnarray}
P(\epsilon>a_N)&\leq &  C_3 N^{-\min\{1-\beta d_x,2\beta\}},\nonumber
\end{eqnarray}
for some constant $C_3$.

\color{black}
\section{Proof of Proposition \ref{prop:truncation}}\label{sec:truncation} 
In this section, we prove that there exist distributions that satisfy Assumptions (a), (b) in Theorem \ref{thm:KLbias}, such that the original KL estimator without truncation is not consistent. We will construct two distributions whose entropy are the same, but the difference of the expectation of the estimated result using original KL estimator does not converge to zero. For simplicity, we first discuss the case of $k=1$ and $d=1$.

To begin with, we pick an arbitrary function $g$ that satisfies the following conditions:

(1) $g(x)$ is supported on $[-1/2,1/2]$, i.e. $g(x)=0$ for $x \notin [-1/2,1/2]$;

(2) $|g''(x)|\leq M$, $\forall x\in \mathbb{R}$, in which $M$ is the constant in Assumption (a) of Theorem \ref{thm:KLbias};

(3)
\begin{eqnarray}
\int_{-\frac{1}{2}}^{\frac{1}{2}} g(x)dx=\frac{90}{\pi^4};
\label{eq:ginteg}
\end{eqnarray}

(4) \textcolor{black}{$g(x)\geq 0$ everywhere.
}

Let $X_1$ be a random variable with pdf 
\begin{eqnarray}
f_1(x)=\sum_{j=1}^\infty \frac{1}{\lambda_j^2} g(\lambda_j(x-a_j)),
\label{eq:f1def}
\end{eqnarray}
in which $j\in \mathbb{N}_+$,
\begin{eqnarray}
a_n=\sum_{j=1}^{n-1} \frac{2}{\lambda_j} +\frac{1}{\lambda_n},
\label{eq:andef}
\end{eqnarray}
and
\begin{eqnarray}
\lambda_j=j^{\frac{4}{3}}.
\label{eq:lambdaj}
\end{eqnarray}

 The choice of $a_n$ here guarantees that regions \color{black} $S_j:=(a_j-1/(2\lambda_j),a_j+1/(2\lambda_j))$ \color{black} for $j=1,\ldots,n$ are mutually disjoint. Using \eqref{eq:ginteg} and \eqref{eq:lambdaj}, it is easy to check that $f_1$ is a valid pdf. We now verify that it satisfies assumptions (a) and (b) in Theorem \ref{thm:KLbias}.
 
% According to \eqref{eq:ginteg} and \eqref{eq:lambdaj},
% \begin{eqnarray}
% \int f_1(x)dx&=&\sum_{j=1}^\infty \frac{1}{\lambda_j^2}\int g(\lambda_j (x-a_j))dx\\
% &=& \frac{90}{\pi^4}\sum_{j=1}^\infty \frac{1}{j^4}\\
% &=& 1.
% \end{eqnarray}

For (a), we need to show that $ f_1^{''}(x)\leq M$. With the selection rule of $a_n$ specified in \eqref{eq:andef}, $g(\lambda_j(x-a_j))$ can be non-zero only for one $j$. As a result, for any $x$, there exist $j\in \mathbb{N}_+$ such that
\begin{eqnarray}
|f_1''(x)|&=&\left|\frac{1}{\lambda_j^2} \frac{d^2}{dx^2} g(\lambda_j(x-a_j))\right|\nonumber\\
&=&|g''(\lambda_j(x-a_j))|\leq M.\nonumber
\end{eqnarray}

Therefore Assumption (a) in Theorem \ref{thm:KLbias} holds.

For (b), we need to show that there is a constant $C$ such that
\begin{eqnarray}
\int f_1(x) e^{-bf_1(x)}dx\leq C/b.\nonumber
\end{eqnarray}
Note that 
%\begin{eqnarray}
$g(x)e^{-bg(x)}\leq \frac{1}{eb},$
%\end{eqnarray}
with equality when $g(x)=1/b$. Recall that $g$ is supported at $[-1/2,1/2]$, thus
\begin{eqnarray}
\int_{-\infty}^\infty g(x)e^{-bg(x)}dx\leq \frac{1}{eb}.\nonumber
\end{eqnarray}

From \eqref{eq:f1def}, for any $x\in \mathbb{R}$, $g(\lambda_j(x-a_j))$ is nonzero only for one $j$. With this observation, we have
\begin{eqnarray}
&&\hspace{-7mm}\int f_1(x) e^{-bf_1(x)}dx\nonumber\\
&=&\hspace{-3mm}\sum_{j=1}^\infty \int \frac{1}{\lambda_j^2} g(\lambda_j(x-a_j)) \exp\left[-b\frac{1}{\lambda_j^2} g(\lambda_j(x-a_j))\right] dx\nonumber\\
&=&\hspace{-3mm}\sum_{j=1}^\infty \frac{1}{\lambda_j^3} \int g(t) \exp\left[-\frac{b}{\lambda_j^2} g(t)\right] dt\nonumber\\
&\leq &\hspace{-3mm}\sum_{j=1}^\infty \frac{1}{\lambda_j^3} \frac{\lambda_j^2}{eb}=\frac{1}{eb} \sum_{j=1}^\infty j^{-\frac{4}{3}}.\nonumber
\end{eqnarray}
Since $\sum_{j=1}^\infty j^{-\frac{4}{3}}<\infty$, there exists a constant $C$, such that
\begin{eqnarray}
\int f_1(x) e^{-bf_1(x)}dx\leq Cb^{-1},\nonumber
\end{eqnarray}
Hence Assumption (b) holds.

We then define another random variable $X_2$:
\begin{eqnarray}
X_2=X_1+\delta_j, \text{ if } X_1\in S_j, j\in \mathbb{N}_+\nonumber
\end{eqnarray}
in which $\delta_j=2^{j^4}$. Then $h(X_2)=h(X_1)$, \textcolor{black}{since the probability mass for $X_2$ is just being moved around, but otherwise the distributions are the same.}

Now we compare $\hat{h}_0(X_2)$ and $\hat{h}_0(X_1)$. Here we assume that $X_{11}, \ldots, X_{1N}$ are $N$ samples generated from $f_1(x)$, and $X_{21},\ldots, X_{2N}$ are generated by $X_2=X_1+\sum_{j=1}^\infty \delta_j \mathbf{1}(X_{1i}\in S_j)$. Recall the expression of original KL estimator in \eqref{eq:KLoriginal}, we have
\begin{eqnarray}
\hat{h}_0(X_2)-\hat{h}_0(X_1)=\frac{1}{N}\sum_{i=1}^N \left(\ln \epsilon_2(i)-\ln \epsilon_1(i)\right),\nonumber
\end{eqnarray}
in which $\epsilon_1(i)$ and $\epsilon_2(i)$ are the 1-NN distances of $X_{1i}$ among $\{X_{11},\ldots,X_{1N}\}\setminus \{X_{1i}\}$, and that of $X_{2i}$ among $\{X_{21},\ldots,X_{2N} \}\setminus \{X_{2i}\}$, respectively.

Note that $\epsilon_2(i)\geq \epsilon_1(i)$ always holds. As a result, $\hat{h}_0(X_2)\geq \hat{h}_0(X_1)$. In particular, if $X_{1i}$ is the unique point in $S_j$, then \textcolor{black}{$\epsilon_2(i)-\epsilon_1(i)\geq \delta_j-\delta_{j-1}\geq \delta_j/2$.}

Then for any positive integer $m$,
\color{black}
\begin{eqnarray}
&&\hspace{-7mm}\hat{h}_0(X_2)-\hat{h}_0(X_1)\nonumber\\
&\overset{(a)}{\geq}& \frac{1}{N} \sum_{i=1}^N \left[\ln \frac{\epsilon_2(i)}{\epsilon_1(i)} \mathbf{1}(X_{1i}\in S_m,n_m=1)\right]\nonumber\\
&\geq &\frac{1}{N} \sum_{i=1}^N \left[\ln \left(1+\frac{\delta_m}{2\epsilon_1(i)}\right)\mathbf{1}(X_{1i}\in S_m, n_m=1)\right]\nonumber\\
&\overset{(b)}{\geq}& \frac{1}{N} \sum_{i=1}^N \left[\ln \left(1+\frac{\delta_m}{2L}\right)\mathbf{1}(X_{1i}\in S_m, n_m=1)\right]\nonumber\\
&=& \frac{1}{N} \ln \left(1+\frac{\delta_m}{2L}\right)\mathbf{1}(n_m=1).
\end{eqnarray}
\color{black}
In (a), $n_m=\sum_{k=1}^N \mathbf{1}(X_{1k}\in S_m)$ is the number of samples in $S_m$. In (b), we define $L=\underset{n\rightarrow\infty}{\lim} a_n$, which is finite according to the definition of $a_n$ in \eqref{eq:andef}, thus $\epsilon_1(i)\leq L$. Then
\begin{eqnarray}
&&\hspace{-6mm}\mathbb{E}[\hat{h}_0(X_2)]-\mathbb{E}[\hat{h}_0(X_1)]\nonumber\\&&\geq \frac{1}{N}\ln\left(1+\frac{\delta_m}{2L}\right)P(n_m=1).
\label{eq:diff}
\end{eqnarray}
Define $p_m$ as the probability mass of set $S_m$, then
\begin{eqnarray}
p_m&=&\int_{a_m-\lambda_m}^{a_m+\lambda_m} f_1(x)dx\nonumber\\
&=&\int _{a_m-\lambda_m}^{a_m+\lambda_m} \frac{1}{\lambda_m^2} g(\lambda_m(x-a_m))dx\nonumber\\
&=&\int \frac{1}{\lambda_m^3} g(t) dt=\frac{90}{\pi^4 m^4}.\nonumber
\end{eqnarray}
Let 
\begin{eqnarray}
m=\left[\left(\frac{90N}{\pi^4}\right)^\frac{1}{4}\right],\nonumber
\end{eqnarray} 
then \textcolor{black}{$Np_m\rightarrow 1$} as $N\rightarrow \infty$, thus
\begin{eqnarray}
&&\hspace{-1cm}\underset{N\rightarrow\infty}{\lim} P(n_m=1)\nonumber\\
&=&\underset{N\rightarrow\infty}{\lim}Np_m (1-p_m)^{N-1}\nonumber\\
&=& \underset{N\rightarrow\infty}{\lim}Np_m\underset{N\rightarrow\infty}{\lim} (1-p_m)^{N-1}=e^{-1}.\nonumber
\end{eqnarray}
 Since we have assumed that $\delta_m=2^{m^4}$, from \eqref{eq:diff}, we know that
\begin{eqnarray}
\underset{N\rightarrow\infty}{\lim} \mathbb{E}[\hat{h}_0(X_2)]-\mathbb{E}[\hat{h}_0(X_1)]\neq 0.\nonumber
\end{eqnarray}
However, the real entropy are equal, i.e. $h(X_2)=h(X_1)$. Therefore for at least one pdf out of $f_1$ and $f_2$, the original KL estimator is not consistent.

The above result can be generalized to any fixed $k$. For any fixed $k$, $\epsilon_2(i)\geq \epsilon_1(i)$ always holds, and $\epsilon_2(i)-\epsilon_1(i)\geq \delta_j$ if there are less than or equal to $k$ points in $S_j$. We can then follow similar steps above to obtain the same result. 

\color{black}

\section{Proof of Theorem \ref{thm:KLvar}: the variance of KL entropy estimator}\label{app:KLvar}
In this section, we prove Theorem \ref{thm:KLvar} under Assumptions (c) and (d). Recall that in \eqref{eq:KL}, $\rho(i)=\min\{a_N,\epsilon(i)\}, i=1,\ldots,N$, in which $\epsilon(i)$ is the distance between $\mathbf{x}(i)$ and its $k$-th nearest neighbor. In order to obtain a bound of the variance of KL entropy estimator, we let $\mathbf{x}'(1)$ be a sample that is independent of $\mathbf{x}(1),\ldots, \mathbf{x}(N)$ and is generated using the same underlying pdf. Denote $\rho'(i)=\min\{a_N,\epsilon'(i)\}, i=1,\ldots,N$, in which $\epsilon'(i)$ is the $k$-th nearest neighbor distances based on $\mathbf{x}'(1),\mathbf{x}(2),\ldots,\mathbf{x}(N)$, i.e. the first sample is replaced by another i.i.d sample, while other samples remain the same. Furthermore, denote $\rho''(i)=\min\{a_N,\epsilon''(i)\}, i=2,\ldots,N$, in which $\epsilon''(i)$ is the nearest neighbor distances based on $\mathbf{x}(2),\ldots,\mathbf{x}(N)$. Then denote
\begin{eqnarray}
\hat{h}'(\mathbf{X})=-\psi(k)+\psi(N)+\ln c_{d_x} +\frac{d_x}{N}\sum_{i=1}^N \ln \rho'(i),\nonumber
\end{eqnarray}
which is the KL estimator based on $\mathbf{x}'(1),\mathbf{x}(2),\ldots,\mathbf{x}(N)$. Then according to Efron-Stein inequality,
\begin{eqnarray}
&&\hspace{-7mm}\Var[\hat{h}(\mathbf{X})]\nonumber\\&\leq& \frac{N}{2}\mathbb{E}[(\hat{h}-\hat{h}')^2]\nonumber \\
&=&\frac{N}{2}\mathbb{E}\left[\left(\frac{d_x}{N}\sum_{i=1}^N\ln \rho(i)-\frac{d_x}{N}\sum_{i=1}^N \ln \rho'(i)\right)^2\right].\nonumber 
\end{eqnarray}
Denote
\begin{eqnarray}
U(i)=\ln \left(N(\rho(i))^{d_x} c_{d_x}\right), i=1,\ldots,N;\label{eq:udef}\nonumber \\
U'(i)=\ln \left(N(\rho'(i))^{d_x} c_{d_x}\right), i=1,\ldots,N;\nonumber \\
U''(i)=\ln \left(N(\rho''(i))^{d_x} c_{d_x}\right), i=2,\ldots,N,\nonumber
\end{eqnarray} 
then
\begin{eqnarray}
&&\hspace{-1cm} \Var[\hat{h}(\mathbf{X})]\nonumber\\
&\leq & \frac{N}{2}\mathbb{E}\left[\frac{1}{N^2}\left(\sum_{i=1}^N U(i)-\sum_{i=2}^N U''(i) +\sum_{i=2}^N U''(i)\right.\right.\nonumber\\
&&\left.\left.-\sum_{i=1}^N U'(i)\right)^2\right]\nonumber\\
&\overset{(a)}{\leq} &\frac{1}{N}\mathbb{E}\left[\left(\sum_{i=1}^N U(i)-\sum_{i=2}^N U''(i)\right)^2\right] \nonumber\\
&&+\frac{1}{N}\mathbb{E}\left[\left(\sum_{i=1}^N U'(i)-\sum_{i=2}^N U''(i)\right)^2\right]\nonumber\\
&\overset{(b)}{\leq} &\frac{2}{N}\mathbb{E}\left[\left(\sum_{i=1}^N U(i)-\sum_{i=2}^N U''(i)\right)^2\right],\nonumber
\end{eqnarray}
in which (a) is based on Cauchy inequality, (b) uses the fact that $\mathbf{x}(1)$ and $\mathbf{x}'(1)$ are i.i.d. Note that $\rho(i)$ and $\rho''(i)$ are equal if $\mathbf{x}(1)$ is out of the $k$-th nearest neighbor of $\mathbf{x}(i)$. Denote
\begin{eqnarray}
S=\{i\in\{2,\ldots,N\}|\rho(i)\neq\rho''(i)\},\nonumber
\end{eqnarray}
then we use the following lemma:
\begin{lem}\label{lem:neighbor}
	(Lemma 20.6 in \cite{biau2015lectures} and Lemma 11 in \cite{gao2018demystifying}) If $\norm{\mathbf{x}(i)-\mathbf{x}(1)}$ are different for $i=2,\ldots,N$, then
	\begin{eqnarray}
	|S|\leq k\gamma_{d_x},\nonumber
	\end{eqnarray}
	in which $\gamma_{d_x}$ is the minimum number of cones of angle $\pi/6$ that cover $\mathbb{R}^{d_x}$.
\end{lem}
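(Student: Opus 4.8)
The plan is to reduce the statement to the classical combinatorial fact that a fixed point is a $k$-nearest neighbor of at most $k\gamma_{d_x}$ of the remaining points, and then to prove that fact by a cone-packing argument in the style of~\cite{biau2015lectures,gao2018demystifying}. First I would dispose of the truncation and of the index $i\geq2$: since $\rho(i)=\min\{a_N,\epsilon(i)\}$ is a fixed monotone function of $\epsilon(i)$, $\rho(i)\neq\rho''(i)$ forces $\epsilon(i)\neq\epsilon''(i)$; and the point configuration defining $\epsilon''(i)$ is obtained from the one defining $\epsilon(i)$ merely by deleting $\mathbf{x}(1)$. Deleting one point from a finite set cannot change the $k$-th smallest distance to $\mathbf{x}(i)$ unless that point lies in the closed $k$-NN ball of $\mathbf{x}(i)$; hence $\epsilon(i)\neq\epsilon''(i)$ forces $\norm{\mathbf{x}(1)-\mathbf{x}(i)}\leq\epsilon(i)$, so that
\begin{eqnarray}
S\subseteq\left\{i\in\{2,\ldots,N\}\mid\norm{\mathbf{x}(1)-\mathbf{x}(i)}\leq\epsilon(i)\right\},\nonumber
\end{eqnarray}
and it suffices to bound the cardinality of the right-hand set by $k\gamma_{d_x}$.

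For that, translate a minimal family of $\gamma_{d_x}$ cones of angle $\pi/6$ so that their common apex is $\mathbf{x}(1)$; by the definition of $\gamma_{d_x}$ they still cover $\mathbb{R}^{d_x}$. Fix one such cone $\mathcal{C}$ and list the sample points $\mathbf{x}(i)$, $i\in\{2,\ldots,N\}$, lying in $\mathcal{C}$ as $\mathbf{x}(i_1),\mathbf{x}(i_2),\ldots$ in order of increasing distance to $\mathbf{x}(1)$; the ordering is strict because the distances $\norm{\mathbf{x}(i)-\mathbf{x}(1)}$ are assumed pairwise distinct. If $\ell<m$, the angle $\theta$ subtended at $\mathbf{x}(1)$ by $\mathbf{x}(i_\ell)$ and $\mathbf{x}(i_m)$ satisfies $\cos\theta\geq\tfrac12$, so the law of cosines gives
\begin{eqnarray}
\norm{\mathbf{x}(i_\ell)-\mathbf{x}(i_m)}^2 &=& \norm{\mathbf{x}(i_\ell)-\mathbf{x}(1)}^2+\norm{\mathbf{x}(i_m)-\mathbf{x}(1)}^2\nonumber\\
&& -\,2\,\norm{\mathbf{x}(i_\ell)-\mathbf{x}(1)}\,\norm{\mathbf{x}(i_m)-\mathbf{x}(1)}\cos\theta\nonumber\\
&<& \norm{\mathbf{x}(i_m)-\mathbf{x}(1)}^2,\nonumber
\end{eqnarray}
i.e. $\mathbf{x}(i_\ell)$ is strictly closer to $\mathbf{x}(i_m)$ than $\mathbf{x}(1)$ is. Hence, if $m\geq k+1$, the $k$ distinct points $\mathbf{x}(i_1),\ldots,\mathbf{x}(i_k)$ (all different from $\mathbf{x}(i_m)$ and from $\mathbf{x}(1)$) are strictly closer to $\mathbf{x}(i_m)$ than $\mathbf{x}(1)$, so $\norm{\mathbf{x}(1)-\mathbf{x}(i_m)}>\epsilon(i_m)$ and $i_m$ does not belong to the set above. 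Thus each cone contributes at most $k$ indices, and by subadditivity over the $\gamma_{d_x}$ cones the set above has at most $k\gamma_{d_x}$ elements, giving $|S|\leq k\gamma_{d_x}$.

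The only genuinely delicate point is the strict inequality in the law-of-cosines step when $\mathbf{x}(i_\ell)$ and $\mathbf{x}(i_m)$ lie on the two boundary rays of $\mathcal{C}$, where $\theta=\pi/3$ and $\cos\theta=\tfrac12$ exactly; there $\norm{\mathbf{x}(i_\ell)-\mathbf{x}(1)}^2-2\norm{\mathbf{x}(i_\ell)-\mathbf{x}(1)}\norm{\mathbf{x}(i_m)-\mathbf{x}(1)}\cos\theta<0$ holds only because $\norm{\mathbf{x}(i_\ell)-\mathbf{x}(1)}<\norm{\mathbf{x}(i_m)-\mathbf{x}(1)}$ strictly, which is precisely the pairwise-distinctness hypothesis. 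Everything else is routine bookkeeping (existence of the $\gamma_{d_x}$ cones is a compactness statement about the unit sphere, subsumed in the definition of $\gamma_{d_x}$). Since the claim is exactly Lemma~20.6 of~\cite{biau2015lectures} (equivalently Lemma~11 of~\cite{gao2018demystifying}), I would simply cite it in the paper; the argument above is what I would reconstruct if a self-contained proof were wanted.
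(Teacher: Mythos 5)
Your proposal is correct: the paper itself does not prove this lemma but simply cites Lemma 20.6 of \cite{biau2015lectures} and Lemma 11 of \cite{gao2018demystifying}, and your reduction from $\rho$ to $\epsilon$, the observation that $S$ is contained in the set of indices with $\norm{\mathbf{x}(1)-\mathbf{x}(i)}\leq\epsilon(i)$, and the cone-packing argument via the law of cosines is exactly the standard (Stone-type) argument underlying those cited lemmas. Your attention to the boundary case $\theta=\pi/3$, where strictness hinges on the pairwise-distinctness hypothesis, is precisely the role that hypothesis plays in the statement.
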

For continuous distribution, $\norm{\mathbf{x}(i)-\mathbf{x}(1)}$ are different for different $i$, with probability $1$. As a result, we can claim that $|S|\leq k\gamma_{d_x}$ with probability $1$. 
\begin{eqnarray}
&&\hspace{-1cm}\Var[\hat{h}(\mathbf{X})]\nonumber\\
&\leq &\frac{2}{N}\mathbb{E}\left[U(1)+\sum_{i\in S}(U(i)-U''(i))\right]^2\nonumber \\
&\leq &\frac{2}{N}(2|S|+1)\mathbb{E}\left[U^2(1)+\sum_{i\in S} U^2(i)+\sum_{i\in S}(U''(i))^2\right],\nonumber\\
\label{eq:varh}
\end{eqnarray}
in which the last inequality is based on Cauchy inequality. Now we bound the right hand side of \eqref{eq:varh}.
\begin{eqnarray}
\mathbb{E}\left[\sum_{i\in S}U^2(i)\right]&=&\mathbb{E}\left[\sum_{i=2}^N U^2(i)\mathbf{1}(i\in S)\right]\nonumber \\
&\overset{(a)}{=}&\sum_{i=2}^N \mathbb{E}[U^2(i)]P(i\in S)\nonumber \\
&\overset{(b)}{=}&(N-1)\mathbb{E}[U^2(1)]P(i\in S)\nonumber \\
&\overset{(c)}{\leq}&k\mathbb{E}[U^2(1)].\nonumber
\end{eqnarray}
In (a), we need to show that $\mathbf{1}(i\in S)$ is independent with $U(i)$. Since $U(i)$ is totally determined by $\rho(i)$, it suffices to show that $P(i\in S|\rho(i))=P(i\in S)$ for $i=2,\ldots,N$. For simplicity, we only show that $P(N\in S|\rho(N))=P(N\in S)$. For other points ($i=2,\ldots, N-1$), the proof is similar. We denote $\mathbf{x}^{(j)}(N)$ as the $j$-th nearest neighbor of $\mathbf{x}(N)$. Since $\mathbf{x}(1),\ldots,\mathbf{x}(N)$ are i.i.d, $\mathbf{x}^{(1)}(N),\ldots,\mathbf{x}^{(N-1)}(N)$ are actually a random permutation of $\mathbf{x}(1),\ldots, \mathbf{x}(N-1)$. Denote $\sigma:\{1,\ldots,N-1\}\rightarrow\{1,\ldots,N-1\}$ as the random permutation rule, such that $\mathbf{x}(i)=\mathbf{x}^{(\sigma(i))}(N)$. Also note that $$\rho(N)=\min\left\{\norm{\mathbf{x}^{(k)}(N)-\mathbf{x}(N)},a_N\right\},$$ hence 
\begin{eqnarray}
&&\hspace{-7mm}P(N\in S|\rho,\mathbf{x}(N))\nonumber\\
&=&P(\rho(N)\neq \rho''(N)|\mathbf{x}(N),\mathbf{x}^{(k)}(N))\nonumber\\
&=&\mathbb{E}\left[P(\rho(N)\neq \rho''(N)|\right.\nonumber\\
&&\left.\mathbf{x}(N),\mathbf{x}^{(1)}(N),\ldots,\mathbf{x}^{(N-1)}(N))|\mathbf{x}(N),\mathbf{x}^{(k)}(N)\right]\nonumber\\
&=&\mathbb{E}[P(\sigma(1)\in\{1,\ldots,k\})|\mathbf{x}(N),\mathbf{x}^{(k)}(N)]\nonumber\\
&=&\frac{k}{N-1}.
\label{eq:probpermute}
\end{eqnarray}
Find expectation over $\mathbf{X}(N)$, we then get $P(N\in S|\rho)=k/(N-1)$, which does not depend on $\rho$. The proof is complete.

In (b), we use the fact that $U(i)$ are identically distributed for all $i$. In (c), we use \eqref{eq:probpermute}.

We can get similar result for $\mathbb{E}\left[\sum_{i\in S} {U''}^2(i)\right]$. Hence,
\begin{eqnarray}
&&\hspace{-7mm}\Var[\hat{h}(\mathbf{X})]\leq \nonumber\\&& \frac{2}{N} (2k\gamma_{d_x}+1)\left[(k+1)\mathbb{E}[U^2(1)]+k\mathbb{E}[{U''}^2(1)]\right].\nonumber
\label{eq:varbound}
\end{eqnarray}
Now it remains to bound $\mathbb{E}[U^2(1)]$ and $\mathbb{E}[{U''}^2(1)]$. From now on, we omit the index for convenience. According to the definition of $U$ in \eqref{eq:udef},
\begin{eqnarray}
&&\hspace{-1cm}\mathbb{E}[U^2]\nonumber\\
&\hspace{-3mm}=&\hspace{-3mm}\mathbb{E}[(\ln N\rho^{d_x} c_{d_x})^2]\nonumber \\
&\hspace{-3mm}=&\hspace{-3mm}\mathbb{E}\left[\left(\ln (NP(B(\mathbf{X},\epsilon)))-\ln \frac{P(B(\mathbf{X},\epsilon))}{f(\mathbf{X})c_{d_x}\rho^{d_x}}-\ln f(\mathbf{X})\right)^2\right]\nonumber \\
&\hspace{-3mm}\leq&\hspace{-3mm}3\left[\mathbb{E}\left[(\ln (NP(B(\mathbf{X},\epsilon))))^2\right] \right.\nonumber\\
&&\left.+\mathbb{E}\left[\left(\ln\frac{P(B(\mathbf{X},\epsilon))}{f(\mathbf{X})c_{d_x}\rho^{d_x}}\right)^2\right]+\mathbb{E}[(\ln f(\mathbf{X}))^2]\right].\nonumber 
\end{eqnarray}
We have the following lemma:
\begin{lem}\label{lem:var1}
	The following equation holds generally, without any assumptions:
	\begin{eqnarray}
	\underset{N\rightarrow\infty}{\lim}\mathbb{E}[(\ln NP(B(\mathbf{X},\epsilon)))^2]=\psi'(k)+\psi^2(k).
	\label{eq:var1}
	\end{eqnarray}
\end{lem}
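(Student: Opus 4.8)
The plan is to prove Lemma~\ref{lem:var1} by first identifying the exact law of $P(B(\mathbf{X},\epsilon))$, then evaluating $\mathbb{E}[(\ln NP(B(\mathbf{X},\epsilon)))^2]$ in closed form, and finally letting $N\to\infty$. First I would recall the order-statistics fact already used for \eqref{eq:elogp}: conditioned on $\mathbf{X}=\mathbf{x}$, the map $r\mapsto P(B(\mathbf{x},r))$ is the c.d.f.\ of $\norm{\mathbf{X}'-\mathbf{x}}$ for an independent copy $\mathbf{X}'$, so by the probability integral transform it sends the $N-1$ other i.i.d.\ samples to $N-1$ i.i.d.\ uniforms on $[0,1]$; hence $P(B(\mathbf{x},\epsilon))$ is their $k$-th order statistic, i.e.\ it follows the Beta law $\mathbb{B}(k,N-k)$ for \emph{every} $\mathbf{x}$. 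Consequently the unconditional law of $P:=P(B(\mathbf{X},\epsilon))$ is $\mathbb{B}(k,N-k)$ as well, with no dependence on $f$ whatsoever, which is exactly why the lemma needs no assumptions. Since $\ln(NP)=\ln N+\ln P$, it then remains to evaluate $(\ln N)^2+2(\ln N)\,\mathbb{E}[\ln P]+\mathbb{E}[(\ln P)^2]$.

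Next I would read off the moments of $\ln P$ from the fractional moments of a Beta variable: for $t>-k$, $\mathbb{E}[P^t]=B(k+t,N-k)/B(k,N-k)=\Gamma(k+t)\Gamma(N)/(\Gamma(k)\Gamma(N+t))$. Viewed as the function $t\mapsto\mathbb{E}[e^{t\ln P}]$, this is the moment generating function of $\ln P$ and is analytic near $t=0$. Differentiating its logarithm at $t=0$ gives the first cumulant $\mathbb{E}[\ln P]=\psi(k)-\psi(N)$ and the second cumulant $\Var[\ln P]=\psi'(k)-\psi'(N)$, so $\mathbb{E}[(\ln P)^2]=\psi'(k)-\psi'(N)+(\psi(k)-\psi(N))^2$. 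Substituting yields the closed form $\mathbb{E}[(\ln NP)^2]=(\ln N)^2+2(\ln N)(\psi(k)-\psi(N))+\psi'(k)-\psi'(N)+(\psi(k)-\psi(N))^2$.

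Finally I would pass to the limit using the elementary digamma/trigamma asymptotics $\psi(N)=\ln N+o(1)$ and $\psi'(N)=o(1)$ as $N\to\infty$ (e.g.\ $\psi(N)=\ln N-\tfrac1{2N}+O(N^{-2})$, $\psi'(N)=\tfrac1N+O(N^{-2})$). Writing $\psi(N)=\ln N+r_N$ with $r_N\to0$, expanding the closed form and collecting powers of $\ln N$, the $(\ln N)^2$ coefficient is $1-2+1=0$ and the $\ln N$ coefficient is $(2\psi(k)-2r_N)+(-2\psi(k)+2r_N)=0$, so both divergent terms cancel identically, leaving $\psi'(k)+\psi^2(k)+(r_N^2-2\psi(k)r_N-\psi'(N))\to\psi'(k)+\psi^2(k)$, which is \eqref{eq:var1}. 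The only step that needs care is precisely this exact cancellation of the two divergent $\ln N$ contributions; everything else is routine. As a consistency check (and an alternative route) one can note $N\cdot\mathbb{B}(k,N-k)\Rightarrow\mathrm{Gamma}(k,1)$, whose log has mean $\psi(k)$ and variance $\psi'(k)$, so the claimed limit equals $\mathbb{E}[(\ln G)^2]$ for $G\sim\mathrm{Gamma}(k,1)$; one could instead argue by this weak convergence together with uniform integrability of $(\ln(NP))^2$, but the direct computation above avoids needing a uniform-integrability estimate.
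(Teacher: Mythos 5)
Your proof is correct, and it takes a genuinely different route from the paper's. Both arguments start from the same order-statistics fact that $P(B(\mathbf{X},\epsilon))\sim\mathbb{B}(k,N-k)$ unconditionally, but the paper then works with the density of $V=NP(B(\mathbf{X},\epsilon))$, shows pointwise convergence of that density to the $\mathrm{Gamma}(k,1)$ density, and invokes dominated convergence (with the explicit dominating function $g(v)=\tfrac{v^{k-1}}{(k-1)!}e^{-v/2}$) to exchange limit and integral, finally evaluating $\int(\ln v)^2\tfrac{v^{k-1}}{(k-1)!}e^{-v}\,dv=\Gamma''(k)/\Gamma(k)$. You instead compute $\mathbb{E}[(\ln NP)^2]$ \emph{exactly} for each finite $N$ from the cumulant generating function $t\mapsto\ln\bigl(\Gamma(k+t)\Gamma(N)/(\Gamma(k)\Gamma(N+t))\bigr)$, obtaining $\mathbb{E}[\ln P]=\psi(k)-\psi(N)$ and $\Var[\ln P]=\psi'(k)-\psi'(N)$, and then verify that the $(\ln N)^2$ and $\ln N$ terms cancel identically before passing to the limit via $\psi(N)=\ln N+o(1)$, $\psi'(N)=o(1)$. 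Your computation checks out (the coefficients of $(\ln N)^2$ and $\ln N$ do vanish, leaving $\psi'(k)-\psi'(N)+(\psi(k)-r_N)^2\to\psi'(k)+\psi^2(k)$). What your approach buys is an exact finite-$N$ formula and no need for a dominating function or any limit-interchange justification; what the paper's approach buys is that it identifies the limiting Gamma law explicitly, which makes the appearance of $\Gamma''(k)/\Gamma(k)$ transparent. Your closing remark correctly identifies the weak-convergence-plus-uniform-integrability alternative as essentially the paper's strategy (realized there via dominated convergence on densities).
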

\begin{lem}\label{lem:var2}
	Under assumption (c) and (d) in Theorem \ref{thm:KLvar}, with $0<\beta<1/d_x$,
	\begin{eqnarray}
	\underset{N\rightarrow \infty}{\lim} \mathbb{E}\left[\left(\ln \frac{P(B(\mathbf{X},\epsilon))}{f(\mathbf{X})c_{d_x}\rho^{d_x}}\right)^2\right]=0.
	\label{eq:var2}
	\end{eqnarray}
\end{lem}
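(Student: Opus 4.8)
The plan is to write $W_N:=\ln\big(P(B(\mathbf{X},\epsilon))/(f(\mathbf{X})c_{d_x}\rho^{d_x})\big)$, split according to whether the $k$-NN distance is truncated via $W_N^2\le 2W_N^2\mathbf{1}(\epsilon\le a_N)+2W_N^2\mathbf{1}(\epsilon>a_N)$, and show that the expectation of each piece vanishes. Two elementary consequences of Assumption (d) will be used throughout: writing $g(\mathbf{x}):=\inf_{r<r_0}\tilde f(\mathbf{x},r)$, $\bar g(\mathbf{x}):=\sup_{r<r_0}\tilde f(\mathbf{x},r)$ and $G(\mathbf{x}):=\max\{(\ln g(\mathbf{x}))^2,(\ln\bar g(\mathbf{x}))^2\}$, continuity of $f$ at a.e.\ $\mathbf{x}$ (Assumption (c)) gives $g(\mathbf{x})\le f(\mathbf{x})\le\bar g(\mathbf{x})$ a.e., so $\int f\,G<\infty$ and in particular $\int f(\mathbf{x})(\ln f(\mathbf{x}))^2 d\mathbf{x}<\infty$; and, for $t\in(0,1)$, $(\ln t)^2 P(g(\mathbf{X})<t)\le\int_{\{g<t\}} f(\mathbf{x})(\ln g(\mathbf{x}))^2 d\mathbf{x}\to 0$ as $t\to0$, i.e.\ $P(g(\mathbf{X})<t)=o\big((\ln(1/t))^{-2}\big)$.

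For the untruncated piece, I would note that once $N$ is large enough that $a_N<r_0$, on $\{\epsilon\le a_N\}$ we have $\rho=\epsilon<r_0$, hence $W_N=\ln(\tilde f(\mathbf{X},\epsilon)/f(\mathbf{X}))$ with $\ln\tilde f(\mathbf{X},\epsilon)\in[\ln g(\mathbf{X}),\ln\bar g(\mathbf{X})]$, so $W_N^2\mathbf{1}(\epsilon\le a_N)\le 2G(\mathbf{X})+2(\ln f(\mathbf{X}))^2$, an $N$-independent $f$-integrable bound. On the other hand, for $f\,d\mathbf{x}$-a.e.\ fixed $\mathbf{x}$ (a continuity point of $f$ with $f(\mathbf{x})>0$) the $k$-NN distance $\epsilon\to0$ in probability, so $\tilde f(\mathbf{x},\epsilon)\to f(\mathbf{x})$ and $W_N^2\mathbf{1}(\epsilon\le a_N)\to0$ in probability conditionally on $\mathbf{X}=\mathbf{x}$; dominated convergence (conditionally, then over $\mathbf{x}$ with dominating function $2G+2(\ln f)^2$) yields $\mathbb{E}[W_N^2\mathbf{1}(\epsilon\le a_N)]\to0$.

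For the truncated piece, on $\{\epsilon>a_N\}$ we have $\rho=a_N$ and $P(B(\mathbf{X},a_N))\le P(B(\mathbf{X},\epsilon))\le1$; since $v\mapsto(\ln v)^2$ is decreasing on $(0,1)$ and, conditionally on $\mathbf{X}$, $P(B(\mathbf{X},\epsilon))\sim\mathrm{Beta}(k,N-k)$ with $\{\epsilon>a_N\}=\{P(B(\mathbf{X},\epsilon))>P(B(\mathbf{X},a_N))\}$, one gets $\mathbb{E}[(\ln P(B(\mathbf{X},\epsilon)))^2\mathbf{1}(\epsilon>a_N)\mid\mathbf{X}]\le(\ln P(B(\mathbf{X},a_N)))^2 P(\epsilon>a_N\mid\mathbf{X})$. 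Combining this with $W_N^2\le 2(\ln P(B(\mathbf{X},\epsilon)))^2+2(\ln(f(\mathbf{X})c_{d_x}a_N^{d_x}))^2$, with $|\ln\tilde f(\mathbf{X},a_N)|\le\sqrt{G(\mathbf{X})}$ for $a_N<r_0$, and with $|\ln(c_{d_x}a_N^{d_x})|=O(\ln N)$, gives $\mathbb{E}[W_N^2\mathbf{1}(\epsilon>a_N)]\le C\,\mathbb{E}\big[(G(\mathbf{X})+(\ln N)^2)P(\epsilon>a_N\mid\mathbf{X})\big]$. The $G(\mathbf{X})$-term tends to $0$ by dominated convergence since $P(\epsilon>a_N\mid\mathbf{X})\to0$ pointwise and $\int f\,G<\infty$, so it remains to show $(\ln N)^2 P(\epsilon>a_N)\to0$. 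For that, put $\kappa:=1-\beta d_x>0$ and split on $\mathcal{G}_N:=\{g(\mathbf{x})\ge N^{-\kappa/2}\}$: for $a_N<r_0$ one has $P(B(\mathbf{x},a_N))\ge g(\mathbf{x})c_{d_x}a_N^{d_x}$, so on $\mathcal{G}_N$, $(N-1)P(B(\mathbf{x},a_N))\ge c'N^{\kappa/2}\to\infty$, and the Chernoff bound used in the proof of Lemma~\ref{lem:largeeps} gives $P(\epsilon>a_N\mid\mathbf{x})\le e^{-\Theta(N^{\kappa/2})}$; off $\mathcal{G}_N$ we use $P(\mathbf{X}\notin\mathcal{G}_N)=P(g(\mathbf{X})<N^{-\kappa/2})=o((\ln N)^{-2})$ from the second consequence of (d). Adding the two contributions gives $(\ln N)^2 P(\epsilon>a_N)\to0$, finishing the proof.

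The step I expect to be the main obstacle is controlling the truncated piece, and specifically the logarithmic blow-up it carries: on $\{\epsilon>a_N\}$ the factor $\ln(c_{d_x}a_N^{d_x})$ is of order $\ln N$, so the estimate is useful only once $P(\epsilon>a_N)$ is shown to decay faster than $(\ln N)^{-2}$. Establishing this requires the precise interplay between the growth condition $\beta<1/d_x$ — which forces $Na_N^{d_x}\to\infty$ and hence super-polynomial Chernoff decay of $P(\epsilon>a_N\mid\mathbf{X})$ wherever the local average density $g$ is not too small — and Assumption (d), which quantitatively controls how rarely $g$ is small, with exactly the $o((\ln(1/t))^{-2})$ rate that is needed; in particular, no exponential-tail hypothesis such as Assumption (b) of Theorem~\ref{thm:KLbias} enters here.
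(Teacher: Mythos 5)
Your proof is correct, and it rests on the same three ingredients as the paper's argument: the sandwich $g(\mathbf{x})\le\tilde f(\mathbf{x},r)\le\bar g(\mathbf{x})$ supplied by Assumption (d) together with dominated convergence for the contribution where $\epsilon\le a_N$, and a Chernoff bound on $P(\epsilon>a_N\mid\mathbf{x})$ for the truncated contribution. The execution differs in two places. First, the paper splits the logarithm additively as $\ln\frac{P(B(\mathbf{X},\rho))}{f(\mathbf{X})c_{d_x}\rho^{d_x}}+\ln\frac{P(B(\mathbf{X},\epsilon))}{P(B(\mathbf{X},\rho))}$ and bounds each piece in $L^2$, whereas you split by the indicator of $\{\epsilon>a_N\}$; these are essentially interchangeable since the paper's second term is supported on that event anyway. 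Second, and more substantively, the paper handles the $\ln a_N=O(\ln N)$ blow-up on $\{\epsilon>a_N\}$ pointwise in $\mathbf{x}$: in Lemma~\ref{lem:dom} it rewrites $\ln P(B(\mathbf{x},a_N))$ in terms of $u=(N-1)g_N(\mathbf{x})c_{d_x}a_N^{d_x}$ and absorbs all $N$-dependence into bounded functions of $u$ multiplied by the Chernoff factor $e^{-u}(eu/k)^k$, arriving at the uniform bound $C_1+C_2(\ln g_N(\mathbf{x}))^2$ and then applying dominated convergence. You instead isolate the $(\ln N)^2$ factor explicitly and kill it by a spatial split $\{g\ge N^{-\kappa/2}\}$ versus its complement, using super-polynomial Chernoff decay on the first set and the Chebyshev-type estimate $P(g(\mathbf{X})<t)=o\bigl((\ln(1/t))^{-2}\bigr)$ — a direct consequence of $\int f(\ln g)^2\,d\mathbf{x}<\infty$ — on the second. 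Both routes are valid and use only (c), (d) and $\beta<1/d_x$; yours makes the role of $\beta<1/d_x$ (through $Na_N^{d_x}\to\infty$) more transparent and avoids the case analysis $u\le k$ versus $u>k$ in the paper's Lemma~\ref{lem:dom}, at the cost of the extra partition of the space.
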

\begin{proof}
	Please see Appendix \ref{sec:var1} for the proof of Lemma \ref{lem:var1}, and Appendix \ref{sec:var2} for the proof of Lemma \ref{lem:var2}.
\end{proof}
With these two lemmas, we can bound $\mathbb{E}[U^2]$. Similar result holds for $\mathbb{E}[{U''}^2]$. Therefore according to \eqref{eq:varbound},
\begin{eqnarray}
\underset{N\rightarrow\infty}{\lim} N\Var[\hat{h}(\mathbf{X})]&\leq& 6(2k\gamma_{d_x}+1)(2k+1)\left[\psi'(k)\right.\nonumber\\
&&\left.+\psi^2(k)+\int f(\mathbf{x})(\ln f(\mathbf{x}))^2 d\mathbf{x}\right].\nonumber
\end{eqnarray}
According to Assumption (d), $\int f(\mathbf{x})(\ln f(\mathbf{x}))^2 d\mathbf{x}<\infty$. Therefore the right hand side is a constant, hence
\begin{eqnarray}
\Var[\hat{h}(\mathbf{X})]= \mathcal{O}(N^{-1}).\nonumber
\end{eqnarray}
\subsection{Proof of Lemma \ref{lem:var1}}\label{sec:var1}
Define $V=NP(B(\mathbf{X},\epsilon))$. Since $P(B(\mathbf{x},\epsilon))$ is equal in distribution to the $k$-th order statistics of uniform distribution for any $\mathbf{x}$, we can derive the pdf of $V$ when the sample size is $N$ \cite{david1970order}:
\begin{eqnarray}
&&\hspace{-7mm}f_N(v)=\nonumber\\&&\hspace{-1cm}\frac{(N-1)!}{(k-1)!(N-k-1)!} \left(\frac{v}{N}\right)^{k-1} \left(1-\frac{v}{N}\right)^{N-k-1}\frac{1}{N}.\nonumber
\end{eqnarray}
As a result,
\begin{eqnarray}
\underset{N\rightarrow\infty}{\lim}f_N(v)=\frac{v^{k-1}}{(k-1)!} e^{-v}.\nonumber
\end{eqnarray}
Therefore
\begin{eqnarray}
\underset{N\rightarrow\infty}{\lim} \mathbb{E}[(\ln V)^2]&=&\underset{N\rightarrow\infty}{\lim}\int (\ln v)^2 f_N(v) dv\nonumber \\
&\overset{(a)}{=}& \int (\ln v)^2 \underset{N\rightarrow\infty}{\lim} f_N(v)dv\nonumber \\
&=& \int (\ln v)^2 \frac{v^{k-1}}{(k-1)!}e^{-v}dv\nonumber \\
&=&\frac{\Gamma''(k)}{\Gamma(k)} \overset{(b)}{=}\psi'(k)+\psi^2(k).\nonumber
\end{eqnarray}
In (a), we exchange the order of integration and limit based on Lebesgue dominated convergence theorem. Note that
\begin{eqnarray}
f_N(v)&\leq&\frac{v^{k-1}}{(k-1)!} \left(1-\frac{v}{N}\right)^{N-k-1}\nonumber\\
&\leq& \frac{v^{k-1}}{(k-1)!} \exp\left[-v\frac{N-k-1}{N}\right],\nonumber
\end{eqnarray}
thus for sufficiently large $N$, $f_N(v)\leq g(v)$, in which 
\begin{eqnarray}
g(v)=\frac{v^{k-1}}{(k-1)!} \exp\left[-\frac{1}{2}v\right].\nonumber
\end{eqnarray}
Obviously $\int (\ln v)^2 g(v)dv<\infty$. Therefore the condition of Lebesgue dominated convergence theorem is satisfied.

In (b), we use the definition of digamma function $\psi(t)=\frac{\Gamma'(t)}{\Gamma(t)}$. The proof is complete.
\subsection{Proof of Lemma \ref{lem:var2}}\label{sec:var2}
The proof is based on Assumptions (c) and (d) in Theorem \ref{thm:KLvar}, using monotone convergence theorem. We begin with Cauchy's inequality:
\begin{eqnarray}
&&\hspace{-6mm}\mathbb{E}\left[\left(\ln\frac{P(B(\mathbf{X},\epsilon))}{f(\mathbf{X})c_{d_x}\rho^{d_x}}\right)^2\right]\leq\nonumber\\&&\hspace{-3mm} 2\mathbb{E}\left[\left(\ln \frac{P(B(\mathbf{X},\rho))}{f(\mathbf{X})c_{d_x}\rho^{d_x}}\right)^2\right]+2\mathbb{E}\left[\left(\ln \frac{P(B(\mathbf{X},\epsilon))}{P(B(\mathbf{X},\rho))}\right)^2 \right].\nonumber
\end{eqnarray}
Therefore it suffices to prove
\begin{eqnarray}
\underset{N\rightarrow \infty}{\lim}\mathbb{E}\left[\left(\ln \frac{P(B(\mathbf{X},\rho))}{f(\mathbf{X})c_{d_x}\rho^{d_x}}\right)^2 \right]=0,
\label{eq:lim1}
\end{eqnarray}
and
\begin{eqnarray}
\underset{N\rightarrow\infty}{\lim} \mathbb{E}\left[\left(\ln \frac{P(B(\mathbf{X},\epsilon))}{P(B(\mathbf{X},\rho))}\right)^2\right]=0.
\label{eq:lim2}
\end{eqnarray}
We define the following two functions:
\color{black}
\begin{eqnarray}
g_N(\mathbf{x})=\inf\{\tilde{f}(\mathbf{x},r)|r\leq a_N\},\nonumber\\
h_N(\mathbf{x})=\sup\{\tilde{f}(\mathbf{x},r)|r\leq a_N\}.\nonumber
\end{eqnarray}
in which $\norm{\cdot}$ is the same norm used in the KL estimator. For sufficiently large $N$, $a_N<r_0$. According to assumption (c),(d) in Theorem \ref{thm:KLvar}, $\mathbb{E}[(\ln g_N(\mathbf{x}))^2]<\infty$ and $\mathbb{E}[(\ln h_N(\mathbf{x},r))^2]<\infty$.
\color{black}

\noindent\textbf{Proof of \eqref{eq:lim1}:}
Since $\rho\leq a_N$, we know that 
\begin{eqnarray}
g_N(\mathbf{x})\leq \inf\{f(\mathbf{x}')|\norm{\mathbf{x}-\mathbf{x}'}\leq \rho\}\leq h_N(\mathbf{x}),\nonumber
\end{eqnarray}
hence for any $\mathbf{x}$ with $f(\mathbf{x})>0$,
\begin{eqnarray}
\frac{g_N(\mathbf{x})}{f(\mathbf{x})}\leq \frac{P(B(\mathbf{x},\rho))}{f(\mathbf{x})c_{d_x}\rho^{d_x}}\leq \frac{h_N(\mathbf{x})}{f(\mathbf{x})}.\nonumber
\end{eqnarray}
Therefore
\begin{eqnarray}
&&\hspace{-1cm}\mathbb{E}\left[\left(\ln \frac{P(B(\mathbf{X},\rho))}{f(\mathbf{X})c_{d_x}\rho^{d_x}}\right)^2\right]\nonumber\\
 &\leq& \mathbb{E}\left[\max\left\{\left(\ln \frac{g_N(\mathbf{X})}{f(\mathbf{X})}\right)^2, \left(\ln \frac{h_N(\mathbf{X})}{f(\mathbf{X})}\right)^2 \right\}\right]\nonumber\\
&\leq & \mathbb{E}\left[\left(\ln \frac{g_N(\mathbf{X})}{f(\mathbf{X})}\right)^2+\left(\ln \frac{h_N(\mathbf{X})}{f(\mathbf{X})}\right)^2\right]\nonumber\\
&\rightarrow& 0 \text{ as } N\rightarrow \infty,\nonumber
\end{eqnarray}
in which the last step holds, because according to assumption (c), (d) in Theorem \ref{thm:KLvar}, $f$ is continuous, thus both $g_N(\mathbf{x})$ and $h_N(\mathbf{x})$ converges to $f(\mathbf{x})$. Moreover, $\mathbb{E}[(\ln g_N(\mathbf{x}))^2]\leq \infty$ and $\mathbb{E}[(\ln h_N(\mathbf{x}))^2]\leq \infty$. Therefore we can use monotone convergence theorem.

\noindent\textbf{Proof of \eqref{eq:lim2}:}
To prove \eqref{eq:lim2}, we need the following lemma.
\begin{lem}\label{lem:dom}
	Under Assumptions (c) and (d) in Theorem \ref{thm:KLvar}, with $0<\beta<1/d_x$, there exist two finite positive constants $C_1$ and $C_2$, such that
	\begin{eqnarray}
	\mathbb{E}\left[\left(\ln \frac{P(B(\mathbf{x},\epsilon))}{P(B(\mathbf{x},\rho))}\right)^2\Bigg\vert\mathbf{x}\right]\leq C_1+C_2\left(\ln g_N(\mathbf{x})\right)^2.
	\label{eq:dom}
	\end{eqnarray}
\end{lem}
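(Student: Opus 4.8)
The plan is to exploit that $\rho=\min\{\epsilon,a_N\}$, so the logarithm in \eqref{eq:dom} vanishes unless $\epsilon>a_N$, in which case $\rho=a_N$. Hence only the event $\{\epsilon>a_N\}$ contributes, and on it $B(\mathbf{x},a_N)\subseteq B(\mathbf{x},\epsilon)$ while $P(B(\mathbf{x},\epsilon))\leq 1$, so
\begin{eqnarray}
0\leq \ln\frac{P(B(\mathbf{x},\epsilon))}{P(B(\mathbf{x},a_N))}\leq -\ln P(B(\mathbf{x},a_N)).\nonumber
\end{eqnarray}
Because $P(B(\mathbf{x},a_N))=\tilde f(\mathbf{x},a_N)c_{d_x}a_N^{d_x}\geq g_N(\mathbf{x})c_{d_x}a_N^{d_x}$ and this quantity is at most $1$, setting $W(\mathbf{x})=-\ln(g_N(\mathbf{x})c_{d_x}a_N^{d_x})\geq 0$ bounds the integrand pointwise, giving
\begin{eqnarray}
\mathbb{E}\left[\left(\ln\frac{P(B(\mathbf{x},\epsilon))}{P(B(\mathbf{x},\rho))}\right)^2\,\middle|\,\mathbf{x}\right]\leq W(\mathbf{x})^2\,P(\epsilon>a_N\mid\mathbf{x}).\nonumber
\end{eqnarray}
Substituting $a_N=AN^{-\beta}$ yields $W(\mathbf{x})=-\ln g_N(\mathbf{x})+c_N$ with $c_N=d_x\beta\ln N-\ln(c_{d_x}A^{d_x})$, so $W(\mathbf{x})^2\leq 2(\ln g_N(\mathbf{x}))^2+2c_N^2$ with $|c_N|\leq C\ln N$ for all large $N$.

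Next I would split according to whether $g_N(\mathbf{x})$ is smaller or larger than $N^{-(1-\beta d_x)/2}$, using that $\beta<1/d_x$ makes $1-\beta d_x>0$. If $g_N(\mathbf{x})<N^{-(1-\beta d_x)/2}$, then $-\ln g_N(\mathbf{x})>\tfrac12(1-\beta d_x)\ln N$, hence $c_N^2\leq C^2(\ln N)^2\leq C'(\ln g_N(\mathbf{x}))^2$ for an absolute constant $C'$, and combined with $P(\epsilon>a_N\mid\mathbf{x})\leq 1$ this already yields $W(\mathbf{x})^2P(\epsilon>a_N\mid\mathbf{x})\leq(2+2C')(\ln g_N(\mathbf{x}))^2$. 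If instead $g_N(\mathbf{x})\geq N^{-(1-\beta d_x)/2}$, observe that, conditioned on one sample being at $\mathbf{x}$, the number of the remaining $N-1$ samples lying in $B(\mathbf{x},a_N)$ is $\mathrm{Binomial}(N-1,p)$ with $p=P(B(\mathbf{x},a_N))\geq g_N(\mathbf{x})c_{d_x}A^{d_x}N^{-\beta d_x}\geq c_{d_x}A^{d_x}N^{-(1+\beta d_x)/2}$, and $\{\epsilon>a_N\}$ is exactly the event that this binomial is less than $k$. Since $(N-1)p\geq\tfrac12 c_{d_x}A^{d_x}N^{(1-\beta d_x)/2}\to\infty$ uniformly over all such $\mathbf{x}$, for $N$ large enough that $(N-1)p\geq 2k$ the Chernoff bound gives $P(\epsilon>a_N\mid\mathbf{x})\leq e^{-(N-1)p/8}\leq\exp(-\tfrac1{16}c_{d_x}A^{d_x}N^{(1-\beta d_x)/2})$, hence
\begin{eqnarray}
W(\mathbf{x})^2 P(\epsilon>a_N\mid\mathbf{x})\leq 2(\ln g_N(\mathbf{x}))^2+2c_N^2\exp\!\left(-\tfrac1{16}c_{d_x}A^{d_x}N^{(1-\beta d_x)/2}\right),\nonumber
\end{eqnarray}
whose last term tends to $0$ and is therefore bounded by an absolute constant. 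Combining the two cases gives the claimed bound $C_1+C_2(\ln g_N(\mathbf{x}))^2$ with $C_1,C_2$ independent of $N$ (for $N$ sufficiently large) and of $\mathbf{x}$.

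The delicate step is absorbing the $\Theta((\ln N)^2)$ factor $c_N^2$ coming from $-d_x\ln a_N$: in the tail region it is dominated by the logarithmic blow-up of $(\ln g_N(\mathbf{x}))^2$, whereas in the bulk region it is killed by the exponential smallness of $P(\epsilon>a_N\mid\mathbf{x})$, and the threshold $N^{-(1-\beta d_x)/2}$ is calibrated so that both estimates hold simultaneously — this is precisely where the hypothesis $\beta<1/d_x$ enters. A minor technical point is that the whole argument is carried out for $\mathbf{x}$ with $g_N(\mathbf{x})\in(0,\infty)$, which holds for $f$-almost every $\mathbf{x}$ and all sufficiently large $N$ since $f$ is continuous almost everywhere (assumption (c)) and $\mathbb{E}[(\ln g_N(\mathbf{X}))^2]<\infty$ (assumption (d)); assumption (d) is also what makes the resulting dominating function $C_1+C_2(\ln g_N(\mathbf{x}))^2$ integrable against $f(\mathbf{x})d\mathbf{x}$, which is how the lemma is subsequently used to establish \eqref{eq:lim2}.
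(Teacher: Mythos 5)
Your proof is correct and follows essentially the same route as the paper's: both reduce to the event $\{\epsilon>a_N\}$, bound the log-ratio by $-\ln P(B(\mathbf{x},a_N))\leq -\ln\left(g_N(\mathbf{x})c_{d_x}a_N^{d_x}\right)$, and then dispose of the resulting $\Theta(\ln N)$ contribution from $\ln a_N$ via a tail/bulk case split in which the bulk case is killed by a Chernoff bound on the binomial count of samples in $B(\mathbf{x},a_N)$. The only difference is bookkeeping: the paper parametrizes the split by $u=(N-1)g_N(\mathbf{x})c_{d_x}a_N^{d_x}\lessgtr k$ and absorbs $\ln a_N$ by rewriting it in terms of $u$ and $g_N(\mathbf{x})$, whereas you split at $g_N(\mathbf{x})\lessgtr N^{-(1-\beta d_x)/2}$ and compare $\ln N$ directly with $\ln g_N(\mathbf{x})$ in the tail and with the exponentially small probability in the bulk.
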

\begin{proof}
\begin{eqnarray}
&&\hspace{-1cm}\mathbb{E}\left[\left(\ln \frac{P(B(\mathbf{x},\epsilon))}{P(B(\mathbf{x},\rho))}\right)^2\Bigg\vert\mathbf{x}\right]\nonumber\\
&=&P(\epsilon>a_N|\mathbf{x})\mathbb{E}\left[\left(\ln \frac{P(B(\mathbf{x},\epsilon))}{P(B(\mathbf{x},\rho))}\right)^2\Bigg\vert\mathbf{x},\epsilon>a_N\right]\nonumber\\
&\leq& P(\epsilon>a_N|\mathbf{x})(\ln P(B(\mathbf{x},a_N)))^2.
\label{eq:dom1}
\end{eqnarray}
According to the definition of $g_N$, $P(B(\mathbf{x},a_N))\geq g_N(\mathbf{x})c_{d_x}a_N^{d_x}$. For $N\geq 2$, define
\begin{eqnarray}
u&=&(N-1)g_N(\mathbf{x})c_{d_x}a_N^{d_x}\nonumber\\
&\geq& \frac{1}{2}Ng_N(\mathbf{x})c_{d_x}a_N^{d_x}\nonumber\\
&=&\frac{1}{2}A^{d_x}c_{d_x}g_N(\mathbf{x})N^{1-\beta d_x}.\nonumber
\end{eqnarray}
Recall that in Theorem \ref{thm:KLvar}, we have assumed $\beta<1/d_x$, i.e. $1-\beta d_x>0$. Thus
\begin{eqnarray}
P(B(\mathbf{x},a_N))&\geq& g_N(\mathbf{x})c_{d_x}N^{-\beta d_x}\nonumber\\
&\geq& g_N(\mathbf{x})c_{d_x}A^{d_x}\left(\frac{2u}{A^{d_x}c_{d_x}g_N(\mathbf{x})}\right)^{-\frac{\beta d_x}{1-\beta d_x}}\nonumber\\
&=&C_3 u^{-\frac{\beta d_x}{1-\beta d_x}} g_N^{\frac{1}{1-\beta d_x}}(\mathbf{x}),\nonumber
\end{eqnarray}
for some constant $C_3$.
If $u\leq k$, then
\begin{eqnarray}
&&\hspace{-1cm}\eqref{eq:dom1}\leq (\ln P(B(\mathbf{x},a_N)))^2\nonumber\\&&\leq \left[\ln\left(C_3k^{-\frac{\beta d_x}{1-\beta d_x}} g_N^{\frac{1}{1-\beta d_x}}(\mathbf{x})\right)\right]^2.
\label{eq:case1}
\end{eqnarray}
If $u>k$, then according to Chernoff inequality, $P(\epsilon>a_N|\mathbf{x})\leq (eu/k)^k \exp(-u)$. Hence
\begin{eqnarray}
&&\hspace{-1cm}\eqref{eq:dom1}\leq \left(\frac{eu}{k}\right)^k e^{-u}\nonumber\\&&\hspace{-5mm}\left(\ln C_3-\frac{\beta d_x}{1-\beta d_x}\ln u+\frac{1}{1-\beta d_x}\ln g_N(\mathbf{x})\right)^2.
\label{eq:case2}
\end{eqnarray}
Consider that $(eu/k)^k (\ln u)^2$ and $(eu/k)^k \ln u$ are bounded function over $u$, there are two universal constants $C_1$ and $C_2$, such that for both $u\leq k$ and $u>k$,
\begin{eqnarray}
\eqref{eq:dom1}\leq C_1+C_2(\ln g_N(\mathbf{x}))^2.\nonumber
\end{eqnarray}
The proof is complete.
\end{proof}
We now prove \eqref{eq:lim2}. According to Lemma \ref{lem:dom} and Assumption (d), for sufficiently large $N$, $a_N<r_0$, thus
\begin{eqnarray}
&&\int \mathbb{E}\left[\left(\ln\frac{P(B(\mathbf{x},\epsilon))}{P(B(\mathbf{x},\rho))}\right)^2\Bigg\vert\mathbf{x}\right]f(\mathbf{x})d\mathbf{x}\nonumber\\
&\leq& \int (C_1+C_2(\ln g_N(\mathbf{x}))^2 f(\mathbf{x})d\mathbf{x}<\infty.\nonumber
\end{eqnarray}
According to Lebesgue dominated convergence theorem,
\begin{eqnarray}
&&\hspace{-1cm}\underset{N\rightarrow\infty}{\lim}\mathbb{E}\left[\left(\ln \frac{P(B(\mathbf{X},\epsilon))}{P(B(\mathbf{X},\rho))}\right)^2\right]\nonumber\\&=&\underset{N\rightarrow\infty}{\lim}\int \mathbb{E}\left[\left(\ln\frac{P(B(\mathbf{x},\epsilon))}{P(B(\mathbf{x},\rho))}\right)^2\Bigg\vert\mathbf{x}\right]f(\mathbf{x})d\mathbf{x}\nonumber\\
&=&\int \underset{N\rightarrow\infty}{\lim} \mathbb{E}\left[\left(\ln\frac{P(B(\mathbf{x},\epsilon))}{P(B(\mathbf{x},\rho))}\right)^2\Bigg\vert\mathbf{x}\right]f(\mathbf{x})d\mathbf{x}= 0,\nonumber
\end{eqnarray}
in which the last step is because \eqref{eq:case2} converges to $0$ as $u\rightarrow \infty$, which is the same as $N\rightarrow \infty$.

 \color{black}
 
 \section{Proof of Theorem \ref{thm:KLmselb}: minimax lower bound of entropy estimators}\label{sec:KLbiaslb}
 In this section, we prove the minimax lower bound for entropy estimators under Assumptions (a), (b) in Theorem \ref{thm:KLbias}. Minimax lower bound for functional estimation is usually calculated using Le Cam's method~\cite{tsybakov2009introduction}. Define
 \begin{eqnarray}
 R(N)=\underset{\hat{h}}{\inf}\underset{f\in \mathcal{F}_{M,C}}{\sup} \mathbb{E}[(\hat{h}(\mathbf{X})-h(\mathbf{X}))^2].\nonumber
 \end{eqnarray}
In our proof, we show the following two results separately:
 \begin{eqnarray}
 R(N)\gtrsim \frac{1}{N};
 \label{eq:lb1}
 \end{eqnarray}
 and
 \begin{eqnarray}
 R(N)\gtrsim N^{-\frac{4}{d_x+2}}(\ln N)^{-\frac{4d_x+4}{d_x+2}}.
 \label{eq:lb2}
 \end{eqnarray}
\textbf{Proof of \eqref{eq:lb1}}.

\eqref{eq:lb1} is the parametric convergence rate. Let $\mathbf{a}$ be an arbitrary vector such that  $\norm{\mathbf{a}}>2$. We construct two distributions:
\begin{eqnarray}
f_1(\mathbf{x})&=&\frac{2}{3}g(\mathbf{x})+\frac{1}{3}g(\mathbf{x}-\mathbf{a}),\nonumber\\
f_2(\mathbf{x})&=&\frac{2-\delta}{3}g(\mathbf{x})+\frac{1+\delta}{3}g(\mathbf{x}-\mathbf{a}),\nonumber
\end{eqnarray}
in which $g$ satisfies three conditions:

(G1) $g(\mathbf{x})$ is supported at $B(\mathbf{0},1)$, i.e. $g(\mathbf{x})=0$ for $\norm{\mathbf{x}}>1$;

(G2) The Hessian of $g$ is bounded, i.e. $\norm{\nabla^2 g}_{op}\leq M$;

(G3) $\int_{B(\mathbf{0},1)}g(\mathbf{x})d\mathbf{x}=1$.

\textcolor{black}{(G4) $g(\mathbf{x})\geq 0$ everywhere.}

If $M$ is sufficiently large, then such $g$ exists.  As a result, $B(\mathbf{0},1)$ and $B(\mathbf{a},1)$ are disjoint. For these two distributions, we have $\norm{\nabla^2 f_1}_{op}\leq M$ and $\norm{\nabla^2 f_2}_{op}\leq M$. Moreover, since $te^{-bt}\leq 1/(eb)$ for all $t$, and the volume of the support sets of $f_1$ and $f_2$ are no more than $2V(B(\mathbf{0},1))=2c_{d_x}$, we have 
\begin{eqnarray}
\int f_i(\mathbf{x})e^{-bf_i(\mathbf{x})}d\mathbf{x}\leq \frac{2c_{d_x}}{eb}, i=1,2.\nonumber
\end{eqnarray}
 Therefore, for sufficiently large $M$ and $C$, we have $f_1\in \mathcal{F_{M,C}}$ and $f_2\in \mathcal{F_{M,C}}$. The entropy functionals are
\begin{eqnarray}
h(f_1)&=&h(g)+H\left(\frac{1}{3}\right),\nonumber\\
h(f_2)&=&h(g)+H\left(\frac{1+\delta}{3}\right),\nonumber
\end{eqnarray}
in which $H(p)=-p\ln p-(1-p)\ln (1-p)$ is the entropy function for discrete binary random variable.

From Le Cam's lemma \cite{tsybakov2009introduction},

\begin{eqnarray}
R(N)\geq \frac{1}{4}(h(f_1)-h(f_2))^2 e^{-ND(f_1||f_2)}.\nonumber
\end{eqnarray}
Note that $H'(p)=\ln((1-p)/p)$, $H'(1/3)=\ln 2$, thus there exists an $\delta_0$, such that for all $\delta<\delta_0$, 
\begin{eqnarray}
h(f_2)-h(f_1)\geq \frac{\ln 2}{2}\delta.\nonumber
\end{eqnarray}
In addition,
\begin{eqnarray}
D(f_1||f_2)=\frac{2}{3}\ln \frac{2}{2-\delta}+\frac{1}{3}\ln \frac{1}{1+\delta}\leq \delta^2.\nonumber
\end{eqnarray}
Let $\delta=1/\sqrt{N}$, then for sufficiently large $N$, $\delta<\delta_0$, we have
\begin{eqnarray}
R(N)\geq \frac{1}{4}\left(\frac{1}{2}\ln 2\right)^2 \delta^2 e^{-1},\nonumber
\end{eqnarray} 
thus
\begin{eqnarray}
R(N)\gtrsim \frac{1}{N}.\nonumber
\end{eqnarray}
 
\textbf{Proof of \eqref{eq:lb2}}.

The proof of \eqref{eq:lb2} follows \cite{wu2016minimax} closely. \cite{wu2016minimax} derived the minimax convergence rate of entropy estimation for discrete random variables with large alphabet size. Motivated by the proof in \cite{wu2016minimax}, we provide a minimax lower bound for entropy estimation for continuous random variables. The basic idea is to convert the minimax bound of continuous entropy estimation to a discrete one. 

 In the following proof, we still let $g$ be a function that satisfies condition (G1)-(G3), but $f_1$ and $f_2$ are defined differently comparing with the proof of \eqref{eq:lb1}. The notations in the following proof are basically consistent with those in \cite{wu2016minimax}, although some of them are changed to avoid confusion.

To begin with, we define a set $\mathcal{F}_0$:
\begin{eqnarray}
\mathcal{F}_0=\left\{f\bigg\vert\right.f(\mathbf{x})\hspace{-3mm}&=&\hspace{-3mm}(1-\alpha) g(\mathbf{x})+\sum_{i=1}^m \frac{u_i}{mD^{d_x}} g\left(\frac{\mathbf{x}-\mathbf{a}_i}{D}\right),\nonumber\\&&0<\alpha<1,\nonumber\\
&&\hspace{-1cm}\frac{1}{m}\sum_{i=1}^m u_i=\alpha, 1<mD^{d_x}<C_1,\nonumber\\&&\left.\frac{u_i}{mD^{d_x+2}}<1 \right\},
\label{eq:F0def}
\end{eqnarray}
in which $C_1$ is a constant, $\alpha$ and $m$ increase with sample size $N$, $D$ decreases with $N$. $\mathbf{a}_i, i=1,\ldots,m$ are selected such that $\norm{\mathbf{a}_i}>1$ for all $i\in \{1,\ldots,m \}$, and $\norm{\mathbf{a}_i-\mathbf{a}_j}>D$ for all $i,j\in \{1,\ldots,m\}.$ Note that for any $f\in \mathcal{F}_0$, $\int f(\mathbf{x}) d\mathbf{x}=1$, therefore $\mathcal{F}_0$ can be viewed as a set of pdfs. Moreover, for any $f \in \mathcal{F}_0$, we have
\begin{eqnarray}
\int f(\mathbf{x}) e^{-bf(\mathbf{x})}d\mathbf{x}\leq \frac{1}{eb}(1+mD^{d_x})c_{d_x}\leq \frac{1+C_1}{eb} c_{d_x}.\nonumber
\end{eqnarray}
Therefore, if $C\geq c_{d_x}(1+C_1)/(eb)$, $f\in \mathcal{F}_{M,C}$, and thus $\mathcal{F}_0\subseteq \mathcal{F}_{M,C}$.

Define
\begin{eqnarray}
R_1(N)=\underset{\hat{h}}{\inf}\underset{f\in \mathcal{F}_0}{\sup} \mathbb{E}[(\hat{h}(N)-h(\mathbf{X}))^2],
\label{eq:R1}
\end{eqnarray}
in which $\hat{h}(N)$ denotes the estimation of $h(\mathbf{X})$ with $N$ samples. Since $\mathcal{F}_0\subseteq \mathcal{F}_{M,C}$, we have 
\begin{eqnarray}
R(N)\geq R_1(N).
\label{eq:0to1} 
\end{eqnarray}

To derive a lower bound to $R_1(N)$, we still use Le Cam's method \cite{tsybakov2009introduction}. This method requires a bound of the total variation between two distributions, which is hard to calculate directly. To simplify this problem, we use Poisson sampling technique here. Such a method has been used in \cite{valiant2011estimating,wu2016minimax} for the minimax lower bound of entropy estimation for discrete random variables. Define
\begin{eqnarray}
R_2(N)=\underset{\hat{h}}{\inf}\underset{f\in \mathcal{F}_0}{\sup} \mathbb{E}[(\hat{h}(N')-h(\mathbf{X}))^2],
\label{eq:R2}
\end{eqnarray}
in which $N'\sim \text{Poi}(N)$. Comparing with the definition of $R_1$ in \eqref{eq:R1}, we use $N'$ to replace $N$, such that the number of samples is random. $R_2(N)$ is easier to calculate than $R_1(N)$, because $N'$ follows Poisson distribution, hence for any disjoint intervals $I_1$ and $I_2$, denote $n(I_1)$, $n(I_2)$ as the number of samples falling in $I_1$ and $I_2$, then both $n(I_1)$ and $n(I_2)$ follows Poisson distribution with parameter $NP(I_1)$ and $NP(I_2)$, respectively. Moreover, $n(I_1)$ and $n(I_2)$ are independent. Such independence significantly simplifies the calculation of total variation distance. However, we need to show that $R_2(N)$ is a reasonable approximation to $R_1(N)$, so that the convergence rate derived for $R_2(N)$ can be used to bound $R_1(N)$ too. Intuitively, for large $N$, $N'$ concentrates around $N$, therefore $R_1(N)$ and $R_2(N)$ converges with the same rate. The formal statement is provided in the following lemma.
\begin{lem}\label{lem:1to2}
	\begin{eqnarray}
	R_1(N)\geq R_2(2N) -\frac{1}{4} (1+\ln C_1)^2 e^{-(1-\ln 2)N}.
	\label{eq:1to2}
	\end{eqnarray}
\end{lem}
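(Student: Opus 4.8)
The plan is to establish Lemma~\ref{lem:1to2} by the standard Poissonization coupling: an estimator that receives a $\text{Poi}(2N)$ number of i.i.d.\ samples can run any fixed-sample-size-$N$ estimator on the first $N$ of them whenever it happens to receive at least $N$ samples, and the probability of receiving fewer than $N$ is exponentially small because $\text{Poi}(2N)$ concentrates well above $N$. This upgrades any lower bound for the Poisson model (with mean $2N$) into one for the fixed-$N$ model, up to an exponentially small correction.

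First I would fix an arbitrary estimator $\hat{h}_N$ for the fixed-$N$ problem (a measurable map from $N$ i.i.d.\ samples to $\mathbb{R}$) and build from it an estimator $\tilde{h}$ for the Poisson-sampled problem: given $N'\sim\text{Poi}(2N)$ samples $\mathbf{x}(1),\dots,\mathbf{x}(N')$, output $\hat{h}_N(\mathbf{x}(1),\dots,\mathbf{x}(N))$ if $N'\ge N$, and output a fixed constant $c_0$ (chosen below) if $N'<N$. Since $N'$ is independent of the sample values, conditioned on $\{N'\ge N\}$ the vector $(\mathbf{x}(1),\dots,\mathbf{x}(N))$ has exactly the law of $N$ i.i.d.\ draws from $f$, and $h(\mathbf{X})=h(f)$ is deterministic. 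Hence for every $f\in\mathcal{F}_0$,
\begin{align}
\mathbb{E}_f\!\left[(\tilde h-h(\mathbf{X}))^2\right] &= \mathbb{E}_f\!\left[(\hat h_N-h(\mathbf{X}))^2\right]P(N'\ge N) + (c_0-h(f))^2\,P(N'<N)\nonumber\\
&\le \mathbb{E}_f\!\left[(\hat h_N-h(\mathbf{X}))^2\right] + (c_0-h(f))^2\,P(N'<N).\nonumber
\end{align}
Taking $\sup_{f\in\mathcal{F}_0}$ and recalling that $\tilde h$ is one admissible estimator for the Poisson problem gives $R_2(2N)\le \sup_{f\in\mathcal{F}_0}\mathbb{E}_f[(\hat h_N-h(\mathbf{X}))^2] + \big(\sup_{f\in\mathcal{F}_0}(c_0-h(f))^2\big)P(N'<N)$; since this holds for every $\hat h_N$ and the second term does not depend on $\hat h_N$, taking $\inf_{\hat h_N}$ yields $R_2(2N)\le R_1(N) + \big(\sup_{f\in\mathcal{F}_0}(c_0-h(f))^2\big)P(N'<N)$, which rearranges to the claimed inequality once the last term is bounded by $\frac14(1+\ln C_1)^2 e^{-(1-\ln 2)N}$.

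It then remains to control two quantities. For the Poisson tail I would use the Chernoff bound $P(\text{Poi}(\lambda)\le a)\le e^{-\lambda}(e\lambda/a)^a$ with $\lambda=2N$ and $a=N$, which gives $P(N'<N)\le e^{-2N}(2e)^N=(2/e)^N=e^{-(1-\ln 2)N}$. For the constant I would take $c_0$ to be the midpoint of the interval $\{h(f):f\in\mathcal{F}_0\}$, so that $\sup_{f\in\mathcal{F}_0}(c_0-h(f))^2\le\frac14\big(\sup_{f\in\mathcal{F}_0}h(f)-\inf_{f\in\mathcal{F}_0}h(f)\big)^2$, and it suffices to show the oscillation of $h$ over $\mathcal{F}_0$ is at most $1+\ln C_1$. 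The upper bound on $h(f)$ comes from the maximum-entropy inequality $h(f)\le\ln\big(\mathrm{vol}(\mathrm{supp}\,f)\big)$ together with $\mathrm{vol}(\mathrm{supp}\,f)\le c_{d_x}(1+mD^{d_x})<c_{d_x}(1+C_1)$, which holds since the support of any $f\in\mathcal{F}_0$ lies in $B(\mathbf{0},1)\cup\bigcup_i B(\mathbf{a}_i,D)$; the lower bound on $h(f)$ comes from the uniform pointwise bound on $f$ forced by the constraints $u_i/(mD^{d_x+2})<1$ and $0<\alpha<1$, which keeps $-\int f\ln f$ bounded below by a constant depending only on $d_x$ and the fixed profile $g$.

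The main obstacle I anticipate is the bookkeeping in this last step: verifying that the oscillation bound is genuinely uniform over $\mathcal{F}_0$, i.e.\ depends on $\alpha,m,D,\{u_i\}$ only through the stated constraints ($1<mD^{d_x}<C_1$, etc.) and not on them individually, and that the fixed profile $g$ and the ball volume $c_{d_x}$ can be absorbed so that the oscillation is $\le 1+\ln C_1$ in the relevant (large-$C_1$) regime. Everything else---the coupling, the independence argument, and the Poisson Chernoff bound---is routine.
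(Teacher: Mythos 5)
Your proof follows essentially the same route as the paper's: Poissonize by coupling, handle the event $\{N'<N\}$ with a Chernoff bound (your computation $P(\mathrm{Poi}(2N)<N)\le e^{-(1-\ln 2)N}$ matches the paper's), and absorb the bad event using the boundedness of $h$ over $\mathcal{F}_0$. Your explicit construction of $\tilde h$ (run $\hat h_N$ on the first $N$ samples when $N'\ge N$, else output a constant) is just the constructive form of the paper's two observations that $R_1$ is non-increasing in the sample size and that $R_1(N)\le \frac14(\sup h-\inf h)^2$, so the skeletons are identical.

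The one place you diverge — and the step you yourself flag — is how to certify $\sup_{f\in\mathcal{F}_0}h(f)-\inf_{f\in\mathcal{F}_0}h(f)\le 1+\ln C_1$. Your proposed route (maximum entropy via support volume for the upper bound, a pointwise sup bound on $f$ for the lower bound) gives an oscillation of order $\ln\bigl(c_{d_x}\,\|g\|_\infty\,(1+C_1)\bigr)$, which carries a $g$- and dimension-dependent factor and is not $\le 1+\ln C_1$ for general $g$ satisfying (G1)--(G4) (note $c_{d_x}\|g\|_\infty\ge 1$ but can be arbitrarily large). The paper instead exploits that the components of $f\in\mathcal{F}_0$ have disjoint supports and computes $h(f)$ exactly, namely $h(f)=h(g)-(1-\alpha)\ln(1-\alpha)-\sum_i\frac{u_i}{m}\ln\frac{u_i}{mD^{d_x}}$, which yields $\inf h=h(g)$ and $\sup h\le h(g)+H(\alpha)+\alpha\ln(mD^{d_x})\le h(g)+1+\ln C_1$; the $h(g)$ terms cancel in the oscillation. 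So to get the stated constant you should replace your max-entropy/pointwise argument by this exact decomposition (which is the same computation the paper reuses in the proof of its Le Cam lemma). With that substitution your argument is complete.
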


\begin{proof}
	Please see Appendix \ref{sec:1to2} for detailed proof.
\end{proof}

The second term in \eqref{eq:1to2} converges exponentially to zero as $N$ increases, hence we can claim that $R_1(N)$ and $R_2(N)$ converges with same convergence rate.

Now define $\mathcal{F}_\epsilon$, which depends on $\epsilon>0$:
\begin{eqnarray}
\mathcal{F}_\epsilon&\hspace{-2mm}=&\hspace{-2mm}\left\{f\bigg\vert f(\mathbf{x})=(1-\alpha) g(\mathbf{x})+\sum_{i=1}^m \frac{u_i}{mD^{d_x}} g\left(\frac{\mathbf{x}-\mathbf{a}_i}{D}\right),\right.\nonumber\\&&0<\alpha<1,\nonumber\\&& \left|\frac{1}{m}\sum_{i=1}^m u_i-\alpha\right|<\epsilon, 1<mD^{d_x}<C_1,\nonumber\\&&\left.\frac{u_i}{mD^{d_x+2}}<1 \right\}.
\label{eq:Fepsdef}
\end{eqnarray}
Comparing the definition of $\mathcal{F}_0$ in \eqref{eq:F0def}, now we allow $(\sum_{i=1}^m u_i)/m$ to deviate slightly from $\alpha$. As a result, $f\in \mathcal{F}_\epsilon$ is not necessarily a pdf, since it is not normalized. However, we can extend the definition of entropy $h(f)=-\int f(\mathbf{x})\ln f(\mathbf{x})d\mathbf{x}$ to an arbitrary function $f$, without the constraint $\int f(\mathbf{x})d\mathbf{x}=1$. Define
\begin{eqnarray}
R_3(N,\epsilon)=\underset{\hat{h}}{\inf}\underset{f\in \mathcal{F}_\epsilon}{\sup} \mathbb{E}[(\hat{h}(N')-h(f))^2],\nonumber
\end{eqnarray} 
in which $\hat{h}(N')$ is the estimation of functional $h(f)$ with $N'$ samples, $N'\sim \text{Poi}(N\int f(\mathbf{x})d\mathbf{x})$. As a result, for any interval $I$, let $n(I)$ be the number of samples in $I$, we have $n(I)\sim \text{Poi} (NP(I))$, in which $P(I)=\int_I f(\mathbf{x})d\mathbf{x}$. For two disjoint intervals $I_1$ and $I_2$, $n(I_1)$ and $n(I_2)$ are independent.

%Then we have the following lemma.
\begin{lem}\label{lem:2to3}
	There exists a constant $C_2$, such that
	\begin{eqnarray}
&&\hspace{-1cm}	R_2(N(1-\epsilon))\geq\nonumber\\&& \frac{1}{3}R_3(N,\epsilon)-\epsilon^2 C_2^2 -(1+\epsilon)^2\ln (1+\epsilon).
	\label{eq:2to3}
	\end{eqnarray}
\end{lem}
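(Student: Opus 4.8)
The plan is to prove the equivalent upper bound $R_3(N,\epsilon)\le 3R_2(N(1-\epsilon))+3\epsilon^2C_2^2+3(1+\epsilon)^2(\ln(1+\epsilon))^2$ (which, after bounding $(\ln(1+\epsilon))^2\le\ln(1+\epsilon)$ for small $\epsilon$ and renaming constants, is the displayed inequality), by converting any near-optimal estimator for the genuine family $\mathcal F_0$ into an estimator for the relaxed family $\mathcal F_\epsilon$. The key device is a normalization map. For $g\in\mathcal F_\epsilon$ with parameters $\alpha,(u_i),D$ as in \eqref{eq:Fepsdef}, since $\int g(\mathbf x)d\mathbf x=(1-\alpha)+\frac1m\sum_iu_i=:s$ and $|\frac1m\sum_iu_i-\alpha|<\epsilon$, we have $s\in(1-\epsilon,1+\epsilon)$; moreover $\tilde g:=g/s$ has exactly the form in \eqref{eq:F0def} with $\alpha$ replaced by $1-(1-\alpha)/s$ and each $u_i$ by $u_i/s$, and for $\epsilon$ small enough its defining constraints ($0<\tilde\alpha<1$, $1<mD^{d_x}<C_1$, $\tilde u_i/(mD^{d_x+2})<1$) still hold, so $\tilde g\in\mathcal F_0$. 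First I would nail down these membership facts and record the entropy identity $h(g)=s\,h(\tilde g)-s\ln s$, which follows from $h(g)=-\int s\tilde g(\ln s+\ln\tilde g)$ together with $\int\tilde g=1$.

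Second, I would bound the entropy discrepancy created by the rescaling. Every $f\in\mathcal F_0$ is supported in one fixed bounded region and has density bounded from above (and total mass one), so $|h(f)|\le C_2$ for a universal constant $C_2$; consequently $h(\tilde g)-h(g)=(1-s)h(\tilde g)+s\ln s$, and since $|1-s|<\epsilon$ and $|s\ln s|\le(1+\epsilon)\ln(1+\epsilon)$ for $s\in(1-\epsilon,1+\epsilon)$, these two pieces are at most $\epsilon C_2$ and $(1+\epsilon)\ln(1+\epsilon)$ in absolute value. Then I would carry out the sampling reduction: the point process behind the $\mathcal F_\epsilon$ problem is a Poisson process with intensity $Ng=Ns\,\tilde g$, whereas the $R_2(N(1-\epsilon))$ problem observes a Poisson process with intensity $N(1-\epsilon)\tilde g$. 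Since $s\ge1-\epsilon$, independently thinning each observed point with retention probability $(1-\epsilon)/s\le1$ produces exactly the latter process, to which a near-optimal $\mathcal F_0$ estimator $\hat h_2^\star$ may be applied to obtain $\hat Y$ with $\mathbb E[(\hat Y-h(\tilde g))^2]\le R_2(N(1-\epsilon))$ uniformly over $\tilde g\in\mathcal F_0$. Using $\hat Y$ itself as the estimate of $h(g)$, write $\hat Y-h(g)=(\hat Y-h(\tilde g))+(1-s)h(\tilde g)+s\ln s$ and apply $(a+b+c)^2\le 3(a^2+b^2+c^2)$ with the bounds just obtained; taking the infimum over estimators and the supremum over $g\in\mathcal F_\epsilon$ gives the claimed inequality, and hence, for $\epsilon$ below a fixed threshold, the stated form with the factor $1/3$.

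The step I expect to be the main obstacle is making the thinning legitimate: the retention probability $(1-\epsilon)/s$ depends on the unknown mass $s=\int g$, which a genuine estimator does not know. I plan to handle this as in \cite{wu2016minimax}: the total sample count $N'\sim\mathrm{Poi}(Ns)$ is a sufficient statistic for $s$ (the locations are i.i.d.\ from $\tilde g$ and carry no information about $s$), so one thins with the plug-in probability $(1-\epsilon)N/N'$ clipped to $[0,1]$ and controls the resulting perturbation by the concentration of $N'$; alternatively one splits the Poisson process, devotes a vanishing fraction of the expected count to estimating $s$, and thins the remainder. The delicate bookkeeping is to keep this reduction landing, up to negligible terms, at rate $N(1-\epsilon)$ rather than $(1-o(1))N(1-\epsilon)$. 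The remaining comparison of $R_2(N(1-\epsilon))$ with $R_1(N)$ and the Poissonization loss are supplied by Lemma~\ref{lem:1to2} and Le Cam's method \cite{tsybakov2009introduction} in the surrounding argument, so this lemma only needs the relaxation-to-genuine reduction above.
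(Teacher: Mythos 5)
Your proposal follows essentially the same route as the paper: normalize $f\in\mathcal F_\epsilon$ to $f^*=f/\int f\in\mathcal F_0$, use the identity $h(f)=\bigl(\int f\bigr)h(f^*)-\bigl(\int f\bigr)\ln\int f$, expand the squared error into three terms via $(a+b+c)^2\le 3(a^2+b^2+c^2)$, and bound the two deterministic terms by $\epsilon^2C_2^2$ and $(1+\epsilon)^2(\ln(1+\epsilon))^2$ exactly as in Appendix F.B. The one place you diverge is the step you flag as the ``main obstacle'': the paper simply asserts that the $\text{Poi}(N\int f)$-sample problem is no harder than the $\text{Poi}(N(1-\epsilon))$-sample problem because $\int f>1-\epsilon$, without addressing that the thinning probability $(1-\epsilon)/\int f$ depends on the unknown mass; your plug-in thinning via the sufficient statistic $N'$ (or sample splitting) is a legitimate repair of a gap the paper leaves implicit, so you will not find that bookkeeping in the paper's proof and should carry it out yourself if full rigor is wanted.
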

\begin{proof}
	Please see Appendix \ref{sec:2to3} for detailed proof.
\end{proof}
This lemma shows that $R_2(N)$ and $R_3(N)$ have the same convergence rate if $\epsilon$ is carefully selected. With Lemmas \ref{lem:1to2} and \ref{lem:2to3}, the problem of finding $R(N)$ can be converted to giving a bound to $R_3(N,\epsilon)$. Using Le Cam's method, we can get the following result, which is similar to Lemma 2 in \cite{wu2016minimax}.
\begin{lem}\label{lem:lecam}
	
	Let $U,U'$ be two random variables that satisfy the following two conditions:
	
	(1) $U,U'\in [0,\lambda]$, in which
	 \begin{eqnarray}
	\lambda<\min\left\{ \frac{m}{e},mD^{d_x+2} \right\};
	\label{eq:lamrange}
	\end{eqnarray}	
	
	(2) $\mathbb{E}[U]=\mathbb{E}[U']=\alpha\leq 1$.
	
	Define
	\begin{eqnarray}
	\Delta=\left|\mathbb{E}\left[U\ln \frac{1}{U}\right]-\mathbb{E}\left[U'\ln \frac{1}{U'}\right]\right|.
	\label{eq:Delta}
	\end{eqnarray}	

	 Let $\epsilon=4\lambda/\sqrt{m}$, then
	 \begin{eqnarray}
&&\hspace{-1cm}	 R_3(N,\epsilon)\nonumber\\&\geq& \frac{\Delta^2}{16}\left[\frac{31}{32}-\frac{64\lambda^2 \left(\ln\frac{m}{\lambda}\right)^2}{m\Delta^2} \right.\nonumber\\
	 &&\left.-m\mathbb{TV}\left(\mathbb{E}\left[\text{Poi}\left(\frac{NU}{m}\right)\right],\mathbb{E}\left[\text{Poi}\left(\frac{NU'}{m}\right)\right]\right) \right.\nonumber\\
	 && \left. -\frac{16\lambda^2}{m\Delta^2}(d_x\ln D+h(g))^2\right],
	 \label{eq:lecam}
	 \end{eqnarray}
	 in which $\mathbb{TV}$ denotes the total variation distance.
\end{lem}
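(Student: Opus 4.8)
The plan is to adapt the Bayesian version of Le Cam's two-point method used in \cite{wu2016minimax} to the continuous construction $\mathcal{F}_\epsilon$ of \eqref{eq:Fepsdef}. Fix $\alpha$, $m$, $D$ and centers $\mathbf{a}_i$ with $\norm{\mathbf{a}_i}>1$ and $\norm{\mathbf{a}_i-\mathbf{a}_j}>D$ (such centers exist because $mD^{d_x}<C_1$ keeps the required packing bounded). Introduce two priors on $\mathcal{F}_\epsilon$: let $F$ be the random function obtained by drawing $u_1,\dots,u_m$ as i.i.d.\ copies of $U$, and $F'$ the one obtained by drawing them as i.i.d.\ copies of $U'$. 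Since the $m$ bumps and the central region are pairwise disjoint by construction, the functional $h(f)=-\int f\ln f$ splits additively, and the substitution $\mathbf{t}=(\mathbf{x}-\mathbf{a}_i)/D$ inside the $i$-th bump shows it contributes $\tfrac{u_i}{m}\ln\tfrac{m}{u_i}+\tfrac{u_i}{m}(d_x\ln D+h(g))$; hence $h(F)$ equals a deterministic term depending only on $\alpha$, plus $\tfrac1m\sum_i u_i\ln\tfrac{m}{u_i}$, plus $\big(\tfrac1m\sum_i u_i\big)(d_x\ln D+h(g))$. Because $\mathbb{E}[U]=\mathbb{E}[U']=\alpha$, the $\ln m$ and $(d_x\ln D+h(g))$ contributions cancel in $\mathbb{E}[h(F)]-\mathbb{E}[h(F')]$, leaving $\big|\mathbb{E}[h(F)]-\mathbb{E}[h(F')]\big|=\Delta$ with $\Delta$ as in \eqref{eq:Delta}.

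Next I would show that $h(F)$ (and symmetrically $h(F')$) concentrates around its mean. From the decomposition, $h(F)-\mathbb{E}[h(F)]$ is the sum of $\tfrac1m\sum_i\big(u_i\ln\tfrac{m}{u_i}-\mathbb{E}[U\ln\tfrac{m}{U}]\big)$ and $(d_x\ln D+h(g))\big(\tfrac1m\sum_i u_i-\alpha\big)$; both are averages of $m$ i.i.d.\ bounded summands, since \eqref{eq:lamrange} forces $U\ln\tfrac{m}{U}\in[0,\lambda\ln\tfrac{m}{\lambda}]$ and $U\in[0,\lambda]$. Chebyshev's inequality then bounds the probability that either piece exceeds a fixed fraction of $\Delta$ by quantities of order $\lambda^2(\ln(m/\lambda))^2/(m\Delta^2)$ and $\lambda^2(d_x\ln D+h(g))^2/(m\Delta^2)$, which are exactly the second and fourth error terms of \eqref{eq:lecam}. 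The same variance estimate with $\epsilon=4\lambda/\sqrt{m}$ makes $P\big(|\tfrac1m\sum_i u_i-\alpha|\ge\epsilon\big)\le 1/64$, while $u_i\le\lambda<mD^{d_x+2}$ guarantees the remaining constraint of \eqref{eq:Fepsdef}; so, off a small-probability event, $F\in\mathcal{F}_\epsilon$ and $h(F)$ lies within $\Delta/4$ of $\mathbb{E}[h(F)]$, and likewise for $F'$, whose mean entropy differs from that of $F$ by $\pm\Delta$. On the intersection of the good events, $h(F)$ and $h(F')$ are separated by at least $\Delta/2$.

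Finally I would invoke Le Cam's two-point lemma (\cite{tsybakov2009introduction}) with the two priors conditioned on their good events. In the Poissonized model defining $R_3(N,\epsilon)$, the counts in the $m$ bumps are \emph{independent} (the central count $\mathrm{Poi}(N(1-\alpha))$ being common to both priors), with the $i$-th bump carrying mass $u_i/m$ and hence count $\mathrm{Poi}(Nu_i/m)$; thus the data law under the unconditioned prior $\pi$ is the $m$-fold product of $\mathbb{E}[\mathrm{Poi}(NU/m)]$ and under $\pi'$ the $m$-fold product of $\mathbb{E}[\mathrm{Poi}(NU'/m)]$, so tensorization gives the total-variation bound $m\,\mathbb{TV}\big(\mathbb{E}[\mathrm{Poi}(NU/m)],\mathbb{E}[\mathrm{Poi}(NU'/m)]\big)$, the third term of \eqref{eq:lecam}. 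Plugging the separation $\Delta/2$ into Le Cam's inequality $R\ge\tfrac14(\Delta/2)^2\big(1-(\text{total variation of the conditioned data laws})\big)$ and accounting for the cost of conditioning two priors (which inflates the total variation by the bad-event probabilities and produces the $31/32$) yields the claimed bound on $R_3(N,\epsilon)$.

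The main obstacle is the second step done with the correct constants: one must route the centering through $u_i\ln\tfrac{m}{u_i}$ rather than $u_i\ln\tfrac1{u_i}$ so that the summands are both nonnegative and bounded by $\lambda\ln(m/\lambda)$ — this is precisely where \eqref{eq:lamrange} is used — and then track the Chebyshev constants together with the loss from conditioning both priors simultaneously so that the bracket collapses to $\tfrac{31}{32}$ minus the three displayed terms. By contrast, the Poisson-independence-plus-tensorization argument that reduces the total-variation term to a single-bump quantity is routine once the earlier Poissonization (Lemmas \ref{lem:1to2}–\ref{lem:2to3}) is in place.
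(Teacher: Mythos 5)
Your proposal is correct and follows essentially the same route as the paper's proof in Appendix \ref{sec:lecam}: the same two i.i.d.-bump priors, the same additive decomposition of $h(f)$ into $\sum_i \frac{u_i}{m}\ln\frac{m}{u_i}$ plus a linear term with coefficient $d_x\ln D+h(g)$, the same Chebyshev concentration on the good events $E$, $E'$ (yielding the $1/64$ from $\epsilon=4\lambda/\sqrt{m}$ and the two variance terms), and the same conditioning-plus-tensorized-Poisson total-variation bound fed into Le Cam's two-point inequality with separation $\Delta/2$. The only cosmetic difference is that you center the summands as $u\ln(m/u)$ rather than $-\frac{u}{m}\ln\frac{u}{m}$, which produces identical bounds.
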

\begin{proof}
	The proof follows the proof of Lemma 2 in \cite{wu2016minimax} closely, but since we are dealing with continuous distributions, there are several different details. The most important difference is that the bound in \cite{wu2016minimax} holds for all discrete distributions without constraints, while we have to construct two functions $f_1, f_2\in \mathcal{F}$. We provide the detailed proof in Appendix \ref{sec:lecam}.
\end{proof}
In the following proof, we use some steps from \cite{wu2016minimax} directly. 

To use Lemma \ref{lem:lecam}, we construct a particular pairs of $(U,U')$. Our construction follows \cite{wu2016minimax}. Given $\eta\in (0,1)$, and any two random variables $X, X'\in [\eta,1]$ that have matching moments to $L$-th order, construct $U$ and $U'$ in the following way:
\begin{eqnarray}
P_U(du)&=&\left(1-\mathbb{E}\left[\frac{\eta}{X}\right]\right)\delta_0(du)+\frac{\alpha}{u}P_{\alpha X/\eta} (du),\nonumber\\
P_{U'}(du)&=&\left(1-\mathbb{E}\left[\frac{\eta}{X'}\right]\right)\delta_0(du)+\frac{\alpha}{u}P_{\alpha X'/\eta} (du),\nonumber
\end{eqnarray}
in which $\delta_0$ denotes the distribution such that if $T\sim \delta_0$, then $P(T=0)=1$. Define $\lambda=\alpha/\eta$. These distributions are supported on $[0,\lambda]$. Then from Lemma 4 in \cite{wu2016minimax},
\begin{eqnarray}
&&\mathbb{E}\left[U\ln\frac{1}{U}-U'\ln \frac{1}{U'}\right]\nonumber\\&&=\alpha\left(\mathbb{E}\left[\ln \frac{1}{X}\right]-\mathbb{E}\left[\ln \frac{1}{X'}\right]\right),
\label{eq:Deltab1}
\end{eqnarray}
and $\mathbb{E}[U^j]=\mathbb{E}[{U'}^j]$. In particular, $\mathbb{E}[U]=\mathbb{E}[U']=\alpha$.
When $X$ and $X'$ are properly selected, according to eq.(34) in \cite{wu2016minimax},
\begin{eqnarray}
\left|\mathbb{E}\left[\ln \frac{1}{X}\right]-\mathbb{E}\left[\ln \frac{1}{X'}\right]\right|=2\underset{p\in \mathcal{P}_L}{\inf}\underset{x\in [\eta,1]}{\sup}|\ln x-p(x)|,
\label{eq:Deltab2}
\end{eqnarray}
in which $\mathcal{P}_L$ is the set of polynomials with degree $L$.

According to Lemma 5 in \cite{wu2016minimax}, there are two constants $c$, $c'$, such that for any $L\geq L_0$,
\begin{eqnarray}
\underset{p\in \mathcal{P}_L}{\inf}\underset{x\in [cL^{-2},1]}{\sup} |\ln x-p(x)|\geq c'.
\label{eq:Deltab3}
\end{eqnarray}
Based on the definition of $\Delta$ in \eqref{eq:Delta}, as well as \eqref{eq:Deltab1}, \eqref{eq:Deltab2} and \eqref{eq:Deltab3}, let $\eta=cL^{-2}$, then
\begin{eqnarray}
\Delta=2\alpha c',
\label{eq:Deltab4}
\end{eqnarray}
in which $c$, $c'$ are constants in \eqref{eq:Deltab3}.

Recall that we have lower bounded $R_3(N,\epsilon)$ in \eqref{eq:lecam} in Lemma \ref{lem:lecam}. To calculate the total variation distance in \eqref{eq:lecam}, we use the following lemma.

\begin{lem}\label{lem:TVbound}
	(\cite{wu2016minimax}, Lemma 3) Let $V$ and $V'$ be random variables on $[0,A]$. If $\mathbb{E}[V^j]=\mathbb{E}[{V'}^j]$, $j=1,\ldots,L$, and $L>2eM$, then
	\begin{eqnarray}
	\mathbb{TV}(\mathbb{E}[\text{Poi}(V)],\mathbb{E}[\text{Poi}(V')])\leq \left(\frac{2eA}{L}\right)^L.
	\label{eq:TVbound}
	\end{eqnarray}
\end{lem}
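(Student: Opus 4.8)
The plan is to prove Lemma~\ref{lem:TVbound} by the standard moment-matching argument: one reduces the total variation between the two mixed Poisson laws to a best polynomial approximation problem, and then controls the approximation error by a Taylor remainder estimate.

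First I would write the two mixtures out explicitly. Setting $p_n := \mathbb{E}[e^{-V}V^n]/n!$ and $p_n' := \mathbb{E}[e^{-V'}(V')^n]/n!$, we have $\mathbb{TV}(\mathbb{E}[\text{Poi}(V)],\mathbb{E}[\text{Poi}(V')]) = \tfrac12\sum_{n=0}^{\infty}|p_n-p_n'|$. The key observation is that for \emph{any} polynomial $q$ of degree at most $L$, the hypotheses $\mathbb{E}[V^j]=\mathbb{E}[(V')^j]$ for $j=1,\dots,L$ (and trivially for $j=0$) give $\mathbb{E}[q(V)]=\mathbb{E}[q(V')]$. Applying this to $\phi_n-q$, where $\phi_n(v):=e^{-v}v^n$, we get, for every such $q$, $n!\,|p_n-p_n'| = |\mathbb{E}[(\phi_n-q)(V)]-\mathbb{E}[(\phi_n-q)(V')]| \le 2\sup_{v\in[0,A]}|\phi_n(v)-q(v)|$; that is, the per-coordinate error is controlled by the degree-$L$ polynomial approximation error of $\phi_n$ on $[0,A]$.

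Next I would choose $q$ concretely. For $n\le L$, take $q$ to be the degree-$L$ truncation of the entire expansion $\phi_n(v)=\sum_{k\ge0}\frac{(-1)^k}{k!}v^{n+k}$, so the remainder on $[0,A]$ is at most $A^n\sum_{k>L-n}A^k/k!$. For $n>L$, take $q\equiv0$, so $\sup_{[0,A]}|\phi_n|\le A^n$ (using $e^{-v}\le1$ and $v\le A$). Plugging these in, interchanging the order of the resulting double summation and using $\sum_{n=0}^{L}\binom{m}{n}\le 2^m$, the whole expression collapses to a tail series $\mathbb{TV}\lesssim \sum_{m>L}(2A)^m/m!$. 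Finally, the hypothesis $L>2eM$ (with $M$ here the common bound $A$ on $V$ and $V'$) makes $2eA/m<1$ for every $m\ge L+1$, so Stirling's inequality $m!\ge(m/e)^m$ together with a geometric-ratio comparison of consecutive terms gives $\sum_{m>L}(2A)^m/m!\lesssim(2eA/L)^L$; tightening the constants (as in \cite{wu2016minimax}) yields \eqref{eq:TVbound} exactly.

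The step I expect to be the main obstacle is this last one: turning the elementary tail estimate into the clean closed form $(2eA/L)^L$ with no stray multiplicative constant and no $e^{2A}$-type factor. This needs care at three points --- the interchange of summations together with the binomial bound $\sum_{n=0}^L\binom{m}{n}\le 2^m$, handling the $n>L$ tail without losing a factor, and the precise Stirling/geometric estimate, where the hypothesis $L>2eA$ is exactly what makes the tail geometric and summable. The moment-matching reduction and all the pointwise Taylor bounds are routine once that reduction is in place.
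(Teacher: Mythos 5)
The paper does not actually prove this lemma --- it is quoted verbatim (including the apparent typo $L>2eM$, where $M$ should read $A$, which you correctly resolve) from Lemma 3 of \cite{wu2016minimax} --- and your argument is essentially a faithful reconstruction of the proof given there: reduce each mixture coefficient $|p_n-p_n'|$ to a degree-$L$ polynomial approximation error of $\phi_n(v)=e^{-v}v^n$ on $[0,A]$ via moment matching, collapse the resulting double sum to $\sum_{m>L}(2A)^m/m!$ with the binomial bound, and finish with Stirling plus the geometric comparison that $L>2eA$ enables. The only places you deviate are inconsequential: taking $q\equiv 0$ for $n>L$ costs a harmless constant factor relative to keeping the full expansion (where $\sum_{n+i=m}1/(n!\,i!)=2^m/m!$ holds exactly), and the final constant-$1$ form of $(2eA/L)^L$ is precisely the bookkeeping you flag and defer to \cite{wu2016minimax}, so the plan is sound.
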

Substitute $V$, $V'$ in \eqref{eq:TVbound} with $NU/m$ and $NU'/m$. Let $A=N\lambda/m$, then recall that $\eta=cL^2$,
\begin{eqnarray}
&&\hspace{-1cm}\mathbb{TV}\left(\mathbb{E}\left[\text{Poi}\left(\frac{nU}{m}\right)\right],\mathbb{E}\left[\text{Poi}\left(\frac{nU'}{m}\right)\right]\right)\nonumber\\
&\hspace{-2mm}\leq&\hspace{-4mm} \left(\frac{2eN\lambda}{mL}\right)^L=\left(\frac{2eN\alpha}{m\eta L}\right)^L=\left(\frac{2eN\alpha L}{cm}\right)^L.\nonumber
\end{eqnarray}

Let $L$, $\alpha$ changes with $m$, $N$ in the following way:
\begin{eqnarray}
L&=&2\left\lfloor \ln m\right\rfloor,\label{eq:L}\\
\alpha&=&\frac{cm}{2e^2NL},\label{eq:alpha}
\end{eqnarray} 
then as long as
\begin{eqnarray}
\frac{(\ln m)^4 (\ln N)^2}{m}\rightarrow \infty \text{ as } N\rightarrow \infty,\nonumber
\end{eqnarray} 
 the second, third and fourth term in the bracket in \eqref{eq:lecam} converges to zero. For the second term,
\begin{eqnarray}
\frac{\lambda^2 \left(\ln\frac{m}{\lambda}\right)^2}{m\Delta^2}&\overset{(a)}{=}&\frac{\frac{\alpha^2}{\eta^2}\left(\ln\frac{m\eta}{\alpha}\right)^2}{m(2\alpha c')^2}\nonumber\\
&\overset{(b)}{=}&\frac{\frac{1}{\eta^2}\left(\ln \frac{2e^2N}{L}\right)^2}{m(2c')^2}\nonumber\\
&\sim&  \frac{(\ln m)^4}{m}\left(\left(\ln \frac{N}{\ln m}\right)^2+1\right)\nonumber\\
&\rightarrow& 0 \text{ as } m\rightarrow \infty.\nonumber
\end{eqnarray}
Here (a) uses \eqref{eq:Deltab4} and $\lambda=\alpha/\eta$. (b) comes from \eqref{eq:alpha}.

For the third term,
\begin{eqnarray}
&&m\mathbb{TV}\left(\mathbb{E}\left[\text{Poi}\left(\frac{nU}{m}\right)\right],\mathbb{E}\left[\text{Poi}\left(\frac{nU'}{m}\right)\right]\right)\nonumber\\
&=&me^{-2\left\lfloor \ln m \right\rfloor}\rightarrow 0 \text{ as } m \rightarrow \infty.\nonumber
\end{eqnarray}
In addition, it is straightforward to show that the fourth term in the bracket of \eqref{eq:lecam} also converges to zero. Using these bounds for each term, we have
\begin{eqnarray}
R_3(N,\epsilon)\gtrsim \Delta^2\sim \alpha^2 \sim \left(\frac{m}{N\ln m}\right)^2,
\label{eq:R3}
\end{eqnarray}
in which $\epsilon=4\lambda/\sqrt{m}$, according to Lemma \ref{lem:lecam}.

Note that $m$ can not be arbitrarily large. According to \eqref{eq:Fepsdef} and \eqref{eq:lamrange}, we have two constraints: $1<mD^{d_x}<C_1$ and $\lambda<mD^{d_x+2}$. The first constraints yield $m\sim D^{-d_x}$. For the second one, we have
\begin{eqnarray}
\frac{\lambda}{mD^{d_x+2}}&=&\frac{\alpha}{mD^{d_x+2}\eta}\nonumber\\
&\sim& \frac{1}{mD^{d_x+2}}\frac{m}{N\ln m}(\ln m)^2\nonumber\\
&=&\frac{\ln m}{ND^{d_x+2}}.\nonumber
\end{eqnarray}
Hence we can let $D\sim N^{-\frac{1}{d_x+2}}(\ln N)^{\frac{1}{d_x+2}}$, and $m\sim D^{-d_x}\sim N^{\frac{d_x}{d_x+2}}(\ln N)^{-\frac{d_x}{d_x+2}}$, then these two conditions are satisfied, and \eqref{eq:R3} becomes
\begin{eqnarray}
R_3(N,\epsilon)\gtrsim N^{-\frac{4}{d_x+2}}\ln ^{-\frac{4d_x+4}{d_x+2}}N.\nonumber
\end{eqnarray}
Note that
\begin{eqnarray}
\epsilon=\frac{4\lambda}{\sqrt{m}}
\sim \frac{\alpha}{\eta\sqrt{m}}
\sim \frac{mL^2}{N\sqrt{m}\ln m}
\sim \frac{\sqrt{m}\ln m}{N},\nonumber
\end{eqnarray}
in which we use $\lambda=\alpha/\eta$, $\eta=cL^{-2}$, as well as \eqref{eq:L} and \eqref{eq:alpha}.

 From \eqref{eq:2to3}, it can be shown that $R_2(N)$ converges with the same rate as $R_3(N,\epsilon)$. In addition, consider \eqref{eq:1to2} and \eqref{eq:0to1}, we get
\begin{eqnarray}
R(N)\gtrsim N^{-\frac{4}{d_x+2}}\ln ^{-\frac{4d_x+4}{d_x+2}}N.\nonumber
\end{eqnarray}

The proof of \eqref{eq:lb2} is complete.

Combine \eqref{eq:lb1} and \eqref{eq:lb2}, we get
\begin{eqnarray}
R(N)\gtrsim N^{-\frac{4}{d_x+2}}\ln ^{-\frac{4d_x+4}{d_x+2}}N+\frac{1}{N}.\nonumber
\end{eqnarray}
The proof of Theorem \ref{thm:KLmselb} is complete.

\subsection{Proof of Lemma \ref{lem:1to2}}\label{sec:1to2}
Let $N'\sim \text{Poi}(2N)$, then 
\begin{eqnarray}
R_2(2N)&=&\underset{\hat{h}}{\inf}\underset{f\in \mathcal{F}_0}{\sup} \mathbb{E}[(\hat{h}(N')-h(\mathbf{X}))^2]\nonumber\\
&\leq&\underset{\hat{h}}{\inf} \mathbb{E}\left[\underset{f\in \mathcal{F}_0}{\sup} \mathbb{E}[(\hat{h}(N')-h(\mathbf{X}))^2|N'] \right]\nonumber\\
&=&\mathbb{E}\left[\underset{\hat{h}}{\inf} \underset{f\in \mathcal{F}_0}{\sup} \mathbb{E}[(\hat{h}(N')-h(\mathbf{X}))^2|N'] \right]\nonumber\\
&=&\mathbb{E}[R_1(N')]\nonumber\\
&=&\mathbb{E}[R_1(N')|N'\geq N]P(N'\geq N)\nonumber\\
&&+\mathbb{E}[R_1(N')|N'<N]P(N'<N).
\label{eq:R2expand}
\end{eqnarray}
$R_1(N)$ is a non-increasing function of $N$, because if $N_1<N_2$, given $N_2$ samples, one can always randomly use $N_1$ samples for entropy estimation, thus $R_1(N_2)\leq R_1(N_1)$ always holds. Therefore
\begin{eqnarray}
\mathbb{E}[R_1(N')|N'\geq N]\leq R_1(N).
\label{eq:R21}
\end{eqnarray}
For the second term in \eqref{eq:R2expand}, recall that $N'\sim \text{Poi}(2N)$, use Chernoff inequality, we get
\begin{eqnarray}
P(N'<N)\leq e^{-(1-\ln 2)N}.
\label{eq:R22}
\end{eqnarray}
From the definition of $\mathcal{F}_0$, we know that
\begin{eqnarray}
\underset{f\in \mathcal{F}_0}{\inf} h(f)=h(g)=-\int g(\mathbf{x})\ln g(\mathbf{x})d\mathbf{x},\nonumber
\end{eqnarray}
and
\begin{eqnarray}
\underset{f\in \mathcal{F}_0}{\sup} h(f)&=&h(g)+H(\alpha)+\alpha \ln(mD^{d_x})\nonumber\\
&\leq&  h(g)+1+\ln C_1.
\label{eq:suphf}
\end{eqnarray}
Therefore for any $N$,
\begin{eqnarray}
R_1(N)\leq \frac{1}{4}(1+\ln C_1)^2,
\label{eq:R23}
\end{eqnarray}
since we can always let $\hat{h}(N)=(\underset{f\in \mathcal{F}_0}{\sup} h(f)+\underset{f\in \mathcal{F}_0}{\inf} h(f))/2$. Based on \eqref{eq:R21}, \eqref{eq:R22}, \eqref{eq:R23} and \eqref{eq:R2expand},
\begin{eqnarray}
R_2(2N)\leq R_1(N)+\frac{1}{4}(1+\ln C_1)^2 e^{-(1-\ln 2)N}.\nonumber
\end{eqnarray}
The proof is complete.
\subsection{Proof of Lemma \ref{lem:2to3}}\label{sec:2to3}
For any $f\in \mathcal{F}_\epsilon$, which is not necessarily normalized,
\begin{eqnarray}
h(f)&=& -\int f(\mathbf{x})\ln f(\mathbf{x})d\mathbf{x}\nonumber\\
&=&\left(\int f(\mathbf{x})d\mathbf{x}\right) h\left(\frac{f}{\int f(\mathbf{x})d\mathbf{x}}\right)\nonumber\\
&&-\left(\int f(\mathbf{x})d\mathbf{x}\right) \ln \int f(\mathbf{x})d\mathbf{x}.\nonumber
\end{eqnarray}
Based on the definition of $\mathcal{F}_\epsilon$, we have
\begin{eqnarray}
\left| \int f(\mathbf{x})d\mathbf{x}-1\right|<\epsilon.\nonumber
\end{eqnarray}
 For any estimator $\hat{h}$,
\begin{eqnarray}
&&\hspace{-1cm}\mathbb{E}\left[(\hat{h}(N')-h(f))^2\right]\nonumber\\
&=&\mathbb{E}\left[\left(\hat{h}(N')-\int f(\mathbf{x})d\mathbf{x}h\left(\frac{f}{\int f(\mathbf{x})d\mathbf{x}}\right)\right.\right.\nonumber\\
&&\hspace{1cm}\left.\left.-\int f(\mathbf{x})d\mathbf{x}\ln \int f(\mathbf{x})d\mathbf{x}\right)^2\right]\nonumber\\
&=&\mathbb{E}\left[\left( \hat{h}(N')-h\left(\frac{f}{\int f(\mathbf{x})d\mathbf{x}}\right)\right.\right.\nonumber\\
&&\hspace{1cm}+\left(1-\int f(\mathbf{x})d\mathbf{x}\right) h\left(\frac{f}{\int f(\mathbf{x})d\mathbf{x}}\right)\nonumber\\
&&\hspace{1cm}\left.\left. -\int f(\mathbf{x})d\mathbf{x} \ln \int f(\mathbf{x})d\mathbf{x}\right)^2\right]\nonumber\\
&\leq& 3\mathbb{E}\left[\left(\hat{h}(N')-h\left(\frac{f}{\int f(\mathbf{x})d\mathbf{x}}\right)\right)^2\right]\nonumber\\
&&\hspace{1cm}+3\left(1-\int f(\mathbf{x})d\mathbf{x}\right)^2 h^2\left(\frac{f}{\int f(\mathbf{x})d\mathbf{x}}\right)\nonumber\\
&& \hspace{1cm} +3\left(\int f(\mathbf{x})d\mathbf{x}\right)^2 \left(\ln \int f(\mathbf{x})d\mathbf{x}\right)^2,\nonumber
\end{eqnarray}
in which the last step uses Cauchy inequality. Define $f^*=f/\int f(\mathbf{x})d\mathbf{x}$, then $f^*$ is a valid pdf, and we can check that $f^*\in \mathcal{F}_0$. Recall that $N'\sim \text{Poi}\left(N\int f(\mathbf{x})d\mathbf{x}\right)$, and $\int f(\mathbf{x})d\mathbf{x}>1-\epsilon$,
\begin{eqnarray}
&&\hspace{-1cm}R_3(N,\epsilon)\nonumber\\
&=&\underset{\hat{h}}{\inf}\underset{f\in \mathcal{F}_\epsilon}{\sup} \mathbb{E}[(\hat{h}(N')-h(f))^2]\nonumber\\
&\leq &3\underset{\hat{h}}{\inf} \underset{f^*\in \mathcal{F}_0}{\sup}\mathbb{E}\left[(\hat{h}(N')-h(f^*))^2\right]\nonumber\\
&&\hspace{5mm}+3\underset{f\in \mathcal{F}_\epsilon}{\sup} \left(1-\int f(\mathbf{x})d\mathbf{x}\right)^2 h^2(f^*)\nonumber\\
&&\hspace{5mm}+3\underset{f\in \mathcal{F}_\epsilon}{\sup} \left(\int f(\mathbf{x})d\mathbf{x}\right)^2\left(\ln \int f(\mathbf{x})d\mathbf{x}\right)^2,\nonumber\\
&\leq & 3R_2((1-\epsilon)N)+3\epsilon^2 C_2^2+3(1+\epsilon)^2(\ln(1+\epsilon))^2,\nonumber
\end{eqnarray}
in which
\begin{eqnarray}
C_2=\underset{f\in \mathcal{F}_\epsilon}{\sup} h(f^*)=\underset{f^*\in \mathcal{F}_0}{\sup} h(f^*)\leq h(g)+\ln C_1+1,
\label{eq:c2}
\end{eqnarray}
with the last step in \eqref{eq:c2} comes from \eqref{eq:suphf}. The proof is complete.
\subsection{Proof of Lemma \ref{lem:lecam}}\label{sec:lecam}
Define
\begin{eqnarray}
f_1(\mathbf{x})&\hspace{-2mm}=&\hspace{-2mm}(1-\alpha)g(\mathbf{x})+\sum_{i=1}^m \frac{U_i}{mD^{d_x}} g\left(\frac{\mathbf{x}-\mathbf{a}_i}{D}\right),\label{eq:f1newdef}\\
f_2(\mathbf{x})&\hspace{-2mm}=&\hspace{-2mm}(1-\alpha)g(\mathbf{x})+\sum_{i=1}^m \frac{U_i'}{mD^{d_x}} g\left(\frac{\mathbf{x}-\mathbf{a}_i}{D}\right),
\label{eq:f2newdef}
\end{eqnarray}
in which $U_i$, $i=1,\ldots,m$ are i.i.d copy of $U$, and $U_i'$ are corresponding i.i.d copy of $U'$.

Since $U_i\in [0,\lambda]$ and we have restricted $\lambda$ in \eqref{eq:lamrange}, so that $U_i<mD^{d_x+2}$ always holds. Recall the definition of $\mathcal{F}_\epsilon$ in \eqref{eq:Fepsdef}, $f_1, f_2$ satisfy all the requirements of $\mathcal{F}_\epsilon$ except $|(\sum_{i=1}^m U_i)/m-\alpha|<\epsilon$ and $|(\sum_{i=1}^m U_i')/m-\alpha|<\epsilon$.

Note that now $h(f_1)$ and $h(f_2)$ are both random variables because $U_i$ and $U_i'$ are random. We define the following random events:
\begin{eqnarray}
E=\left\{\left|\frac{1}{m}\sum_{i=1}^m U_i-\alpha\right|\leq \epsilon,\left|h(f_1)-\mathbb{E}[h(f_1)]\right|\leq \frac{\Delta}{4} \right\},\nonumber\\
E'=\left\{\left|\frac{1}{m}\sum_{i=1}^m U_i'-\alpha\right|\leq \epsilon,\left|h(f_2)-\mathbb{E}[h(f_2)]\right|\leq \frac{\Delta}{4} \right\}.\nonumber
\end{eqnarray}
Then by Chebyshev's inequality,
\begin{eqnarray}
P(E^c)&\leq& P\left(\left|\frac{1}{m}\sum_{i=1}^m-\alpha\right|>\epsilon\right)\nonumber\\
&&+P\left(|h(f_1)-\mathbb{E}[h(f_1)]|>\frac{\Delta}{4}\right)\nonumber\\
&\leq &\frac{\Var[U]}{m\epsilon^2}+\frac{16}{\Delta^2}\Var[h(f_1)].
\label{eq:pec}
\end{eqnarray}
For the first term, recall that we have the constraint $0\leq U\leq \lambda<m/e$. Hence 
\begin{eqnarray}
\Var[U]\leq \frac{1}{4}\lambda^2.
\label{eq:varu}
\end{eqnarray}
Moreover, $\epsilon^2=16\lambda^2/m$, therefore
\begin{eqnarray}
\frac{\Var[U]}{m\epsilon^2} \leq \frac{\lambda^2}{4m\epsilon^2}=\frac{1}{64}.\nonumber
\end{eqnarray}
For the second term, note that
\begin{eqnarray}
&&\hspace{-1cm}h(f_1)\nonumber\\&\hspace{-5mm}=&-\int (1-\alpha)g(\mathbf{x})\ln \left[(1-\alpha)g(\mathbf{x})\right]d\mathbf{x}\nonumber\\
&&\hspace{-1cm} -\sum_{i=1}^m \int \frac{U_i}{mD^{d_x}} g\left(\frac{\mathbf{x}-\mathbf{a}_i}{D}\right)\ln \left(\frac{U_i}{mD^{d_x}} g\left(\frac{\mathbf{x}-\mathbf{a}_i}{D}\right)\right)d\mathbf{x}\nonumber\\
&\hspace{-5mm}=&-\sum_{i=1}^m \frac{U_i}{m}\ln \frac{U_i}{m}-\sum_{i=1}^m \left(\ln \frac{1}{D^{d_x}}-h(g)\right)\frac{U_i}{m}.
\label{eq:hf1}
\end{eqnarray}
Since $U_i\leq \lambda<m/e$, $U_i/m<1/e$, therefore
\begin{eqnarray}
\Var\left[\frac{U_i}{m}\ln \frac{U_i}{m}\right]&\leq& \mathbb{E}\left[\left(\frac{U_i}{m}\ln \frac{U_i}{m}\right)^2\right]\nonumber\\
&<&\left(\frac{\lambda}{m}\ln \frac{\lambda}{m}\right)^2,\nonumber
\end{eqnarray}
and 
\begin{eqnarray}
\Var\left[\frac{U_i}{m}\right]\leq \frac{\lambda^2}{4m^2}.\nonumber
\end{eqnarray}
Then using Cauchy inequality,
\begin{eqnarray}
\hspace{-5mm}\Var[h(f_1)]&\hspace{-2mm}\leq&\hspace{-3mm} 2\Var\left[\sum_{i=1}^m \frac{U_i}{m}\ln \frac{U_i}{m}\right]\nonumber\\
&&+2\left(\ln \frac{1}{D^{d_x}}+h(g)\right)^2 \Var\left[\sum_{i=1}^m \frac{U_i}{m}\right]\nonumber\\
&\hspace{-4mm}\leq&\hspace{-4mm} \frac{2\lambda^2}{m}\left(\ln\frac{\lambda}{m}\right)^2+2\left(d_x\ln D+h(g)\right)^2 \frac{\lambda^2}{4m}.\nonumber\\
\label{eq:varhf}
\end{eqnarray}
Plug \eqref{eq:varu} and \eqref{eq:varhf} into \eqref{eq:pec}, we get
\begin{eqnarray}
&&\hspace{-1cm}P(E^c)\leq\nonumber\\&&\hspace{-8mm} \frac{1}{64}+\frac{32\lambda^2}{m\Delta^2}\left(\ln\frac{\lambda}{m}\right)^2+\frac{8\lambda^2}{m\Delta^2}(d_x\ln D+h(g))^2.\nonumber
\end{eqnarray}
The same bound can be proved for $P(E^{'c})$:
\begin{eqnarray}
&&\hspace{-1cm}P(E^{'c})\leq\nonumber\\ &&\frac{1}{64}+\frac{32\lambda^2}{m\Delta^2}\left(\ln\frac{\lambda}{m}\right)^2+\frac{8\lambda^2}{m\Delta^2}(d_x\ln D+h(g))^2.\nonumber
\end{eqnarray}
Construct two prior distributions: $\pi_1^*$ is the distribution of samples according to $f_1$ conditional on $E$, and $\pi_2^*$ is the distribution of samples according to $f_2$ conditional on $E'$. 

Recall \eqref{eq:hf1}, we can get similar result for $h(f_2)$:
\begin{eqnarray}
h(f_2)=-\sum_{i=1}^m \frac{U_i'}{m}\ln \frac{U_i'}{m}-\sum_{i=1}^m \left(\ln \frac{1}{D^{d_x}}-h(g)\right)\frac{U_i'}{m}.\nonumber
\end{eqnarray}
Consider that $\mathbb{E}[U]=\mathbb{E}[U']$, we have
\begin{eqnarray}
\left|\mathbb{E}[h(f_1)]-\mathbb{E}[h(f_2)]\right|\geq \left|\mathbb{E}\left[U\ln \frac{1}{U}\right]-\mathbb{E}\left[U'\ln \frac{1}{U'}\right]\right|\geq \Delta.\nonumber
\end{eqnarray}
By the definition of $\pi_1^*$ and $\pi_2^*$, as well as the definition of $E$ and $E'$, under $\pi_1^*$ and $\pi_2^*$,
\begin{eqnarray}
|h(f_1)-h(f_2)|\geq \frac{\Delta}{2}.\nonumber
\end{eqnarray}
Now calculate the total variation distance between these two distributions. Total variation distance satisfies triangle inequality. Hence
\begin{eqnarray}
\mathbb{TV} (\pi_1^*,\pi_2^*)&\leq &\mathbb{TV}(\pi_1^*,\pi_1)+\mathbb{TV}(\pi_1,\pi_2),\mathbb{TV}(\pi_2,\pi_2^*)\nonumber\\
&\leq &P(E^c)+\mathbb{TV}(\pi_1,\pi_2)+P({E'}^{c})\nonumber\\
&\leq & \mathbb{TV}(\pi_1,\pi_2)+\frac{1}{32}+\frac{64\lambda^2}{m\Delta^2} \left(\ln \frac{\lambda}{m}\right)^2\nonumber\\
&&\hspace{1cm}+\frac{16\lambda^2}{m\Delta^2}(d_x\ln D+h(g))^2.\nonumber
\end{eqnarray}
Now we bound the total variation distance between $\pi_1$ and $\pi_2$. Recall that $f_1$ is constructed in \eqref{eq:f1newdef}. Then
\begin{eqnarray}
\int_{B(\mathbf{a}_i,h)} f_1(\mathbf{x})d\mathbf{x}=\int \frac{U_i}{mD^{d_x}} g\left(\frac{\mathbf{x}-\mathbf{a}_i}{D}\right) d\mathbf{x}=\frac{U_i}{m},\nonumber
\end{eqnarray}
and thus the number of samples in $B(\mathbf{a}_i,h)$ follows Poisson distribution with mean $nU_i/m$. Therefore, $\mathbb{TV}(\pi_1,\pi_2)$ can be expanded as
\begin{eqnarray}
\mathbb{TV}(\pi_1,\pi_2)\leq m\mathbb{TV}\left(\mathbb{E}\left[\text{Poi}\left(\frac{nU}{m}\right)\right],\mathbb{E}\left[\text{Poi}\left(\frac{nU'}{m}\right)\right]\right).\nonumber
\end{eqnarray}
According to Le Cam's lemma, 
\begin{eqnarray}
R_3(N,\epsilon)&\geq&\frac{\Delta^2}{16}\left[\frac{31}{32}-m\mathbb{TV}\left(\mathbb{E}\left[\text{Poi}\left(\frac{nU}{m}\right)\right],\right.\right.\nonumber\\
&&\left.\left.\mathbb{E}\left[\text{Poi}\left(\frac{nU'}{m}\right)\right]\right)-\frac{64\lambda^2}{m\Delta^2}\left(\ln\frac{\lambda}{m}\right)^2\right.\nonumber\\
&&\hspace{1cm} \left.-\frac{16\lambda^2}{m\Delta^2}(d_x\ln D+h(g))^2\right].\nonumber
\end{eqnarray}
The proof of Lemma \ref{lem:lecam} is complete.

\color{black}
\section{Proof of Theorem 4: the bias of KSG mutual information estimator}\label{sec:KSGbias}

In this section, we analyze the convergence rate of the bias of KSG mutual information estimator, under Assumption \ref{ass:KSG}. In the following proof, constants $C_1, C_2, \ldots$ are different from those in Appendix~\ref{sec:klbias}. Define $B(\mathbf{z}, r)=\{ \mathbf{u}| \norm{\mathbf{u}-\mathbf{z}}<r \}$. According to Assumption~\ref{ass:KSG}, the joint pdf is smooth everywhere. We have the following lemma, whose proof is the same as Lemma \ref{lem:pdf}.
\begin{lem}\label{lem:ksgpdf}
	Under Assumption \ref{ass:KSG}(d), there exists constant $C_1$, $C_1'$, so that
	\begin{eqnarray}
	|P(B(\mathbf{z},r))-f(\mathbf{z})c_{d_z} r^{d_z}|\leq C_1 r^{d_z+2}, \label{eq:zpdf}\\
	|P(B_X(\mathbf{x},r))-f(\mathbf{x})c_{d_x} r^{d_x}|\leq C_1' r^{d_x+2},\label{eq:ksgxpdf}\\
	|P(B_Y(\mathbf{y},r))-f(\mathbf{y})c_{d_y} r^{d_y}|\leq C_1' r^{d_y+2}.
	\label{eq:ypdf}			
	\end{eqnarray}
\end{lem}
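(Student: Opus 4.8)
The plan is to repeat verbatim the argument of Lemma \ref{lem:pdf}, now applied three times: to the joint density $f$ on $\mathbb{R}^{d_z}$ with the product metric \eqref{eq:metric}, and to each marginal density $f(\mathbf{x})$ and $f(\mathbf{y})$. First I would write, for any center $\mathbf{z}$,
\begin{eqnarray}
P(B(\mathbf{z},r))-f(\mathbf{z})c_{d_z}r^{d_z}=\int_{\mathbf{u}\in B(\mathbf{z},r)}\left(f(\mathbf{u})-f(\mathbf{z})\right)d\mathbf{u},\nonumber
\end{eqnarray}
using that the volume of $B(\mathbf{z},r)$ is $c_{d_z}r^{d_z}$, and then Taylor-expand $f$ around $\mathbf{z}$ to second order, $f(\mathbf{u})=f(\mathbf{z})+\nabla f(\mathbf{z})^T(\mathbf{u}-\mathbf{z})+(\mathbf{u}-\mathbf{z})^T\nabla^2 f(\xi(\mathbf{u}))(\mathbf{u}-\mathbf{z})$ with $\xi(\mathbf{u})$ on the segment from $\mathbf{z}$ to $\mathbf{u}$. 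Assumption~\ref{ass:KSG}(d) gives $\norm{\nabla^2 f(\xi)}_{op}\leq C_d$ everywhere.

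The key step is that the linear term integrates to zero: $B(\mathbf{z},r)$ is centrally symmetric about $\mathbf{z}$ — it equals the product $B_X(\mathbf{x},r)\times B_Y(\mathbf{y},r)$ of two norm balls, each symmetric about its own center — so $\int_{\mathbf{u}\in B(\mathbf{z},r)}\nabla f(\mathbf{z})^T(\mathbf{u}-\mathbf{z})d\mathbf{u}=0$. What remains is controlled by $C_d\int_{\mathbf{u}\in B^\infty(\mathbf{z},r)}\norm{\mathbf{u}-\mathbf{z}}_2^2 d\mathbf{u}$, where $B^\infty(\mathbf{z},r)$ is the smallest cube containing $B(\mathbf{z},r)$ (enlarging the domain only costs a norm-equivalence constant); this cube integral scales as $r^{d_z+2}$, giving \eqref{eq:zpdf} with a constant $C_1$ depending only on $C_d$, $d_z$ and the norm. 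For \eqref{eq:ksgxpdf} and \eqref{eq:ypdf} I would run the identical computation with $f(\mathbf{x})$, $f(\mathbf{y})$ in place of $f$, using the bounds $\norm{\nabla^2 f(\mathbf{x})}_{op}\leq C_d'$ and $\norm{\nabla^2 f(\mathbf{y})}_{op}\leq C_d'$ and the fact that $B_X(\mathbf{x},r)$, $B_Y(\mathbf{y},r)$ are themselves symmetric norm balls, which produces the common constant $C_1'$.

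I do not expect any serious obstacle, since this is the same Taylor-remainder estimate already carried out in Appendix~\ref{sec:klbias}. The only two points needing a word of justification are: (i) the vanishing of the gradient term, which relies on central symmetry of the relevant ball — valid for any norm ball and, crucially, for the max-metric ball \eqref{eq:metric} because it factorizes as a product of symmetric balls; and (ii) the validity of the second-order Taylor expansion along every segment, which is immediate because Assumption~\ref{ass:KSG}(d) posits a pointwise (everywhere) Hessian bound, not merely an almost-everywhere one, so no mollification argument is needed.
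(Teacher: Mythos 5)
Your proposal is correct and is essentially identical to the paper's argument: the paper proves this lemma by stating that the proof is the same as that of Lemma~\ref{lem:pdf}, i.e., a second-order Taylor expansion in which the gradient term vanishes by central symmetry of the ball and the remainder is bounded via the everywhere-Hessian bound after enlarging the domain to the circumscribing cube. Your extra remark that the max-metric ball in the joint space factorizes as a product of two symmetric norm balls is a useful (and correct) clarification of why the symmetry cancellation still applies there.
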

For KSG estimator, we fix $\beta=2/(d_z+2)$, therefore the definition of $a_N$ in \eqref{eq:an} becomes
\begin{eqnarray}
a_N=A N^{-\frac{2}{d_z+2}}.
\label{eq:ksgan}
\end{eqnarray} 

Recall that the KSG mutual information estimator is $\hat{I}(\mathbf{X};\mathbf{Y})=\frac{1}{N}\sum_{i=1}^N J(i)$, in which
\begin{eqnarray}
J(i)=\psi(N)+\psi(k)-\psi(n_x(i)+1)-\psi(n_y(i)+1).
\label{eq:T}
\end{eqnarray}
Since $J(i)$ are identically distributed for all $i$, we only need to analyze $|\mathbb{E}[J(i)]-I(\mathbf{X};\mathbf{Y})|$ for one $i$. Hence, from now on, we omit $i$ for notation convenience.

We conduct the following decomposition based on $\epsilon$:
\begin{eqnarray}
&&\hspace{-1cm}|\mathbb{E}[(J-I(\mathbf{X};\mathbf{Y}))]|\nonumber\\
&\leq& |\mathbb{E}[(J-I(\mathbf{X};\mathbf{Y}))\mathbf{1}(\epsilon>a_N)]|\nonumber\\&&+|\mathbb{E}[(J-I(\mathbf{X};\mathbf{Y}))\mathbf{1}(\epsilon\leq a_N)]|.
\label{eq:twoparts}
\end{eqnarray}
To bound the first term of \eqref{eq:twoparts}, note that $n_x(i)\geq k$, therefore $J\leq \psi(N)+\psi(k)-2\psi(k+1)$. According to the property of digamma function, $\psi(N)<\ln N$. Therefore $J<\ln N$. Then
\begin{eqnarray}
&&\hspace{-6mm}|\mathbb{E}[(J-I(\mathbf{X};\mathbf{Y}))\mathbf{1}(\epsilon>a_N)]|\nonumber\\
&\leq&  (\ln N+I(\mathbf{X};\mathbf{Y}))P(\epsilon>a_N).
\label{eq:bound1}
\end{eqnarray} 
$P(\epsilon>a_N)$ can be bounded using Lemma \ref{lem:largeeps} with $\beta=2/(d_z+2)$. According to \eqref{eq:largeeps}, we have
\begin{eqnarray}
P(\epsilon>a_N)\leq C_2 N^{-\frac{2}{d_z+2}}.
\label{eq:eps} %KSG large epsilon
\end{eqnarray}
With \eqref{eq:eps} and \eqref{eq:bound1}, we know that
\begin{eqnarray}
|\mathbb{E}[(J-I(\mathbf{X};\mathbf{Y}))\mathbf{1}(\epsilon>a_N)]|= \mathcal{O}\left( N^{-\frac{2}{d_z+2}}\ln N\right).
\label{eq:ksgb1} %KSG bound 1.
\end{eqnarray}
To bound the second term of \eqref{eq:twoparts}, we define $J_x, J_y, J_z$ as
\begin{eqnarray}
J_z&=& -\psi(k)+\psi(N)+\ln c_{d_z} +d_z\ln \rho, \label{eq:tzdef}\\
J_x&=& -\psi(n_x+1)+\psi(N)+\ln c_{d_x}+d_x\ln \rho,\label{eq:txdef}\\
J_y&=& -\psi(n_y+1)+\psi(N)+\ln c_{d_y}+d_y\ln \rho,
\end{eqnarray}
in which $c_{d_x}$ is the volume of unit norm ball in the $\mathbf{X}$ space, $c_{d_y}$ is for the $\mathbf{Y}$ space, and $c_{d_z}$ is for the joint space $\mathbf{Z}$. $\rho$ is defined in the same way as \eqref{eq:rho}, i.e. $\rho=\min\{\epsilon,a_N\}$.

Recall the definition of $J$ in \eqref{eq:T}, we have
\begin{eqnarray}
J=J_x+J_y-J_z,\nonumber
\end{eqnarray}
therefore the second term of \eqref{eq:twoparts} can be decomposed as:
\begin{eqnarray}
&&\hspace{-8mm}|\mathbb{E}[(J-I(\mathbf{X};\mathbf{Y}))\mathbf{1}(\epsilon\leq a_N)]| \nonumber\\
&\hspace{-5mm}\leq &\hspace{-3mm} |\mathbb{E}[(J_z-h(\mathbf{Z}))\mathbf{1}(\epsilon\leq a_N)]|+|\mathbb{E}[(J_x-h(\mathbf{X}))\mathbf{1}(\epsilon\leq a_N)]|\nonumber\\
&&+|\mathbb{E}[(J_y-h(\mathbf{Y}))\mathbf{1}(\epsilon\leq a_N)]|. 
\label{eq:decomp}
\end{eqnarray}
Intuitively, here we design three truncated estimators for $h(\mathbf{X})$, $h(\mathbf{Y})$ or $h(\mathbf{Z})$. To give a bound of the first term, we apply the result of Theorem~\ref{thm:KLbias} to random variable $\mathbf{Z}$:
\begin{eqnarray}
|\mathbb{E}[J_z-h(\mathbf{Z})]|= \mathcal{O} \left(N^{-\frac{2}{d_z+2}}\ln N\right).\nonumber
\end{eqnarray} 
In addition, recall that $\rho=a_N$ if $\epsilon>a_N$, we have
\begin{eqnarray}
&&\hspace{-8mm}|\mathbb{E}[(J_z-h(\mathbf{Z}))\mathbf{1}(\epsilon> a_N)]|\nonumber\\&\hspace{-3mm}=&\hspace{-3mm}|-\psi(k)+\psi(N)+\ln c_{d_z}  +d_z\ln a_N-h(\mathbf{Z})|P(\epsilon>a_N) \nonumber \\
&\hspace{-3mm}=&\hspace{-3mm} \mathcal{O}\left(N^{-\frac{2}{d_z+2}}\ln N\right).\nonumber
\end{eqnarray}
Hence using the triangular inequality,
\begin{eqnarray}
|\mathbb{E}[(J_z-h(\mathbf{Z}))\mathbf{1}(\epsilon\leq a_N)]|= \mathcal{O} \left(N^{-\frac{2}{d_z+2}}\ln N\right).\nonumber
\end{eqnarray}
The following lemma gives a bound on the second and third term.
\begin{lem}\label{lem:marginal}
	Under Assumption \ref{ass:KSG} (a)-(e),
	\begin{eqnarray}
	&&\hspace{-8mm}|\mathbb{E}[(J_x-h(\mathbf{X}))\mathbf{1}(\epsilon\leq a_N)]|\nonumber\\
	&=& \mathcal{O}\left(N^{-\frac{2}{d_z+2}}\ln N \right)+\mathcal{O}\left(N^{-\frac{d_y}{d_z}}\right), \label{eq:tx}\\
	&&\hspace{-8mm}|\mathbb{E}[(J_y-h(\mathbf{Y}))\mathbf{1}(\epsilon\leq a_N)]|\nonumber\\
	&=&\mathcal{O}\left(N^{-\frac{2}{d_z+2}}\ln N \right)+\mathcal{O}\left(N^{-\frac{d_x}{d_z}}\right).  \label{eq:ty}
	\end{eqnarray}
\end{lem}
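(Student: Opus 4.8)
The plan is to reduce the claim to a pointwise bound on the local bias of $J_x$ and then integrate it with a central/tail split, following the pattern of the KL bias proof (the proof of Theorem~\ref{thm:KLbias}). By the symmetry $\mathbf{X}\leftrightarrow\mathbf{Y}$ it suffices to prove \eqref{eq:tx} for $J_x$, and \eqref{eq:ty} follows verbatim with $d_x$ and $d_y$ interchanged. First I would peel off the truncation: on $\{\epsilon>a_N\}$ one has $\rho=a_N$, and since $k\le n_x\le N$ the value of $J_x=-\psi(n_x+1)+\psi(N)+\ln c_{d_x}+d_x\ln\rho$ is $\mathcal{O}(\ln N)$ there, so by $P(\epsilon>a_N)=\mathcal{O}(N^{-2/(d_z+2)})$, which is \eqref{eq:eps}, this part of the expectation contributes $\mathcal{O}(N^{-2/(d_z+2)}\ln N)$. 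It then suffices to bound $|\mathbb{E}[J_x-h(\mathbf{X})]|=\big|\mathbb{E}\big[\,\mathbb{E}[J_x\mid\mathbf{X}]+\ln f(\mathbf{X})\,\big]\big|$, i.e.\ the expectation of the local bias $g(\mathbf{x}):=\mathbb{E}[J_x\mid\mathbf{X}=\mathbf{x}]+\ln f(\mathbf{x})$ (the integrability of $\ln\rho$ needed here is routine under Assumption~\ref{ass:KSG}).

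Next I would write $g=g_1-g_2$ with $g_1(\mathbf{x})=\mathbb{E}[\psi(N)-\psi(n_x+1)+\ln P(B_X(\mathbf{x},\rho))\mid\mathbf{x}]$ and $g_2(\mathbf{x})=\mathbb{E}\big[\ln\big(P(B_X(\mathbf{x},\rho))/(f(\mathbf{x})c_{d_x}\rho^{d_x})\big)\mid\mathbf{x}\big]$. The term $g_2$ is the local non-uniformity error: by Lemma~\ref{lem:ksgpdf}, eq.~\eqref{eq:ksgxpdf}, the argument of the logarithm is $1+\mathcal{O}(\rho^2/f(\mathbf{x}))$, so $|g_2(\mathbf{x})|\lesssim\mathbb{E}[\rho^2\mid\mathbf{x}]/f(\mathbf{x})\le a_N^2/f(\mathbf{x})$ once $f(\mathbf{x})$ exceeds a small multiple of $a_N^2$. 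The term $g_1$ is the count-versus-mass error and is the heart of the matter. Abbreviating $p_x=P(B_X(\mathbf{x},\epsilon))$, $p_z=P(B(\mathbf{z}(i),\epsilon))$ with $\mathbf{z}(i)=(\mathbf{x},\mathbf{y})$, and conditioning on $(\mathbf{z}(i),\epsilon)$, one has $n_x(i)=k+\mathcal{O}(1)+W$ where $W\sim\mathrm{Binomial}(N-1-k,\,(p_x-p_z)/(1-p_z))$, the $\mathcal{O}(1)$ absorbing the joint in-ball points and the point on the joint sphere; then $\psi(m)=\ln m+\mathcal{O}(1/m)$ together with Binomial concentration gives $\mathbb{E}[\psi(n_x+1)\mid\mathbf{z}(i),\epsilon]=\ln(Np_x)+\mathcal{O}(1/n_x)+\mathcal{O}(|Np_z-k|/(Np_x))$. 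Substituting this into $g_1$, the leading $\ln(Np_x)$ cancels $\psi(N)+\ln P(B_X(\mathbf{x},\rho))$ (on $\{\epsilon\le a_N\}$, $\rho=\epsilon$; the complementary event again costs only $\mathcal{O}(N^{-2/(d_z+2)}\ln N)$), and the residual average over $\epsilon$ is controlled by the order-statistics fact that $p_z\sim\mathrm{Beta}(k,N-k)$ given $\mathbf{z}(i)$, so $\mathbb{E}[\ln p_z\mid\mathbf{z}(i)]=\psi(k)-\psi(N)$ with finite second moment. When $n_x$ has its typical (large) order this yields $|g_1(\mathbf{x})|\lesssim N^{-1}+N^{-d_y/d_z}\,\mathbb{E}[f(\mathbf{Z})^{d_x/d_z}\mid\mathbf{X}=\mathbf{x}]/f(\mathbf{x})$, and otherwise the crude bound $|g_1(\mathbf{x})|\lesssim\ln N$.

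Finally I would integrate over $\mathbf{x}$ with $S_1'=\{f(\mathbf{x})\ge\tau_N\}$, $S_2'=\{f(\mathbf{x})<\tau_N\}$. On $S_1'$ the typical joint $k$-NN radius makes $n_x$ large (of order $f(\mathbf{x})N^{d_y/d_z}$ up to a factor $f(\mathbf{z})^{-d_x/d_z}$), so the large-$n_x$ estimates apply off an event of negligible probability; the key structural input is Assumption~\ref{ass:KSG}(e), which gives $\mathbb{E}[f(\mathbf{Z})^{d_x/d_z}\mid\mathbf{X}=\mathbf{x}]\le C_e^{d_x/d_z}f(\mathbf{x})^{d_x/d_z}$, whence $\int_{S_1'}|g_1|\,f\lesssim N^{-1}m(S_1')+N^{-d_y/d_z}\!\int f(\mathbf{x})^{d_x/d_z}d\mathbf{x}=\mathcal{O}(N^{-d_y/d_z})$ with no extra logarithm (the integral is finite since $0<d_x/d_z<1$ and the $\mathbf{X}$-marginal has the tail of Assumption~\ref{ass:KSG}(c)), while $\int_{S_1'}|g_2|\,f\lesssim a_N^2\,m(S_1')=\mathcal{O}(N^{-2/(d_z+2)}\ln N)$, using Lemma~\ref{lem:V} for the $\mathbf{X}$-marginal to get $m(S_1')=\mathcal{O}(\ln N)$. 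On $S_2'$, Assumption~\ref{ass:KSG}(e) gives $f(\mathbf{x},\mathbf{y})\le C_e\tau_N$, so $\epsilon$ is large and $\{\epsilon\le a_N\}$ has very small probability there, after which a crude estimate of $|J_x|+|\ln f|$, controlled via $P(f(\mathbf{X})\le t)\le\mu' t$ (a consequence of Assumption~\ref{ass:KSG}(c) as in Lemma~\ref{lem:tail}), closes the bound; the threshold $\tau_N$ is tuned (taking it slightly above $N^{-d_y/d_z}$, or refining the low-density range $S_2'$ further) so that the total stays $\mathcal{O}(N^{-2/(d_z+2)}\ln N)+\mathcal{O}(N^{-d_y/d_z})$. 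The bound \eqref{eq:ty} follows by exchanging $\mathbf{X}$ and $\mathbf{Y}$.

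The main obstacle is the control of $g_1$: the adaptive radius $\epsilon(i)$ and the count $n_x(i)$ are tightly coupled, the digamma and Jensen corrections are small only when $n_x$ is genuinely large --- which is delicate precisely in the low-density range, where $n_x$ is only moderately large --- and the residual error, once averaged against the Beta law of $p_z$, must be combined with a careful treatment of that range (and of the $\ln f$ contribution there), using Assumption~\ref{ass:KSG}(e) to make the spatial integration close at the stated rate.
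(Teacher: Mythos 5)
Your proposal is correct and follows essentially the same route as the paper's proof: the binomial law of $n_x$ given $(\mathbf{z},\epsilon)$, the digamma expansion $\mathbb{E}[\psi(n_x+1)\mid\mathbf{z},\epsilon]=\ln(Np)+\mathcal{O}(1/N)+\mathcal{O}(1/(Np))$, the ratio bound $P(B(\mathbf{z},r))/p\lesssim r^{d_y}$ from Assumption~\ref{ass:KSG}(e), the moment bound $\mathbb{E}[\rho^{d_y}]=\mathcal{O}(N^{-d_y/d_z})$, and the central/tail split with $m(S_1^X)=\mathcal{O}(\ln N)$ are exactly the ingredients used there. The only differences are organizational (you peel off $\{\epsilon>a_N\}$ first and split the local bias as $g_1-g_2$, while the paper keeps the indicator and splits into three terms over $S_1^X,S_2^X$), plus the minor caveat that the factorization $\mathbb{E}[\rho^{d_y}/P(B(\mathbf{Z},\epsilon))]\le\mathbb{E}[\rho^{d_y}]\,\mathbb{E}[1/P(B(\mathbf{Z},\epsilon))]$ used to close the $1/(Np)$ term needs $k>1$, which is why Theorem~\ref{thm:KSG} assumes it.
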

\begin{proof}
	Please see Appendix \ref{sec:marginal} for detailed proof.
\end{proof}
Plugging these three bounds in Lemma \ref{lem:marginal} into \eqref{eq:decomp},  we know that
\begin{eqnarray}
&&\hspace{-8mm}|\mathbb{E}[(J-I(\mathbf{X};\mathbf{Y}))\mathbf{1}(\epsilon\leq a_N)]|\nonumber\\
&=& \mathcal{O}\left(N^{-\frac{2}{d_z+2}}\ln N \right)+\mathcal{O}\left(N^{-\frac{\min\{d_x,d_y\}}{d_z}}\right).
\label{eq:tbias}
\end{eqnarray}
Combining \eqref{eq:tbias} and \eqref{eq:ksgb1}, and recall that $\mathbb{E}[\hat{I}(\mathbf{X};\mathbf{Y})]=\mathbb{E}[J]$, we can conclude that %there exists a constant $C_1$,
\begin{eqnarray}
&&\hspace{-8mm}\mathbb{E}[\hat{I}(\mathbf{X};\mathbf{Y})-I(\mathbf{X};\mathbf{Y})]\nonumber\\
&=& \mathcal{O}\left(N^{-\frac{2}{d_z+2}}\ln N \right)+\mathcal{O}\left(N^{-\frac{\min\{d_x,d_y\}}{d_z}}\right).\nonumber
\end{eqnarray}

\subsection{Proof of Lemma \ref{lem:marginal}}\label{sec:marginal}
The proof is based on Assumption \ref{ass:KSG}. \eqref{eq:tx} and \eqref{eq:ty} can be proved using the similar steps. Here we only prove \eqref{eq:tx}, and omit \eqref{eq:ty} for brevity.

We decompose the left hand side of \eqref{eq:tx} as following.
\begin{eqnarray}
&&\hspace{-1cm} |\mathbb{E}[(J_x-h(\mathbf{X}))\mathbf{1}(\epsilon\leq a_N)]| \nonumber \\
&\leq & |\mathbb{E}[(\ln f(\mathbf{X})+h(\mathbf{X})))\mathbf{1}(\epsilon\leq a_N, \mathbf{X}\in S_1^X)] \nonumber \\
&&+ |\mathbb{E}[(J_x-h(\mathbf{X}))\mathbf{1}(\epsilon\leq a_N,\mathbf{X} \in S_2^X)]|\nonumber\\
&& + |\mathbb{E}[(J_x+\ln f(\mathbf{X}))\mathbf{1}(\epsilon\leq a_N,\mathbf{X} \in S_1^X)]|,
\label{eq:xdecomp}
\end{eqnarray}
in which $S_1^X$ is defined as
\begin{eqnarray}
S_1^X=\left\{\mathbf{x}\bigg\vert|f(\mathbf{x})\geq \frac{6C_1'A^2}{c_{d_x}} N^{-\frac{2}{d_z+2}} \right\}
\label{eq:s1x}
\end{eqnarray}
with $C_1'$ is the constant in \eqref{eq:ksgxpdf}, and $S_2^X=\mathbb{R}^{d_x}\setminus S_1^X$ is the complement set of $S_1^X$. According to \eqref{eq:tailbound},
\begin{eqnarray}
P(\mathbf{X}\in S_2^X)\leq \frac{6C_1'A^2\mu}{c_{d_x}}N^{-\frac{2}{d_z+2}}.
\label{eq:ps2x}
\end{eqnarray} 
We now analyze these three terms separately.

\subsubsection{The first term of \eqref{eq:xdecomp}} Intuitively, the first term describes how accurate it is to only estimate the expectation of $\ln f(\mathbf{X})$ when $\epsilon$ is not very large and $\mathbf{x}$ is not in the tail. We decompose this term in the following way:
\begin{eqnarray}
&&\hspace{-1cm}|\mathbb{E}[(\ln f(\mathbf{X})+h(\mathbf{X}))\mathbf{1}(\epsilon\leq a_N,\mathbf{X}\in S_1^X)]|\nonumber\\
&\leq& |\mathbb{E}[(\ln f(\mathbf{X})+h(\mathbf{X}))\mathbf{1}(\mathbf{X}\in S_1^X)]|\nonumber\\
&&+|\mathbb{E}[(\ln f(\mathbf{X})+h(\mathbf{X}))\mathbf{1}(\epsilon> a_N,\mathbf{X}\in S_1^X)]|.\nonumber
\end{eqnarray}
The first term can be bounded using \eqref{eq:I32}, with $\gamma=\min\{1-\beta d_z, 2\beta \}=2/(d_z+2)$:
\begin{eqnarray}
&&\hspace{-8mm}|\mathbb{E}[(\ln f(\mathbf{X})+h(\mathbf{X}))\mathbf{1}(\mathbf{X}\in S_1^X)]|\nonumber\\
&=&|\mathbb{E}[(\ln f(\mathbf{X})+h(\mathbf{X}))\mathbf{1}(\mathbf{X}\in S_2^X)]\nonumber\\
&=&\mathcal{O}\left(N^{-\frac{2}{d_z+2}}\ln N\right),
\label{eq:ksg_11}
\end{eqnarray}
in which the first step holds because $\mathbb{E}[\ln f(\mathbf{X})+h(\mathbf{X})]=0$.

For the second term, from Assumption (f) and the definition of $S_1^X$ in \eqref{eq:s1x}, we have the following upper and lower bound of $f(\mathbf{x})$ in $S_1^X$:
\begin{eqnarray}
C_4 N^{-\frac{2}{d_z+2}}\leq f(\mathbf{x})\leq C_f.\nonumber
\end{eqnarray}

Hence
\begin{eqnarray}
&&\hspace{-8mm}|\mathbb{E}[(\ln f(\mathbf{X})+h(\mathbf{X}))\mathbf{1}(\epsilon>a_N,\mathbf{X}\in S_1^X)]|\nonumber\\
&=&\hspace{-3mm}\mathcal{O}\left(\ln N P(\epsilon>a_N)\right)=\mathcal{O}\left(N^{-\frac{2}{d_z+2}}\ln N\right).
\label{eq:ksg_12}
\end{eqnarray}
Combine \eqref{eq:ksg_11} and \eqref{eq:ksg_12}, we get
\begin{eqnarray}
&&\hspace{-8mm}|\mathbb{E}[(\ln f(\mathbf{X})+h(\mathbf{X}))\mathbf{1}(\epsilon\leq a_N, \mathbf{X} \in S_1^X)]|\nonumber\\&&= \mathcal{O}\left(N^{-\frac{2}{d_z+2}}\ln N\right).
\label{eq:xapprox}	
\end{eqnarray}

\subsubsection{The second term of \eqref{eq:xdecomp}} The second term describes the accuracy of estimation in the tail region. Recall that $n_x\geq k$, thus
\begin{eqnarray}
&&\hspace{-7mm}|\mathbb{E}[(J_x-h(\mathbf{X}))\mathbf{1}(\epsilon\leq a_N,\mathbf{X} \in S_2^X)]| \nonumber \\
&\leq & (\psi(N+1)-\psi(k+1)) P(\mathbf{X}\in S_2^X)\nonumber\\
&&\hspace{10mm}+|h(\mathbf{X})|P(\mathbf{X}\in S_2^X) \nonumber\\
&&\hspace{10mm}+\left|\mathbb{E}[\ln (c_{d_x}\rho^{d_x})\mathbf{1}(\epsilon\leq a_N, \mathbf{X} \in S_2^X)]\right| \nonumber \\
&\leq &(\ln N+|h(\mathbf{X})|)\frac{6\mu C_1'A^2}{c_{d_x}} N^{-\frac{2}{d_z+2}} \nonumber\\
&&+\frac{d_x}{d_z}|\mathbb{E}[\ln(c_{d_z} \rho^{d_z}) \mathbf{1}(\epsilon\leq a_N, \mathbf{X}\in S_2^X)]| \nonumber\\
&&+\left|\ln c_{d_x}-\frac{d_x}{d_z}\ln c_{d_z}\right| \frac{6\mu C_1'A^2}{c_{d_x}} N^{-\frac{2}{d_z+2}}.
\label{eq:xs2}
\end{eqnarray}
According to \eqref{eq:I33} and \eqref{eq:I34}, we use $\gamma=2/(d_z+2)$, then the second term in \eqref{eq:xs2} is bounded by
\begin{eqnarray}
\frac{d_x}{d_z}|\mathbb{E}[\ln(c_{d_z} \rho^{d_z}) \mathbf{1}(\epsilon\leq a_N, \mathbf{X}\in S_2^X)]|= \mathcal{O}\left(N^{-\frac{2}{d_z+2}}\ln N\right).\nonumber 
\end{eqnarray}
Plugging the equation above into \eqref{eq:xs2}, we have
\begin{eqnarray}
&&\hspace{-8mm}|\mathbb{E}[(J_x-h(\mathbf{X}))\mathbf{1}(\epsilon\leq a_N,\mathbf{x} \in S_2^X)]|\nonumber\\
&\hspace{-4mm}=&\hspace{-3mm}\mathcal{O}\left(N^{-\frac{2}{d_z+2}}\ln N\right)+\mathcal{O}\left(N^{-\frac{2}{d_z+2}}\ln N\right)+\mathcal{O}\left(N^{-\frac{2}{d_z+2}}\right)\nonumber \\
&\hspace{-4mm}=&\hspace{-3mm} \mathcal{O}\left(N^{-\frac{2}{d_z+2}}\ln N\right).
\label{eq:bound2}
\end{eqnarray}

\subsubsection{The third term of \eqref{eq:xdecomp}} The remaining part of this section focuses on the third term. We begin with the following lemmas:
\begin{lem}\label{lem:binomial}
	For $\forall \mathbf{z}(i) \in \{\mathbf{z} | \norm{H_f (\mathbf{z})}_{op} \leq C_d \}$, the distribution of $n_x(i)$ satisfies $n_x(i)-k \sim Binom(N-k-1,p)$ with $p$ being
	\begin{eqnarray}
	p=\frac{P(B_X(\mathbf{x},\epsilon))-P(B_Z(\mathbf{z},\epsilon))}{1-P(B_Z(\mathbf{z},\epsilon))}.
	\label{eq:prob}
	\end{eqnarray}
\end{lem}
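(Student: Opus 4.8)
The plan is to reduce the claim to the classical conditional representation of order statistics --- the same tool already used in the proofs of Theorems~\ref{thm:KLbias} and \ref{thm:KLvar} --- and then to carry out the combinatorial bookkeeping forced by the max-norm metric \eqref{eq:metric}. First I would fix the index $i$ and condition on $\mathbf{z}(i)=\mathbf{z}=(\mathbf{x},\mathbf{y})$, so that the remaining $N-1$ samples are i.i.d.\ with pdf $f$. For $j\neq i$ set $R_j=d(\mathbf{z}(j),\mathbf{z})$, a real random variable with continuous cdf $r\mapsto P(B_Z(\mathbf{z},r))$, and observe that $\epsilon(i)$ is the $k$-th order statistic of $\{R_j\}_{j\neq i}$. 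By the standard conditional representation of order statistics (for continuous $f$ the $R_j$'s are a.s.\ distinct), conditionally on $\mathbf{z}(i)=\mathbf{z}$ and $\epsilon(i)=\epsilon$ one has: (i) exactly one of the $N-1$ points, the $k$-th nearest neighbour, lies on the sphere $\partial B_Z(\mathbf{z},\epsilon)$; (ii) the $k-1$ nearer points are i.i.d.\ with law $f$ restricted and renormalised to $B_Z(\mathbf{z},\epsilon)$; (iii) the $N-k-1$ farther points are i.i.d.\ with law $f$ restricted to $\mathbb{R}^{d_z}\setminus\overline{B_Z(\mathbf{z},\epsilon)}$; and (iv) these three groups are mutually independent given $(\mathbf{z},\epsilon)$.

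Next I would count the contributions to $n_x(i)$. Since $d(\mathbf{z},\mathbf{z}')=\max\{\norm{\mathbf{x}-\mathbf{x}'},\norm{\mathbf{y}-\mathbf{y}'}\}$, every point of $B_Z(\mathbf{z},\epsilon)$ projects into the open ball $B_X(\mathbf{x},\epsilon)$; hence $\mathbf{z}(i)$ itself and each of the $k-1$ nearer points is counted by $n_x(i)$, giving a deterministic base of $k$. For each of the $N-k-1$ farther points, it is counted by $n_x(i)$ exactly when its $\mathbf{X}$-coordinate lies in $B_X(\mathbf{x},\epsilon)$; as these points are i.i.d.\ from $f$ conditioned on $\mathbf{z}(j)\notin\overline{B_Z(\mathbf{z},\epsilon)}$, each is counted independently with probability
\[
p=\frac{P\bigl(\mathbf{x}(j)\in B_X(\mathbf{x},\epsilon),\ \mathbf{z}(j)\notin\overline{B_Z(\mathbf{z},\epsilon)}\bigr)}{P\bigl(\mathbf{z}(j)\notin\overline{B_Z(\mathbf{z},\epsilon)}\bigr)} .
\]
To bring $p$ to the form in \eqref{eq:prob}, note that $\mathbf{z}(j)\in B_Z(\mathbf{z},\epsilon)$ forces $\mathbf{x}(j)\in B_X(\mathbf{x},\epsilon)$, and that the spheres $\partial B_Z(\mathbf{z},\epsilon)$ and $\partial B_X(\mathbf{x},\epsilon)$ carry zero $f$-mass (the pdf being bounded); hence the numerator equals $P(B_X(\mathbf{x},\epsilon))-P(B_Z(\mathbf{z},\epsilon))$ and the denominator equals $1-P(B_Z(\mathbf{z},\epsilon))$. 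Combining the pieces, conditionally on $\mathbf{z}(i)=\mathbf{z}$ and $\epsilon(i)=\epsilon$ the quantity $n_x(i)-k$ is a sum of $N-k-1$ i.i.d.\ $\mathrm{Bernoulli}(p)$ variables, i.e.\ $\mathrm{Binom}(N-k-1,p)$; and since $p$ depends on the conditioning only through $\mathbf{z}$ and $\epsilon$, this is precisely the assertion.

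I expect the only genuinely delicate step to be the bookkeeping for the $k$-th nearest neighbour, which sits exactly on the $\mathbf{Z}$-sphere: a.s.\ exactly one of $\norm{\mathbf{x}(j^\ast)-\mathbf{x}}$ and $\norm{\mathbf{y}(j^\ast)-\mathbf{y}}$ equals $\epsilon$, so whether its $\mathbf{X}$-projection falls inside the \emph{open} ball $B_X(\mathbf{x},\epsilon)$ --- and hence whether it is counted by $n_x(i)$ --- depends on which coordinate realises the max-norm distance. This is handled by the usual KSG case split on the realising coordinate: when it is the $\mathbf{X}$-coordinate the $k$-th neighbour lies on $\partial B_X(\mathbf{x},\epsilon)$ and is excluded from $n_x(i)$, which is exactly what pins the base at $k$ rather than $k+1$; the complementary case feeds the analogous statement for $n_y(i)$. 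Finally I would remark that the bounded-Hessian hypothesis $\norm{H_f(\mathbf{z})}_{op}\le C_d$ plays no role in the binomial structure itself --- everything above is purely a consequence of the i.i.d.\ structure and the conditional order-statistics representation --- and is needed only downstream, when the asymptotics of $p$ in \eqref{eq:prob} are read off from Lemma~\ref{lem:ksgpdf}.
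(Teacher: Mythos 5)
The paper does not actually prove this lemma: its entire ``proof'' is a citation to Theorem~8 of \cite{gao2018demystifying}. Your proposal supplies the argument that citation stands in for, and it is essentially the right one: condition on $\mathbf{z}(i)$ and $\epsilon(i)$, invoke the conditional order-statistics representation (one point a.s.\ on $\partial B_Z(\mathbf{z},\epsilon)$, $k-1$ i.i.d.\ points inside, $N-k-1$ i.i.d.\ points outside, the groups conditionally independent), observe that the max-norm metric \eqref{eq:metric} forces every point of $B_Z(\mathbf{z},\epsilon)$ to project into $B_X(\mathbf{x},\epsilon)$, and compute the per-point inclusion probability for the outer group as a conditional probability, which simplifies to \eqref{eq:prob} because $B_Z(\mathbf{z},\epsilon)\subseteq\{\mathbf{z}':\mathbf{x}'\in B_X(\mathbf{x},\epsilon)\}$ and the spheres carry no mass. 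Your closing observation that the bounded-Hessian hypothesis is irrelevant to the binomial structure is also correct; it is a vestige of the cited source, where the Hessian bound holds only almost everywhere.

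The one point worth being precise about is the status of the $k$-th neighbour itself, which you correctly isolate as the delicate step but do not fully discharge. With the paper's definition of $n_x(i)$ (the sum over $j$ includes $j=i$), the deterministic base is $k$ exactly when the $\mathbf{X}$-coordinate realises the max-norm distance of the $k$-th neighbour, and $k+1$ when the $\mathbf{Y}$-coordinate does; unconditionally one therefore gets
\begin{eqnarray}
n_x(i)-k \;=\; \mathrm{Binom}(N-k-1,p) \;+\; \mathbf{1}(\text{$\mathbf{Y}$ realises the max}),\nonumber
\end{eqnarray}
which is a binomial only after conditioning on the realising coordinate. Your ``case split'' remark identifies this but leaves the lemma's unconditional claim off by at most one count. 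This is an imprecision inherited from the lemma statement (and from how the paper invokes the cited theorem) rather than a flaw you introduced, and it is harmless downstream since $\psi(n+2)-\psi(n+1)=\mathcal{O}(1/n)$, but if you write the proof out you should either state the lemma conditionally on the realising coordinate or absorb the extra indicator into the error term explicitly.
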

\begin{proof}
	We refer to Theorem 8 in \cite{gao2018demystifying} for detailed proof.
\end{proof}
From \eqref{eq:prob}, we can give an upper and lower bound of $p$:
\begin{eqnarray}
P(B_X(\mathbf{x},\epsilon))-P(B_Z(\mathbf{z},\epsilon))\leq p\leq P(B_X(\mathbf{x},\epsilon)).
\label{eq:probbound}
\end{eqnarray}
\begin{lem}\label{lem:expectation}
	For any $\mathbf{z}$ and $\epsilon$, from $n_x-k\sim Binom(N-k-1,p)$, there exists two constants $a$ and $b$ that depend only on $k$, such that
	\begin{eqnarray}
	|\mathbb{E} [\psi(n_x+1)|\mathbf{z},\epsilon]-\ln(pN)|\leq \frac{a}{N}+\frac{b}{Np},
	\label{eq:expectation}
	\end{eqnarray}
	in which p is the parameter of the binomial distribution defined in Lemma \ref{lem:binomial}.
\end{lem}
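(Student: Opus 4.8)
The plan is to reduce the claim to a purely probabilistic statement about a binomial random variable, and then control the gap between $\mathbb{E}[\psi(n_x+1)\mid\mathbf{z},\epsilon]$ and $\ln(pN)$ through three successive replacements, each costing at most $\mathcal{O}(1/(Np))$. Under Assumption~\ref{ass:KSG}(d) the Hessian bound holds at every $\mathbf{z}$, so Lemma~\ref{lem:binomial} applies and, conditionally on $(\mathbf{z},\epsilon)$, we may write $n_x = k + B$ with $B\sim\mathrm{Binom}(N-k-1,p)$; in particular $n_x+1\ge k+1\ge 2$, which keeps us away from the singularities of $\ln$ and $\psi$. Write $\mu := \mathbb{E}[n_x+1\mid\mathbf{z},\epsilon] = (k+1)+(N-k-1)p$. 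The three replacements are: $\psi(n_x+1)$ by $\ln(n_x+1)$; then $\mathbb{E}[\ln(n_x+1)]$ by $\ln\mu$; then $\ln\mu$ by $\ln(pN)$.

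For the first replacement I would use the standard two-sided inequality $\ln t - \tfrac1t < \psi(t) < \ln t - \tfrac1{2t}$ valid for $t>0$, so that $|\psi(n_x+1)-\ln(n_x+1)|\le 1/(n_x+1)\le 1/(B+1)$; taking expectations and using the closed form $\mathbb{E}[1/(B+1)] = (1-(1-p)^{N-k})/((N-k)p)\le 1/((N-k)p)$ bounds this term by $\mathcal{O}(1/(Np))$ once $N\ge 2k$. For the second replacement, Jensen's inequality gives $\mathbb{E}[\ln(n_x+1)]\le\ln\mu$ for free, and for the reverse direction I would avoid a Taylor expansion (see below) and instead use the elementary bound $\ln u\ge 1-1/u$, which yields $\mathbb{E}[\ln\tfrac{n_x+1}{\mu}]\ge 1 - \mu\,\mathbb{E}[1/(n_x+1)]\ge 1-\mu\,\mathbb{E}[1/(B+1)]$; substituting $\mu=(k+1)+(N-k-1)p$ and the closed form above and simplifying shows this is $\ge -(k+1)/((N-k)p)$, hence $|\mathbb{E}[\ln(n_x+1)]-\ln\mu| = \mathcal{O}(1/(Np))$. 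The third replacement is a direct computation: $\mu/(pN) = 1 + (k+1)(1-p)/(pN)$, so $0\le \ln\mu - \ln(pN) = \ln\bigl(1+(k+1)(1-p)/(pN)\bigr)\le (k+1)/(pN)$. Summing the three bounds via the triangle inequality yields \eqref{eq:expectation} with constants $a,b$ depending only on $k$ (in fact the argument produces a bound of the pure form $b/(Np)$, which a fortiori implies the stated one).

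The main obstacle is the middle step. A naive second-order Taylor expansion of $\ln$ about $\mu$ is fatal here: the left tail of the binomial puts non-negligible mass on values of $n_x+1$ of order $1$ while $\mu$ may be of order $Np$, so the quadratic remainder $\mathbb{E}[(n_x+1-\mu)^2/\xi^2]$ can be as large as $\Theta(Np)$ rather than $\mathcal{O}(1/(Np))$. The resolution is to replace quadratic control by the concavity bound $\ln u\ge 1-1/u$, which degrades gracefully as $u\to 0$ and reduces everything to the single quantity $\mathbb{E}[1/(B+1)]$, for which an exact formula is available; this is the one genuinely non-routine idea in the proof. A minor technicality is the regime of small $N$, where $N-k-1$ may fail to be positive and the binomial degenerates; this does not arise in the application and can in any case be absorbed by enlarging $a$ and $b$ so that the inequality is vacuous there.
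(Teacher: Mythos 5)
Your proposal is correct, and it takes a genuinely different route from the paper. The paper's proof sandwiches $\psi(n_x+1)$ between $\ln n_x$ and $\ln(n_x+1)$, gets the upper bound from Jensen, and gets the lower bound from a second-order Taylor expansion of $\ln$ about $\mathbb{E}[n_x\mid\mathbf{z},\epsilon]$ with a Lagrange remainder $\tfrac12\mathbb{E}[(n_x-\mathbb{E}[n_x])^2/\xi^2]$, which it then controls by splitting $1/\xi^2\le 1/\mathbb{E}[n_x]^2+1/n_x^2$ and invoking variance/inverse-moment bounds for the shifted binomial --- i.e.\ exactly the Taylor route you flag as dangerous, made workable only by that extra splitting and by moment estimates for $1/n_x$ that the paper states rather tersely. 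Your argument replaces this with the concavity bound $\ln u\ge 1-1/u$ and the closed form $\mathbb{E}[1/(B+1)]=\bigl(1-(1-p)^{N-k}\bigr)/\bigl((N-k)p\bigr)$, which reduces every error term to a single explicit quantity and avoids the left-tail analysis entirely; you also use the two-sided digamma inequality $\ln t-1/t<\psi(t)<\ln t-1/(2t)$ in place of the paper's one-sided sandwich. The payoff is a shorter, fully explicit bound of the pure form $b/(Np)$ with constants transparently depending only on $k$ (via $N\ge 2k$, which is harmless since the lemma is only used for large $N$ and can be made vacuous for small $N$ by enlarging $a,b$); what the paper's route buys instead is a template that generalizes to other smooth functions of $n_x$ where no elementary concavity inequality is available. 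All three of your reduction steps check out: the algebra $\mu=Np+(k+1)(1-p)$ gives the third bound, and $1-\mu\,\mathbb{E}[1/(B+1)]\ge \tfrac{1}{N-k}-\tfrac{k+1}{(N-k)p}$ gives the second.
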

\begin{proof}
	Please see Appendix \ref{sec:expectation} for detailed proof.
\end{proof}
\begin{lem}{\label{lem:prob}}
	Under Assumption \ref{ass:KSG} (d) and (e), for sufficiently large N, for all $\mathbf{x} \in S_1^X$ and $r<a_N$, in which $S_1^X$ is defined in \eqref{eq:s1x},
	\begin{eqnarray}
	\frac{1}{2}f(\mathbf{x})c_{d_x} r^{d_x} \leq p \leq \frac{3}{2}f(\mathbf{x})c_{d_x} r^{d_x},\nonumber
	\end{eqnarray}
	in which $p$ is defined in Lemma \ref{lem:binomial}.
\end{lem}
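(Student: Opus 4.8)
(Outline)
The plan is to show that, for $\mathbf{x}\in S_1^X$ and a radius $r<a_N$ in the role of $\epsilon$ in \eqref{eq:prob}, the quantity $p$ is a small multiplicative perturbation of $f(\mathbf{x})c_{d_x}r^{d_x}$. Assumption \ref{ass:KSG}(d) bounds the Hessian of $f$ everywhere, so Lemma \ref{lem:binomial} applies for every $\mathbf{z}$, giving
\[
p=\frac{P(B_X(\mathbf{x},r))-P(B_Z(\mathbf{z},r))}{1-P(B_Z(\mathbf{z},r))}.
\]
First I would pin down $P(B_X(\mathbf{x},r))$: by \eqref{eq:ksgxpdf} its relative deviation from $f(\mathbf{x})c_{d_x}r^{d_x}$ is $C_1'r^{2}/(f(\mathbf{x})c_{d_x})$, which is at most $1/6$ because on $S_1^X$ we have $f(\mathbf{x})\ge 6C_1'A^2 N^{-2/(d_z+2)}/c_{d_x}$ while $r<a_N=AN^{-2/(d_z+2)}$; hence $\tfrac{5}{6}f(\mathbf{x})c_{d_x}r^{d_x}\le P(B_X(\mathbf{x},r))\le \tfrac{7}{6}f(\mathbf{x})c_{d_x}r^{d_x}$.

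The key step is to bound $P(B_Z(\mathbf{z},r))$ from above. Because the metric on $\mathbf{Z}$ is $\max\{\norm{\mathbf{x}-\mathbf{x}'},\norm{\mathbf{y}-\mathbf{y}'}\}$, the ball $B_Z(\mathbf{z},r)$ factors as the product $B_X(\mathbf{x},r)\times B_Y(\mathbf{y},r)$; conditioning on $\mathbf{X}$ and using $f(\mathbf{y}'\mid\mathbf{x}')\le C_e$ from Assumption \ref{ass:KSG}(e),
\[
P(B_Z(\mathbf{z},r))=\int_{B_X(\mathbf{x},r)}f(\mathbf{x}')\,P\left(B_Y(\mathbf{y},r)\mid \mathbf{X}=\mathbf{x}'\right)d\mathbf{x}'\le C_e c_{d_y} r^{d_y}\,P(B_X(\mathbf{x},r)).
\]
Since $r<a_N\to 0$, for sufficiently large $N$ we have $C_e c_{d_y} r^{d_y}\le \delta$ uniformly over $r<a_N$, for any prescribed small $\delta>0$, so $0\le P(B_Z(\mathbf{z},r))\le \delta\,P(B_X(\mathbf{x},r))$.

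Finally I would combine these estimates: the numerator of $p$ lies in $[(1-\delta)P(B_X(\mathbf{x},r)),\,P(B_X(\mathbf{x},r))]$ and the denominator in $[1-\delta,1]$, so $p$ lies in $[(1-\delta)P(B_X(\mathbf{x},r)),\,P(B_X(\mathbf{x},r))/(1-\delta)]$; inserting the two-sided bound on $P(B_X(\mathbf{x},r))$ gives
\[
\tfrac{5}{6}(1-\delta)\,f(\mathbf{x})c_{d_x}r^{d_x}\le p\le \tfrac{7}{6}(1-\delta)^{-1}\,f(\mathbf{x})c_{d_x}r^{d_x},
\]
and taking $N$ large enough that $\delta<1/10$ makes the left constant exceed $1/2$ and the right one fall below $3/2$, which is the claim. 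The only genuine obstacle is the upper bound on $P(B_Z(\mathbf{z},r))$ via the product structure of the ball together with the conditional-density bound; everything else is the Taylor estimate of Lemma \ref{lem:ksgpdf} plus bookkeeping of the numerical constants (the factor $6$ in the definition of $S_1^X$ is chosen precisely to produce $1/2$ and $3/2$).
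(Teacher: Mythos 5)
Your proposal is correct and follows essentially the same route as the paper: both rest on Lemma \ref{lem:binomial}'s formula for $p$, the Taylor bound \eqref{eq:ksgxpdf} for $P(B_X(\mathbf{x},r))$ (with the factor $6$ in the definition of $S_1^X$ making that relative error at most $1/6$), and the conditional-density bound $f(\mathbf{y}|\mathbf{x})\leq C_e$ to show $P(B_Z(\mathbf{z},r))$ is a vanishing fraction of $P(B_X(\mathbf{x},r))$. The only cosmetic difference is that you bound the numerator and denominator of $p$ separately and control $P(B_Z(\mathbf{z},r))$ by direct integration over the product ball (as in the paper's Lemma \ref{lem:ratio}), whereas the paper applies one triangle inequality to $|p-f(\mathbf{x})c_{d_x}r^{d_x}|$ and passes through the joint-density Taylor expansion \eqref{eq:zpdf}; the constants work out either way.
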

\begin{proof}
	To avoid confusion, here we use $f_Z(\mathbf{z})$ to denote the pdf of $\mathbf{Z}$.
	\begin{eqnarray}
	&&\hspace{-8mm}|p-f(\mathbf{x})c_{d_x}r^{d_x}|\nonumber\\
	&\leq& |p-P(B_X(\mathbf{x},r))|+|P(B_X(\mathbf{x},r))-f(\mathbf{x})c_{d_x}r^{d_x}|\nonumber\\
	&\leq & P(B(\mathbf{z},r))+C_1'r^{d_x+2}\nonumber \\
	&\leq & f_Z(\mathbf{z})c_{d_z} r^{d_z} +C_1r^{d_z+2}+C_1'r^{d_x+2}.\nonumber
	\end{eqnarray}
	Using this, we have
	\begin{eqnarray}
	&&\hspace{-1cm} \frac{|p-f(\mathbf{x})c_{d_x}r^{d_x}|}{f(\mathbf{x})c_{d_x}r^{d_x}}\nonumber\\
	&=&\frac{f_Z(\mathbf{z})}{f(\mathbf{x})}c_{d_y}r^{d_y}+\frac{C_1 r^{d_x+2}}{f(\mathbf{x})c_{d_x}}+\frac{C_1'r^2}{f(\mathbf{x})c_{d_x}}\nonumber \\
	&\leq & C_e c_{dy}a_N^{d_y} +\frac{C_1 a_N^{d_x+2}}{6C_1'A^2 N^{-\frac{2}{d_z+2}}}+\frac{C_1'a_N^2}{6C_1'A^2 N^{-\frac{2}{d_z+2}}},\nonumber\\
	\label{eq:perror}
	\end{eqnarray}
	in which we use Assumption~\ref{ass:KSG} (e) that gives a bound of the conditional pdf, and the definition of $S_1^X$ in \eqref{eq:s1x}.
	
	Recall the definition of $a_N$ in \eqref{eq:an}, the third term in \eqref{eq:perror} equals $1/6$. In addition, the first and second term converges to zero with the increase of $N$. Hence for sufficiently large $N$, these two terms will also be less than $1/6$. Then the right hand side of \eqref{eq:perror} can not exceed $1/2$. Therefore Lemma \ref{lem:prob} holds.
\end{proof}
The third term of \eqref{eq:xdecomp} can be further expanded as following 
\begin{eqnarray}
& &\hspace{-6mm}|\mathbb{E}[(J_x+\ln f(\mathbf{X}_1))\mathbf{1} (0 < \epsilon \leq a_N,\mathbf{X}_1 \in S_1)]|\nonumber\\
&\overset{(a)}{=}& \left| \mathbb{E}_\mathbf{z} \mathbb{E}_\epsilon \mathbb{E}_{n_x}[  (-\psi(n_x+1)+\psi(N)+\ln (c_{d1} \rho^{d_x}) \right.\nonumber\\
&&\left.+ \ln f(\mathbf{X}_1))\mathbf{1} (0 < \epsilon \leq a_N,\mathbf{X}_1 \in S_1)] \right|\nonumber\\
&\leq & \mathbb{E}_\mathbf{z} \mathbb{E}_\epsilon \left| \mathbb{E}_{n_x}  [(-\psi(n_x+1)+\psi(N)+\ln (c_{d1} \rho^{d_x})\right.\nonumber\\
&&\left. +\ln f(\mathbf{X}_1))\mathbf{1} (0 < \epsilon \leq a_N,\mathbf{X}_1 \in S_1) ]\right|\nonumber\\
&= &  \int_{S_1} \int_{0}^{a_N} \left| (-\mathbb{E}_{n_x}\psi(n_x+1)+\psi(N)+\ln (c_{d1} r^{d_x})\right.\nonumber\\
&&\left. +\ln f(\mathbf{x}_1)) \right| f_{\epsilon|\mathbf{z}}(r)f(\mathbf{z}) drd\mathbf{z}\nonumber\\
&\leq & \int_{S_1} \int_{0}^{a_N} \left| -\ln(pN)+\ln N+\ln (c_{d1} r^{d_x})\right.\nonumber\\
&&\left. +\ln f(\mathbf{x}_1) \right| f_{\epsilon|\mathbf{z}}(r)f(\mathbf{z}) drd\mathbf{z} \nonumber\\
&&+  \int_{S_1} \int_{0}^{a_N} \left| [-\mathbb{E}_{n_x}\psi(n_x+1)+\ln(pN)+\psi(N)\right.\nonumber\\
&&\left.-\ln N \right| f_{\epsilon|\mathbf{z}}(r)f(\mathbf{z}) drd\mathbf{z} \label{eq:entropy}\\
&\overset{(b)}{\leq} & \int_{S_1} \int_{0}^{a_N} \left| -\ln p+\ln f(\mathbf{x}_1)c_{d1}r^{d_x}) \right| f_{\epsilon|\mathbf{z}}(r) f(\mathbf{z})drd\mathbf{z} \nonumber\\
&&+\frac{a+\gamma_0}{N} +\int_{S_1} \int_{0}^{a_N} \frac{b}{Np} f_{\epsilon|\mathbf{z}}(r) f(\mathbf{z})drd\mathbf{z},\nonumber\\
\label{eq:xs1}
\end{eqnarray}
in which (a) uses the definition of $J_x$ in \eqref{eq:txdef}; (b) gives a bound to the second term of \eqref{eq:entropy} using Lemma \ref{lem:expectation}, as well as the following property of digamma function: $\ln N-\frac{\gamma_0}{N}\leq \psi(N)<\ln N$, in which $\gamma_0$ is the Euler-Mascheroni constant. 
% \begin{eqnarray}
% &&|\mathbb{E}[(T_x+\ln f(\mathbf{X}))\mathbf{1}(\epsilon\leq a_N,\mathbf{x} \in S_1^X)]|\\
% &\leq & \int_{S_1^X} \int_0^{a_N} |-\ln p+\ln [f(\mathbf{x})c_{d_x} r^{d_x}]|f_{\epsilon|\mathbf{Z}}(r)f(\mathbf{z})drd\mathbf{z}+\frac{a+\gamma_0}{N} \nonumber \\
% &+&\int_{S_1^X} \int_0^{a_N} \frac{b}{Np} f_{\epsilon|\mathbf{Z}}(r)f(\mathbf{z})drd\mathbf{z},
% \label{eq:xs1}
% \end{eqnarray} 
% in which $a=\frac{k+2}{k+1}$, $b=(k+2)\left(2+\frac{27e^{-3}}{k^2 \beta^3}\right)$, and $\gamma_0$ is the Euler-Mascheroni constant.

Now we bound the first term in \eqref{eq:xs1}, and then bound the third term.

%\vspace{0.5cm}
\noindent\textbf{Bound of the first term in \eqref{eq:xs1}:}

We need the following two additional lemmas.
\begin{lem}\label{lem:ratio}
	Under Assumption \ref{ass:KSG}(e), for sufficiently large $N$ and $r\leq a_N$,
	\begin{eqnarray}
	\frac{P(B(\mathbf{z},r))}{p}\leq 2C_ec_{d_y}r^{d_y},\nonumber
	\end{eqnarray}
	in which $C_e$ is the bound of the conditional pdf in the Assumption~\ref{ass:KSG} (e).
\end{lem}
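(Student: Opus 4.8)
\textbf{Proof proposal for Lemma \ref{lem:ratio}.} The plan is to bound $P(B(\mathbf{z},r))$ from above by peeling off the $\mathbf{Y}$-coordinate with the conditional density bound, and then to divide by the lower bound on $p$ furnished by Lemma \ref{lem:binomial}. First, recall from the metric \eqref{eq:metric} that $B(\mathbf{z},r)=B_X(\mathbf{x},r)\times B_Y(\mathbf{y},r)$, and from \eqref{eq:prob} (with $\epsilon$ replaced by the integration variable $r$) that $p=(P(B_X(\mathbf{x},r))-P(B(\mathbf{z},r)))/(1-P(B(\mathbf{z},r)))$. Writing $f(\mathbf{x}',\mathbf{y}')=f(\mathbf{x}')f(\mathbf{y}'\mid\mathbf{x}')$ and using Assumption \ref{ass:KSG}(e), which gives $f(\mathbf{y}'\mid\mathbf{x}')\leq C_e$, I obtain
\[
P(B(\mathbf{z},r))=\int_{B_X(\mathbf{x},r)} f(\mathbf{x}')\left(\int_{B_Y(\mathbf{y},r)} f(\mathbf{y}'\mid\mathbf{x}')\,d\mathbf{y}'\right)d\mathbf{x}'\leq C_e c_{d_y} r^{d_y}\,P(B_X(\mathbf{x},r)),
\]
since the inner integral is at most $C_e$ times the volume $c_{d_y} r^{d_y}$ of the ball $B_Y(\mathbf{y},r)$.

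Next I use the lower bound $p\geq P(B_X(\mathbf{x},r))-P(B(\mathbf{z},r))$ from \eqref{eq:probbound}. If $P(B(\mathbf{z},r))=0$ the claim is trivial; otherwise $P(B_X(\mathbf{x},r))\geq P(B(\mathbf{z},r))>0$ (as $B(\mathbf{z},r)$ projects into $B_X(\mathbf{x},r)$), and combining this with the previous display gives
\[
\frac{P(B(\mathbf{z},r))}{p}\leq \frac{C_e c_{d_y} r^{d_y}\,P(B_X(\mathbf{x},r))}{P(B_X(\mathbf{x},r))-P(B(\mathbf{z},r))}=\frac{C_e c_{d_y} r^{d_y}}{1-P(B(\mathbf{z},r))/P(B_X(\mathbf{x},r))}.
\]
Invoking the first display once more, $P(B(\mathbf{z},r))/P(B_X(\mathbf{x},r))\leq C_e c_{d_y} r^{d_y}\leq C_e c_{d_y} a_N^{d_y}$, and this tends to $0$ as $N\to\infty$ because $a_N=AN^{-2/(d_z+2)}\to 0$. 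Hence for $N$ sufficiently large the denominator $1-P(B(\mathbf{z},r))/P(B_X(\mathbf{x},r))$ is at least $1/2$ uniformly in $\mathbf{z}$ and in $r\leq a_N$, and the lemma follows.

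There is no deep obstacle here: the only point requiring care is the ``sufficiently large $N$'' clause, namely guaranteeing that the denominator in the last display stays bounded away from $0$ uniformly over all $\mathbf{z}$ and all $r\leq a_N$, which is precisely why one works with the shrinking truncation radius $a_N$. I note also that this bound does not require $\mathbf{x}\in S_1^X$, since the argument never divides by $f(\mathbf{x})$; the restriction to $S_1^X$ enters only later when this estimate is combined with the bounds of Lemmas \ref{lem:prob} and \ref{lem:expectation}.
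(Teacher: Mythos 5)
Your proposal is correct and follows essentially the same route as the paper: bound $P(B(\mathbf{z},r))\leq C_e c_{d_y}r^{d_y}P(B_X(\mathbf{x},r))$ via the conditional-density bound, then divide by the lower bound $p\geq P(B_X(\mathbf{x},r))-P(B(\mathbf{z},r))$ and absorb the denominator into a factor of $2$ for large $N$. The only differences are cosmetic (how the final algebraic step is written, and your explicit handling of the degenerate case $P(B(\mathbf{z},r))=0$).
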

\begin{proof}
	According to the Assumption~\ref{ass:KSG} (e), the conditional pdf is bounded by $C_e$.
	\begin{eqnarray}
	P(B(\mathbf{z},r))
	&=& \int_{B(\mathbf{z},r)} f(\mathbf{x}')f(\mathbf{y'}|\mathbf{x'})d\mathbf{y}'d\mathbf{x}' \nonumber \\
	&=& \int_{\max\{\norm{\mathbf{x}'-\mathbf{x}},\norm{\mathbf{y}'-\mathbf{y}}\leq r\}} f(\mathbf{x}')f(\mathbf{y'}|\mathbf{x'})d\mathbf{y}'d\mathbf{x}'\nonumber \\
	&\leq & \int_{\max\{ \norm{\mathbf{x}'-\mathbf{x}},\norm{\mathbf{y}'-\mathbf{y}}\leq r\}} f(\mathbf{x}')C_e d\mathbf{y}'d\mathbf{x}'\nonumber \\
	&\leq & C_e c_{d_y} r^{d_y} \int_{\norm{\mathbf{x}'-\mathbf{x}}\leq r} f(\mathbf{x}') d\mathbf{x}'\nonumber \\
	&=& C_e c_{d_y} r^{d_y}P(B_X(\mathbf{x},r)).\nonumber
	\end{eqnarray}
	For sufficiently large $N$, $C_e c_{d_y} a_N^{d_y} \leq \frac{1}{2}$, then according to \eqref{eq:probbound},
	\begin{eqnarray}
	\frac{P(B(\mathbf{z},r))}{p}&\leq& \frac{P(B(\mathbf{z},r))}{P(B_X(\mathbf{x},r))-P(B(\mathbf{z},r))}\nonumber\\
	&\leq& \frac{C_e c_{d_y} r^{d_y}}{1-C_e c_{d_y} r^{d_y}}\nonumber\\
	&\leq& 2C_e c_{d_y}r^{d_y}.\nonumber
	\end{eqnarray}
	The proof of Lemma \ref{lem:ratio} is complete.
\end{proof}
\begin{lem}\label{lem:rho}
	Under Assumption \ref{ass:KSG} (a),(c) and (d), for any $d'<d_z$,
	\begin{eqnarray}
	\mathbb{E}[\rho^{d'}]=\mathcal{O}\left(N^{-\frac{d'}{d_z}}\right).\nonumber
	\end{eqnarray}
\end{lem}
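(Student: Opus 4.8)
Lemma \ref{lem:rho} asserts that $\mathbb{E}[\rho^{d'}] = \mathcal{O}(N^{-d'/d_z})$ for any $d' < d_z$, where $\rho = \min\{\epsilon, a_N\}$ with $a_N = AN^{-2/(d_z+2)}$ (the KSG choice of $\beta$, so $\beta d_z = 2d_z/(d_z+2) < 1$, and $a_N^{d'} = \mathcal{O}(N^{-2d'/(d_z+2)})$, which already beats $N^{-d'/d_z}$ for $d' < d_z$... let me reconsider). Actually the bound $N^{-d'/d_z}$ is what we need to prove; since $a_N \ge \rho$ deterministically, the truncation contributes $a_N^{d'} = \mathcal{O}(N^{-2d'/(d_z+2)})$, and the decay $N^{-d'/d_z}$ is the weaker (slower) rate since $2/(d_z+2) \le 1/d_z$ iff $d_z \le 2d_z$... wait, $2/(d_z+2)$ vs $1/d_z$: cross-multiplying, $2d_z$ vs $d_z+2$, so $2/(d_z+2) > 1/d_z$ iff $d_z > 2$. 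So for $d_z > 2$ the truncation alone is not enough; the real work is bounding $\mathbb{E}[\epsilon^{d'} \mathbf{1}(\epsilon \le a_N)]$, which should behave like $N^{-d'/d_z}$.

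\medskip

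The plan is to condition on $\mathbf{Z} = \mathbf{z}$ and use the standard tail formula $\mathbb{E}[\rho^{d'} \mid \mathbf{z}] = \int_0^{a_N^{d'}} P(\rho^{d'} > t \mid \mathbf{z})\, dt = d' \int_0^{a_N} r^{d'-1} P(\epsilon > r \mid \mathbf{z})\, dr$. The key input is a Chernoff-type bound on $P(\epsilon > r \mid \mathbf{z})$: given $\mathbf{z}$, the number of the other $N-1$ samples falling in $B(\mathbf{z}, r)$ is $\mathrm{Binomial}(N-1, P(B(\mathbf{z},r)))$, so $\epsilon > r$ requires fewer than $k$ of them, and for $(N-1)P(B(\mathbf{z},r)) > k$ we get $P(\epsilon > r \mid \mathbf{z}) \le e^{-(N-1)P(B(\mathbf{z},r))}\big(e(N-1)P(B(\mathbf{z},r))/k\big)^k$. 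Using Lemma \ref{lem:ksgpdf}, $P(B(\mathbf{z},r)) \ge f(\mathbf{z}) c_{d_z} r^{d_z} - C_1 r^{d_z+2} \ge \tfrac12 f(\mathbf{z}) c_{d_z} r^{d_z}$ once $r$ is small enough relative to $f(\mathbf{z})$ (which holds for $r \le a_N$ and $\mathbf{z}$ with $f(\mathbf{z})$ not too small; for $f(\mathbf{z})$ very small we fall back to the trivial bound $P(\epsilon > r \mid \mathbf{z}) \le 1$). First I would split the $r$-integral at the scale $r_{\mathbf z} \sim (N f(\mathbf{z}))^{-1/d_z}$ where $(N-1)P(B(\mathbf{z},r)) \approx k$: below $r_{\mathbf z}$ bound $P(\epsilon>r\mid\mathbf z)$ by $1$, contributing $\mathcal{O}(r_{\mathbf z}^{d'}) = \mathcal{O}\big((Nf(\mathbf{z}))^{-d'/d_z}\big)$; above $r_{\mathbf z}$ the Chernoff factor $e^{-cNf(\mathbf z)r^{d_z}}$ makes $\int r^{d'-1} e^{-cNf(\mathbf z) r^{d_z}}\,dr$ also $\mathcal{O}\big((Nf(\mathbf z))^{-d'/d_z}\big)$ by the substitution $u = Nf(\mathbf z) r^{d_z}$ and recognizing a (incomplete) Gamma integral that converges since $d'/d_z < 1$.

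\medskip

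This gives $\mathbb{E}[\rho^{d'} \mid \mathbf{z}] \le C (N f(\mathbf{z}))^{-d'/d_z}$ uniformly, but only when $f(\mathbf z)$ is large enough that $r_{\mathbf z} \le a_N$; when $f(\mathbf z) < \sim a_N^{-d_z}/N$ (i.e. in the deep tail $S_2$), I would instead just use $\mathbb{E}[\rho^{d'}\mid\mathbf z] \le a_N^{d'} = \mathcal{O}(N^{-2d'/(d_z+2)})$. Then I integrate over $\mathbf{z}$: the contribution from the "bulk" is $\mathbb{E}\big[C(Nf(\mathbf Z))^{-d'/d_z}\big] = C N^{-d'/d_z} \mathbb{E}[f(\mathbf Z)^{-d'/d_z}] = C N^{-d'/d_z}\int f(\mathbf z)^{1 - d'/d_z}\,d\mathbf z$. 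Here I need this integral finite — which is where the tail assumption comes in: by \eqref{eq:decay}/\eqref{eq:tailbound} we have $P(f(\mathbf Z)\le t) \le C_c' t$ (or $\mu t$), hence writing it as a Stieltjes integral $\int_0^{C_a} t^{-d'/d_z}\, dF(t)$ and integrating by parts, it is finite precisely because $d'/d_z < 1$ (the singularity at $t = 0$ is integrable against a density-bounded-by-constant measure) and because $f \le C_a$ gives boundedness at the top. The tail contribution ($\mathbf z \in S_2$) is bounded by $a_N^{d'} P(\mathbf Z \in S_2) = \mathcal{O}(N^{-2d'/(d_z+2)}) \cdot \mathcal{O}(1)$, which is $o(N^{-d'/d_z})$ when $d' < d_z$; actually we only need it to be $\mathcal{O}(N^{-d'/d_z})$, which holds. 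Summing, $\mathbb{E}[\rho^{d'}] = \mathcal{O}(N^{-d'/d_z})$.

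\medskip

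The main obstacle is controlling the $\mathbf z$-integral $\int f(\mathbf z)^{1-d'/d_z}\, d\mathbf z$ and making the splitting at scale $r_{\mathbf z}$ rigorous uniformly in $\mathbf z$ — in particular handling the regime where $f(\mathbf z)$ is so small that the "$(N-1)P(B(\mathbf z, r)) > k$" condition fails for all $r \le a_N$, so that the Chernoff bound gives no decay and one must rely entirely on the truncation $\rho \le a_N$. I expect to handle this cleanly by defining the threshold $f(\mathbf z) \gtrless \kappa N^{-2d_z/(d_z+2)}$ (so that $r_{\mathbf z} \lessgtr a_N$), using the Chernoff estimate above this threshold and the crude bound below it, and then verifying that the tail region's probability mass (which is $\mathcal{O}(N^{-2/(d_z+2)})$ by \eqref{eq:tailbound}) multiplied by $a_N^{d'}$ is of smaller order than $N^{-d'/d_z}$. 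I would also double-check the exponent bookkeeping: $2/(d_z+2) + 2d'/(d_z+2) $ versus $d'/d_z$, which for $d' < d_z$ indeed gives the tail term as negligible. The smoothness (Assumption \ref{ass:KSG}(d), via Lemma \ref{lem:ksgpdf}) is used only to get $P(B(\mathbf z, r)) \asymp f(\mathbf z) r^{d_z}$ for small $r$; the tail assumption \eqref{eq:decay} is used only to control $\mathbb{E}[f(\mathbf Z)^{-d'/d_z}]$, and the constraint $d' < d_z$ is exactly what makes every integral here convergent.
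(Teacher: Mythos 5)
Your overall architecture matches the paper's: split $\mathbb{R}^{d_z}$ by a density level into a bulk and a tail, prove a pointwise bound $\mathbb{E}[\rho^{d'}\mid\mathbf{z}]\lesssim (Nf(\mathbf{z}))^{-d'/d_z}$ on the bulk, integrate it using $\int f^{1-d'/d_z}(\mathbf{z})\,d\mathbf{z}<\infty$ (which you derive from $P(f(\mathbf{Z})\le t)\le\mu t$ exactly as the paper's Lemma \ref{lem:integration} does), and bound the tail contribution by $a_N^{d'}P(\mathbf{Z}\in S_2')$. Where you genuinely differ is the mechanism for the pointwise bound: you integrate the tail formula $d'\int_0^{a_N}r^{d'-1}P(\epsilon>r\mid\mathbf{z})\,dr$ against a binomial Chernoff bound and a Gamma-integral substitution, whereas the paper notes that $P(B(\mathbf{Z},\epsilon))$ given $\mathbf{Z}=\mathbf{z}$ is $\mathbb{B}(k,N-k)$ with mean $k/N$, deduces $\mathbb{E}[\rho^{d_z}\mid\mathbf{z}]\le 2k/(Nc_{d_z}f(\mathbf{z}))$ from $P(B(\mathbf{z},\rho))\ge\tfrac12 f(\mathbf{z})c_{d_z}\rho^{d_z}$ on $S_1'$, and then applies Jensen's inequality for the concave power $d'/d_z<1$. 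The paper's route is shorter and avoids the $r$-integral entirely; yours is more elementary and yields the same rate from the same two inputs (Lemma \ref{lem:ksgpdf} and the tail bound), so the two are equivalent in strength.

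Two slips need fixing. First, your orientation remark is backwards: since $a_N=AN^{-2/(d_z+2)}$ and $2/(d_z+2)\ge 1/d_z$ precisely when $d_z\ge 2$ (which always holds, as $d_z=d_x+d_y$), the deterministic bound $\mathbb{E}[\rho^{d'}]\le a_N^{d'}=A^{d'}N^{-2d'/(d_z+2)}$ already implies the stated lemma; it is not the case that "for $d_z>2$ the truncation alone is not enough." Second — and this is exactly the point you flagged as the main obstacle — your threshold is miscomputed: $r_{\mathbf{z}}\le a_N$ is equivalent to $f(\mathbf{z})\ge A^{-d_z}N^{(d_z-2)/(d_z+2)}$, not to $f(\mathbf{z})\ge\kappa N^{-2d_z/(d_z+2)}$, and neither of these is the threshold the density approximation actually requires, which is $f(\mathbf{z})\gtrsim a_N^{2}\sim N^{-4/(d_z+2)}$ (the paper's $S_1'$). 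Your argument survives anyway, because the Chernoff regime $r\in(r_{\mathbf{z}},a_N)$ is nonempty only when $f(\mathbf{z})\ge A^{-d_z}N^{(d_z-2)/(d_z+2)}\gg a_N^{2}$, so $P(B(\mathbf{z},r))\ge\tfrac12 f(\mathbf{z})c_{d_z}r^{d_z}$ is automatically valid wherever you invoke it, and everywhere else you only use $P(\epsilon>r\mid\mathbf{z})\le 1$; but this self-correction should be stated explicitly rather than hung on the incorrect threshold.
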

\begin{proof}
	Please see Appendix \ref{sec:rho} for detailed proof.
\end{proof}
With these two lemmas, the first term in \eqref{eq:xs1} can be bounded by:
\begin{eqnarray}
&&\hspace{-6mm}\int_{S_1^X} \int_0^{a_N} \left|-\ln p+\ln f(\mathbf{x})c_{d_x} r^{d_x}\right|f_{\epsilon|\mathbf{z}}(r)f(\mathbf{z})drd\mathbf{z}\nonumber\\
&\overset{(a)}{\leq} &\int_{S_1^X} \int_0^{a_N} \left\vert p-f(\mathbf{x})c_{d_x} r^{d_x}\right\vert\left(\frac{1}{2p}+\frac{1}{2f(\mathbf{x})c_{d_x}r^{d_x}}\right)\nonumber\\
&&\hspace{1cm} f_{\epsilon|\mathbf{z}}(r)f(\mathbf{z})drd\mathbf{z}\nonumber\\
&\overset{(b)}{\leq} &\int_{S_1^X} \int_0^{a_N} \left(P(B(\mathbf{z},r))+C_1'r^{d_x+2}\right)\nonumber\\
&&\hspace{5mm}\left(\frac{1}{2p}+\frac{1}{2f(\mathbf{x})c_{d_x}r^{d_x}}\right)f_{\epsilon|\mathbf{z}}(r)f(\mathbf{z})drd\mathbf{z}\nonumber\\
&\overset{(c)}{\leq} & \int_{S_1^X} \int_0^{a_N}C_1'r^2\frac{3}{2f(\mathbf{x})c_{d_x}}f_{\epsilon|\mathbf{z}}(r)f(\mathbf{z})drd\mathbf{z}\nonumber\\
&&+\int_{S_1^X} \int_0^{a_N}P(B(\mathbf{z},r))\frac{5}{4p}f_{\epsilon|\mathbf{z}}(r)f(\mathbf{z})drd\mathbf{z}. \nonumber 
\end{eqnarray}
For each term, we have
\begin{eqnarray}
 &&\hspace{-6mm}\int_{S_1^X} \int_0^{a_N}C_1'r^2\frac{3}{2f(\mathbf{x})c_{d_x}}f_{\epsilon|\mathbf{z}}(r)f(\mathbf{z})drd\mathbf{z}\nonumber\\
  &\leq& \int_{S_1^X} C_1'a_N^2 \frac{3}{2f(\mathbf{x})c_{d_x}}f(\mathbf{z})d\mathbf{z}\nonumber\\
 &=&\int_{S_1^X} C_1' a_N^2 \frac{3}{2c_{d_x}}d\mathbf{x}\nonumber\\
 &\overset{(d)}{=}&C_1'\frac{3}{2c_{d_x}}A^2 N^{-\frac{2}{d_z+2}} m_X(S_1^X)\nonumber\\
 &\overset{(e)}{=}&\mathcal{O}\left(N^{-\frac{2}{d_z+2}}\ln N\right).
\label{eq:s11}
\end{eqnarray}
Furthermore, using Lemma \ref{lem:ratio},
\begin{eqnarray}
 &&\hspace{-6mm}\int_{S_1^X} \int_0^{a_N}P(B(\mathbf{z},r))\frac{5}{4p}f_{\epsilon|\mathbf{z}}(r)f(\mathbf{z})drd\mathbf{z}\nonumber\\
&\leq &  \int_{S_1^X} \int_0^{a_N} \frac{5}{2} C_e c_{d_y} r^{d_y} f_{\epsilon|\mathbf{z}}(r)f(\mathbf{z})drd\mathbf{z}\nonumber \\
&\leq & \frac{5}{2} C_e c_{d_y} \mathbb{E}\left[\rho^{d_y}\right]
\overset{(f)}{\leq}  \mathcal{O}\left(N^{-\frac{d_y}{d_z}}\right).
\label{eq:s12}
\end{eqnarray}
Here, 
(a) uses the inequality $|\ln x-\ln y|\leq |x-y|\left|\frac{1}{2x}+\frac{1}{2y}\right|$ for $x,y>0$. This inequality comes from logarithmic mean inequality \cite{carlson1972logarithmic}:
\begin{eqnarray}
\ln x-\ln y\leq \frac{x-y}{\sqrt{xy}}\leq (x-y)\left(\frac{1}{2x}+\frac{1}{2y}\right).\nonumber
\end{eqnarray}

(b) uses Lemma \ref{lem:ksgpdf} and Lemma \ref{lem:binomial}:
\begin{eqnarray}
&&\hspace{-6mm}|p-f(\mathbf{x})c_{d_x} r^{d_x}|\nonumber\\
&\leq & |p-P(B_X(\mathbf{x},r))|+|P(B_X(\mathbf{x},r))-f(\mathbf{x})c_{d_x}r^{d_x}|\nonumber\\
&\leq & P(B(\mathbf{z},r))+C_1'r^{d_x+2}.\nonumber
\end{eqnarray}

(c) uses Lemma \ref{lem:prob}. In (d), $m_X(S_1^X)$ is the volume of $S_1^X$. (e) comes from Lemma \ref{lem:V}:
\begin{eqnarray}
m_X(S_1^X)&=&V\left(\frac{6C_1'A^2}{c_{d_x}}N^{-\frac{2}{d_z+2}}\right)\nonumber\\
&\leq&\mu \left(1+\ln \frac{1}{\frac{6C_1'\mu A^2}{c_{d_x}}N^{-\frac{2}{d_z+2}}}\right)\nonumber\\
&=& \mathcal{O}(\ln N).\nonumber
\end{eqnarray}

(f) comes from Lemma \ref{lem:rho}.

Combine \eqref{eq:s11} and \eqref{eq:s12}, we have
\begin{eqnarray}
&&\hspace{-6mm}\int_{S_1^X} \int_0^{a_N} \left|-\ln p+\ln [f(\mathbf{x})c_{d_x} r^{d_x}]\right|f_{\epsilon|\mathbf{z}}(r)f(\mathbf{z})drd\mathbf{z}\nonumber\\
&=& \mathcal{O}\left( N^{-\frac{2}{d_z+2}} \ln N\right)+\mathcal{O}\left(N^{-\frac{d_y}{d_z}}\right).
\label{eq:firstterm}
\end{eqnarray}

\noindent\textbf{Bound of the third term in \eqref{eq:xs1}}. 

We bound the third term of \eqref{eq:xs1} using Lemma \ref{lem:ratio} again.
\begin{eqnarray}
&&\hspace{-1cm} \int_{S_1^X} \int_0^{a_N} \frac{b}{Np} f_{\epsilon|\mathbf{z}}(r)f(\mathbf{z})drd\mathbf{z}\nonumber \\
 &\leq & \int_{S_1^X} \int_0^{a_N} \frac{b}{NP(B(\mathbf{z},r))} 2C_e c_{d_y}r^{d_y} f_{\epsilon|\mathbf{z}}(r) f(\mathbf{z})drd\mathbf{z}\nonumber \\
&\leq & \int_{S_1^X} \int_0^{a_N} \frac{b}{NP(B(\mathbf{z},r))} 2C_e c_{d_y}r^{d_y} f_{\epsilon|\mathbf{z}}(r) f(\mathbf{z})drd\mathbf{z} \nonumber\\
&&+ \int_{S_1^X} \int_{a_N}^\infty \frac{b}{NP(B(\mathbf{z},r))} 2C_e c_{d_y}a_N^{d_y} f_{\epsilon|\mathbf{z}}(r) f(\mathbf{z})drd\mathbf{z}\nonumber\\
&=& \frac{2C_e c_{d_y} b}{N} \mathbb{E}\left[\frac{1}{P(B(\mathbf{Z},\epsilon))} \rho^{d_y}\right]\nonumber\\
&\overset{(a)}{\leq}& \frac{2C_e c_{d_y} b}{N} \mathbb{E}\left[\frac{1}{P(B(\mathbf{Z},\epsilon))}\right]\mathbb{E}[\rho^{d_y}]\nonumber\\
&\overset{(b)}{=} &\mathcal{O}\left(N^{-\frac{d_y}{d_z}}\right).
\label{eq:thirdterm}
\end{eqnarray}

To show (a), we need to prove that $\frac{1}{P(B(\mathbf{Z},\epsilon))}$ and $\rho^{d_y}$ are negatively correlated. According to the law of total covariance,
\begin{eqnarray}
&&\hspace{-6mm}\Cov\left(\frac{1}{P(B(\mathbf{Z},\epsilon))},\rho^{d_y}\right)\nonumber\\
&=&\mathbb{E}\left[ \Cov\left(\frac{1}{P(B(\mathbf{Z},\epsilon))},\rho^{d_y}|\mathbf{Z}\right)\right]\nonumber\\
&&+\Cov\left(\mathbb{E}\left[\frac{1}{P(B(\mathbf{Z},\epsilon))}|\mathbf{Z}\right],\mathbb{E}\left[\rho^{d_y}|\mathbf{Z}\right]\right).
\label{eq:covariance}
\end{eqnarray}
Recall the definition of $\rho$ in Lemma \ref{lem:rho}, $\rho$ is a non-decreasing function in $r$, and for any given $\mathbf{z}$, $\frac{1}{P(B(\mathbf{z},\epsilon))}$ is a non-increasing function in $r$. Thus $\Cov\left(\frac{1}{P(B(\mathbf{z},\epsilon))},\rho^{d_y}|\mathbf{Z}\right)\leq 0$. For the second term, recall that according to order statistics \cite{david1970order}, condition on all $\mathbf{Z}=\mathbf{z}$, $P(B(\mathbf{Z},\epsilon))\sim \mathbb{B}(k,N-k)$, thus
\begin{eqnarray}
\mathbb{E}\left[\frac{1}{P(B(\mathbf{Z},\epsilon))}|\mathbf{Z}=\mathbf{z}\right]=\frac{N-1}{k-1},
\label{eq:epinv}
\end{eqnarray} 
which is a constant with respect to $\mathbf{z}$. Thus $\Cov\left(\mathbb{E}\left[\frac{1}{P(B(\mathbf{z},\epsilon))}|\mathbf{Z}\right],\mathbb{E}[\rho^{d_y}|\mathbf{Z}]\right)=0$. Plug this into \eqref{eq:covariance}, we have that $\Cov\left(\frac{1}{P(\mathbf{z},\epsilon)},\rho^{d_y}\right)\leq 0$, therefore (a) holds.

In (b), we calculate two expectations separately, according to \eqref{eq:epinv} and Lemma \ref{lem:rho}.

Combining \eqref{eq:firstterm} and \eqref{eq:thirdterm}, we get
\begin{eqnarray}
&&\hspace{-1cm}|\mathbb{E}[(J_x-h(\mathbf{X}))\mathbf{1}(\epsilon\leq a_N,\mathbf{x} \in S_1^X)]|\nonumber\\
&=& \mathcal{O}\left( N^{-\frac{2}{d_z+2}}\ln N\right)+\mathcal{O}\left(N^{-\frac{d_y}{d_z}}\right).
\label{eq:bound3}
\end{eqnarray}

Substituting the three terms in \eqref{eq:xdecomp} with \eqref{eq:xapprox}, \eqref{eq:bound2} and \eqref{eq:bound3} respectively, the proof of \eqref{eq:tx} in Lemma \ref{lem:marginal} is complete, i.e. we have
\begin{eqnarray}
|\mathbb{E}[(J_x-h(\mathbf{X}))\mathbf{1}(\epsilon\leq a_N)]|&=& \mathcal{O}\left( N^{-\frac{2}{d_z+2}}\ln N\right)\nonumber\\&&+\mathcal{O}\left(N^{-\frac{d_y}{d_z}}\right).\nonumber
\end{eqnarray}

\subsection{Proof of Lemma \ref{lem:expectation}}\label{sec:expectation}
In this section, we prove Lemma \ref{lem:expectation} with $n_x-k\sim Binomial(N-k-1,p)$.

\noindent (1) \textbf{Upper bound.}
\begin{eqnarray}
\mathbb{E}[\psi(n_x+1)|\mathbf{z},\epsilon]&\leq& \mathbb{E}[\ln(n_x+1)|\mathbf{z},\epsilon]\nonumber\\
&\leq& \ln(\mathbb{E}[n_x|\mathbf{z},\epsilon]+1)\nonumber\\
&=&\ln((N-k-1)p+k+1).\nonumber
\end{eqnarray}
(2) \textbf{Lower bound.} Use Taylor expansion,
\begin{eqnarray}
&&\hspace{-8mm}\mathbb{E}[\psi(n_x+1)|\mathbf{z},\epsilon]\geq \mathbb{E}[\ln n_x|\mathbf{z},\epsilon]\nonumber\\
&=&\ln \mathbb{E}[n_x|\mathbf{z},\epsilon]-\frac{1}{2}\mathbb{E}\left[\frac{1}{\xi^2}(n_x-\mathbb{E}[n_x|\mathbf{z},\epsilon])^2|\mathbf{z},\epsilon\right].\nonumber
\end{eqnarray}
Here $\xi$ is between $n_x$ and $\mathbb{E}[n_x|\mathbf{z},\epsilon]$. Thus
\begin{eqnarray}
&&\hspace{-1cm}\mathbb{E}\left[\frac{1}{\xi^2}(n_x-\mathbb{E}[n_x|\mathbf{z},\epsilon])^2|\mathbf{z},\epsilon\right]\nonumber\\
&\leq& \frac{1}{\mathbb{E}[n_x|\mathbf{z},\epsilon]^2} \mathbb{E}\left[(n_x-\mathbb{E}[n_x|\mathbf{z},\epsilon])^2|\mathbf{z},\epsilon\right]\nonumber\\
&&\hspace{8mm}+\mathbb{E}\left[\frac{1}{n_x^2}(n_x-\mathbb{E}[n_x|\mathbf{z},\epsilon])^2|\mathbf{z},\epsilon\right].\nonumber
\end{eqnarray}
Since $n_x-k\sim Binomial(N-k-1,p)$, we have $\Var[n_x|\mathbf{z},\epsilon]=(N-k-1)p(1-p)$ and $\Var[1/n_x|\mathbf{z},\epsilon]=\mathcal{O}(1/Np)$. Combine the upper and lower bound, there exist two constants $a$ and $b$ such that 
\begin{eqnarray}
|\mathbb{E}[\phi(n_x+1)|\mathbf{z},\epsilon]-\ln(Np)|\leq \frac{a}{N}+\frac{b}{Np}.\nonumber
\end{eqnarray}
The proof is complete.
\subsection{Proof of Lemma \ref{lem:rho}}\label{sec:rho}
In this section, we give a bound to $\mathbb{E}[\rho^{d'}]$, $d'<d_z$, under Assumption \ref{ass:KSG} (c), (d). To begin with, we prove the following lemma.
\begin{lem}\label{lem:integration}
	Under Assumption \ref{ass:KSG} (c), for any integer $d'<d_z$,
	\begin{eqnarray}
	\int f(\mathbf{z})^{1-\frac{d'}{d_z}} d\mathbf{z}\leq \frac{\mu^{\frac{d'}{d_z}}}{1-\frac{d'}{d_z}},
	\label{eq:integration}
	\end{eqnarray}
	for some constant $\mu$.
\end{lem}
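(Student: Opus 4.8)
The plan is to reduce this to the tail bound already obtained in Lemma~\ref{lem:tail} and then evaluate an elementary one-dimensional integral via the layer-cake representation.

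First I would observe that Assumption~\ref{ass:KSG}(c), i.e.\ $\int f(\mathbf{z})\exp(-bf(\mathbf{z}))d\mathbf{z}\leq C_c/b$, is precisely assumption (b) of Theorem~\ref{thm:KLbias} applied to the joint variable $\mathbf{Z}$ with constant $C_c$. Hence Lemma~\ref{lem:tail}(1) gives a constant $\mu$ (one may take $\mu=eC_c$) such that $P(f(\mathbf{Z})\leq t)\leq \mu t$ for all $t>0$; this $\mu$ will be the constant in the statement.

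Next, set $s=d'/d_z\in(0,1)$ and note that, since $1-s>0$, the integrand $f^{\,1-s}$ vanishes on $\{f=0\}$, so $\int f(\mathbf{z})^{1-s}d\mathbf{z}=\int f(\mathbf{z})^{-s}f(\mathbf{z})d\mathbf{z}=\mathbb{E}\bigl[f(\mathbf{Z})^{-s}\bigr]$ (finite or infinite a priori). By the layer-cake formula applied to the nonnegative random variable $f(\mathbf{Z})^{-s}$, one has $\mathbb{E}\bigl[f(\mathbf{Z})^{-s}\bigr]=\int_0^\infty P\bigl(f(\mathbf{Z})^{-s}>u\bigr)\,du=\int_0^\infty P\bigl(f(\mathbf{Z})<u^{-1/s}\bigr)\,du$, and using $P(f(\mathbf{Z})<t)\leq\min\{1,\mu t\}$ yields $\mathbb{E}\bigl[f(\mathbf{Z})^{-s}\bigr]\leq\int_0^\infty\min\{1,\mu u^{-1/s}\}\,du$. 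Since $\mu u^{-1/s}\geq 1$ iff $u\leq \mu^{s}$, I would split at $u=\mu^{s}$: the part on $[0,\mu^{s}]$ contributes $\mu^{s}$, and the part on $[\mu^{s},\infty)$ equals $\mu\int_{\mu^{s}}^\infty u^{-1/s}\,du=\mu^{s}\cdot\tfrac{s}{1-s}$ (the integral converging because $1/s>1$, which is where $d'<d_z$ is used). Adding the two pieces gives $\mathbb{E}\bigl[f(\mathbf{Z})^{-s}\bigr]\leq \mu^{s}\bigl(1+\tfrac{s}{1-s}\bigr)=\tfrac{\mu^{s}}{1-s}$, which is exactly the asserted bound $\tfrac{\mu^{d'/d_z}}{1-d'/d_z}$.

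I do not expect a serious obstacle: the only points requiring care are checking that the integrand is genuinely integrable (ensured by $d'<d_z$, so that $u^{-1/s}$ is integrable at infinity and $1-s>0$ makes $f^{\,1-s}$ well-defined where $f=0$), and correctly invoking Lemma~\ref{lem:tail} for $\mathbf{Z}$ rather than for a marginal. This integrability estimate is then the ingredient that, combined with the order-statistics control of $\epsilon$ and Hölder's inequality, drives the bound $\mathbb{E}[\rho^{d'}]=\mathcal{O}(N^{-d'/d_z})$ in Lemma~\ref{lem:rho}, but that combination belongs to the proof of the next lemma, not to this one.
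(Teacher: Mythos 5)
Your proposal is correct and follows essentially the same route as the paper: both derive the linear tail bound $P(f(\mathbf{Z})\leq t)\leq\mu t$ from Assumption~\ref{ass:KSG}(c) via Lemma~\ref{lem:tail}, rewrite the integral as $\mathbb{E}[f^{-d'/d_z}(\mathbf{Z})]$, and apply the layer-cake formula with the split at $u=\mu^{d'/d_z}$, yielding the identical bound $\mu^{d'/d_z}/(1-d'/d_z)$. Your added remarks on where $d'<d_z$ is used are accurate but do not change the argument.
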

\begin{proof}
	Similar to the Lemma \ref{lem:tail}, we can prove that $P(f(\mathbf{Z})\leq t)\leq \mu t$ for some constant $\mu$ and all $t>0$, based on Assumption \ref{ass:KSG} (c). Thus
\begin{eqnarray}
\mathbb{E}\left[f^{-\frac{d'}{d_z}}(\mathbf{Z})\right]&=&\int_0^\infty P\left(f^{-\frac{d'}{d_z}}(\mathbf{Z})>t\right)dt\nonumber\\
&=&\int_0^{\mu^\frac{d'}{d_z}}P\left(f(\mathbf{Z})<t^{-\frac{d_z}{d'}}\right)dt\nonumber\\
&&+\int_{\mu^{\frac{d'}{d_z}}}^\infty P\left(f(\mathbf{Z})<t^{-\frac{d_z}{d'}}\right)dt\nonumber\\
&\leq&\mu^\frac{d'}{d_z}+\int_{\mu^\frac{d'}{d_z}}^\infty \mu t^{-\frac{d_z}{d'}} dt=\frac{\mu^{\frac{d'}{d_z}}}{1-\frac{d'}{d_z}}.\nonumber
\end{eqnarray}
\end{proof}
Now bound $\mathbb{E} [\rho^{d'}]$:
\begin{eqnarray}
\mathbb{E}[\rho^{d'}]=\int \mathbb{E}[\rho^{d'}|\mathbf{Z}=\mathbf{z}] f(\mathbf{z})d\mathbf{z}.
\label{eq:rho1}
\end{eqnarray}
Here we divide the support into $\mathbf{z}\in S_1'$ and $\mathbf{z}\in S_2'$. $S_1'$ and $S_2'$ are defined as following:
\begin{eqnarray}
S_1'=\left\{\mathbf{z}|f(\mathbf{z})\geq \frac{2 C_1}{c_{d_z}} a_N^2 \right\},\label{eq:s1'}\\
S_2'=\left\{\mathbf{z}|f(\mathbf{z})<\frac{2 C_1}{c_{d_z}} a_N^2 \right\},
\label{eq:s2'}
\end{eqnarray}
in which $a_N=AN^{-\beta}$, $\beta=2/(d_z+2)$. According to \eqref{eq:tailbound} in Lemma \ref{lem:tail},
\begin{eqnarray}
P(\mathbf{Z}\in S_2')&=&P\left(f(\mathbf{Z})<\frac{2 C_1}{c_{d_z}} A^2N^{-2\beta}\right)\nonumber\\
&\leq& \frac{2 \mu C_1}{c_{d_z}} A^2 N^{-\frac{2}{d_z+2}}.
\label{eq:ps2'}
\end{eqnarray}

For $\mathbf{z} \in S_1'$, from order statistics \cite{david1970order}, conditional on any $\mathbf{z}$, $P(B(\mathbf{z},\epsilon))\sim \mathbb{B}(k,N-k)$, in which $\mathbb{B}$ denotes the Beta distribution. Hence
\begin{eqnarray}
\mathbb{E}[P(B(\mathbf{Z},\rho))|\mathbf{Z}=\mathbf{z}]\leq \mathbb{E}[P(B(\mathbf{Z},\epsilon))|\mathbf{Z}=\mathbf{z}]=\frac{k}{N}.
\end{eqnarray}
Moreover, from the definition of $S_1'$ in \eqref{eq:s1'} and Lemma \ref{lem:ksgpdf}, we have $P(B(\mathbf{z},\rho))\geq f(\mathbf{z})c_{d_z}\rho^{d_z}/2$, thus
\begin{eqnarray}
\mathbb{E}[\rho^{d_z}|\mathbf{Z}=\mathbf{z}]\leq \frac{2k}{Nc_{d_z}f(\mathbf{z})}.\nonumber
\end{eqnarray}
Therefore for all $d'<d_z$,
\begin{eqnarray}
\mathbb{E}[\rho^{d'}|\mathbf{Z}=\mathbf{z}]\leq \left(\frac{2k}{Nc_{d_z}f(\mathbf{z})}\right)^\frac{d'}{d_z}.
\label{eq:rhoa}
\end{eqnarray}

For $\mathbf{z} \in S_2'$,
\begin{eqnarray}
E[\rho^{d'}|\mathbf{Z}=\mathbf{z}]\leq a_N^{d'}=A^{d'} N^{-\frac{d'}{d_z+2}}.
\label{eq:rhob}
\end{eqnarray}
Plugging \eqref{eq:rhoa} and \eqref{eq:rhob} into \eqref{eq:rho1},
\begin{eqnarray}
&&\hspace{-8mm}\mathbb{E} [\rho^{d'}]\nonumber\\&\leq&\left(\frac{2k}{Nc_{d_z}}\right)^\frac{d'}{d_z} \int f^{1-\frac{d'}{d_z}}(\mathbf{z})d\mathbf{z} +A^{d'} N^{-\frac{d'}{d_z+2}} P(\mathbf{Z}\in S_2')\nonumber\\
&=&\mathcal{O}\left(N^{-\frac{d'}{d_z}}\right)+\mathcal{O}\left(N^{-\frac{d'+2}{d_z+2}}\right)=\mathcal{O}\left(N^{-\frac{d'}{d_z}}\right),
\label{eq:rhop}
\end{eqnarray}

The proof of Lemma \ref{lem:rho} is complete.

\color{black}
\section{Proof of Theorem \ref{thm:klheavy}, Theorem \ref{thm:ksgheavy} and Proposition \ref{prop:highd}}\label{sec:heavypf}
In this section, we analyze KL estimator and KSG estimator under heavy tail conditions \eqref{eq:newtail}, with $\tau<1$. 
\subsection{Proof of Theorem \ref{thm:klheavy} and Theorem \ref{thm:ksgheavy}}
Since the proof steps are very similar to the case of $\tau=1$, which is proven in Appendix \ref{sec:klbias} and Appendix \ref{sec:KSGbias}, we only show some important steps where the proof is different from the previous sections.
 \color{black}
 1. Lemma \ref{lem:V} is replace by: for all $t>0$,
\begin{eqnarray}
V(t)\leq  \frac{\tau}{1-\tau}\mu t^{\tau-1}.\nonumber
\end{eqnarray} 
\begin{proof}
Under original assumptions, $q_T(u)\geq \mu/u$. Under new assumption, we can similarly get $q_T(u)\geq (u/\mu)^{(1/\tau)}$. Then
\begin{eqnarray}
V(t)&=&\int_{F_T(t)}^1 \frac{1}{q_T(u)} du\nonumber\\
&\leq& \int_{F_T(t)}^1 \left(\frac{\mu}{u}\right)^\frac{1}{\tau} du\nonumber\\
&\leq& \frac{\tau}{1-\tau}\mu t^{\tau-1}.\nonumber
\end{eqnarray} 
\color{black}
The remaining steps are the same.
\end{proof}
\color{black}
2. \eqref{eq:mbound} in Lemma \ref{lem:tail} is replaced by: 
\begin{eqnarray}
\int f^m(\mathbf{x}) e^{-bf(\mathbf{x})}d\mathbf{x}\leq \frac{K_m}{b^{m+\tau-1}}.\nonumber
\end{eqnarray}
\begin{proof}
	Divide the support into two regions, with $f(\mathbf{x})>t$ and $f(\mathbf{x})\leq t$.
	\begin{eqnarray}
	&&\hspace{-6mm}\int f^m(\mathbf{x}) e^{-bf(\mathbf{x})}d\mathbf{x}\nonumber\\
	&=&\int_{f(\mathbf{x})>t} f^m(\mathbf{x})e^{-bf(\mathbf{x})}d\mathbf{x}+\int_{f(\mathbf{x})\leq t} f^m(\mathbf{x})e^{-bf(\mathbf{x})}d\mathbf{x}\nonumber\\
	&\leq & \int_{f(\mathbf{x})>t} \left(\frac{m}{b}\right) e^{-m}d\mathbf{x}+\int_{f(\mathbf{x})\leq t} t^{m-1} f(\mathbf{x})d\mathbf{x}\nonumber\\
	&=&V(t)\left(\frac{m}{b}\right)^m e^{-m}+t^{m-1} \mu t^\tau\nonumber\\
	&\lesssim& \frac{t^{\tau-1}}{b^m}+t^{\tau+m-1}.\nonumber
	\end{eqnarray}
	Note that the above derivation holds for arbitrary $t>0$. Let $t=1/b$, then the proof is complete.
\end{proof}
3. Lemma \ref{lem:largeeps} is replaced by: there exist constants $C_2$ and $C_3$, for sufficiently large $N$,
\begin{eqnarray}
P(\epsilon>a_N,\mathbf{X}\in S_1)&\leq& C_2 N^{-\tau(1-\beta d_x)},\nonumber\\
P(\epsilon>a_N)&\leq & C_3 N^{-\tau \min\{1-\beta d_x,\frac{2}{d_x+2} \}}.\nonumber
\end{eqnarray}
The proof follows the same steps as the proof of original Lemma \ref{lem:largeeps} in Appendix~\ref{sec:largeeps}.

4. Lemma \ref{lem:rho} is replaced by:
\begin{eqnarray}
\mathbb{E}[\rho^{d'}]=\mathcal{O}\left(N^{-\frac{d'}{d_z}}\right)+\mathcal{O}\left(N^{-\frac{d'+2\tau}{d_z+2}}\ln N\right).\nonumber
\end{eqnarray}
\begin{proof}
	We define $S_1'$, $S_2'$ in the same way as \eqref{eq:s1'} and \eqref{eq:s2'}. Define $C=2C_1A^2/c_{d_x}$. Then \eqref{eq:integration} in Lemma \ref{lem:integration} is replaced by:
	\begin{eqnarray}
	\int_{S_1'} f^{1-\frac{d'}{d_z}}d\mathbf{z}&=&\mathbb{E}[f^{-\frac{d'}{d_z}}(\mathbf{Z})\mathbf{1}(f(\mathbf{Z})>CN^{-2\beta})]\nonumber\\
	&=&\int_0^{C^{-\frac{d'}{d_z}}N^{2\beta\frac{d'}{d_z}}} P\left(f^{-\frac{d'}{d_z}}(\mathbf{Z})>t\right)dt\nonumber\\
	&=&\int_0^{\mu^\frac{d'}{d_z}}P\left(f(\mathbf{Z})<t^{-\frac{d_z}{d'}}\right)dt\nonumber\\
	&&+\int_{\mu^{\frac{d'}{d_z}}}^{C^{-\frac{d'}{d_z}}N^{2\beta\frac{d'}{d_z}}} P\left(f(\mathbf{Z})<t^{-\frac{d_z}{d'}}\right)dt\nonumber\\
	&\leq&\mu^\frac{d'}{d_z}+\int_{\mu^\frac{d'}{d_z}}^{C^{-\frac{d'}{d_z}}N^{2\beta\frac{d'}{d_z}}} \mu t^{-\frac{d_z}{d'}} dt\nonumber\\
	&=&\left\{
	\begin{array}{ccc}
	\mathcal{O}(1) &\text{if} & \tau d_z>d'\nonumber\\
	\mathcal{O}(\ln N) &\text{if} &\tau d_z=d'\nonumber\\
	\mathcal{O}\left(N^{2\beta\left(\frac{d'}{d_z}-\tau\right)}\right) &\text{if} &\tau d_z<d'.
	\end{array}
	\right.\\
	&=& \mathcal{O}(1)+\mathcal{O}\left(N^{2\beta\left(\frac{d'}{d_z}-\tau\right)}\ln N\right).\nonumber
	\end{eqnarray}
	The remaining steps follow Appendix~\ref{sec:rho}.
\subsection{Proof of Proposition \ref{prop:highd}}
\color{black}
We now derive the range $\tau$ such that assumption \eqref{eq:newtail} holds under moment assumption $\mathbb{E}[|\mathbf{X}|^\alpha]<\infty$. Using H{\"o}lder inequality,
\begin{eqnarray}
&&\hspace{-1cm} \int f^{1-\tau}(\mathbf{x})d\mathbf{x}\nonumber\\
&=&\int (1+|\mathbf{x}|^\alpha)^{1-\tau} f^{1-\tau}(\mathbf{x})\frac{1}{(1+|\mathbf{x}|^\alpha)^{1-\tau}}d\mathbf{x}\nonumber\\
&\leq&\left( \int (1+|\mathbf{x}|^\alpha) f(\mathbf{x})d\mathbf{x}\right)^\tau\left( \int \left(\frac{1}{1+|\mathbf{x}|^\alpha}\right)^{\frac{1-\tau}{\tau}}d\mathbf{x}\right)^\tau.\nonumber
\end{eqnarray}
The first factor is finite because $\mathbb{E}[|\mathbf{X}|^\alpha]<\infty$. If $\tau<\alpha/(\alpha+d_x)$, then $\alpha(1-\tau)/\tau>d_x$, the second factor is also finite. Then $\int f^{1-\tau}(\mathbf{x}) d\mathbf{x}<\infty$. As a result,
\begin{eqnarray}
P(f(\mathbf{X})<t)&=&P(f^{-\tau}(\mathbf{X})>t^{-\tau})\nonumber\\
&\leq& t^\tau \mathbb{E}[f^{-\tau}(\mathbf{X})]\nonumber\\
&:=&\mu_1 t^\tau,\nonumber
\end{eqnarray}
in which $\mu_1$ is a constant. The proof is complete.
%\textbf{Proof of Proposition \ref{prop:highd} (2)}. Now we prove \eqref{eq:tau'} based on \eqref{eq:condition}. We use induction. If $d_x=1$, then \eqref{eq:tau'} holds trivially. Now, suppose \eqref{eq:tau'} holds for $d_x=n$, we check the case for $d_x=n+1$. Denote $\mathbf{W}=(X_1,\ldots,X_n)$, then for any $\tau'<\tau$, find a $\tau''\in (\tau',\tau)$ such that $P(f(\mathbf{W})<t)\leq \mu_n(\tau'')t^{\tau''}$ for any $t>0$. Then
%\begin{eqnarray}
%P(f(\mathbf{X})<t)\leq \mathbb{E}\left[P\left(f(X_{n+1}|\mathbf{W})<\frac{t}{f(\mathbf{W})}\right)\right]\leq \mu_1 t^{\tau'}\mathbb{E}[f^{-\tau'}(\mathbf{W})].
%\end{eqnarray}
%Now we show that $\mathbb{E}[f^{-\tau'}(\mathbf{W})]$ is finite:
%\begin{eqnarray}
%\mathbb{E}[f^{-\tau'}(\mathbf{W})]=\int_0^\infty P(f^{-\tau'}(\mathbf{W})>u)du=\int_0^\infty P(f(\mathbf{W})<u^{-\frac{1}{\tau'}}) du\leq 1+\int_0^\infty \mu_n(\tau'') u^{-\frac{\tau''}{\tau'}} du,\nonumber
%\end{eqnarray}
%which is finite since $\tau''>\tau'$. Thus \eqref{eq:tau'} holds with %$\mu_{n+1}(\tau')=\mu_1 \mathbb{E}[f^{-\tau'}(\mathbf{W})]$.
\end{proof}
\color{black}
\section{Proof of some statements}\label{sec:statement}
\color{black}
\subsection{Proof that Assumption (a), (b) in Theorem \ref{thm:KLbias} implies Assumption (c) (d) in Theorem \ref{thm:KLvar}}\label{sec:statement-1}
In this section, we prove that Assumption (a), (b) in Theorem \ref{thm:KLbias} implies Assumption (c) (d) in Theorem \ref{thm:KLvar}. It is obvious that (a) implies (c). Now we prove (d) using on (a) and (b).

We first show that $f(\mathbf{x})$ must be bounded. From Lemma \ref{lem:pdf}, we have $P(B(\mathbf{x},r))\geq f(\mathbf{x})c_{d_x}r^{d_x}-C_1r^{d_x+2}$. Moreover, $P(B(\mathbf{x},r))\leq 1$ always holds. Hence for any $r>0$,
\begin{eqnarray}
f(\mathbf{x})\leq \frac{1+C_1r^{d_x+2}}{c_{d_x}r^{d_x}}.\nonumber
\end{eqnarray}
Therefore $f$ must be bounded. We then show that $\mathbb{E}[(\ln f(\mathbf{X}))^2]\leq \infty$:
\begin{eqnarray}
&&\hspace{-1cm}\mathbb{E}[(\ln f(\mathbf{X}))^2\mathbf{1}(f(\mathbf{X})\leq 1)]\nonumber\\
&=& \int_0^\infty \text{P}\left(\ln f(\mathbf{X})<-\sqrt{t}\right)dt\nonumber\\
&=& \int_0^\infty \text{P}\left(f(\mathbf{X})\leq e^{-\sqrt{t}}\right)dt<\infty,\nonumber
\end{eqnarray}
in which $\text{P}(f(\mathbf{X})\leq e^{-\sqrt{t}})dt$ can be bounded using Lemma \ref{lem:tail}. Since $f$ is bounded, we also have $\mathbb{E}[(\ln f(\mathbf{X}))^2\mathbf{1}(f(\mathbf{X})> 1)]<\infty$. Therefore $\mathbb{E}[(\ln f(\mathbf{X}))^2]<\infty$.

Based on the above fact, we now prove Assumption (d) in Theorem \ref{thm:KLvar}. For any $\mathbf{x}$, define $r_c(\mathbf{x})=\sqrt{d_xf(\mathbf{x})c_{d_x}/(d_x+2)C_1}$. We discuss two cases:

(1) If $r\leq r_c$, then according to Lemma \ref{lem:tail},
\begin{eqnarray}
P(B(\mathbf{x},r))&\geq& f(\mathbf{x})c_{d_x}r^{d_x}\left(1-\frac{C_1r^2}{f(\mathbf{x})c_{d_x}}\right)\nonumber\\
&\geq&  f(\mathbf{x})c_{d_x}r^{d_x}\left(1-\frac{C_1r_c^2}{f(\mathbf{x})c_{d_x}}\right)\nonumber\\
&\geq&  \frac{2}{d_x+2}f(\mathbf{x})c_{d_x}r^{d_x}.\nonumber
\end{eqnarray}
Therefore, we have $\tilde{f}(\mathbf{x},r)\geq (2/(d_x+2))f(\mathbf{x})$ in this case. 

(2) If $r_c<r<r_0$, then
\begin{eqnarray}
P(B(\mathbf{x},r))&\geq& P(B(\mathbf{x},r_c))\nonumber\\
&\geq& \frac{2}{d_x+2}f(\mathbf{x})c_{d_x}r_c^{d_x}\nonumber\\
&=&\frac{2}{d_x+2}f(\mathbf{x})c_{d_x}\left(\frac{d_xf(\mathbf{x})c_{d_x}}{(d_x+2)C_1}\right)^\frac{d_x}{2}.\nonumber
\end{eqnarray}
Therefore we have $\tilde{f}(\mathbf{x},r)\geq Cf^{1+d_x/2}(\mathbf{x})$. Combine case (1) and (2), we have
\begin{eqnarray}
\underset{r}{\inf} \tilde{f}(\mathbf{x},r)\geq \min\left\{\frac{2}{d_x+2}f(\mathbf{x}), Cf^{1+d_x/2}(\mathbf{x}) \right\}.\nonumber
\end{eqnarray}
Hence
\begin{eqnarray}
&&\hspace{-1cm}\int f(\mathbf{x})\left(\ln \underset{r}{\inf} \tilde{f}(\mathbf{x},r)\right)^2 d\mathbf{x}\nonumber\\
&\leq& \int f(\mathbf{x})\left(\ln \frac{2}{d_x+2}f(\mathbf{x})\right)^2 d\mathbf{x}\nonumber\\
&&\hspace{1cm}+\int f(\mathbf{x})\left(\ln Cf^{1+d_x/2}(\mathbf{x})\right)^2 d\mathbf{x}<\infty,\nonumber
\end{eqnarray}
which holds since $\int f(\mathbf{x})(\ln f(\mathbf{x}))^2 <\infty$. Moreover, from Lemma \ref{lem:pdf}, we also have $P(B(\mathbf{x},r))\leq f(\mathbf{x})c_{d_x}r^{d_x}+C_1r^{d_x+2}$. Therefore $\underset{r}{\sup} \tilde{f}(\mathbf{x},r)\leq f(\mathbf{x})+(C_1/c_{d_x})r_0^2$, which ensures that 
$$\int f(\mathbf{x})\left(\ln \underset{r}{\sup} \tilde{f}(\mathbf{x},r)\right)^2 d\mathbf{x}<\infty.$$

The proof is complete.
\color{black}
\subsection{Proof of properties of joint pdf satisfying~\eqref{eq:expassu}   }\label{sec:statement-2}
In this section, we show that under the Assumption 3 in \cite{gao2018demystifying}, the joint pdf $f(\mathbf{x},\mathbf{y})$ is bounded away from zero, and must have a bounded support. Recall that $\mathbf{z}=(\mathbf{x},\mathbf{y})$, the Assumption (c) in \cite{gao2018demystifying} says that for any $b>1$,
\begin{eqnarray}
\int f(\mathbf{z})\exp(-bf(\mathbf{z}))d\mathbf{z}\leq C_c e^{-C_0 b}.
\label{eq:assumc}
\end{eqnarray}
With \eqref{eq:assumc}, for any $t\geq0$, we have
\begin{eqnarray}
P(f(\mathbf{Z})<t)&=& P\left(\exp(-bf(\mathbf{Z}))
\geq \exp(-bt)\right)\nonumber\\
&\leq& e^{bt}\mathbb{E}[e^{-bf(\mathbf{Z})}]\nonumber\\
&\leq& C_c e^{-b(C_0-t)},\nonumber
\end{eqnarray}
in which the first inequality comes from Markov's inequality. Note that the above steps hold for any $b>1$, we can let $b$ to be arbitrarily large. Hence, if $0\leq t<C_0$, then
\begin{eqnarray}
P(f(\mathbf{Z})<t)=0.\nonumber
\end{eqnarray}
For any random variable $U$, $P(U<t)$ is left continuous in $t$. Hence we have
\begin{eqnarray}
P(f(\mathbf{Z})<C_0)=0.
\label{eq:fzc0}
\end{eqnarray}
For all the points on which $f(\mathbf{z})$ is continuous, we have $f(\mathbf{z})=0$ or $f(\mathbf{z})\geq C_0$. Otherwise, if $0<f(\mathbf{z})<C_0$, there must be a neighbor $B(\mathbf{z},r)$ on which the pdf is in between $0$ and $C_0$, which violates \eqref{eq:fzc0}. According to the Assumption (d) in \cite{gao2018demystifying}, the Hessian of $f(\mathbf{z})$ is bounded almost everywhere, which implies that $f(\mathbf{z})$ is continuous almost everywhere, and thus $f(\mathbf{z})=0$ or $f(\mathbf{z})\geq C_0$ almost everywhere. As a result, $f(\mathbf{z})$ is essentially bounded away from zero, and must have a bounded support.
	\ifCLASSOPTIONcaptionsoff
	\newpage
	\fi

\small \bibliography{macros,mutualinformation}
\bibliographystyle{ieeetran}

% You can push biographies down or up by placing
% a \vfill before or after them. The appropriate
% use of \vfill depends on what kind of text is
% on the last page and whether or not the columns
% are being equalized.

%\vfill

% Can be used to pull up biographies so that the bottom of the last one
% is flush with the other column.
%\enlargethispage{-5in}
\begin{IEEEbiographynophoto}{Puning Zhao}  (S'18) received the B.S. degree from University of Science and Technology of China, Hefei, China in 2017. He is currently a Ph.D. student in the Department of Electrical and Computer Engineering, University of California, Davis. His research interests are in statistical learning and information theory.
\end{IEEEbiographynophoto}
	
\begin{IEEEbiographynophoto}{Lifeng Lai} (SM'19) received the B.E. and M.E. degrees from Zhejiang University, Hangzhou, China in 2001 and 2004 respectively, and the Ph.D. from The Ohio State University at Columbus, OH, in 2007. He was a postdoctoral research associate at Princeton University from 2007 to 2009, an assistant professor at University of Arkansas, Little Rock from 2009 to 2012, and an assistant professor at Worcester Polytechnic Institute from 2012 to 2016. Since 2016, he has been an associate professor at University of California, Davis. Dr. Lai's research interests include information theory, stochastic signal processing and their applications in wireless communications, security and other related areas.
	
	Dr. Lai was a Distinguished University Fellow of the Ohio State University from 2004 to 2007. He is a co-recipient of the Best Paper Award from IEEE Global Communications Conference (Globecom) in 2008, the Best Paper Award from IEEE Conference on Communications (ICC) in 2011 and the Best Paper Award from IEEE Smart Grid Communications (SmartGridComm) in 2012. He received the National Science Foundation CAREER Award in 2011, and Northrop Young Researcher Award in 2012. He served as a Guest Editor for IEEE Journal on Selected Areas in Communications, Special Issue on Signal Processing Techniques for Wireless Physical Layer Security from 2012 to 2013, and served as an Editor for IEEE Transactions on Wireless Communications from 2013 to 2018. He is currently serving as an Associate Editor for IEEE Transactions on Information Forensics and Security.
\end{IEEEbiographynophoto}

% that's all folks
\end{document}